\author{Víctor Mayoral-Vilches}
\title{Offensive Robot Cybersecurity}
\newcommand{\setHyperlinkStyle}{
    \ifbool{contentsStyle}
    { 
        \hypersetup{
            colorlinks = true, 
            linkcolor = RoyalBlue, 
            citecolor = black, 
            urlcolor = black, 
            filecolor = black, 
            pdfborderstyle = {/S/U/W 0}, 
        }        
    }
    { 
        \hypersetup{
            colorlinks = false,
            pdfborderstyle = {/S/U/W 0.9},
            linkbordercolor = {gray},
            citebordercolor = {black},
            urlbordercolor = {lightgray},
            filecolor = black,
            pdfborderstyle={/S/U/W 1}
        }
    }
}
\let\oldfootnote\footnote
\renewcommand{\footnote}[1]{%
    \begingroup
    \hypersetup{linkcolor=lightgray, pdfborderstyle={/S/U/W 0}} 
    \oldfootnote{#1}%
    \endgroup
}
\begin{document}

    \frontmatter        

\thispagestyle{empty}
\begin{tikzpicture}[remember picture,overlay]
    \fill[black] (current page.south west) rectangle (current page.north east);
    
    
    \node[align=left, anchor=west] at ([xshift=3cm, yshift=3cm]current page.west) {
        {\fontsize{32}{38}\selectfont\bfseries\color{white} OFFENSIVE}\\[0.2cm]
        {\fontsize{32}{38}\selectfont\bfseries\color{white} ROBOT}\\[0.2cm]
        {\fontsize{32}{38}\selectfont\bfseries\color{white} CYBERSECURITY}\\[0.5cm]
        {\large\color{white!50}\ttfamily Cyber-protecting robots by hacking-them-first}
    };
    
    \node[align=right, anchor=east] at ([xshift=-3cm, yshift=-7cm]current page.east) {
        {\normalsize\color{white!60} Víctor Mayoral-Vilches}\\[0.8cm]

        {\footnotesize\color{white!30} Supervised by}\\[0.1cm]
        {\footnotesize\color{white!30} Prof. Dr. Stefan Rass}\\[0.1cm]
        {\footnotesize\color{white!30} Prof. Dr. Martin Pinzger}\\[0.8cm]

        {\small\color{white!40} 2018-2025}
    };
\end{tikzpicture}
\clearpage

    \uiotitle
    \thispagestyle{empty}
\vspace*{\stretch{1}}
\begin{flushright}
    \emph{Para Leo y Sam}
\end{flushright}
\vspace*{\stretch{3}}
    \chapter{\abstractname}

Robots, integral to modern automation and services, encounter formidable cybersecurity challenges, primarily due to their inherent complexity and the lack of vendor accountability for security, shifting the burden to end-users. \emph{Offensive Robot Cybersecurity}, the title of this thesis, introduces a groundbreaking approach by advocating for offensive security methods empowered by automation leveraging modern Artificial Intelligence. It emphasizes the necessity of understanding attackers' tactics and identifying vulnerabilities in advance to develop effective defenses, thereby improving robots' security posture. This endeavor leverages a decade of robotics experience, employing Machine Learning and Game Theory to streamline the vulnerability identification and exploitation process. Intrinsically, the thesis uncovers a profound connection between robotic architecture and cybersecurity, highlighting that the design and creation aspect of robotics deeply intertwines with its protection against attacks. This duality—whereby the architecture that shapes robot behavior and capabilities also necessitates a defense mechanism through offensive and defensive cybersecurity strategies—creates a unique equilibrium. Approaching cybersecurity with a dual perspective of defense and attack, rooted in an understanding of systems architecture, has been pivotal in this research. This synergy affirms the adage that \emph{the best defense is a good offense}, especially in the complex realm of robotics. Through comprehensive analysis, including ethical considerations, the development of security tools, and executing cyber attacks on robot software, hardware, and industry deployments, this thesis proposes a novel architecture for cybersecurity cognitive engines. These engines, powered by advanced game theory and machine learning, pave the way for autonomous offensive cybersecurity strategies for robots, marking a significant shift towards self-defending robotic systems. This research not only underscores the importance of offensive measures in enhancing robot cybersecurity but also sets the stage for future advancements where robots are not just resilient to cyber threats but are equipped to autonomously safeguard themselves. 
    \chapter{Resumen}

Los robots, esenciales para la automatización y la industria moderna, enfrentan desafíos formidables en el área de la ciberseguridad, principalmente debido a su complejidad inherente --pues los robots son complejos sistemas con redes de redes-- y la falta de responsabilidad por parte de los fabricantes (software y hardware) de estos robots que con frecuencia ignoran la ciberseguridad, trasladando la responsabilidad a los usuarios finales (las fábricas donde estos operan). Esta tesis titulada \emph{Offensive Robot Cybersecurity} (Ciberseguridad Ofensiva para Robots) introduce un enfoque innovador al abogar por métodos de ciberseguridad ofensivos para proteger robots de forma económica y con recursos limitados, usando la Inteligencia Artificial como medio de automatización. Enfatiza la necesidad de comprender e imitar las tácticas de los ciber-atacantes e identificar vulnerabilidades con anticipación para desarrollar defensas efectivas, mejorando así la ciber-resiliencia de los robots. 

Esta investigación construye sobre una década de experiencias en robótica, y emplea Aprendizaje Máquina (\emph{Machine Learning}) y Teoría de Juegos (\emph{Game Theory}) para automatizar el proceso de identificación y uso de vulnerabilidades en robots. De forma intrínseca, esta tesis estudia la profunda conexión entre la arquitectura robótica (tanto software como hardware) y la ciberseguridad, destacando que el diseño y la creación de los robots debe entrelazarse profundamente con su protección frente a ciber-ataques. Esta dualidad, donde la arquitectura que da forma al comportamiento y capacidades del robot también requiere de un mecanismo de defensa a través de estrategias de ciberseguridad, crea un equilibrio único, difícil de resolver y cambiante en el tiempo. Se aborda este reto con una perspectiva dual de ataque y posterior defensa, arraigada en un entendimiento de la arquitectura de sistemas y en el popular dicho: \emph{la mejor defensa es un buen ataque}. A través de un análisis exhaustivo, incluyendo consideraciones éticas, el desarrollo de herramientas de ciberseguridad y la ejecución de ciberataques en software, hardware y despliegues industriales robóticos, esta tesis describe una arquitectura novedosa para motores cognitivos de ciberseguridad. Estos motores, impulsados por teoría de juegos avanzada y aprendizaje máquina automático, permiten la implementación de estrategias de ciberseguridad ofensivas y autónomas para robots, marcando un cambio significativo hacia sistemas robóticos auto-defensivos mediante técnica ofensivas. Esta investigación no solo subraya la importancia de las medidas ofensivas en la mejora de la ciberseguridad para los robots y otros sistemas industriales, sino que también prepara el terreno para avances futuros donde los robots no solo sean resistentes a las amenazas cibernéticas, sino que también estén equipados para protegerse y responder de manera autónoma.
    \chapter{Preface}


My professional career in robotics began in 2013, with a first stage focused on the study of the software architecture of robots. After various international experiences, in 2014, I had the fantastic opportunity to be part of the group that started one of the most significant revolutions in robotics in recent times: the creation of the Robot Operating System 2 (ROS 2). Used today by the majority of robotics engineers around the world, ROS is the common language used in robotics to build robotic behaviors. It is a set of modular and open source software libraries that include tools, interconnection and communication elements between robotic processes. Technically, ROS provides a message-passing infrastructure between robot software components, drivers for robotic hardware, state-of-the-art robotic algorithms, and powerful development tools to simulate, debug, and visualize a robot's behaviors. Simply put, ROS allows building robot brains and my involvement in building ROS 2 allowed me to grow rapidly as an engineer, and encouraged me to create dozens of robots over the next few years.

After half a decade building robots and leading teams to do so, in 2018 I decided to embark on a journey to explore cybersecurity applied to robots. Throughout this journey, I discovered the deep connection between robotic architecture and cybersecurity in robotics, more intimate than initially perceived. Robotic architecture focuses on creation, on shaping behaviors and possibilities, while cybersecurity in robotics is oriented towards offense or protection, allowing what is built to be defended. This synergy creates a unique balance when cybersecurity is approached from a knowledge of systems architecture and with a dual perspective: defense and attack, both essential. This learning, the result of exploration and study, has been key in my research, reaffirming that in the field of robots, as in many others, the best defense is indeed a good offense.

I began this research (my PhD) formally in 2020 with a clear intention: to create the technological principles so that an artificial intelligence is capable of hacking robots better than any human. I write these lines happy to have largely satisfied this initial objective while having open sourced much of it. During these years, I developed and participated in the creation of technology and prototypes that allow achieving superhuman capabilities in terms of offensive security in robotics. My contributions to the area of robotic cybersecurity today allow the creation of artificial systems that guide and even automate cyberattacks. No human is capable of processing so much so quickly. To respond on multiple fronts. Or training himself to exploit a new \emph{target} or a new attack vector in a matter of a few seconds. It is surprising, exciting, but also overwhelming what these AIs can do.\\

I'd like to thank everyone that supported me in this journey. Thanks to Laura, my companion on this and many other trips. Her constant support, patience, and vital balance have been fundamental pillars in the most challenging moments of this research, providing not only emotional refuge, but also unique perspectives that have enriched my research. Her intelligence and empathy have been a constant source of inspiration and strength every day. Thanks to my father, whose tireless support and unconditional trust have been my beacon in the darkness of those most difficult times. His wisdom and calm approach to life's challenges have taught me to face problems with patience and determination. His example of perseverance and curiosity to "always keep learning" have been a constant guide in my life. Thanks to my mother for never giving up. Thanks to David for his drive and resilience. Always there, despite our many mistakes. Thanks to Stefan and Martin, my supervisors, for their expert guidance and unwavering scientific support throughout this process. Their ability to challenge and expand my ideas, combined with their commitment to academic excellence, has been crucial to my development as a researcher. Stefan, with his renowned expertise as a mathematician in security and game theory, has brought a unique and rigorous perspective, enriching my understanding of the theoretical aspects of cybersecurity. Martin, renowned for his deep knowledge of software and software engineering, has been instrumental in honing my hands-on approach, ensuring my applications and tools are robust and advanced. Together, they have formed an unrivaled supervisory team, offering a balanced blend of theory and practice that has been essential to the depth and success of my research.

I greatly appreciate the support received from the company Alias Robotics. This research has been partially funded by SEGRES (EXP 00131359 / MIG-20201041) grant to Alias Robotics. Its funding and trust in my work has been essential in carrying out some significant parts of my research, allowing me to dive deeply into the field of robotics cybersecurity. Thanks to Endika and Unai for successfully taking the lead of the corporate daily matters, while I was focused on research. Last but not least, thanks to the rest of my family for their unwavering faith in my abilities over the past. Thanks to María for the sacrifices for a better future, and to Ramón for the stolen acorns in times of need. Now, your grandson keeps acorns on the tree in his garden, sowing perseverance and affection. Thanks also to Rosa and Adela. Their emotional support, teachings and trust in my dreams have been a constant source of motivation and courage to keep going, even in the most difficult times.\\

For any reader, but especially for my two boys, may this learning encourage you to approach the challenges of the future with courage, resilience and cunning.\\

\vskip\onelineskip
\begin{flushleft}
    \sffamily
    \textbf{\theauthor}
    \\
    Zuia,\MONTH\the\year
\end{flushleft}
    \chapter{Prefacio}


Mi carrera profesional en robótica comenzó en 2013, con una primera etapa centrada en el estudio de la arquitectura software de los robots. Tras varias experiencias internacionales, en 2014, tuve la fantástica oportunidad de formar parte del grupo que inició una de las revoluciones en robótica más significativas de los últimos tiempos: la creación del Sistema Operativo de Robots 2 (ROS 2). Usado a día de hoy por la mayoría de ingenieros e ingenieras de robótica por todo el mundo, ROS es el lenguaje común que se utiliza en robótica para construir comportamientos robóticos. Es un conjunto de librerías de software modulares y de código abierto que incluyen herramientas, y elementos de interconexión y comunicación entre procesos robóticos. Técnicamente, ROS proporciona un sistema de transmisión de mensajes entre componentes de software, controladores para el hardware robótico, algoritmos robóticos de última generación y potentes herramientas de desarrollo para simular, depurar y visualizar los comportamientos de un robot. De forma sencilla, ROS permite crear cerebros robóticos y mi participación en la construcción de ROS 2 me permitió crecer rápidamente como ingeniero, y me animó a crear decenas de robots durante los años siguientes.

Tras media década construyendo robots y liderando equipos para ello, en 2018 decidí emprender un viaje para explorar la ciberseguridad aplicada a los robots. A lo largo de este viaje, he descubierto la profunda conexión entre la arquitectura robótica y la ciberseguridad en robótica, más íntima de lo que inicialmente se percibe. La arquitectura robótica se enfoca en la creación, en moldear comportamientos y posibilidades, mientras que la ciberseguridad en robótica se orienta hacia la ofensiva o la protección, permitiendo defender lo construido. Esta sinergia crea un balance único cuando se aborda la ciberseguridad desde un conocimiento de arquitectura y con una perspectiva dual: defensa y ataque, ambas esenciales. Este aprendizaje, fruto de la exploración y el estudio, ha sido clave en mi investigación, reafirmando que en el ámbito de los robots, como en otros muchos, la mejor defensa es efectivamente un buen ataque.

Comencé esta investigación (mi doctorado) formalmente en 2020 con una intención clara: crear los principios tecnológicos para que una inteligencia artificial sea capaz de hackear robots mejor que cualquier humano, para después mitigar dichas vulnerabilidades antes de aparezcan los primeros ataques. Escribo estas líneas contento de haber satisfecho en gran medida este objetivo inicial. Durante estos años, he desarrollado y participado en la creación de tecnología y prototipos que permiten alcanzar capacidades sobre-humanas en lo que a la seguridad ofensiva en robótica se refiere. Mucho de este trabajo ha sido contribuído abiertamente (\emph{open sourced}), con licencias permisivas. Mis contribuciones al área de la ciberseguridad robótica permiten hoy la creación de sistemas artificiales que orientan e incluso automatizan ciber-ataques. Ningún humano es capaz de procesar tanto y tan rápido. De responder a múltiples frentes. O de capacitarse para explotar un nuevo \emph{target} o un nuevo vector de ataque en cuestión de pocos segundos. Es sorprendente, excitante pero también sobrecogedor. \\

Me gustaría terminar agradeciendo a todas aquellas y aquellos que me han apoyado en esta aventura. Gracias a Laura, mi compañera en este y otros muchos viajes. Su apoyo constante, su paciencia y su equilibrio vital han sido pilares fundamentales en los momentos más desafiantes de esta investigación, proporcionando no solo un refugio emocional, sino también perspectivas únicas que han enriquecido mi investigación. Su inteligencia y empatía han sido una fuente constante de inspiración y fuerza cada día. Gracias a mi padre, cuyo apoyo incansable y confianza incondicional han sido mi faro en la oscuridad de esas épocas más difíciles. Su sabiduría y su enfoque sereno ante los desafíos de la vida me siguen enseñando a enfrentar los problemas con paciencia y determinación. Su ejemplo de perseverancia y curiosidad por "siempre seguir aprendiendo" han sido una guía constante en mi vida. Gracias a mi madre por nunca darse por vencida. Gracias a David por su fuerza y resiliencia. Siempre junto a mí, a pesar de nuestras muchas equivocaciones. Gracias a Stefan y a Martin, mis supervisores, por su orientación experta y apoyo científico inquebrantable durante todo este proceso. Su capacidad para desafiar y expandir mis ideas, combinada con su compromiso con la excelencia académica, ha sido crucial para mi desarrollo como investigador. Stefan, con su renombrada experiencia como matemático en ciberseguridad y teoría de juegos, ha aportado una perspectiva única y rigurosa, enriqueciendo mi comprensión de los aspectos teóricos de la ciberseguridad. Martin, reconocido por sus profundos conocimientos en software y software engineering, ha sido fundamental para pulir mi enfoque práctico, asegurando que mis aplicaciones sean robustas y avanzadas. Juntos, han formado un equipo de supervisión inigualable, ofreciendo una combinación equilibrada de teoría y práctica que ha sido esencial para la profundidad y el éxito de mi investigación.

Agradezco enormemente el apoyo recibido por parte de la empresa Alias Robotics.  Esta investigación ha sido parcialmente financiada por el proyecto SEGRES (EXP 00131359 / MIG-20201041) concedido a Alias Robotics. Su financiación y confianza en mi trabajo han sido esenciales para llevar a cabo algunas partes significativas de mi investigación, permitiéndome sumergirme profundamente en el campo de la ciberseguridad robótica. Gracias a Endika y Unai por liderar con éxito el día a día corporativo, mientras yo me centraba en la investigación. Por último, pero no menos importante, gracias al resto de mi familia por su fé inquebrantable en mis capacidades a lo largo de muchos años. Gracias a María por los sacrificios por un futuro mejor y a Ramón por las bellotas robadas en tiempos de necesidad. Ahora, vuestro nieto guarda bellotas en el árbol de su jardín, sembrando perseverancia y cariño. Gracias también a Rosa y a Adela. Su apoyo emocional, sus enseñanzas y su confianza en mis sueños han sido una fuente constante de motivación y coraje para seguir adelante, incluso en los momentos más difíciles.\\

Para cualquier lector o lectora, pero especialmente para mis dos chicos, que este aprendizaje os impulse a abordar con valentía, resiliencia y astucia los retos del futuro.\\

\vskip\onelineskip
\begin{flushleft}
    \sffamily
    \textbf{\theauthor}
    \\
    Zuia,\MONTH\the\year
\end{flushleft}

    \booltrue{contentsStyle}
    
    \cleartorecto
    \microtypesetup{protrusion = false}
    \tableofcontents    
    \cleartorecto
    \listoffigures      
    \cleartorecto
    \listoftables       
    \microtypesetup{protrusion = true}

    \boolfalse{contentsStyle}

    \mainmatter         

    \part{Motivation}\label{part:motivation}
    
\chapter{Introduction}
\label{chapter:intro}

\section{Why robot cybersecurity?}

For the last fifty years, we have been witnessing the dawn of the robotics industry, but robots are not being created with security as a concern. Security in robotics is often mistaken with safety. From industrial to consumer robots, going through professional ones, most of these machines are not prepared for cyber-threats and resilient to security vulnerabilities. Manufacturers' concerns, as well as existing standards, focus mainly on safety. Security in robotics is still not being considered as a relevant matter. The integration between these two areas from a risk assessment perspective was studied in \cite{1673343, kirschgens2018robot} which resulted in a unified security and safety risk framework.
Commonly, robotics safety is understood as developing protective mechanisms against accidents or malfunctions, whilst security is aimed to protect systems against risks posed by malicious actors \cite{safetysecurity}. A slightly alternative view is the one that considers safety as protecting the environment from a given robot, whereas security is about protecting the robot from a given environment. In this proposal we adopt the later and tackle the emerging need to address security in robotics.\\
\newline

\section{About ROS}
\label{sec:ros}

Robots are networks of networks, with sensors capturing data, passing to compute technologies, and then on to actuators and back again in a deterministic manner. These networks can be understood as the nervous system of the robot, passing across compute \emph{Nodes}, that represent neurons. Like the human nervous systems, real-time information across all these computational \emph{Nodes} is fundamental for the robot to behave coherently.

"\emph{Robot brains}" are built with this same philosophy. Behaviors take the form of computational graphs, with data flowing between Nodes operating intra-process, inter-process and across physical networks (communication buses), while mapping to underlying sensors, compute technologies and actuators. The Robot Operating System (ROS) \cite{quigley2009ros} is a robotics framework for robot application development. ROS enables a robotics developer to build these computational graphs and create robot behaviors by providing libraries, a communication infrastructure, drivers and tools to put it all together. It provides an open source codebase with a commercially friendly license that helps roboticists reduce the effort to bring up robot behaviors. Widely used to govern interactions across robots, at the time of writing, the original ROS article \cite{quigley2009ros} has been cited about 12000 times, which shows its wide acceptance for research and academic purposes. ROS was born in this environment: its primary goal was to provide the software tools and libraries that users would need to employ to undertake novel robotics research and development. Adoption in industry has also been rapidly increasing over the last few years. According to the latest ROS community metrics \cite{rosmetrics} sampled every year in July, the number of ROS downloads has increased by over 50\%, with about 600 million downloads between July of 2020 and July of 2021\footnote{There's a slowdown identified in the year 2022 according to the data. This is justified mostly due to two reasons: 1) COVID influence and 2) slowdown of ROS contributors and interest due to Intrinsic's acquisition of Open Robotics}. Moreover, based on the download percentages reported from \texttt{packages.ros.org}, we observe a significant increase in adopting ROS 2, which suggests that within 2024 there will be more users using ROS 2 than its predecessor\footnote{We also note that past studies estimated that by 2024, 55\% of the total commercial robots shipped that year would include at least one ROS package. For more details, refer to \emph{\url{https://www.businesswire.com/news/home/20190516005135/en/Rise-ROS-55-total-commercial-robots-shipped}}.}.

\begin{figure}[!h]
    \centering
    \scalebox{0.4}{
    \tikzset{
      state/.style={
        rectangle, rounded corners, draw=gray!80, gray!20, thick,
        minimum height=2em, inner sep=10pt, text centered, text=black},
      state2/.style={
        rectangle, rounded corners, draw=black, fill=gray!70, thick,
        minimum height=2em, inner sep=10pt, text centered},
      epath/.style={draw, <->, shorten <=1pt, shorten >=1pt},
      cir/.style={draw, circle, fill, inner sep=2.5pt, text=white},
      fpath/.style={draw, color=gray, rounded corners=3pt, shorten <=3pt, shorten >=3pt, align = center, minimum height = 10pt, font = \bfseries},
      dir/.style={near start, right, font=\small},
    }
    \begin{tikzpicture}[->, >=latex, line width=0.75pt]
      \draw[fill=white, color=white] (-10.8,-4.5) rectangle (9.2,4.5);

    \node[state, text width=23.3cm, text height=0.2cm, fill=black!10]
      (rclkrs) at (1.7, 0.8) {
        {\tt ROS Client Library (rcl)}\\
    };
    
    \node[state, text width=23.3cm, text height=0.2cm, fill=black!10]
      (rmwkrs) at (1.7, -0.3) {
        {\tt ROS MiddleWare interface (rmw)}\\
    };
    
    \node[state, text width=7cm, text height=0.2cm, fill=black!10]
        (dds1adapter) at (-6.45, -1.39) {
            {DDS impl. 1 adapter (\texttt{rmw\_<dds1>})}\\
    };
    \node[state, text width=7cm, text height=1cm, fill=black!10]
      (dds1) at (-6.45,-2.8) {
        {\tt DDS implementation 1}\\
    };

    \node[state, text width=0.3cm, text height=0.2cm, fill=black!10]
        (ddsoadapter) at (-1.9, -1.39) {
            {...}\\
    };
    \node[state, text width=0.3cm, text height=1cm, fill=black!10]
      (ddso) at (-1.9,-2.8) {
        {...}\\
    };

    \node[state, text width=3.5cm, text height=0.2cm, fill=black!10]
        (ddnsadapter) at (0.9, -1.39) {
            {DDS impl. \texttt{n} adapter}\\
    };
    \node[state, text width=3.5cm, text height=0.6cm, fill=black!10]
      (ddns) at (0.9,-2.8) {
        {\tt DDS implementation n}\\
    };

    \node[state, text height=0.2cm, text width=9.9cm, fill=black!10]
        (iceoryxadapter) at (8.3, -1.39) {
            {Eclipse iceoryx™ adapter (\texttt{rmw\_iceoryx})}\\
    };
    \node[state, text height=1cm, text width=9.9cm, fill=black!10]
      (iceoryx) at (8.3,-2.8) {
        {\tt Eclipse iceoryx™}\\
    };
    

    \node[state, text width=11.6cm, text height=1.0cm, fill=black!10]
        (petalinux) at (-4.1, -4.7) {
            {Yocto/PetaLinux    }\\
    };

    \node[state, text width=10.9cm, text height=1.0cm, fill=black!10]
        (ubuntu) at (8.0, -4.7) {
            {Ubuntu Linux}\\
    };
    
    \node[state, text width=23.3cm, text height=1.0cm, fill=black!10]
      (lns) at (1.7, -6.6) {
            {Linux Networking Stack (LNS)}\\
    };
    
    \node[state, text width=3.5cm, text height=1.9cm, fill=black!10]
      (ethernet) at (-8.2, -8.9) {
            {Ethernet}\\
    };
    \node[state, text width=3.5cm, text height=1.45cm, fill=black!10]
      (tsn) at (-3.8, -8.9) {
            {Time Sensitive Networking (TSN)}\\
    };
    \node[state, text width=3.5cm, text height=1.9cm, fill=black!10]
      (ethercat) at (0.6, -8.9) {
            {WiFi}\\
    };
    \node[state, text width=3.5cm, text height=1.9cm, fill=black!10]
      (can) at (5.0, -8.9) {
            {5G}\\
    };
    \node[state, text width=5.7cm, text height=1.9cm, fill=black!10]
      (otherlink) at (10.5, -8.9) {
            {...}\\
    };
    

    %
    

    \node[state, text width=3.2cm, text height=8.5cm, fill=black!60, text=white]
      (osil7) at (-12.9, 2.6) {
        {\tt 7. Application}\\
    };
    
    \node[state, text width=3.2cm, fill=black!60, text=white]
      (osil6) at (-12.9, -2.7) {
        {\tt 6. Presentation}\\
    };

    \node[state, text width=3.2cm, fill=black!60, text=white]
      (osil5) at (-12.9, -3.8) {
        {\tt 5. Session}\\
    };

    \node[state, text width=3.2cm, text height=1cm, fill=black!60, text=white]
      (osil4) at (-12.9, -5.3) {
        {\tt 4. Transport}\\
    };

    \node[state, text width=3.2cm, fill=black!60, text=white]
      (osil3) at (-12.9, -6.8) {
        {\tt 3. Network}\\
    };

    \node[state, text width=3.2cm, fill=black!60, text=white]
      (osil2) at (-12.9, -7.9) {
        {\tt 2. Data Link}\\
    };

    \node[state, text width=3.2cm, text height=1cm, fill=black!60, text=white]
      (osil1) at (-12.9, -9.4) {
        {\tt 1. Physical}\\
    };

    \node[state, text width=18.9cm, text height=0.2cm, fill=black!10]
      (embeddedappstore) at (-0.5, 1.9) {
        {\tt device drivers} (sensors, actuators, etc.)\\
    };
    
    \node[state, text width=3.5cm, text height=0.6cm, fill=black!10]
      (navstackaccel) at (-8.2, 3.4) {
            {\texttt{navigation stack} Node \#1}\\
    };
    \node[state, text width=3.5cm, text height=0.6cm, fill=black!10]
      (navstackaccel2) at (-8.2, 5.3) {
            {\texttt{navigation stack} Node \#2}\\
    };

    \node[state, text width=3.5cm, text height=0.6cm, fill=black!10]
      (manaccel) at (-3.8, 3.4) {
            {\texttt{manipulation stack} Node \#1}\\
    };


    \node[state, text width=3.5cm, text height=0.6cm, fill=black!10]
      (perceptaccel) at (0.6, 3.4) {
            {\texttt{perception stack} Node \#1}\\
    };
    \node[state, text width=3.5cm, text height=0.6cm, fill=black!10]
      (perceptaccel2) at (0.6, 5.3) {
            {\texttt{perception stack} Node \#2}\\
    };

    \node[state, text width=5.7cm, text height=0.6cm, fill=black!10]
      (actaccel) at (6.1, 3.4) { 
            {\texttt{control stack} \\Node \#1}\\
    };    
    \node[state, text width=5.7cm, text height=0.6cm, fill=black!10]
      (actaccel2) at (6.1, 5.3) {
            {\texttt{control stack} \\Node \#2}\\
    };

    \node[state, text width=3.5cm, text height=0.6cm, fill=black!10]
      (accapp1) at (11.6, 2.3) {
            {\texttt{geometry stack (tf2)}  Node \#1}\\
    };    
    \node[state, text width=3.5cm, text height=0.6cm, fill=black!10]
      (accapp1) at (11.6, 4.2) {
            {\texttt{geometry stack (tf2)}  Node \#2}\\
    };    

    \path[-, dashed, fpath, color=black] ($(rclkrs)+(8.5, 0.6)$) -- ($(rclkrs)+(12.7,0.6)$) |-
            node[dir, text width=5cm] {\Large \textbf{ROS 2} \\ core layers} ($(iceoryxadapter)+(2,-0.5)$);


    


    \end{tikzpicture}}
    \caption{The robotics stack, including ROS 2 core layers. Reworked from \cite{mayoral2021kria}}
    \label{fig:robotics_stack}
\end{figure}
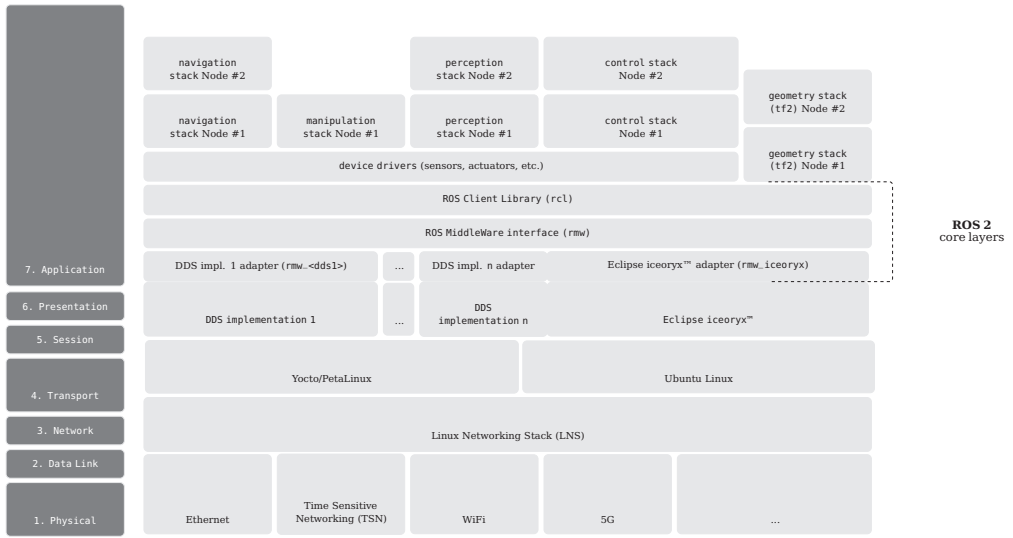

Following from the success of ROS, and redesigned for production environments, the second generation of ROS, ROS 2, is rapidly becoming a standard in the robotics industry. Built upon the Data Distribution Service (DDS \cite{pardo2003omg}) as its default communication middleware (used in safety-critical scenarios), ROS 2 aspiration was to offer a more robust framework that could be integrated into products and industry. Figure \ref{fig:robotics_stack} depicts the ROS 2 robotics stack, relative to the OSI model. From a technical viewpoint, it provides three main components:

\begin{enumerate}
    \item \textbf{ROS Plumbing}: 
    ROS provides an efficient message-passing infrastructure facilitating seamless communication between different robotic system components. This includes topics for asynchronous message exchange, services for synchronous requests, and actions for handling long-duration tasks. ROS's process management, or node management, ensures efficient handling of various processes, promoting a modular and scalable design. Additionally, ROS device drivers are integral for interfacing with the robot's hardware, ensuring effective control and data acquisition from sensors and actuators.

    \item \textbf{ROS Tools}: 
    ROS boasts a suite of visualization tools like RViz and rqt, essential for real-time monitoring and understanding of a robot's operation. CLI tools such as rostopic, rosnode, and rosservice offer convenient system interaction and control, crucial for development and troubleshooting. The simulation environment, provided by tools like Gazebo \cite{Koenig04designand}, enables safe and cost-effective testing in varied scenarios. Moreover, ROS's data logging capabilities, particularly with rosbag, are vital for data analysis, debugging, and scenario replication.

    \item \textbf{ROS Capabilities}: 
    ROS's control capabilities ensure precise robot movements and actions, encompassing real-time control loops and motion planning for complex task execution. Its perception capabilities, processing data from various sensors, are critical for tasks like navigation and environment mapping. ROS's manipulation features enable physical interaction with the environment, essential in applications such as assembly and logistics. Furthermore, coordinate transformations, managed by the tf library, maintain spatial relationships crucial for path planning and object manipulation.
\end{enumerate}


ROS 2 stands as a foundational element in the realm of modern robotics, playing a pivotal role in this thesis focused on offensive robot cybersecurity. As the \emph{de facto} standard for modern robotic software development, ROS offers a comprehensive framework that not only streamlines the creation of complex robotic applications but also inherently shapes the cybersecurity landscape of robotic systems. In the context of offensive cybersecurity, the ubiquity and modular architecture of ROS present both opportunities and challenges. Its widespread adoption makes it a critical target for cybersecurity research, as vulnerabilities within ROS (including any of its higher level abstraction, or underlayers) could potentially impact a vast array of robotic systems. This thesis, therefore, places significant emphasis on ROS, examining its architectures, communication protocols, and tools through the lens of offensive cybersecurity. By doing so, it aims to uncover and address potential security weaknesses, contributing to the robustness and resilience of robotics systems developed with ROS. This exploration not only aids in fortifying ROS-based robots against cyber threats but also provides valuable insights into the broader field of robot cybersecurity.

\section{Research Objectives and Approach}
\label{sec:research_objectives}

Security is not a product, but a process that needs to be continuously assessed in a periodic manner, as systems evolve and new cyber-threats are discovered. This becomes specially relevant with the increasing complexity of such systems \cite{bozic2017planning}. Current robotic systems are of high complexity, a condition that in most cases leads to wide attack surfaces and a variety of potential attack vectors which makes difficult the use of traditional approaches. Altogether, this leads to the following research question: given the increasing complexity of robotic systems, how can we protect them from cyber-threats? Inspired by the popular adage \emph{"the best defense is a good offense"}, the present work attempts to answer this question by focusing on offensive practices yet, given the time-consuming requirements of traditional offensive security approaches \cite{ghanem2018reinforcement, niculaelearning2018, schwartz2019autonomous}, the answer might not be straightforward and require further automation.\\

\begin{hypothesis}
\label{hypothesis:offensive_approach}
The inherent complexity of robotic systems, which leads to wide attack surfaces and a variety of potential attack vectors, will be more effectively tackled with an offensive security approach rather than a defensive one.
\end{hypothesis}

\noindent Scalability is one of the core problems of offensive security assessments (red teaming, penetration testing, etc.) as of now \cite{applebaum2016intelligent, applebaum2017analysis}. This thesis dives into robot cyber security and study how traditional and modern offensive methods apply to robotics, delivering a new angle to protect such systems. To this research question, the investigation herein proposed reviews and creates new mechanisms and techniques that leverage Machine Learning and Game Theory allowing for the full automation of offensive security assessments.

\begin{hypothesis}
\label{hypothesis:security_scalability}
An autonomous, end-to-end offensive security assessment algorithm that leverages Machine Learning and Game Theory will address scalability problems in offensive security.
\end{hypothesis}

This research also argues and challenges the underlying ethics of the security community when applied to robotics. Particularly, \emph{de facto} standards and conventions on security outcomes and disclosures are assessed and challenged, with a critical non-vendor-centered attitude. Ultimately, given the novelty of the topic itself, a relevant number of contributions are expected in the form of blueprints that will empower future research.\\
\newline
In the remainder of this chapter, the research objectives and approach are discussed in more detail. The overall research framework is introduced in Section \ref{sec:research_objectives}, followed by an explanation of each one of the building blocks. \\


\begin{mybox}
{\textbf{\textit{Security, cybersecurity and cyber security}}}
Within this thesis, the terms security, cybersecurity and cyber security are used interchangeably.
\end{mybox}

Although security has gained attention over the past years, our computational systems are increasingly vulnerable against cyber attacks. This includes of course all robots. As indicated by Bozic \cite{bozic2017planning}, one reason behind is definitely the system’s complexity, comprising interacting heterogenous systems. Complexity is the enemy of security. It is very difficult to assure no vulnerabilities in a system that an attacker could exploit. In robotics, the reality is even more concerning, as there is no culture of security\footnote{Early robot languages such as ABB's \emph{Rapid}, KUKA's \emph{KRL} or Fanuc's \emph{Karel} did not consider security primitives. Similarly, later robotic frameworks born from academic efforts (e.g. ROS\cite{quigley2009ros}) decided to ignore and forward security to third parties.}. As studied in previous work \cite{kirschgens2018robot, mayoralrvd}, except few vendors and selected actions, no real concern has been shown. Offensive security methods (such as pentesting or red teaming) play an important part in the security lifecycle. In these engagements, as introduced by Applebaum \cite{applebaum2017analysis}, (offensive) security teams try to break into an organization’s assets, identifying vulnerabilities along the way. Red teams take this concept even further, trying to fully emulate what real adversaries do: instead of just compromising the network and identifying vulnerabilities, they have a larger goal that requires significant post-compromise work. Offensive security methods are necessary to understand attackers' behavior, to train defensive mechanisms and ultimately, to help protect existing systems by discovering flaws first. The latter is specially important in highly complex systems, such as robots.\\
\newline
In an attempt to raise awareness and create some base formal studies and tools, this research objective is to study how offensive cyber security methods applied to robotics allow to protect such systems. In particular, this thesis aims to study how traditional methods in offensive security are applied to robotics. Such methods include penetration testing, red teaming, automated attack planning or advanced persistent threats amongst others. The expected outcome is twofold: a) to validate that these practices are applicable to robotics and b) that the inherent complexity of robotics systems, often systems of system at multiple levels, make the manual application of these methods unfeasible in most robotic setups. In addition, this study covers ethics and defensive security mechanisms that protect systems, networks and operations. Such is necessary for implementing offensive methods and understanding the underlying ethical and legal boundaries. Ultimately, this study looks into how to automate and optimize offensive cyber security attacks targeting robots. Preliminary ideas on this direction include the use and extension of prior work (as covered in \ref{aicornerstones}) on leveraging Machine Learning for the semi-automation of offensive security assessment. 
Such machine learning techniques could be trained using popular and widely available Capture The Flag (CTF) environments adapted for robotics. In addition, to optimize the use of resources and define proper learning and exploitation strategies, Game Theoretic approaches are  considered.\\
\newline
For coherence, this proposal has been organized in a framework consisting on four building blocks. As depicted in Figure \ref{fig:thesisblocks}, from bottom to top, (0) Ethics, (1) Blueprints for researching offensive security for robots, as well as robotics in general, (2) Cyber-attacking robots and (3) Cyber-attacking robots with Machine Learning and Game Theory. Generally, upper layers require of primitives from the underlying ones. The following subsections briefly describe each block.

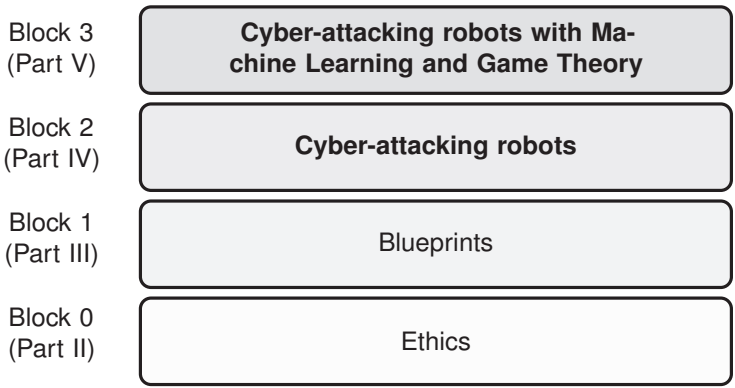
\begin{figure}[!h]
    \centering
    \begin{tikzpicture}[
    	scale=0.95,
    	start chain=1 going below, 
    	start chain=2 going right,
    	node distance=1mm,
    	desc/.style={
    		scale=0.95,
    		on chain=2,
    		rectangle,
    		rounded corners,
    		draw=black, 
    		very thick,
    		text centered,
    		text width=8cm,
    		minimum height=12mm
    		},
    	it/.style={
    	},
    	level/.style={
    		scale=0.95,
    		on chain=1,
    		minimum height=12mm,
    		text width=2cm,
    		text centered
    	},
    	level0/.style={fill=black!1},
    	level1/.style={fill=black!5},
    	level2/.style={fill=black!8},
    	level3/.style={fill=black!12},
    	every node/.style={font=\sffamily}
    ]
    
    \node [level] (Level 3) {Block 3 (Part \ref{part:automation})};
    \node [level] (Level 2) {Block 2 (Part \ref{part:attacking_robots})};
    \node [level] (Level 1) {Block 1 (Part \ref{part:blueprints})};
    \node [level] (Level 0) {Block 0 (Part \ref{part:ethics})};
    
    \chainin (Level 3); 
    \node [level3, desc, it] (ML) {\textbf{Cyber-attacking robots with Machine Learning and Game Theory}};
    \node [level2, desc, it, continue chain=going below] (attack) {\textbf{Cyber-attacking robots}};
    \node [level1, desc] (blueprints) {Blueprints};
    \node [level0, desc] (ethics) {Ethics};
    \end{tikzpicture}
    
    \caption{Thesis building blocks}
    \label{fig:thesisblocks}
\end{figure}


\subsection{Ethics}

This block treated in Part \ref{part:ethics} aims to study the ethical landscape of cybersecurity in robotics. Particularly, it aims to build some basic knowledge on the ethically accepted situation for robotics in Europe, in main areas of application. Beyond analysing existing vulnerabilities and manufacturer responses to them, this layer also studies robotic users' perception of cybersecurity by conducting various surveys in popular robotic communities.  In addition, this block challenges some of the \emph{de facto} standards in security when applied to robotics and discuss them with a critical attitude. Including topics such as disclosure policies, the requirement from vendors to freely notify research results and vulnerabilities (so called \emph{responsible disclosure}) or even, the relevance of disclosing flaws at all, inspired by previous criticism \cite{rescorla2005finding}. This block is discussed in more detail in Chapter \ref{chapter:landscape}. 

\subsection{Blueprints}

Building upon the previous ethical and legal baselines, this block is covered in Part \ref{part:blueprints} and develops the tools, libraries and frameworks necessary to perform robotics security research and offensive attacks in robots and their components. Each one of these research elements will be considered as "\emph{blueprints}".
Often considered part of a research continuum, the blueprints produced in this phase include:
\begin{itemize}
    
    \item Building upon the DevSecOps in Robotics principles researched in the non peer-reviewed published article \cite{mayoral2020devsecops},  \textbf{SROS2} proposes a series of developer tools, meant to be usable and that facilitate adding security capabilities to ROS 2 computational graphs (Chapter \ref{chapter:ros_security}). On top of SROS2, we also discuss the shortcomings of the DDS Security extensions \cite{ddssecurity11} which ROS 2 embraces and how additional layers of security via VPN could help mitigate them.
    
    \item Security measures such as those proposed by SROS2 in Chapter \ref{chapter:ros_security} add a computational overhead to robotic systems. In order to accurately estimate the performance impact, \textbf{RobotPerf} proposes an open-source and community-driven benchmarking tool designed to assess the performance of robotic computing systems in a standardized, architecture-neutral, and reproducible way, including the security associated aspects of them (Chapter \ref{chapter:benchmarking}).
    
    \item Chapter \ref{chapter:benchmarking} elucidates the computational overhead of integrating security into robotic communications, highlighting the resultant time bottlenecks that diminish system capabilities. To address this, \textbf{RobotCore} is introduced. It aims to empower the development of hardware accelerators that expedite security computations in robotics, thereby enabling the incorporation of security in robotic interactions without compromising capabilities. RobotCore introduces a framework and an \emph{open architecture for hardware acceleration} that extends ROS to easily develop hardware accelerators in an accelerator-agnostic manner, across GPUs and/or FPGAs  (Chapter \ref{chapter:accelerators}).
\end{itemize}


\subsection{Cyber-attacking robots}
Using previous blueprint security primitives, tools developed (such as Alurity \cite{mayoral2020alurity} or ExploitFlow \cite{mayoral2023exploitflow}) and know-how, research produced within this block aims at building basic attacks on robots and robot components targeting both, software and hardware. These attacks' objectives are threefold: first, they aim to contribute building some basic research ground on offensive cyber security robot practices. Second, these attacks will help raise the awareness of the current practices most vendors are applying. Third and ultimately, these actions help building expertise and develop an intuition on common offensive cyber-behavior, which will later be necessary when automating attacks using Machine Learning and Game Theory.

In Part \ref{part:attacking_robots}, we present our contributions while cyber-attacking robots. Chapter \ref{chapter:attacking_software} cares about attacking robot software and investigates the Data Distribution Service (DDS) standard, crucial for robotic operation, uncovering numerous vulnerabilities as detailed in \cite{dds14}. We highlight the discovery of publicly accessible DDS implementations, stressing the importance of persistent security evaluation and robust integration practices. Chapter \ref{chapter:attacking_hardware} explores attacking robot hardware through teardowns, emphasizing the significance of such methods in identifying security flaws. We also address the effects of planned obsolescence in robotics, advocating for a `Right to Repair' stance. The last chapter, \ref{chapter:attacking_industry} focuses on attacking robots in industry and introduces Akerbeltz ransomware, the first instance of ransomware targeting industrial collaborative robots. Akerbeltz demonstrates its impact on industrial robots by encrypting and locking robotic systems, which underscores the critical need for heightened security against potential cyber-attacks.

\subsection{Automated offensive robot cybersecurity}

Bridging the gap between the intricate world of robotic security vulnerabilities and proactive countermeasures, this final block shifts the focus from identification to automated action. It lays the groundwork for systems designed to anticipate and combat threats autonomously, fortifying defenses in an ever-evolving digital battleground where automation play an increasingly pivotal role. This transition encapsulates the move from understanding vulnerabilities and its mitigations (covered in parts \ref{part:blueprints} and \ref{part:attacking_robots}), to implementing advanced strategies for robust and resilient cybersecurity cognitive engines. Constructing upon all previous blocks, in Part \ref{part:automation} we attempt to design and deploy an autonomous system for cyber-attacking robots  using Machine Learning and Game Theory.

Building on the strategic insights provided by game theory, we advance into the domain of proactive cyber defense and offense within automated systems in Chapter \ref{chapter:game_theory}. This first chapter in Part \ref{part:automation} tackles the challenge of  \emph{Determining the attack target with Game Theory}, equipping us with a strategic toolset to predict the most vulnerable points in our robot threat landscape and to fortify them accordingly. This chapter describes how to prioritize our defensive efforts and to anticipate the moves of our adversaries, aligning our future actions with the most likely paths they might traverse in the adversary's illicit endeavors.

Following this strategic planning phase, \emph{Guiding attacks with Machine Learning} presented in Chapter \ref{chapter:machine_learning}, immerses us into the dynamic process of directing our cyber agents. Here, we leverage the advanced capabilities of modern Machine Learning, particularly the generative power of Large Language Models, to automate the decision-making processes in cyber-attacks. These models act as multi-faceted instruments, capable of parsing vast amounts of data, reasoning through complex scenarios, and generating adaptive attack and defense strategies. With the prototype \tool{}, we have a glimpse into a future where cybersecurity operations are autonomous, intelligent, and ever-evolving, reflecting the intricate nature of the digital battleground where they operate.

Part \ref{part:automation} as a whole forms a comprehensive narrative that transitions from a state of awareness and understanding of vulnerabilities to an active, autonomous and intelligent stance against them. They pave the way for automated systems that do not merely react to threats but anticipate and adapt to them, marking a significant leap in the field of offensive robot cybersecurity.

    \newgeometry{textheight=19cm} 

\chapter{Background context}
\label{chapter:background}


\section{Biographical cornerstones in robot cybersecurity}
\startchronology[startyear=1960,stopyear=2017,height=0.7ex]

\chronoevent[textwidth=3cm]{1962}{First installation of a cyber-physical system in a manufacturing plant \cite{historyofrobotics}}

\chronoevent[textwidth=5cm,markdepth=80pt]{1979}{The first time an industrial robot killed a human was in 1979 \cite{firstkiller}}

\chronoevent[textwidth=3cm,markdepth=-20pt]{2006}{Toward a unified security-safety model \cite{1673343}}

\chronoevent[textwidth=6cm, markdepth=140pt]{10/2016}{Dieber et al, Application-level security for ROS-based applications  \cite{ApplicationSecROS}}

\chronoevent[textwidth=6cm, markdepth=60pt]{06/2016}{Lera et al., Ciberseguridad en robots autónomos: Análisis y evaluación multiplataforma del bastionado ROS \cite{lera2016ciberseguridad}}

\chronoevent[textwidth=7cm, markdepth=-80pt]{11/2016}{White et al., SROS: Securing ROS over the wire, in the graph, and through the kernel \cite{white2016sros}}

\chronoevent[textwidth=4.5cm]{2009}{ROS: an open-source Robot Operating System \cite{quigley2009ros}}

\stopchronology


\restoregeometry

Arguably, the first installation of a cyber-physical system in a manufacturing plant was back in 1962 \cite{historyofrobotics}. The first human death caused by a robotic system is traced back to 1979 \cite{firstkiller} and the causes were safety-related according to the reports. From this point on, a series of actions involving agencies and corporations triggered to protect humans and environments from this machines, leading into safety standards.\\
\newline
Security however hasn't started being addressed scientifically in robotics until recently. One of the first peer-reviewed published articles on the topic goes back to 2016 \cite{lera2016ciberseguridad} which already warns about the security dangers of the Robot Operating System (ROS) \cite{quigley2009ros}. Following this first assessment, the same group in Spain authored a series of articles touching into robot security \cite{lera2016cybersecurity, lera2017cybersecurity, guerrero2017empirical, balsa2017cybersecurity, rodriguez2018message}. Contemporary to Lera's work, White et al. propose an addition to ROS API to support modern cryptography and security measures \cite{white2016sros}. This work led by White (UC San Diego, Spain) and Caiazza (Ca' Foscari University, Italy) delivered a series of articles \cite{ caiazza2017security, white2018procedurally, white2019sros1, caiazza2019enhancing, white2019network, white2019black} which focused on proposing defensive blueprints for robot cybersecurity. Starting also in 2016 \cite{ApplicationSecROS}, Dieber (Joanneum Research, Austria) led a series of publications that researched robot cybersecurity focusing around ROS \cite{Dieber:2017:SRO:3165321.3165569, dieber2017safety, SecurecomROS, taurer2018secure, dieber2019security}.\\
\newline
A careful review of the prior art in the last paragraph indicates that there're three groups, in different points of the world, that almost simultaneously started working in robot cybersecurity by tackling the lack of security primitives and proposing studies and blueprints for securitization. Most of this existing work focuses on defensive approaches and mechanisms. Very few publications touched into offensive approaches to secure robotic systems being \cite{dieber2020penetration} one of the first. Some pioneering work on the offensive side focused onto generating primary concerns and social awareness on robot security faults \cite{hackingbeforeskynet, hackingbeforeskynet2} but from this literature review, very few instances of offensive security have been seen so far in robotics.\\
\newline
The following sections shed some additional light into the foundations of this research.





\section{Robotics software quality, safety and security}

\textbf{Quality} (Quality Assurance or QA for short) and \textbf{Security} are often misunderstood when it comes to software. Ivers argues \cite{ivers_2017} that quality "essentially means that the software will execute according to its design and purpose" while "security means that the software will not put data or computing systems at risk of unauthorized access". Within \cite{ivers_2017} one relevant question that arises is whether the quality problems are also security issues or vice versa. Ivers indicates that quality bugs can turn into security ones provided they're exploitable, and addresses the question by remarking that quality and security are critical components to a broader notion: software integrity. Depicted in Figure \ref{figure:integrity}, software integrity can be represented the union of both software security and quality (\textbf{Software Integrity} = Security $\cup$ Quality).

\def\firstcircle{(0,0) circle (1.5cm)}
\def\secondcircle{(0:2.3cm) circle (1.5cm)}

\colorlet{circle edge}{black!80}
\colorlet{circle area}{black!20}
\colorlet{circle area2}{black!5}

    \tikzset{
 filled/.style = {fill=circle area2, draw=circle edge, thick},
outline/.style = {draw=circle edge, thick},
      F/.style = {draw, inner sep=7mm, fit=(current bounding box),
                  node contents={}}
            }

\begin{figure}[h!]
    \centering
    \begin{minipage}[t]{0.75\linewidth} 
        \centering   
        \begin{tikzpicture}
            \draw[filled] \firstcircle node {\small Security}
                          \secondcircle node {\small Quality};
            \node[anchor=south] at (current bounding box.north) {};
        \end{tikzpicture}
    \caption{Software integrity as the union between security and quality}
    \label{figure:integrity}
    \end{minipage}
\end{figure}
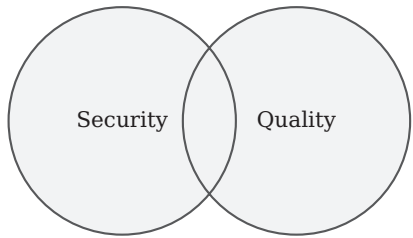

\noindent Coming from the same group, Vamosi \cite{vamosi_2017} argues that "quality code may not always be secure, but secure code must always be quality code". This somehow conflicts with the previous view and leads one to think that secure software is a subset of quality. The author of this thesis rejects this view and argues instead that Quality and Security remain two separate properties of software that may intersect on certain aspects (e.g. testing) as depicted in Figure \ref{fig:security}. \\
While the target of this thesis is Security, Quality is also studied given its intersection. Often, both secure and quality code share several requirements and mechanisms to assess them. This includes testing approaches such as static code testing, dynamic testing, fuzz testing or software component analysis (SCA) among others.\\


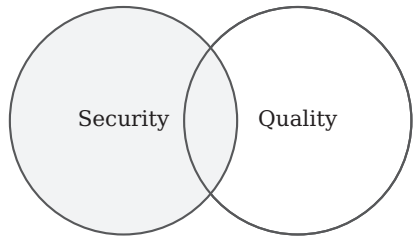
\begin{figure}[h!]
    \centering
    \begin{minipage}[t]{0.75\linewidth} 
        \centering   
        \begin{tikzpicture}
        \begin{scope}
                \clip \firstcircle;
                \fill[filled] \secondcircle;
        \end{scope}
        \draw[outline] \secondcircle node {\small Quality};
        \draw[filled] \firstcircle node {\small Security};
        \draw[outline] \secondcircle node {};
        \node[anchor=south] at (current bounding box.north) {};
        \end{tikzpicture}
        \caption{Security as the target of this thesis. 
        }
        \label{fig:security}
    \end{minipage}
\end{figure}



\def\itcircle{(0,0) circle (1.5cm)}
\def\otcircle{(0:2.3cm) circle (1.5cm)}
\def\roboticscircle{(2.3cm:2.3cm) circle (1.5cm)}
\def\iotcircle{(1.3cm:1.3cm) circle (1cm)}
\def\iotcircledos{(1.3cm:1.3cm) circle (3cm)}

\colorlet{circle edge}{black!80}
\colorlet{circle target}{black!20}
\colorlet{circle area}{black!10}
\colorlet{circle area2}{black!5}

    \tikzset{
 target/.style = {fill=circle target, draw=circle edge, thick},
 filled/.style = {fill=circle area2, draw=circle edge, thick},
outline/.style = {draw=circle edge, thick},
      F/.style = {draw, inner sep=7mm, fit=(current bounding box),
                  node contents={}}
            }


\noindent In robotics there is a clear separation between Security and Quality that is best understood with scenarios involving robotic software components. For example, if one was building an industrial Autonomous Guided Vehicle (AGV) or a self-driving car, often, she/he would need to comply with coding standards (e.g. MISRA \cite{ward2006misra} for developing safety-critical systems).  The same system's communications, however, regardless of its compliance with the coding standards, might rely on a channel that does not provide encryption or authentication and is thereby subject to eavesdropping and man-in-the-middle attacks. Security would be a strong driver in here and as remarked by Vamosi \cite{ivers_2017}, "neither security nor quality would be mutually exclusive, there will be elements of both".\\
\newline
Quality in robotics, still on its early stages \cite{pichler2019can}, is often viewed as a pre-condition for \textbf{Safety}-critical systems. Similarly, as argued by several, safety can't be guaranteed without security \cite{goertzel2009software, bagnara2017misra}. Coding standards such as MISRA C have been extended \cite{misra2016amendment1, misra2016addendum2} to become the C coding standard of choice for the automotive industry and for all industries developing embedded systems that are safety-critical and/or security-critical \cite{bagnara2017misra}. As introduced by ISO/IEC TS 17961:2013 "in practice, security-critical and safety-critical code have the same requirements". This statement is somehow supported by Goertzel \cite{goertzel2009software} but emphasized the importance of software remaining dependable under extraordinary conditions and the interconnection between safety and security in software. This same argument was later extended by Bagnara \cite{bagnara2017misra} who acknowledges that having embedded systems non-isolated anymore plays a key role in the relationship between safety and security. According to Bagnara, "while safety and security are distinct concepts, when it comes to connected software" (non-isolated)  "not having one implies not having the other", referring to integrity.\\

\noindent In the opinion of the author of this dissertation, coding standards such as MISRA or ISO/IEC TS 17961:2013 for safety-critical and security-critical software components do not guarantee that the final robotic system will be secure and thereby, safe. As illustrated in the example above, robotics involves a relevant degree of system integration and inter-connectivity (non-isolated embedded systems connected together internally and potentially, externally as well). As such, both secure and ultimately safe robotics systems do not only need to ensure quality by complying against coding standards but also guarantee that they aren't exploitable by malicious attackers. \\
\newline
In the traditional view of \emph{system security}, safety in often understood as "nothing bad happens naturally" while security intuitively indicates that "nothing bad happens intentionally". Acknowledging the acceptance of this view in the security community, this thesis puts special focus in the context of robotics 
To further understand terminology and prior art in a robotics context, Table \ref{table:terms} presents a summary of the concepts discussed with their interpretation applied to robotics and the corresponding sources used:\\

\begingroup
\hypersetup{linkcolor=lightgray, pdfborderstyle={/S/U/W 0}} 

\setlength{\tabcolsep}{20pt}
\renewcommand{\arraystretch}{1.5}
\begin{table}[h!]
    \centering
    \scalebox{0.75}{
    \begin{tabular}{ |c|m{8cm}|m{1.8cm}| } 
        \hline
        
        \rowcolor{black}
        \color{white}\textbf{Concept} & \color{white}\textbf{Interpretation} & \color{white}\textbf{Reference/s} \\
        \hline
        Safety & Safety cares about the possible damage a robot may cause in its environment. Commonly used taxonomies define it as the union of integrity and the absence of hazards ($\textrm{Safety} = \textrm{Integrity} + \textrm{Absence of catastrophic consequences}$) & \cite{kirschgens2018robot,bagnara2017misra,goertzel2009software} \\ 
        
        \rowcolor{black!5} 
        Security & Security aims at ensuring that the environment does not disturb the robot operation, also understood as  that the robot will not put its data, actuators or computing systems at risk of unauthorized access. This is often summarized as $\textrm{Security} = \textrm{Confidentiality} + \textrm{Integrity} + \textrm{Availability}$.  & \cite{kirschgens2018robot,bagnara2017misra,goertzel2009software,ivers_2017} \\ 
        
        Quality & Quality means that the robot's software will execute according to its design and purpose & \cite{ivers_2017}\\
        
        \rowcolor{black!5} 
        Integrity & Integrity can be described as the absence of improper (i.e., out-of-spec) system (or data) alterations under normal and exceptional conditions & \cite{bagnara2017misra}\\
        \hline
    \end{tabular}}
    \caption{Summary of security-related terms}
    \label{table:terms}
\end{table}

\endgroup


\noindent Security, as understood in Table \ref{table:terms} shares Integrity with Safety. As discussed in \cite{bagnara2017misra, goertzel2009software}, "the only thing that distinguishes the role of integrity in safety and security is the notion of \emph{exceptional condition}. This reflects the fact that exception conditions are perceived as accidental (safety hazards) or intentional (security threats)". The later, security threats, are always connected to vulnerabilities. A vulnerability is a mistake in software or hardware that can be directly used by an arbitrary malicious actor or actress to gain access to a system or network, operating it into an undesirable manner \cite{Pfleeger:2002:SC:579149}. In robotics, security flaws such as vulnerabilities are of special relevance given the physical connection to the world that these systems imply. As discussed in \cite{kirschgens2018robot}, "\emph{Safety cares about the possible damage a robot may cause in its environment, whilst security aims at ensuring that the environment does not disturb the robot operation.  Safety and security are connected matters. A security-first approach is now considered as a prerequisite to ensure safe operations}". Figure \ref{fig:security_safety_quality} depicts the concepts of Safety, Quality and Security representing their relationships. In particular, safety as a super-set of security and quality.

\def\firstcircle{(0,0) circle (1cm)}
\def\secondcircle{(0:1.6cm) circle (1cm)}
\def\thirdcircle{(0:0.8cm) circle (1.8cm)}

\colorlet{circle edge}{black!80}
\colorlet{circle area}{black!20}

\begin{figure}[h!]
    \centering
    \begin{minipage}[t]{0.75\linewidth} 
        \centering   
        \begin{tikzpicture}
        \begin{scope}
                \clip \firstcircle;
                \fill[filled] \secondcircle;
        \end{scope}
        \draw[outline] \secondcircle node {\footnotesize Quality};
        \draw[filled] \firstcircle node {\footnotesize Security};
        \draw[outline] \secondcircle node {};
        \draw[outline] \thirdcircle node {};
        \node[anchor=south] at (current bounding box.north) {Safety};
        \end{tikzpicture}
        \caption{
        Safety, quality and security relationship
        }
        \label{fig:security_safety_quality}
    \end{minipage}
\end{figure}
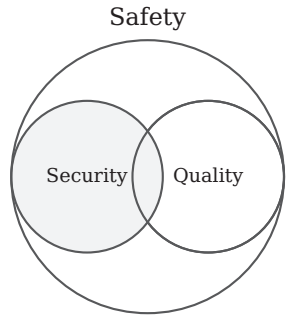

This thesis focuses on security, however, its relationship with quality and safety must be noted.  Security intersects quality in the sense that some methods are shared between both (e.g. testing). Moreover, as discussed, a safe system demands first security and quality.\\
\newline
\noindent Robot security vulnerabilities are potential attack points in robotic systems that can lead not only to considerable losses of data but also to safety incidents involving humans. Some authors \cite{zheng2011ivda} claim that unresolved vulnerabilities are the main cause of loss in cyber incidents. The mitigation and patching of vulnerabilities has been an active area of research \cite{ma2001sharing, ALHAZMI2007219, Shin2011Vulnerabilities, Finifter2013BugBounty, McQueen2009Zeroday, Bilge:2012:BWK:2382196.2382284} in computer science and other technological domains. Unfortunately, even with robotics being an interdisciplinary field composed from a set of heterogeneous disciplines (including computer science), to the best of the knowledge of this thesis' author and his literature review, not much vulnerability mitigation research related to robotics has been presented so far. \\

\section{Cybersecurity Across Various Domains: IT, OT, IoT, and Robotics}

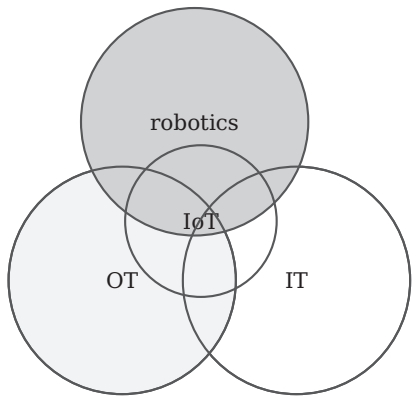
\begin{figure}[h!]
    \centering
    \begin{minipage}[t]{0.75\linewidth} 
        \centering   
        \begin{tikzpicture}
        \begin{scope}
                \clip \itcircle;
                \fill[filled] \otcircle;
        \end{scope}
        \draw[outline] \otcircle node {\small IT};
        \draw[filled] \itcircle node {\small OT};
        \draw[target] \roboticscircle node {\small robotics};
        \draw[outline] \iotcircle node {\small IoT};
        \draw[outline] \otcircle node {};
        \draw[outline] \itcircle node {};
        \end{tikzpicture}
        \caption{Relationship between various technology domains}
        \label{fig:technology_domains}
    \end{minipage}
\end{figure}

Cybersecurity is fundamentally about ensuring systems are protected from digital dangers and threats. This concept, while universally applicable, varies in interpretation and implementation across different technological domains such as Information Technology (IT), Operational Technology (OT), Internet of Things (IoT), and robotics. Each domain, due to its unique technological architecture, necessitates a distinct approach to cybersecurity.

\subsection{Definitions and Differentiations}
In the realm of digital security, various terms have evolved to denote domain-specific aspects:

\begin{description}
    \item[\textbf{Information Technology (IT):}] Involves the use of computers for data management and is typically employed organization-wide \cite{wikipedia2020information}.
    \item[\textbf{Operational Technology (OT):}] Focuses on managing industrial operations through monitoring and control of devices and processes \cite{tuvit2011industrial}.
    \item[\textbf{Industrial Control Systems (ICS):}] A crucial segment of OT that includes systems used to oversee industrial processes.
    \item[\textbf{Internet of Things (IoT):}] Represents the extension of internet connectivity to various sensors and devices.
    \item[\textbf{Industrial IoT (IIoT):}] The application of IoT technologies in industrial settings.
    \item[\textbf{Robotics:}] An interdisciplinary domain that integrates sensors, actuators, and computation to create autonomous systems.
\end{description}


The application of cybersecurity varies across IT, OT, IoT, IIoT, and robotics. Each technology, while subject to the overarching goal of operational security, requires a tailored approach. For instance, ICS, evolving from isolated systems to more IT-like structures, have increased vulnerability to cyber threats due to increased connectivity \cite{stouffer2011nist}. The confluence of IT and OT in IoT necessitates a unique set of security skills, distinct from those required in traditional IT or OT environments \cite{atlam2017integration}. Robotics, due to its inherent complexity and the integration of various systems, demands domain-specific knowledge and a specialized approach to cybersecurity.

\newcolumntype{L}[1]{>{\raggedright\arraybackslash}p{#1}}

\setlength{\tabcolsep}{20pt}
\renewcommand{\arraystretch}{1.5}
\begin{table}[h!]
    \centering
    \scalebox{0.55}{
    \begin{tabular}{ |L{3cm}|L{2cm}|L{3cm}|L{3cm}|L{3cm}| } 
        \hline
        \rowcolor{black}
        \color{white}\textbf{Security topic} & \color{white}\textbf{IT} & \color{white}\textbf{OT (ICS)} & \color{white}\textbf{I(I)oT} & \color{white}\textbf{Robotics} \\
        \hline
        \textbf{Antivirus} & Widely used, easily updated & Complicated, network detection and prevention solutions mostly & Complicated, technology fragmentation & Complicated, complex, few solutions, network monitoring insufficient \\        
        \rowcolor{black!5} 
        \textbf{Life cycle} & 3-5 years & 10-20 years & 5-10 years & 10+ years \\
        \textbf{Awareness} & Decent & Poor & Poor & Very poor \\
        \rowcolor{black!5} 
        \textbf{Patch management} & Often & Rare, manufacturer approval needed & Rare, end-user permission/action often needed & Very rare, complex setups \\
        \textbf{Change Management} & Regular and scheduled & Rare & Rare & Very rare, specialized technicians \\
        \rowcolor{black!5} 
        \textbf{Evaluation of log files} & Established practice & Unusual & Unusual & Non-established \\
        \textbf{Time dependency} & Delays accepted & Critical & Some delays accepted & Critical \\
        \rowcolor{black!5} 
        \textbf{Availability} & Not always, failures accepted & 24*7 & Some failures accepted & 24*7 available \\
        \textbf{Integrity} & Failures accepted & Critical & Some failures accepted & Critical \\
        \rowcolor{black!5} 
        \textbf{Confidentiality} & Critical & Relevant & Important & Important \\
        \textbf{Safety} & Not relevant & Relevant & Not relevant & Critical \\
        \rowcolor{black!5} 
        \textbf{Security tests} & Widespread & Rare and problematic & Rare & Mostly not present \\
        \textbf{Testing environment} & Available & Rarely available & Rarely available & Rare, difficult to reproduce \\
        \rowcolor{black!5} 
        \textbf{Determinism requirements} & Non-real-time, high delay/jitter acceptable & Hard real-time, high delay/jitter not acceptable & Often non-real-time, soft/firm real-time in some environments & Hard real-time for safety critical, firm/soft for others \\
        \hline
    \end{tabular}}
    \caption{Comparative analysis of cybersecurity across different technologies}
    \label{tab:cybersecurity_comparison}
\end{table}

Table \ref{tab:cybersecurity_comparison} presents a comparative analysis (extended from \cite{stouffer2011nist, tuvit2011industrial}) which reveals distinct requirements and challenges in implementing cybersecurity across these technologies. Notably, robotics stands out due to its critical real-time requirements and safety implications, underscoring the need for a comprehensive and dedicated cybersecurity strategy tailored to robotic systems. Such approach should emphasize the relevance of isochronous and deterministic communications, crucial in robotic systems. 

\subsection{Misunderstandings in Robot Security}
A prevalent issue in the robotics industry is the conflation of IT security with comprehensive robotics security. Many manufacturers, such as Mobile Industrial Robots (MiR) as depicted in Figure \ref{fig:security_misunderstanding}, focus predominantly on IT security, neglecting the broader spectrum of threats specific to robotics \cite{mayoralvilches2020compromised}. This oversight can lead to severe safety hazards, as compromised robots pose risks not just to data privacy but also to human safety and environmental integrity. Therefore, it is imperative for manufacturers to adopt a holistic view of security that encompasses all aspects unique to robotic systems.

\begin{figure}[ht]
    \centering
    \begin{subfigure}[b]{0.48\textwidth}
        \centering
        \includegraphics[width=\textwidth]{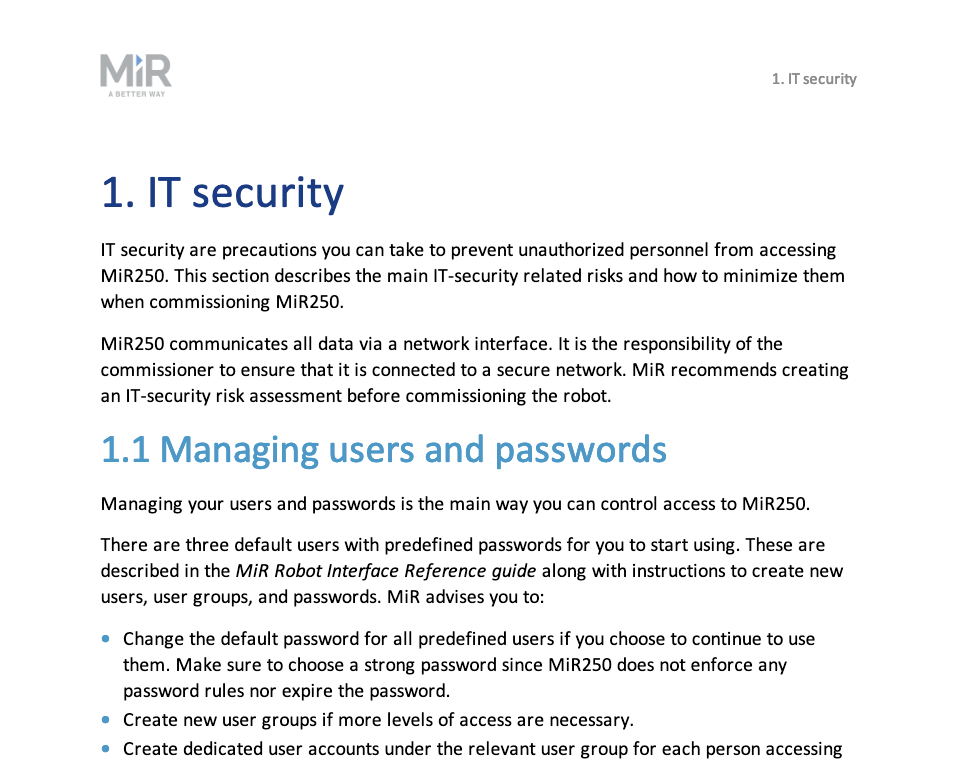}
    \end{subfigure}
    \hfill 
    \begin{subfigure}[b]{0.48\textwidth}
        \centering
        \includegraphics[width=\textwidth]{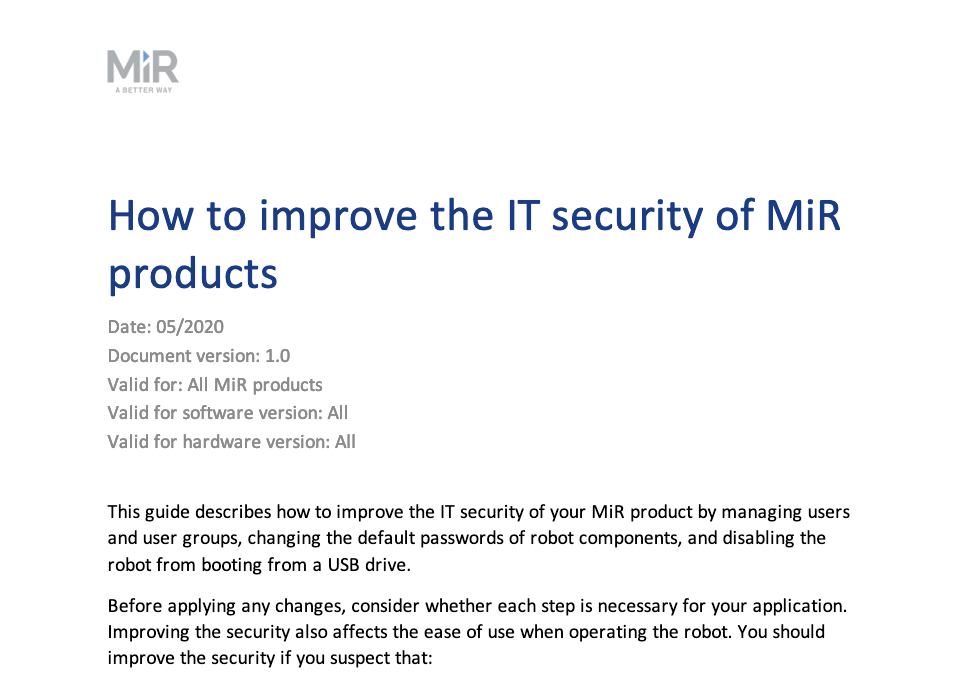}
    \end{subfigure}
    \caption{Captures from Mobile Industrial Robots (MiR) documentation indicating a purely IT security approach.}
    \label{fig:security_misunderstanding}
\end{figure}

Given the safety implications of autonomous systems, and given the security precondition of safety, the need for robust cybersecurity measures in robotics surpasses that in IT, OT, or IoT. The complexity and integration of systems in robotics demand a security approach that is both comprehensive and specific to the domain, with careful consideration for robot's real-time nature. This thesis advocates for the implementation of such dedicated security measures in the field of robotics.

\section{Machine Learning and Game Theoretic approaches to offensive cybersecurity}

Offensive security typically involves a malicious attacker performing a series of steps, often connected, and usually requiring a high degree of expertise or knowledge on the target system. Within offensive security practices, Penetration Testing (\emph{pentesting}) is an active method for assessing and evaluating the security of digital assets by planning, generating and executing all possible attacks that can exploit existing vulnerabilities. Figure \ref{figure:offensive_pipeline} pictures the pentesting pipeline for a robotics system inspired on prior work \cite{center2013apt1, assante2015industrial, bryant2017novel, strom2017finding}.\\

\tikzset{
  nonterminal/.style={
    rectangle,
    minimum size=6mm,
    very thick,
    draw=red!50!black!50,         
    top color=white,              
    bottom color=red!50!black!20, 
    font=\itshape
  },
  terminal/.style={
    rounded rectangle,
    minimum size=6mm,
    very thick,draw=black!50,
    top color=white,bottom color=black!20,
    font=\ttfamily},
  skip loop/.style={to path={-- ++(0,#1) -| (\tikztotarget)}}
}

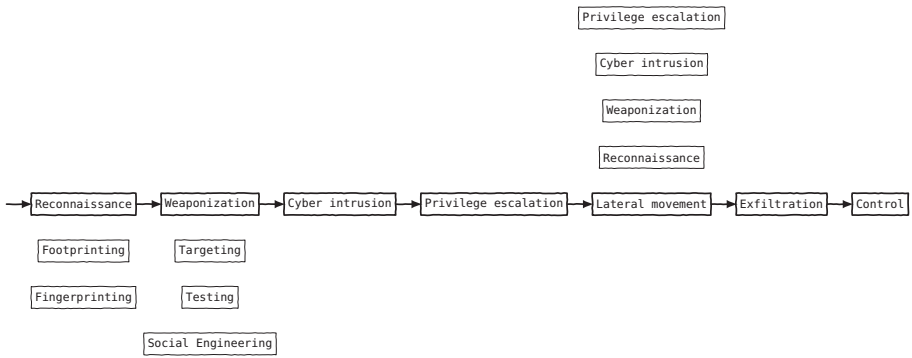
\begin{figure}[!h]
    \centering
    \scalebox{0.8}{
    \begin{tikzpicture}[
        scale=0.7,
        >=latex,thick,
        /pgf/every decoration/.style={/tikz/sharp corners},
        fuzzy/.style={decorate,
            decoration={random steps,segment length=0.5mm,amplitude=0.15pt}},
        minimum size=6mm,line join=round,line cap=round,
        terminal/.style={scale=0.7,rectangle,draw,fill=white,fuzzy,rounded corners=3mm},
        nonterminal/.style={scale=0.6,rectangle,draw,fill=white,fuzzy},
        node distance=4mm
      ]
    
        \ttfamily
        \begin{scope}[start chain,
                every node/.style={on chain},
                terminal/.append style={join=by {->,shorten >=-1pt,
                    fuzzy,decoration={post length=4pt}}},
                nonterminal/.append style={join=by {->,shorten >=-1pt,
                    fuzzy,decoration={post length=4pt}}},
                support/.style={coordinate,join=by fuzzy}
            ]
            \node [support]             (start)        {};
            \node [nonterminal](reco)                  {Reconnaissance};
            \node [nonterminal](weapo)                        {Weaponization};
            \node [nonterminal]                        {Cyber intrusion};
            \node [nonterminal]                           {Privilege escalation};
            \node [nonterminal](lateral)                           {Lateral movement};
            \node [nonterminal]                        {Exfiltration};
            \node [nonterminal]                        {Control};
            \node [coordinate] (end)        {};
        \end{scope}
        
        
        \node (footprinting)  [line width=0.1mm, nonterminal,below=of reco] {Footprinting};
        \node (fingerprinting)  [line width=0.1mm, nonterminal,below=of footprinting] {Fingerprinting};
        
        \node (targeting)  [line width=0.1mm, nonterminal,below=of weapo] {Targeting};
        \node (testing)  [line width=0.1mm, nonterminal,below=of targeting] {Testing};
        \node (social)  [line width=0.1mm, nonterminal,below=of testing] {Social Engineering};

        \node (reconnaissance_l)  [line width=0.1mm, nonterminal,above=of lateral] {Reconnaissance};
        \node (weapo_l)  [line width=0.1mm, nonterminal,above=of reconnaissance_l] {Weaponization};
        \node (intrusion_l)  [line width=0.1mm, nonterminal,above=of weapo_l] {Cyber intrusion};
        \node (Privilege escalation)  [line width=0.1mm, nonterminal,above=of intrusion_l] {Privilege escalation};

    \end{tikzpicture}}
    \caption{Attacker model for robotics. Adapted from \cite{center2013apt1, assante2015industrial, bryant2017novel, strom2017finding}
    }
    \label{figure:offensive_pipeline}
\end{figure}

\noindent As described by several authors \cite{ghanem2018reinforcement, niculaelearning2018, schwartz2019autonomous}, current pentesting practices are becoming repetitive, complex and resource consuming despite the use of automated tools. Morever, they generally require a high level of expertise in those systems targeted. Applebaum et al. \cite{applebaum2016intelligent, applebaum2017analysis} also underline the importance of pentesting in the security lifecycle and the shortcomings associated with manual execution. These challenges have made several researchers to turn into Machine Learning (ML) to try and automate the process of pentesting or other security assessments\footnote{Security assessments are often misunderstood and pentesting is used to group them all. For the context of this thesis, penetration testing assessment refers to the activity that aims to find as many vulnerabilities and configuration issues as possible in the time allotted, and exploiting those vulnerabilities to determine the risk of the vulnerability. Red team assessment aims to test the organization's detection and response capabilities against a targeted objective.}.\\
\newline
While several articles describe the growing popularity of the use of machine learning in defensive approaches for security,  \cite{niculaelearning2018, rege2018machine, vinayakumar2019deep} already report that very few sources cover the use of Machine Learning for offensive approaches to security. While reviewing literature, the author of this thesis found it extremely helpful to step back a few decades to review previous work that attempted to automate pentesting activities using Artificial Intelligence (AI) approaches. A complete and detailed walkthrough on these past efforts is beyond the scope of this thesis but nicely compiled at \cite{zhou2019nig}. For completeness and reader's conveniece, the most relevant events on this regard prior to the start of this thesis (2020) have been pictured in the diagram below:\\



\label{aicornerstones}
\startchronology[startyear=1999,stopyear=2020,height=0.5ex]

\chronoevent[textwidth=5cm,markdepth=30pt]{1999}{Attack tree model \cite{schneier1999attack}}

\chronoevent[textwidth=4cm,markdepth=-50pt]{2002}{Attack graph model \cite{sheyner2002automated}}

\chronoevent[textwidth=3cm,markdepth=100pt]{2011}{Personal vulnerability analysis \cite{roberts2011personalized}}

\chronoevent[textwidth=4cm, markdepth=40pt]{2013}{Plan Domain Definition Language (PDDL) \cite{obes2013attack}}

\chronoevent[textwidth=2.5cm, markdepth=-20pt]{2013}{Partially Observable Markov Decision Processes (POMDP), attack planning with uncertainty \cite{sarraute2013penetration}}

\chronoevent[textwidth=4cm, markdepth=-130pt]{2018}{Autonomous penetration testing using reinforcement learning\cite{schwartz2019autonomous}}

\chronoevent[textwidth=4cm, markdepth=80pt]{2019}{"Reinforcement Learning for Efficient Network Penetration Testing" \cite{ghanem2020reinforcement}}

\chronoevent[textwidth=4cm]{2018}{"Learning to hack" \cite{niculaelearning2018}}

\chronoevent[textwidth=2.5cm, markdepth=-10pt]{2019}{NIG-AP: a new method for automated penetration testing \cite{zhou2019nig}}

\stopchronology

\noindent Moving past AI and diving more contemporary ML approaches\footnote{The author of this thesis assumes the reader has some basic understanding of the differences and relationships between "artificial intelligence", "machine learning" and "deep learning".}, in 2018 Niculae \cite{niculaelearning2018} presents one of the first pieces available where ML, in particular, Reinforcement Learning (RL) is used to tackle the pentesting assessment task while modeling it as a game. Niculae evaluates different RL techniques, namely Q-Learning (QL), Extended Classifier Systems (XCS) or Deep Q-Networks (DQN) while comparing it with Human performance and multiple fixed-strategies  including Random, Greedy and a Finite State Machine. During his work and for his particular simplified model, Niculae reports building an agent that can learn to penetrate a network comparable in strength to a human, paving the way for future work in the application of RL in pentesting. 

Also in 2018 (though published later), Schwartz \cite{schwartz2019autonomous} argues that current approaches to automated pentesting have relied on methods which require a model of the exploit outcomes, however the cyber security landscape is rapidly changing as new software and attack vectors are developed which makes producing and maintaining up-to-date models a challenge. Instead and similar to Niculae, Schwartz advocates for the application of RL and evaluates QL and DQL obtaining positive results. Interestingly however, Schwartz remarks that the implemented algorithms in the simulated scenarios were only practical for smaller networks and numbers of actions and would not be able to scale to truly large networks. 

The most recent studies available on this domain are \cite{ghanem2020reinforcement} and \cite{zhou2019nig}. The first one explores the design and development of an ML-based pentesting system while supported by human experts. The article itself provides an interesting motivation for robotics, but does not provide a clear description of the underlying RL techniques, the reward function, or a detailed reasoning on how the state space and the action spaces of the pentesting activity were captured. Moreover, the authors conclude their paper highlighting that the main drawback of their contribution is the need of a high-caliber human expert supervising the initial training. Zhou et al \cite{zhou2019nig} present a new method for pentesting using RL. Particularly, they propose an attack planning algorithm based on RL to automatically discover attack paths without prior knowledge of the scenario network. Opposed to \cite{ghanem2020reinforcement}, this last article provides a good review of the pentesting foundations, underlying math formalizing pentesting as a Markov decision process and a detailed description of the assumptions made for representing the problem in reinforcement learning.
%

%
Despite the advances on ML, scalability remains a problem for automated pentesting across network layers, even when powered with modern techniques such as the ones described above. Scalability can be understood easily by switching across (attack) subjects and/or determining next steps or actions during a pentesting offensive effort.  It is unclear whether the current trend of ML research (mostly driven by increasingly complex RL constructs) can help lead towards an \emph{end-to-end ML-driven penetration testing}. The research conducted herein should tap into this problem and propose novel approaches.
Beyond the use of ML for addressing the scalability problem in pentesting, another interesting issue is selecting the right attack targets. More specifically, selecting which attack vectors (corresponding to individual targets) should be exploited first. When conducting an offensive pentesting exercise, commonly, multiple targets appear over the exercise. For each attack target, multiple attack vectors are often identified. Selecting which attack vector to prioritize first can be somewhat derived by using scoring metrics, like the Common Vulnerability Scoring System (CVSS) \cite{cvss2}. However, in practice, pentesters use this metric as yet one more indicator to build an intuition on \emph{where to attack}. Developing this intuition is complex and not fully understood. This thesis will also look at this problem by looking at Game Theory (GT). Computational game theory \cite{nguyen2016towards, elderman2016adversarial} can also help tackle the scalability problem and schedule resources while taking into account the importance of different targets, the responses of the adversaries to the security posture and the potential uncertainties in adversary payoffs and observations. The combination of both, machine learning and game theoretic approaches might eventually lead to results that empower autonomous and versatile red team assessments,  first envisioned by Applebaum \cite{applebaum2016intelligent} back in 2016.\\
\newline
A deeper study of the biographical cornerstones for AI in pentesting will be conducted in Chapter \ref{chapter:machine_learning}. To the best of this author's knowledge and extend of literature review, at the time of the start of this research (2020), no single article covered the use of ML and GT methods to secure robots via an offensive approach\footnote{Which means, employing a 'Hack-First' approach that is effective and scales by leveraging ML and GT}. This presents an interesting opportunity which is explored  throughout this research. 





    \part{Ethics}\label{part:ethics}


\chapter{Ethical Landscape in Robot Cybersecurity}
\label{chapter:landscape}

\epigraph{\textbf{Contributions}: \emph{Sole author and contributor of \cite{mayoral2022review}. All results are publicly available.}}{}

\section{Introduction}

This chapter, lying within the larger framework of ethical considerations in cybersecurity for robotics, delves into the complex landscape of this evolving field. The objective is multifaceted: to map out the current ethical standards within the robot cybersecurity domain, dissecting how these standards manifest in key application areas. This exploration goes beyond merely cataloging vulnerabilities and manufacturers' responses; it also seeks to understand the perceptions and attitudes of robotic users and vendors  toward cybersecurity. This understanding is garnered through surveys distributed across popular robotics communities, thus capturing a wide range of insights and sentiments.

An important aspect of this chapter is its critical examination of the established norms and standards in robot cybersecurity. It challenges and scrutinizes these norms, particularly focusing on the ethics of disclosure policies. This includes the expectations placed on vendors regarding the notification of research findings and vulnerabilities—a practice commonly known as \emph{responsible disclosure}. Furthermore, the chapter probes into the more controversial facets of this topic, such as the ethical implications of disclosing security flaws at all, drawing inspiration from previous critiques like those of Rescorla (2005) \cite{rescorla2005finding}.

To achieve a comprehensive understanding of the state of robot cybersecurity, this chapter employs a systematic review approach that unfolds in three distinct phases. Initially, it delves into the existing body of literature, painting a picture of the current cybersecurity landscape in robotics. Following this, the chapter presents findings from surveys conducted within various robotics groups and communities, offering a grassroots perspective on the state of cybersecurity in this field. The final phase of this exploration is anchored in empirical data, drawn from three years of proactive security research in robotics. This phase not only discusses identified vulnerabilities but also reflects on the offensive security exercises undertaken during this period.

The culmination of this chapter is a synthesis of insights and conclusions drawn from these diverse sources, including a discussion of the current ethical standards within the robot cybersecurity domain. It aims to provide actionable guidance on securing robotic systems, understanding their vulnerability to various attack vectors, and strategies to minimize their exposure to potential threats. This chapter thus serves as a critical piece in understanding the ethical underpinnings of cybersecurity in robotics, laying the groundwork for responsible and secure advancements in this dynamic and impactful field.

\section{Background}

As introduced before, arguably, the first installation of a cyber-physical system in a manufacturing plant dates back to 1962 \cite{historyofrobotics}. The first human death attributed to a robotic system occurred in 1979 \cite{firstkiller}, with safety-related issues cited as the cause. This incident sparked a series of actions by agencies and corporations aimed at protecting humans and environments from these machines, leading to the development of safety standards. However, security in robotics has only started to be addressed more recently. Following after McClean's (2013) early assessment, one of the initial published articles on the topic by Lera et al. (2016) warned about the security dangers of the Robot Operating System (ROS) \cite{lera2016ciberseguridad, quigley2009ros}. Subsequently, the same group in Spain authored a series of articles focusing on robot cybersecurity \cite{lera2016cybersecurity, lera2017cybersecurity, guerrero2017empirical, balsa2017cybersecurity, rodriguez2018message}. Around the same time, Dieber et al. (2016) led research into cybersecurity in robotics, proposing defensive blueprints for ROS-based robots \cite{Dieber:2017:SRO:3165321.3165569, dieber2017safety, SecurecomROS, taurer2018secure, dieber2019security}. Their work included enhancements to ROS APIs to incorporate modern cryptography and security measures. Concurrent with Dieber et al.'s work, White et al. (2016) also began producing a series of articles proposing defensive mechanisms for ROS \cite{caiazza2017security, white2018procedurally, white2019sros1, caiazza2019enhancing, white2019network, white2019black}.


A little over a year later, starting in 2018, more groups began to contribute to the field. Mayoral-Vilches et al. (2018) initiated a series of security research efforts aimed at defining offensive security blueprints and methodologies in robotics, resulting in various contributions \cite{vilches2018volatile, kirschgens2018robot, vilches2018aztarna, mayoral2020alurity, mayoral2020can, lacava2020current, mayoral2020devsecops, mayoral2019industrial}. Notably, this group released a framework for conducting security assessments in robotics \cite{vilches2018introducing}, a vulnerability scoring mechanism for robots \cite{vilches2018towards}, a CTF environment for robotics to train cybersecurity engineers \cite{mendia2018robotics}, and a robot-specific vulnerability database \cite{mayoralrvd}, among other contributions. In 2021, Zhu et al. published a comprehensive introduction to this emerging topic, aiming to foster a sub-community in robotics for cybersecurity \cite{zhu2021cybersecurity}. These contributions correspond to the bibliographical milestones in robot cybersecurity presented in Chapter \ref{chapter:background}.


A careful review of the prior art described in the preceding paragraphs leads to the following observation: based on the literature, robot cybersecurity is still a nascent field that requires further attention, tools, and educational material to train new engineers in security practices for robotics.


\section{Surveying security in robotics communities}

During a period of three years (2019 - 2021) various security surveys were conducted in top robotic communities through their corresponding conferences and forums. The following subsections discuss each one of them while attempting to draw some observations:

\subsection{Surveying the ROS community}

\boolfalse{contentsStyle}
\setHyperlinkStyle

Figure \ref{fig:survey:ros} presents a summarized result  of the survey conducted in the ROS community during a period of several months\oldfootnote{Security survey launched within the ROS Discourse community (\href{https://discourse.ros.org/t/call4participation-to-robot-security-survey/10290}{announcement}, \href{https://discourse.ros.org/t/call-for-participation-ii-to-survey-the-security-in-robotics/10811}{announcement 2}, \href{https://news.aliasrobotics.com/robot-security-survey-displays-first-results/}{preliminary results}).}. 
The survey received a total of 52 responses, which represented the small interest in security at the time. The largest groups of participants are depicted in Figure \ref{fig:ros:2}. The most represented group comes from Universities (30\%), followed by Software vendors (18\%) and Robot manufacturers (14\%)\footnote{Others comprise various subgroups, all with less representation than the ones mentioned}. The majority of the respondents have at last 2 years of experience with ROS and half of them at least 5 (\ref{fig:ros:3}), most coming from Europe (\ref{fig:ros:4}). Figure \ref{fig:ros:5} present data on security considerations. The data indicates that 73\% of the participants think that they have not invested enough to protect their robots from cyber-threats. Coincidentally, the same number of participants indicated that their organizations are open to invest however only 26\% acknowledge to actually have invested. This data leads to the following observation:

\begin{observation}
\label{observation:investment_gap}
There seems to be a gap between the expectations and the actual investment, which suggests that cybersecurity actions in robotics will grow in the future for the ROS community.
\end{observation}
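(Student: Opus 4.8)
The plan is to substantiate this observation directly from the survey statistics rather than by deductive argument, since the claim is an empirical inference about a population's disposition toward future spending. First I would formalize the notion of an \emph{expectation--investment gap} by isolating the three relevant proportions reported in Figure \ref{fig:ros:5}: the fraction $p_{\mathrm{under}} = 0.73$ who judge their current protection insufficient, the fraction $p_{\mathrm{open}} = 0.73$ whose organizations declare openness to invest, and the fraction $p_{\mathrm{done}} = 0.26$ who report having actually invested. The gap is then quantified as $\Delta = p_{\mathrm{open}} - p_{\mathrm{done}} \approx 0.47$: a clear majority expresses intent without corresponding action. The coincidence $p_{\mathrm{under}} = p_{\mathrm{open}}$ is worth flagging separately, since it indicates that recognition of the deficiency and willingness to remedy it travel together in the respondent pool.

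Next I would argue the directional claim. The key step is to observe that the three proportions are mutually reinforcing: those who recognize under-protection coincide in magnitude with those open to invest, yet realized investment falls far short. Under the standard behavioural assumption that an acknowledged deficiency paired with stated willingness-to-invest tends to convert into expenditure over time, the unfilled gap $\Delta$ constitutes \emph{latent demand}. This latent demand is precisely what one would point to as the engine of future growth for the ROS community, yielding the forward-looking conclusion. I would ideally corroborate the predicted upward trajectory by triangulating against the independent adoption-trend and market-forecast data cited in Section \ref{sec:ros}, so the inference does not rest on the survey alone.

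The hardest part of the argument---and its principal weakness---is the inferential leap from a static, present-tense gap to a dynamic, future-tense prediction of growth. This step is inductive, not deductive: it relies on the unstated premise that intent converts to action, which can fail if the gap instead reflects a \emph{structural} barrier (cost, absent tooling, or the very complexity and lack of security culture noted in Chapter \ref{chapter:intro}) rather than mere inertia. A secondary obstacle is the small sample ($n = 52$) together with the self-selection inherent in a voluntary community survey, which biases the respondent pool toward the security-conscious and widens the confidence intervals around each proportion. For these reasons I would deliberately phrase the conclusion with \textbf{suggests} rather than asserting it outright, treating the observation as a well-motivated conjecture supported by, but not rigorously entailed by, the data.
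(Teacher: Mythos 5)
Your argument matches the paper's own reasoning: the observation is drawn directly from the Figure \ref{fig:ros:5} proportions (73\% perceiving under-investment, 73\% open to invest, only 26\% having actually invested), with the mismatch between stated willingness and realized spending read as latent demand implying future growth, and the hedged wording (\emph{``seems''}, \emph{``suggests''}) reflecting exactly the inductive character you identify. Your added quantification of the gap and the caveats about sample size and intent-to-action conversion are reasonable elaborations rather than a different route; the paper itself later corroborates the inference the same way you propose, via the PX4 survey's willingness-to-invest figures.
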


When considering the mitigation strategies applied by respondents as depicted in Figure \ref{fig:ros:6}, it's important to highlight that most efforts concentrate on perimeter actions (i.e. firewalls, segmentation and segregation) whereas robot-specific defensive solutions are only applied in a 36\% of the cases. Similarly, network assessments and security audits are conducted only in one fourth of the cases (26\%) which conflicts with the \emph{de facto} security practices in other industries, wherein assessments are critical to evaluate the resilience of technology.

\begin{observation}
\label{observation:nomeasures}
The lack of robot-specific security measures (36\%) and offensive assessments (26\%) can be interpreted as an indicator of the maturity level of the technology when compared to other sectors (e.g. IT or OT) where these practices are common and specialized.
\end{observation}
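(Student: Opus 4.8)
The plan is to support this observation by fixing a comparative baseline and then arguing that the robotics figures fall systematically below it. First I would characterize operationally what \emph{maturity} means in security terms: in established IT and OT sectors, the analogues of robot-specific hardening, periodic network assessments, and offensive audits (penetration testing, red teaming) are near-ubiquitous and frequently mandated by regulation or industry standards. I would gather adoption figures from comparable surveys in those sectors to obtain concrete reference percentages, so that the comparison rests on data rather than intuition, and so that the notion of ``maturity'' is anchored to observable adoption rather than a vague qualitative sense.

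Next I would place the survey figures from Figure \ref{fig:ros:6} against this baseline: robot-specific defensive solutions appear in only 36\% of cases and offensive assessments in 26\%, both markedly below the adoption rates observed in IT/OT. The core inferential step is to argue that, holding the perceived threat level roughly fixed, low adoption of \emph{specialized} practices is the defining signature of an early-maturity ecosystem, in which perimeter measures (firewalls, segmentation, segregation) are adopted first because they transfer directly from IT, whereas domain-specific and offensive techniques lag because the tooling, expertise, and standards are still forming. This dovetails with Observation \ref{observation:investment_gap}: the expectation--investment gap and the practice--adoption gap are two facets of the same nascent state, which strengthens the reading by showing internal consistency across independent survey questions.

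The main obstacle will be ruling out competing explanations for the low percentages. The sample is small (52 self-selected respondents, skewed toward security-aware participants), so I would argue that this selection bias, if anything, \emph{inflates} the reported adoption figures, making the true maturity lower still and thereby reinforcing rather than undermining the conclusion. I would further exclude the alternative that robots simply do not require such measures---counterable via the vulnerability evidence assembled in the empirical phase of this chapter---as well as the possibility that respondents conflate \textbf{safety} practices with \textbf{security} ones, which the survey wording must be shown to disambiguate. Once these confounders are neutralized, the maturity interpretation remains the most parsimonious reading of the data, and the observation follows as the best available explanation rather than a deductive certainty.
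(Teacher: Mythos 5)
Your proposal is correct and follows essentially the same route as the paper: the paper supports Observation \ref{observation:nomeasures} in the single paragraph preceding it, by reading the 36\% (per-robot measures/mitigations) and 26\% (assessments and audits) figures off Figure \ref{fig:ros:6} and noting that this conflicts with the \emph{de facto} security practices in other industries, wherein such assessments are standard. Your additional machinery---concrete IT/OT adoption baselines, the selection-bias argument, and the explicit exclusion of confounders such as safety/security conflation---goes beyond the paper, which asserts the IT/OT contrast without reference data and presents the claim as an interpretive observation rather than a defended inference, so your extras strengthen but do not redirect the same argument.
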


\showkeysfalse

\begin{figure*}[p]
    \centering
    \begin{subfigure}[b]{0.45\textwidth}
        \centering
        \resizebox{0.8\textwidth}{!}{
            \begin{tikzpicture}   
                \pie[square, rotate=270, text=pin, color={lightgray!90, lightgray!70, lightgray!50, lightgray!30, lightgray!10}]  
                    {45/Industrial, 23/R\&D, 5/Professional, 14/Other,  13/Consumer}  
               
            \end{tikzpicture}  
        }
        \caption{\footnotesize Distribution of respondents by sector of activity. \emph{Sample size 52 respondents.}}
        \label{fig:ros:1}
    \end{subfigure}
    ~
    \begin{subfigure}[b]{0.45\textwidth}
        \centering
        \resizebox{0.6\textwidth}{!}{
            \begin{tikzpicture}   
                \pie[square, rotate=270, text=inside, color={lightgray!90, lightgray!70, lightgray!50, lightgray!30, lightgray!10}]  
                    {30/University,  18/SW vendor, 14/Robot vendor, 8/RTO, 30/Other}  
               
            \end{tikzpicture}  
        }
        \caption{\footnotesize Distribution within the robotics value chain. \emph{Sample size 52 respondents.}}
        \label{fig:ros:2}
    \end{subfigure}
    \\[5ex]
    \begin{subfigure}[b]{0.45\textwidth}
        \centering
        \resizebox{0.9\textwidth}{!}{
            \begin{tikzpicture}   
                \pie[rotate=270, text=pin, color={lightgray!90, lightgray!70, lightgray!50, lightgray!30, lightgray!10}]  
                    {21.2/Less than 2,  28.8/Between 2 and 5, 21.2/More than 10, 28.8/Between 5 and 10}
            \end{tikzpicture}  
        }
        \caption{\footnotesize Years of experience with ROS of each respondent. \emph{Sample size 52 respondents.}}
        \label{fig:ros:3}
    \end{subfigure}
    ~ 
    \begin{subfigure}[b]{0.45\textwidth}
        \resizebox{\textwidth}{!}{
            \begin{tikzpicture}
              \begin{axis}[
                xbar,
                y axis line style = { opacity = 0 },
                axis x line       = none,
                tickwidth         = 0pt,
                enlarge y limits  = 0.2,
                enlarge x limits  = 0.02,
                symbolic y coords = {America,Europe,Asia,Africa,Not responded},
                nodes near coords,
              ]
              \addplot[lightgray, fill=lightgray] coordinates { 
                (8,America) 
                (31,Europe) 
                (8,Asia) 
                (1,Africa) 
                (4,Not responded) 
             };
              \end{axis}
            \end{tikzpicture}
        }
        \caption{\footnotesize Geographical distribution of respondents. \emph{Sample size 52 respondents.}}
        \label{fig:ros:4}
    \end{subfigure}
    \\[5ex]
    \begin{subfigure}[b]{0.45\textwidth}
        \resizebox{\textwidth}{!}{
            \begin{tikzpicture}
              \begin{axis}[
                xbar,
                xmin=0,
                xmax=100,
                y axis line style = { opacity = 0 },
                axis x line       = none,
                tickwidth         = 0pt,
                enlarge y limits  = 0.2,
                enlarge x limits  = 0.02,
                symbolic y coords = {Open to invest,Have actually invested,Think have not invested enough,Identified cyber-weaknesses,Witnessed a cyber-attack},
                nodes near coords,
              ]
              \addplot[lightgray, fill=lightgray] coordinates { 
                (73,Open to invest) 
                (26,Have actually invested) 
                (73,Think have not invested enough) 
                (51,Identified cyber-weaknesses) 
                (9,Witnessed a cyber-attack) 
             };
              \legend{Percentage (\%)}
              \end{axis}
            \end{tikzpicture}
        }
        \caption{\footnotesize Percentage favouring each security consideration amongst respondents (in the context of robotics). \emph{Sample size 52 respondents.}}
        \label{fig:ros:5}
    \end{subfigure}
    ~
    \begin{subfigure}[b]{0.45\textwidth}
        \resizebox{\textwidth}{!}{
            \begin{tikzpicture}
              \begin{axis}[
                xbar,
                xmin=0,
                xmax=100,
                y axis line style = { opacity = 0 },
                axis x line       = none,
                tickwidth         = 0pt,
                enlarge y limits  = 0.2,
                enlarge x limits  = 0.02,
                symbolic y coords = {Perimeter countermeasures, Network segmentation and segregation, Per-robot measures/mitigations, Assessments and audits},
                nodes near coords,
              ]
              \addplot[lightgray, fill=lightgray] coordinates { 
                (53,Perimeter countermeasures) 
                (48,Network segmentation and segregation) 
                (36,Per-robot measures/mitigations) 
                (26,Assessments and audits) 
             };
              \legend{Percentage (\%)}
              \end{axis}
            \end{tikzpicture}
        }
        \caption{\footnotesize Percentages favouring each mitigation strategy amongst respondents (in the context of robotics). \emph{Sample size 52 respondents.}}
        \label{fig:ros:6}
    \end{subfigure}    
  
    \vspace{2em}
    \caption{Surveying the ROS robotics community (2019), from \cite{mayoral2022review}}
    \label{fig:survey:ros}
\end{figure*}
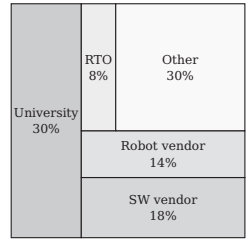
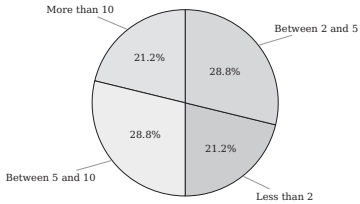
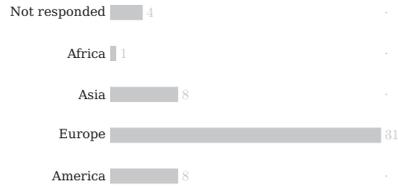
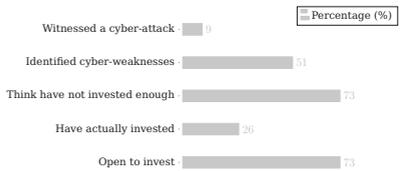
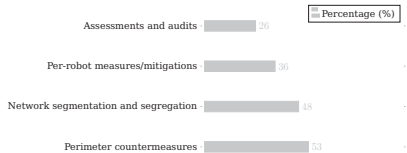

\showkeystrue

\subsection{Surveying the PX4 community}

PX4 \cite{meier2015px4} is an open source flight control software for drones and other unmanned vehicles. Similar to ROS, its community represents another relevant group in robotics. A security survey was conducted in 2020 and the results are summarized in Figure \ref{fig:survey:px4}\oldfootnote{Security survey launched within the PX4 Discourse community \cite{px4survey}.}. Though the PX4 community is significantly smaller than ROS's, the sample size obtained (11 respondents) was extremely small to draw major conclusions. Interestingly though, it was observed that the majority of the respondents have yet to see a security issue impacting the community (\ref{fig:px4:4}), only 27\% had seen it.

\begin{observation}
\label{observation:nohacks}
Both the PX4 (Figure \ref{fig:px4:4}) and the ROS (Figure \ref{fig:ros:5}) communities indicated that the majority is yet to witness a cyber-attack. In the ROS community only one out of ten respondents (9\%) had seen it whereas in the PX4 group, approximately one out of four (27\%).
\end{observation}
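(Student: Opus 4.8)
The plan is to establish this observation directly from the two survey datasets summarised in Figures \ref{fig:ros:5} and \ref{fig:px4:4}, since the statement is empirical and follows by reading off the reported response frequencies rather than by deduction. First I would isolate, within each survey instrument, the single item that asks whether the respondent has ever witnessed a cyber-attack, and record the fraction of affirmative answers: $9\%$ for the ROS cohort (sample size $52$, so roughly $5$ respondents) and $27\%$ for the PX4 cohort (sample size $11$, so roughly $3$ respondents). Both figures are already tabulated in the referenced charts, so this step is purely a matter of extracting the correct bar from each plot and confirming it corresponds to the ``witnessed a cyber-attack'' category rather than a neighbouring one.

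Next I would verify the two sub-claims separately. For the \emph{majority} assertion it suffices to note that $9\% < 50\%$ and $27\% < 50\%$, whence the complementary proportion of respondents who have \emph{not} witnessed an attack exceeds one half in each community; this is exactly the ``majority is yet to witness'' claim. The second half of the statement is then the elementary approximation $9\% \approx \tfrac{1}{10}$ and $27\% \approx \tfrac{1}{4}$, each obtained by rounding the reported percentage to the nearest simple unit fraction, which justifies the ``one out of ten'' and ``one out of four'' phrasing.

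The main obstacle is not the arithmetic but the evidential weight of the PX4 figure: with only $11$ respondents, $27\%$ corresponds to about three affirmative answers, so the sampling variability around that proportion is large. I would therefore temper the conclusion exactly as the surrounding text does, flagging the small sample size explicitly and restricting the cross-community comparison to a qualitative one---in both communities the majority has yet to witness an attack---rather than asserting any statistically significant gap between $9\%$ and $27\%$. A more careful rendering would attach a binomial confidence interval to each proportion and check only that both intervals sit safely below the $50\%$ threshold; they do so comfortably, even for the wide PX4 interval, so the majority claim is robust while the precise $9\%$-versus-$27\%$ contrast should be read as descriptive rather than inferential.
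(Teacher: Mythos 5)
Your proposal is correct and matches the paper's own (implicit) justification, which consists precisely of reading the $9\%$ bar for ``Witnessed a cyber-attack'' in Figure \ref{fig:ros:5} and the $27.3\%$ ``Yes'' slice in Figure \ref{fig:px4:4}, together with the paper's own caveat that the PX4 sample of $11$ is too small for strong conclusions. The only wrinkle worth noting is that the PX4 item is phrased as ``security issues'' rather than ``cyber-attack''---a conflation the paper itself makes---and your added confidence-interval check, while not in the paper, is a harmless strengthening of the same descriptive reading.
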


The majority of the respondents (81\%, figure \ref{fig:px4:5}) indicated to be willing to invest and more than 90\% confirmed that the amount could be 100 USD or above (Figure \ref{fig:px4:6}). This aligns nicely with observation \ref{observation:investment_gap} and further hints that growth should be expected in this field.

\showkeysfalse

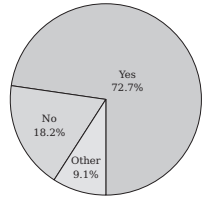
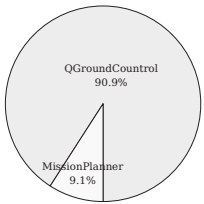
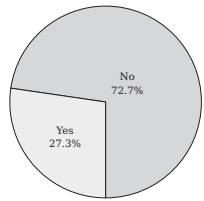
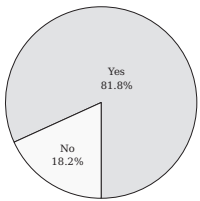
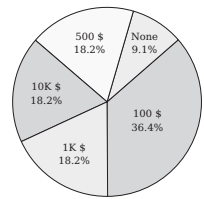
\begin{figure*}[p]
    \centering
    \begin{subfigure}[b]{0.45\textwidth}
        \centering
        \resizebox{0.6\textwidth}{!}{
            \begin{tikzpicture}   
                \pie[square, rotate=270, text=pin, color={lightgray!90, lightgray!70, lightgray!50, lightgray!30, lightgray!10}]  
                    {45.5/Medium, 27.3/High, 18.2/Very high, 9/Very low}  
               
            \end{tikzpicture}  
        }
        \caption{\footnotesize Distribution of respondents to the question: "What's your cybersecurity concern in PX4?". No respondents indicated "Low". \emph{Sample size 11 respondents.}}
        \label{fig:px4:1}
    \end{subfigure}
    ~
    \begin{subfigure}[b]{0.45\textwidth}
        \centering
        \resizebox{0.5\textwidth}{!}{
            \begin{tikzpicture}   
                \pie[rotate=270, text=inside, color={lightgray!90, lightgray!70, lightgray!50, lightgray!30, lightgray!10}]  
                    {72.7/Yes, 18.2/No, 9.1/Other}  
               
            \end{tikzpicture}  
        }
        \caption{\footnotesize Distribution of respondents to the question: "Does cyber security affect safety?". Other corresponds with user-provided answers. \emph{Sample size 11 respondents.}}
        \label{fig:px4:2}
    \end{subfigure}
    \\[5ex]
    \begin{subfigure}[b]{0.45\textwidth}
        \centering
        \resizebox{0.5\textwidth}{!}{
            \begin{tikzpicture}   
                \pie[rotate=270, text=inside, color={lightgray!30, lightgray!10}]  
                    {90.9/QGroundCountrol, 9.1/MissionPlanner}  
               
            \end{tikzpicture}  
        }
        \caption{\footnotesize Distribution of respondents to the question: "What's your GCS?". Various additional Ground Control Stations (GCS) were offered but not selected. \emph{Sample size 11 respondents.}}
        \label{fig:px4:3}
    \end{subfigure}
    ~
    \begin{subfigure}[b]{0.45\textwidth}
        \centering
        \resizebox{0.5\textwidth}{!}{
            \begin{tikzpicture}   
                \pie[rotate=270, text=inside, color={lightgray!70, lightgray!30, lightgray!10}]  
                    {72.7/No, 27.3/Yes}  
               
            \end{tikzpicture}  
        }
        \caption{\footnotesize Distribution of respondents to the question: "Have you seen any security issues so far in PX4?". \emph{Sample size 11 respondents.}}
        \label{fig:px4:4}
    \end{subfigure}
    \\[5ex]
    \begin{subfigure}[b]{0.45\textwidth}
        \centering
        \resizebox{0.5\textwidth}{!}{
            \begin{tikzpicture}   
                \pie[rotate=270, text=inside, color={lightgray!50, lightgray!10}]  
                    {81.8/Yes, 18.2/No}  
               
            \end{tikzpicture}  
        }
        \caption{\footnotesize Distribution of respondents to the question: "Would  you be willing to invest in cybersecurity for your drone?". \emph{Sample size 11 respondents.}}
        \label{fig:px4:5}
    \end{subfigure}
    ~
    \begin{subfigure}[b]{0.45\textwidth}
        \centering
        \resizebox{0.5\textwidth}{!}{
            \begin{tikzpicture}   
                \pie[rotate=270, text=inside, color={lightgray!70, lightgray!30, lightgray!10}]  
                    {36.4/100 \$, 9.1/None, 18.2/500 \$, 18.2/10K \$, 18.2/1K \$}  
               
            \end{tikzpicture}  
        }
        \caption{\footnotesize Distribution of respondents to the question: "How much per year is security worth to you?". \emph{Sample size 11 respondents.}}
        \label{fig:px4:6}
    \end{subfigure}

    \vspace{2em}
    \caption{Surveying the PX4 robotics community (2020), adapted from \cite{mayoral2022review}.}
    \label{fig:survey:px4}
\end{figure*}

\showkeystrue

\subsection{Surveying the ROS-Industrial community}

Also in 2020, a series of security-related surveys were launched as part of the European ROS-Industrial Conference, which happens every year in December. Data collected is presented in Figure \ref{fig:survey:rosi2020}\footnote{Security surveys  launched within the ROS-Industrial community during the digital ROS-I Europe Conference in December 2020.}. The majority of the respondents (93\%) showed awareness about the threats their robots faced and admitted being aware of their exposure to attackers (Figure \ref{fig:rosi:2}). Unsurprisingly, as a subset of the overall ROS community, the security mitigation actions in the ROS-I community also concentrate on the perimeter which lead to another observation:

\begin{observation}
\label{observation:perimiter}
Figures \ref{fig:ros:6} and \ref{fig:rosi:4} confirm that respectively for both ROS and ROS-I groups mitigations concentrate mostly on the perimeter.
\end{observation}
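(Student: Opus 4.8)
The plan is to establish the observation by a direct comparison of the reported mitigation frequencies in the two referenced figures, partitioning the response categories into \emph{perimeter-oriented} and \emph{robot-specific} defenses and showing that the former dominate in both communities. First I would fix the taxonomy: from Figure \ref{fig:ros:6} I would group \emph{perimeter countermeasures} (firewalls) together with \emph{network segmentation and segregation} as the perimeter class, and treat \emph{per-robot measures/mitigations} together with \emph{assessments and audits} as the non-perimeter (robot-intrinsic) class. This partition matches the conceptual split already used in the surrounding text, where perimeter actions are explicitly contrasted with robot-specific defensive solutions.

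Next I would read off the numbers for the ROS community: perimeter countermeasures at 53\% and segmentation/segregation at 48\%, versus per-robot measures at 36\% and audits at 26\%. The argument is then simply that both perimeter entries exceed both robot-specific entries, so the two most frequently adopted mitigations are precisely the perimeter ones. I would carry out the analogous extraction for the ROS-Industrial data in Figure \ref{fig:rosi:4} and verify that the same ordering holds. A supporting structural argument is available: since the ROS-I community is a subset of the broader ROS community, one expects its aggregate mitigation profile to inherit the parent distribution's shape, so the concentration on the perimeter in the ROS-I figure is consistent with, rather than independent of, the ROS result.

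The hard part will not be the arithmetic but the \emph{inferential} step of moving from a raw frequency ordering to the claim that mitigations \emph{concentrate} on the perimeter. I would need to argue that the gap between the perimeter class (roughly half of respondents) and the robot-specific class (roughly a quarter to a third) is wide enough to be meaningful given the modest sample sizes---52 respondents for ROS and a smaller pool for ROS-I. The cleanest defense is to keep the claim comparative and descriptive: it asserts only that the \emph{relative} weight of mitigation effort lies on the perimeter, which follows immediately from the ordering of the reported percentages in both figures, without requiring a significance test. I would close by noting the one genuine subtlety, namely that the categories are not mutually exclusive---a respondent may report several mitigations simultaneously---so the percentages need not sum to one; this, however, strengthens rather than weakens the conclusion, since even allowing for overlap the perimeter categories remain the most heavily populated in both communities.
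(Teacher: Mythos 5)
Your proposal is correct and takes essentially the same route as the paper, which likewise supports the observation by reading the survey percentages directly (for ROS: perimeter countermeasures 53\% and segmentation/segregation 48\% versus per-robot measures 36\% and assessments 26\%) and, as you anticipate, keeps the claim purely comparative, even echoing your structural remark that ROS-I is a subset of the ROS community. One small caveat to your plan of ``verifying the same ordering'' in Figure \ref{fig:rosi:4}: that figure has no robot-specific answer category at all---its options are separated network (37\%), VPN (17\%), firewall (29\%), and other (17\%)---so the ROS-I half of the observation follows not from an ordering between two classes but from the fact that essentially all reported measures are perimeter ones, which makes the conclusion there even more immediate than your taxonomy-based comparison suggests.
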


This fact becomes concerning in industrial environments wherein insider threats are as dangerous,  and the disruption of ROS could lead to catastrophic consequences for the automation processes \cite{mayoral2020can}, impacting more than 5 robots in 44\% of the cases according to respondents (Figure \ref{fig:rosi:6}). The lack of security measures in ROS are particularly concerning since its distributed communication middleware could be easily used to spread malware across connected robots. Such concept was demonstrated by \cite{mayoral2019industrial}, which prototyped an instance of ransomware targeting industrial collaborative robots, leaving these machines and its data completely locked until the corresponding ransom is paid.

\showkeysfalse

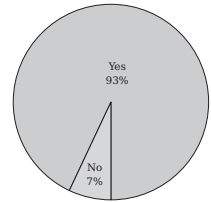
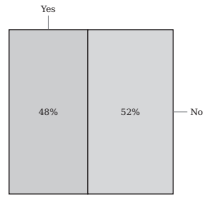
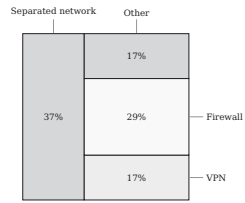
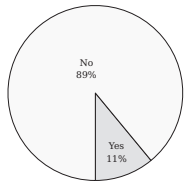
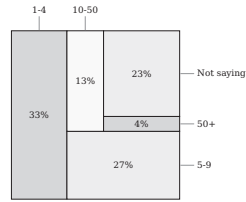
\begin{figure*}[p]
    \centering
    \begin{subfigure}[b]{0.45\textwidth}
        \centering
        \resizebox{0.6\textwidth}{!}{
            \begin{tikzpicture}   
                \pie[square, rotate=270, text=pin, color={lightgray!90, lightgray!70, lightgray!50, lightgray!30, lightgray!10}]  
                    {73/Very important, 20/Important, 7/Not important}
            \end{tikzpicture}  
        }
        \caption{\footnotesize Distribution of respondents to the question: "How important do you think is security for robotics and automation?". \emph{Sample size 30 respondents.}}
        \label{fig:rosi:1}
    \end{subfigure}
    ~
    \begin{subfigure}[b]{0.45\textwidth}
        \centering
        \resizebox{0.5\textwidth}{!}{
            \begin{tikzpicture}   
                \pie[rotate=270, text=inside, color={lightgray!90, lightgray!70, lightgray!50, lightgray!30, lightgray!10}]  
                    {93/Yes, 7/No}
            \end{tikzpicture}  
        }
        \caption{\footnotesize Distribution of respondents to the question: "Do you think your robot can be hacked?". \emph{Sample size 28 respondents.}}
        \label{fig:rosi:2}
    \end{subfigure}
    \\[5ex]
    \begin{subfigure}[b]{0.45\textwidth}
        \centering
        \resizebox{0.5\textwidth}{!}{
            \begin{tikzpicture}   
                \pie[square, rotate=270, text=pin, color={lightgray!90, lightgray!70, lightgray!50, lightgray!30, lightgray!10}]  
                    {48/Yes, 52/No}
            \end{tikzpicture}  
        }
        \caption{\footnotesize Distribution of respondents to the question: "Have you taken measures to protect your robots?". \emph{Sample size 25 respondents.}}
        \label{fig:rosi:3}
    \end{subfigure}
    ~
    \begin{subfigure}[b]{0.45\textwidth}
        \centering
        \resizebox{0.6\textwidth}{!}{
            \begin{tikzpicture}   
                \pie[square, rotate=270, text=pin, color={lightgray!70, lightgray!30, lightgray!10}]  
                    {37/Separated network, 17/VPN, 29/Firewall, 17/Other}  
            \end{tikzpicture}  
        }
        \caption{\footnotesize Distribution of respondents to the question: "What measures do you take to secure your robots?". \emph{Sample size 23 respondents.}}
        \label{fig:rosi:4}
    \end{subfigure}
    \\[5ex]
    \begin{subfigure}[b]{0.45\textwidth}
        \centering
        \resizebox{0.5\textwidth}{!}{
            \begin{tikzpicture}   
                \pie[rotate=270, text=inside, color={lightgray!50, lightgray!10}]  
                    {11/Yes, 89/No}  
               
            \end{tikzpicture}  
        }
        \caption{\footnotesize Distribution of respondents to the question: "Did  you use fuzzing before?". \emph{Sample size 27 respondents.}}
        \label{fig:rosi:5}
    \end{subfigure}
    ~
    \begin{subfigure}[b]{0.45\textwidth}
        \centering
        \resizebox{0.6\textwidth}{!}{
            \begin{tikzpicture}   
                \pie[square, rotate=270, text=pin, color={lightgray!70, lightgray!30, lightgray!10}]  
                    {33/1-4,27/5-9,13/10-50,4/50+,23/Not saying}  
               
            \end{tikzpicture}  
        }
        \caption{\footnotesize Distribution of respondents to the question: "How many robots are you controlling with ROS in your organization?". \emph{Sample size 55 respondents.}}
        \label{fig:rosi:6}
    \end{subfigure}

    \vspace{2em}
    \caption{Surveying the ROS-Industrial robotics community (2020), adapted from \cite{mayoral2022review}.}
    \label{fig:survey:rosi2020}
\end{figure*}
\showkeystrue

\subsection{Surveying the European robotics community at the European Robotics Forum (ERF) (2020)}

As one of the leading geographies in robotics and cybersecurity, the opinion of european robotics experts was sampled during the annual European Robotics Forum (ERF). Figure \ref{fig:survey:erf2020} summarizes the most relevant data collected\footnote{Security surveys  conducted during the robotics European gathering at the European Robotics Forum (ERF) 2020 in Málaga. The questionares were launched during the security sessions.}. The most interesting observation relates to the question \emph{"Who is the actor to be responsible for cyber-incidents?"}:

\begin{observation}
\label{observation:responsibility}
In Europe, the majority of the respondents (Figure \ref{fig:erf:2}) agree that the responsibility in case of damage as a result of a cyber-incident is to be assumed by the supply chain (86\% indicated that it'd sit between System Integrators and robot vendors), with only a 14\% pushing the responsibility to the end-user.
\end{observation}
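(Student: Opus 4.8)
The plan is to treat Observation~\ref{observation:responsibility} as an empirical claim to be substantiated directly from the response distribution collected at the ERF~2020 security sessions and summarised in Figure~\ref{fig:erf:2}, rather than as a statement admitting a deductive proof. First I would isolate, from the full questionnaire, only the answers to the single item \emph{``Who is the actor responsible for cyber-incidents?''}, discarding incomplete or ambiguous submissions so that the denominator underlying the reported percentages is well defined and stated explicitly.

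Next I would fix the category mapping, since the headline figures depend entirely on how the raw options are aggregated. The key step is to declare, \emph{before} computing anything, which response options are counted as belonging to the supply chain---here System Integrators together with robot vendors (and plausibly component or software suppliers)---as opposed to the single end-user category. With this partition fixed I would tabulate the counts, normalise them to percentages over the valid-response denominator, and verify that the supply-chain bucket sums to the claimed $86\%$ while the end-user bucket yields $14\%$. I would then confirm that the qualifier \emph{``majority''} is warranted, i.e. that the supply-chain share exceeds $50\%$ by a comfortable margin and is not merely an artefact of one dominant sub-option.

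The main obstacle will not be the arithmetic but the validity of the inference. The sample is small and strongly self-selected---respondents who chose to attend dedicated security sessions at a European forum are precisely those most sensitised to the topic and most inclined to push responsibility upstream toward vendors and integrators---so the honest form of this step is to flag, and where possible bound, the selection and venue bias, and to restrict the scope of the conclusion to ``European respondents engaged with robot security,'' exactly as the phrasing \emph{``In Europe''} in the observation already concedes. A secondary difficulty is the robustness of the $86\%/14\%$ split to the chosen category mapping: I would re-run the aggregation under reasonable alternative groupings (for instance moving any ``component supplier'' or ``other'' responses into or out of the supply-chain bucket) to demonstrate that the qualitative conclusion---that the burden is overwhelmingly assigned to the supply chain rather than to the end-user---survives these choices, which is all that the observation actually asserts.
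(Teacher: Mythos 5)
Your proposal is correct and matches the paper's approach: the observation is substantiated directly from the response distribution in Figure~\ref{fig:erf:2}, where the \emph{Vendor} (43\%) and \emph{System integrators} (43\%) categories are aggregated into the supply-chain bucket to give 86\%, against 14\% for \emph{End-user}, with \emph{Distributors} and \emph{Insurance providers} at 0\%. The only step of your plan that cannot be executed is fixing an explicit denominator, since the paper itself notes the sample size was not provided by ERF's platform; your bias and robustness caveats go beyond what the paper does but do not change the conclusion.
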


\showkeysfalse
\begin{figure*}[p]
    \centering
    \begin{subfigure}[b]{0.45\textwidth}
        \centering
        \resizebox{\textwidth}{!}{
            \begin{tikzpicture}   
                \pie[explode=0.2, polar, rotate=270, text=pin, color={lightgray!90, lightgray!70, lightgray!50, lightgray!30, lightgray!10}]  
                    {0/Yes, 80/No, 0/Only in industry, 20/Depends vendor or end-user}
            \end{tikzpicture}  
        }
        \caption{\footnotesize Distribution of respondents to the question: "Do you think security awareness is sufficient in robotics?". \emph{Sample size not provided by ERF's platform.}}
        \label{fig:erf:1}
    \end{subfigure}
    ~
    \begin{subfigure}[b]{0.45\textwidth}
        \centering
        \resizebox{0.8\textwidth}{!}{
            \begin{tikzpicture}   
                \pie[explode=0.2, polar, rotate=270, text=pin, color={lightgray!90, lightgray!70, lightgray!50, lightgray!30, lightgray!10}]  
                    {43/Vendor, 14/End-user, 43/System integrators, 0/Distributors, 0/Insurance providers}
            \end{tikzpicture}  
        }
        \caption{\footnotesize Distribution of respondents to the question: "Who is the actor to be responsible for cyberincidents?". \emph{Sample size not provided by ERF's platform.}}
        \label{fig:erf:2}
    \end{subfigure}
    \\[5ex]
    \begin{subfigure}[b]{0.45\textwidth}
        \centering
        \resizebox{0.8\textwidth}{!}{
            \begin{tikzpicture}   
                \pie[explode=0.2, polar, rotate=270, text=pin, color={lightgray!90, lightgray!70, lightgray!50, lightgray!30, lightgray!10}]  
                    {11/Yes - many, 67/Yes, 0/Yes but none exploitable, 22/No}
            \end{tikzpicture}  
        }
        \caption{\footnotesize Distribution of respondents to the question: "Do you know of any feasible robot cyberattacks?". \emph{Sample size not provided by ERF's platform.}}
        \label{fig:erf:3}
    \end{subfigure}
    ~
    \begin{subfigure}[b]{0.45\textwidth}
        \centering
        \resizebox{\textwidth}{!}{
            \begin{tikzpicture}   
                \pie[explode=0.2, polar, rotate=270, text=pin, color={lightgray!90, lightgray!70, lightgray!50, lightgray!30, lightgray!10}]  
                    {56/Yes - by default, 0/Yes with minimal req.,33/Yes aligned with safety, 11/No}
            \end{tikzpicture}  
        }
        \caption{\footnotesize Distribution of respondents to the question: "Do you think robot cybersecurity standards are needed?". \emph{Sample size not provided by ERF's platform.}}
        \label{fig:erf:4}
    \end{subfigure}

    \vspace{2em}
    \caption{Surveying the European robotics community (ERF 2020), adapted from \cite{mayoral2022review}.}
    \label{fig:survey:erf2020}
\end{figure*}
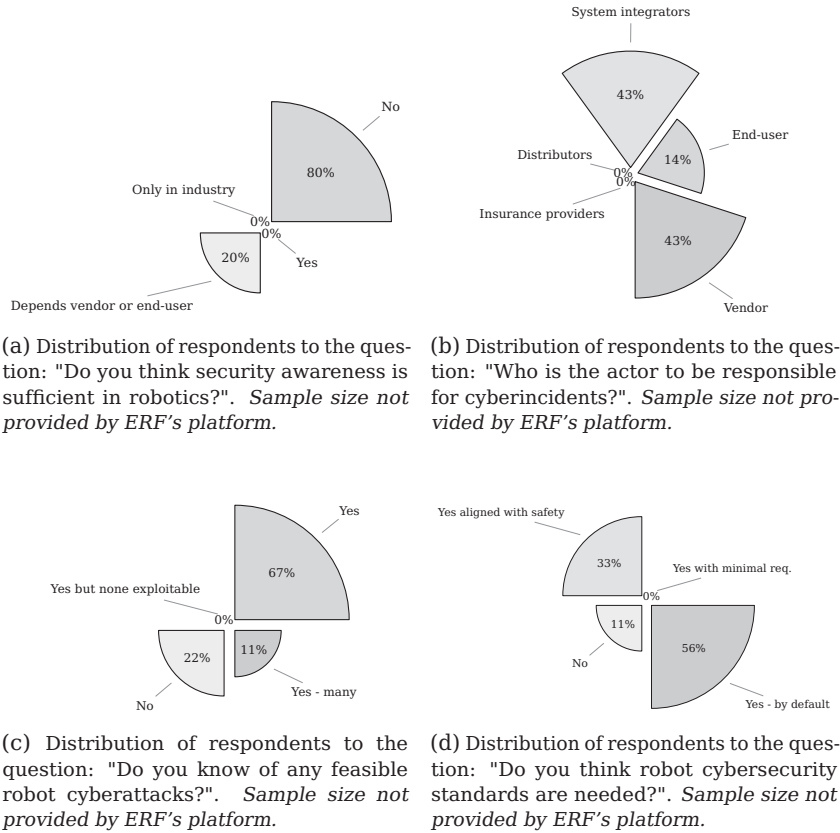

\showkeystrue


\section{Security research results in robotics}

Figure \ref{fig:vulnerabilitydata} depicts summarized vulnerability research results for three vendors: ABB, Mobile Industrial Robots (MiR) and Universal Robots (UR). The data was collected and archived over a multi-year period. Figures \ref{fig:vulns:abb}, \ref{fig:vulns:mir} and \ref{fig:vulns:ur} illustrate the "days until mitigation" for each vulnerability and according to the public data in the Robot Vulnerability Database (RVD) \cite{mayoralrvd}. The flat line represented by a series of data points in figures \ref{fig:vulns:mir} and \ref{fig:vulns:ur} denotes that the vendor hasn't reacted yet to any of these flaws and they remain unmitigated (they are zero days). For ABB robots, the scattered plot in Figure \ref{fig:vulns:abb} denotes more security activity. The following observations are drawn from the data:

\begin{observation}
\label{observation:cobots}
Collaborative robot manufacturers MiR and UR have zero days with an age at least older than one year (figures \ref{fig:vulns:mir} and \ref{fig:vulns:ur}). These flaws continue growing older due to the inactivity from the manufacturers.
\end{observation}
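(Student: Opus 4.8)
The plan is to treat this observation as an empirical claim over the timestamped vulnerability records archived in the Robot Vulnerability Database (RVD) \cite{mayoralrvd}, and to establish it by direct inspection of the data underlying figures \ref{fig:vulns:mir} and \ref{fig:vulns:ur}. First I would fix terminology: a \emph{zero day} is a vulnerability for which no vendor-issued mitigation (patch, advisory with fix, or equivalent remediation) has been recorded, and the \emph{age} of a flaw at reference time $t$ is $a(v,t) = t - d(v)$, where $d(v)$ is the disclosure/archival date of flaw $v$. The claim then decomposes into three sub-claims: (i) every flaw plotted for MiR and UR is unmitigated; (ii) each such flaw satisfies $a(v,t_0) > 365$ days at the reference time $t_0$ used to generate the figures; and (iii) $a(v,t)$ is strictly increasing in $t$ so long as the flaw remains unmitigated, which formalizes the phrase ``continue growing older.''

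The core steps are as follows. I would extract, for each MiR and UR entry in the RVD, the pair $(d(v), m(v))$, where $m(v)$ is the mitigation date or a null marker. Sub-claim (i) follows from observing that $m(v)$ is null for all such $v$: this is exactly what the horizontal (flat) line in figures \ref{fig:vulns:mir} and \ref{fig:vulns:ur} encodes, namely that the plotted ``days until mitigation'' never terminates in a remediation event, in contrast to the scattered descent visible for ABB in figure \ref{fig:vulns:abb}. Sub-claim (ii) is then a finite arithmetic check: compute $a(v,t_0)$ for each $v$ and verify it exceeds one year; it suffices to exhibit the minimum age over the set and show it already surpasses $365$ days. Sub-claim (iii) is immediate: with $m(v)$ null, $a(v,t) = t - d(v)$ increases at unit rate in wall-clock time, so the ages grow monotonically.

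The hard part will be sub-claim (i), because it requires establishing a negative, the \emph{absence} of any mitigation, rather than the presence of an event. Direct observation of the RVD can only certify that no remediation was \emph{recorded}; bridging from ``unrecorded'' to ``nonexistent'' demands an argument that the RVD is a faithful and reasonably complete ledger of vendor activity for these two manufacturers over the observation window, ideally corroborated by the disclosure correspondence and the public advisory channels of MiR and UR. A secondary subtlety is fixing the reference time $t_0$ consistently with the figure generation, since the quantitative margin in sub-claim (ii) depends on it; choosing $t_0$ as the data-collection cutoff and noting that any later $t$ only strengthens the inequality, by sub-claim (iii), removes this sensitivity and closes the argument.
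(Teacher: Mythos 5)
Your proposal matches the paper's own justification: the observation is supported purely empirically, by reading the ``days until mitigation'' data from the RVD \cite{mayoralrvd} as plotted in figures \ref{fig:vulns:mir} and \ref{fig:vulns:ur}, where the flat line of data points is interpreted exactly as your sub-claim (i) --- no recorded mitigation event, hence zero days, in contrast to the scattered ABB plot in figure \ref{fig:vulns:abb} --- and the ages exceeding one year and continuing to grow are read off just as in your sub-claims (ii) and (iii). Your explicit caveat that the database can only certify \emph{unrecorded} rather than \emph{nonexistent} mitigations is a refinement the paper passes over silently, but it sharpens rather than departs from the same argument.
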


\begin{observation}
\label{observation:abb}
Vulnerability data affecting ABB robots (Figure \ref{fig:vulns:abb}) shows how according to historical data, vulnerabilities were patched as early as 14 days after its disclosure however the average mitigation time is above four years (1500 days) \cite{mayoralrvd}.
\end{observation}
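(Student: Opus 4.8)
The plan is to establish this as an empirical claim verified directly against the Robot Vulnerability Database (RVD) records \cite{mayoralrvd}, since the statement is quantitative rather than deductive. First I would extract the subset of RVD entries whose affected-vendor field identifies ABB, deduplicating against CVE/NVD cross-references so that a single flaw reported through multiple channels is counted once. For each such entry I would define the \emph{mitigation latency} as the number of days between the recorded disclosure timestamp and the first timestamp at which a vendor-issued patch (or officially acknowledged fix) appears, which is exactly the quantity plotted in Figure \ref{fig:vulns:abb}.

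With this quantity defined, the two numerical assertions reduce to two summary statistics over the cleaned dataset. The ``as early as 14 days'' claim is the minimum of the mitigation-latency sample, so I would verify it by sorting the latencies and reading off the smallest strictly positive value, confirming it equals $14$. The ``average above four years'' claim is the arithmetic mean of the same sample; I would compute it and check that it exceeds $1500$ days, cross-referencing against the scatter in Figure \ref{fig:vulns:abb} to confirm the heavy upper tail that drives the mean so far above the minimum, in contrast to the flat, unmitigated lines seen for MiR and UR.

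The main obstacle will be the treatment of \emph{censored} records, namely ABB vulnerabilities that were disclosed but never mitigated within the observation window. These have no patch date and hence an undefined latency, yet excluding them versus right-censoring them at the data-collection cutoff materially changes the computed mean. The honest route is to state explicitly that the reported average is taken over the mitigated subset (equivalently, over all records with open ones censored at cutoff) and to confirm the inequality is robust under both conventions, i.e. that even the conservative mitigated-only mean already clears $1500$ days, so the qualitative conclusion does not hinge on the censoring choice. Secondary difficulties include reconciling inconsistent or missing disclosure dates between RVD and NVD and fixing a single unambiguous definition of ``disclosure'' (first public report versus vendor notification), which I would resolve by adopting the RVD-recorded disclosure field uniformly.
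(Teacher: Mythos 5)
Your proposal takes essentially the same approach as the paper: the observation is supported purely empirically, by reading the minimum ($14$ days) and the arithmetic mean (above $1500$ days) of the days-until-mitigation values for ABB entries in the Robot Vulnerability Database, which is exactly the data plotted in Figure \ref{fig:vulns:abb} and cited via \cite{mayoralrvd}. Your added care about deduplication, a uniform definition of disclosure, and especially the treatment of censored (never-mitigated) records goes beyond what the paper does --- it simply reports the statistics from the public RVD data without stating these conventions --- so treat those points as a sound methodological refinement of the same argument rather than a different route.
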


On top of these, Figures \ref{fig:vulns:dist:abb} to \ref{fig:vuln:final:ur} enhance previous data with additional private sources of information and consider vulnerabilities that have yet to reach the public domain. It should be noted that the distribution of vulnerabilities signals the security awareness of the manufacturer. Coherently, figure  \ref{fig:vuln:final:abb} shows how for ABB robots, four out of five vulnerabilities considered have been publicly disclosed, triaged and scored. In contrast, for MiR and UR robots the oppositive is observed. Four out of five vulnerabilities have yet to be disclosed publicly.

\begin{observation}
\label{observation:readiness}
The ratio of publicly disclosed vulnerabilities versus the ones remaining private is an indicator when evaluating the security readiness of a robot manufacturer. The threat landscape of a given robot is correlated to this ratio in a direct manner.
\end{observation}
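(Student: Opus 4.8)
The plan is to substantiate this observation empirically rather than deductively, since it is a correlational claim about manufacturer behavior grounded in the vulnerability data of Figures \ref{fig:vulns:abb}, \ref{fig:vulns:mir} and \ref{fig:vulns:ur} together with their private-source refinements. First I would fix the two quantities being related: the \emph{disclosure ratio} $\rho$, defined as the fraction of a vendor's known vulnerabilities that have been publicly disclosed, triaged and scored (versus those held privately), and \emph{security readiness}, operationalized through the vendor's demonstrated responsiveness---most naturally the distribution of ``days until mitigation'' already archived in the RVD \cite{mayoralrvd}. Making both quantities precise is essential so that ``indicator'' and ``correlated in a direct manner'' acquire a testable meaning rather than remaining rhetorical.

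Second, I would instantiate $\rho$ and the readiness proxy for the three vendors from the data at hand. For ABB, the text already gives $\rho \approx 4/5$ and Observation \ref{observation:abb} records active, if slow, patching (mitigations as early as $14$ days, average above $1500$ days). For MiR and UR, $\rho \approx 1/5$ and Observation \ref{observation:cobots} records the flat, unmitigated zero-day curves whose age exceeds one year. Tabulating these pairs exhibits the monotone relationship the observation asserts: the high-$\rho$ vendor is the one with a functioning mitigation process, while the low-$\rho$ vendors show none.

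Third, I would argue the mechanism behind the correlation, which is what elevates $\rho$ from a coincidence to an \emph{indicator}. The key point is that public disclosure, triage and scoring presuppose an internal vulnerability-handling pipeline (a coordinated-disclosure channel, CVE assignment, a dedicated response function), and that same pipeline is precisely the organizational capability that produces timely mitigation; conversely, a vendor that leaves four out of five flaws private typically lacks such a process and therefore also leaves them unpatched. I would close by reading off the threat-landscape consequence: since unmitigated private flaws remain live attack vectors, a low $\rho$ maps directly onto a larger exposed surface.

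The hard part will be defending the inferential leap from only three vendors to a general ``indicator,'' and guarding against the obvious confounder---a secretive but diligent vendor who patches privately while disclosing little, which would break the asserted monotonicity. I would address this by (i) framing the claim as a qualitative, hypothesis-level indicator rather than a statistically powered law, and (ii) appealing to the mechanistic argument above to explain why the secretive-but-diligent case is empirically rare in robotics, where Observations \ref{observation:nomeasures} and \ref{observation:cobots} already document the widespread absence of mature internal security practice.
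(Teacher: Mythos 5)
Your proposal is correct and follows essentially the same route as the paper: the observation is supported empirically, by contrasting ABB's roughly four-out-of-five public disclosure ratio (Figure \ref{fig:vuln:final:abb}) together with its active---if slow---mitigation record (Observation \ref{observation:abb}) against the inverted ratio and unmitigated zero-days of MiR and UR (Figures \ref{fig:vuln:final:mir} and \ref{fig:vuln:final:ur}, Observation \ref{observation:cobots}). Your explicit operationalization of the disclosure ratio, the mechanistic pipeline argument, and the handling of the secretive-but-diligent confounder go beyond the paper's brief assertion that the distribution of vulnerabilities signals the manufacturer's security awareness, but the evidentiary core is identical.
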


\showkeysfalse
\begin{figure*}[p]
    \centering
    \begin{subfigure}[b]{0.3\textwidth}
        \centering
        \resizebox{0.98\textwidth}{!}{
            \begin{tikzpicture}
              \pgfplotsset{scale only axis,}
              \begin{axis}[
                xlabel=$A$,
                ylabel=Days,
              ]
                \addplot[only marks, mark=x] file[] {data/abb.dat};
              \end{axis}
            \end{tikzpicture}
        }
        \caption{\footnotesize Days until mitigation for each one of the vulnerabilities publicly registered for ABB robots. \emph{Data collected from the RVD \cite{mayoralrvd}.}}
        \label{fig:vulns:abb}
    \end{subfigure}
    ~
    \begin{subfigure}[b]{0.3\textwidth}
        \centering
        \resizebox{0.95\textwidth}{!}{
            \begin{tikzpicture}
              \pgfplotsset{scale only axis,}
              \begin{axis}[
                xlabel=$A$,
                ylabel=Days,
              ]
                \addplot[only marks, mark=x] file[] {data/mir.dat};
              \end{axis}
            \end{tikzpicture}

        }
        \caption{\footnotesize Days until mitigation for each one of the vulnerabilities publicly registered for MiR robots. \emph{Data collected from the RVD \cite{mayoralrvd}.}}
        \label{fig:vulns:mir}
    \end{subfigure}
    ~
    \begin{subfigure}[b]{0.3\textwidth}
        \centering
        \resizebox{\textwidth}{!}{
            \begin{tikzpicture}
              \pgfplotsset{scale only axis,}
              \begin{axis}[
                xlabel=$A$,
                ylabel=Days,
              ]
                \addplot[only marks, mark=x] file[] {data/ur.dat};
              \end{axis}
            \end{tikzpicture}
        }
        \caption{\footnotesize Days until mitigation for each one of the vulnerabilities publicly registered for Universal Robots robots. \emph{Data collected from the RVD \cite{mayoralrvd}.}}
        \label{fig:vulns:ur}
    \end{subfigure}
    \\[5ex]    
    \begin{subfigure}[b]{0.3\textwidth}
        \resizebox{\textwidth}{!}{
            \begin{tikzpicture}
              \begin{axis}[
                xbar,
                xmin=0,
                y axis line style = { opacity = 0 },
                axis x line       = none,
                tickwidth         = 0pt,
                bar width         = 3em,
                enlarge y limits  = 0.2,
                enlarge x limits  = 0.02,
                symbolic y coords = {Public,Private,Total},
                nodes near coords,
                ytick             = data,                
              ]
              \addplot[black, fill=lightgray] coordinates { 
                (61,Public) 
                (15,Private) 
                (76,Total) 
             };
              \end{axis}
            \end{tikzpicture}
        }
        \caption{\footnotesize Vulnerabilities affecting ABB robots registered in the RVD (Public), in other private databases (Private), as well as the overall amount (Total).}
        \label{fig:vulns:dist:abb}
    \end{subfigure}
    ~
    \begin{subfigure}[b]{0.3\textwidth}
        \resizebox{\textwidth}{!}{
            \begin{tikzpicture}
              \begin{axis}[
                xbar,
                xmin=0,
                y axis line style = { opacity = 0 },
                axis x line       = none,
                tickwidth         = 0pt,
                bar width         = 3em,
                enlarge y limits  = 0.2,
                enlarge x limits  = 0.02,
                symbolic y coords = {Public,Private,Total},
                nodes near coords,
                ytick             = data,                
              ]
              \addplot[black, fill=lightgray] coordinates { 
                (15,Public) 
                (125,Private) 
                (140,Total) 
             };
              \end{axis}
            \end{tikzpicture}
        }
        \caption{\footnotesize Vulnerabilities affecting MiR robots registered in the RVD (Public), in other private databases (Private) as well as the overall amount (Total).}
        \label{fig:vulns:dist:mir}
    \end{subfigure}
    ~
    \begin{subfigure}[b]{0.3\textwidth}
        \resizebox{\textwidth}{!}{
            \begin{tikzpicture}
              \begin{axis}[
                xbar,
                xmin=0,
                y axis line style = { opacity = 0 },
                axis x line       = none,
                tickwidth         = 0pt,
                bar width         = 3em,
                enlarge y limits  = 0.2,
                enlarge x limits  = 0.02,
                symbolic y coords = {Public,Private,Total},
                nodes near coords,
                ytick             = data,                
              ]
              \addplot[black, fill=lightgray] coordinates { 
                (92,Public) 
                (367,Private) 
                (459,Total) 
             };
              \end{axis}
            \end{tikzpicture}
        }
        \caption{\footnotesize Vulnerabilities affecting UR robots registered in the RVD (Public), in other private databases (Private) as well as the overall amount (Total).}
        \label{fig:vulns:dist:ur}
    \end{subfigure}
    ~
    \\[5ex]
    \begin{subfigure}[b]{0.3\textwidth}
        \centering
        \resizebox{\textwidth}{!}{
            \begin{tikzpicture}   
                \pie[square, rotate=270, text=pin, color={lightgray!80, lightgray!10}]  
                    {19.7/Private, 80.3/Public}
            \end{tikzpicture}  
        }
        \caption{\footnotesize Distribution of the vulnerabilities affecting ABB robots and registered in the RVD (Public) or in other private databases (Private).}
        \label{fig:vuln:final:abb}
    \end{subfigure}
    ~
    \begin{subfigure}[b]{0.3\textwidth}
        \centering
        \resizebox{\textwidth}{!}{
            \begin{tikzpicture}   
                \pie[square, rotate=270, text=pin, color={lightgray!80, lightgray!10}]  
                    {89.3/Private, 10.7/Public}
            \end{tikzpicture}  
        }
        \caption{\footnotesize Distribution of the vulnerabilities affecting MiR robots and registered in the RVD (Public) or in other private databases (Private).}
        \label{fig:vuln:final:mir}
    \end{subfigure}
    ~
    \begin{subfigure}[b]{0.3\textwidth}
        \centering
        \resizebox{\textwidth}{!}{
            \begin{tikzpicture}   
                \pie[square, rotate=270, text=pin, color={lightgray!80, lightgray!10}]  
                    {80/Private, 20/Public}
            \end{tikzpicture}  
        }
        \caption{\footnotesize Distribution of the vulnerabilities affecting UR robots and registered in the RVD (Public) or in other private databases (Private).}
        \label{fig:vuln:final:ur}
    \end{subfigure}
    \\[5ex]           
    \vspace{2em}
    \caption{Vulnerability data for various robots, adapted from \cite{mayoral2022review}.}
    \label{fig:vulnerabilitydata}
\end{figure*}
\showkeystrue


\section{Discussion}


\subsection{Ethical Considerations During Surveys}

The surveys conducted within various robotics communities were aimed at gathering insights into the state of cybersecurity awareness, practices, and concerns within these communities. While the primary focus was on data collection and analysis, ethical considerations surrounding the conduct of these surveys are essential to address.

\begin{enumerate}
  \item \textbf{Ethical Screening:} Prior to launching the surveys, ethical screening was performed to ensure that the data collection process adhered to established ethical standards. This screening involved a review of the survey content, methodology, and potential risks to participants.
  
  \item \textbf{Informed Consent:} Participants were provided with clear and comprehensive information regarding the purpose, storage, and use of their personal data. This information was presented at the beginning or within the announcement of each survey (as examples, see announcement for the \href{https://discourse.ros.org/t/call-for-participation-ii-to-survey-the-security-in-robotics/10811}{ROS} and \href{https://discuss.px4.io/t/cyber-security-in-px4-survey-and-input/17449}{PX4} survey), and participants were required to provide informed consent before proceeding.
  
  \item \textbf{Confidentiality and Anonymity:} Measures were implemented to ensure the confidentiality and anonymity of participant responses. Personal identifying information was kept separate from survey responses, and only aggregated data was used for analysis and reporting.
  
  \item \textbf{Data Storage and Use:} Participant data was securely stored and used solely for the purposes outlined in the survey introduction. Data was accessible only to authorized researchers involved in the study and was not shared with third parties.
  
  \item \textbf{Feedback and Accessibility:} Participants were provided with opportunities to provide feedback on the survey process and to access the results of the study upon request. Transparency regarding data handling practices fostered trust and accountability among participants.
  
  \item \textbf{Conference Context:} It's important to note that some surveys were conducted within the context of conferences, where participants may have been attendees or members of the respective communities. While this facilitated data collection, it also presented unique challenges in terms of informed consent and data privacy.
  
  \item \textbf{Limitations and Future Considerations:} Despite efforts to uphold ethical standards, it's important to acknowledge the limitations of the survey methodology and consider potential improvements for future research endeavors. This includes exploring alternative data collection methods, such as interviews or focus groups, to ensure a more nuanced understanding of participants' perspectives.
\end{enumerate}

In summary, ethical considerations were integral to the design and implementation of the surveys conducted within robotics communities. By prioritizing participant welfare, confidentiality, and transparency, the integrity of the research process was upheld, facilitating meaningful insights into cybersecurity practices and concerns within these communities.

\subsection{Vulnerability Disclosure Policies}
Prevailing ethical standards and practices in robot cybersecurity are still on their early stages however, after a few years of research and while interacting with various communities, a few remarks can be made. The subsections below cover some of the common practices observed in robot cybersecurity and discusses their ethical implications.

\subsubsection{Evolution of Disclosure Policies in Robotics}

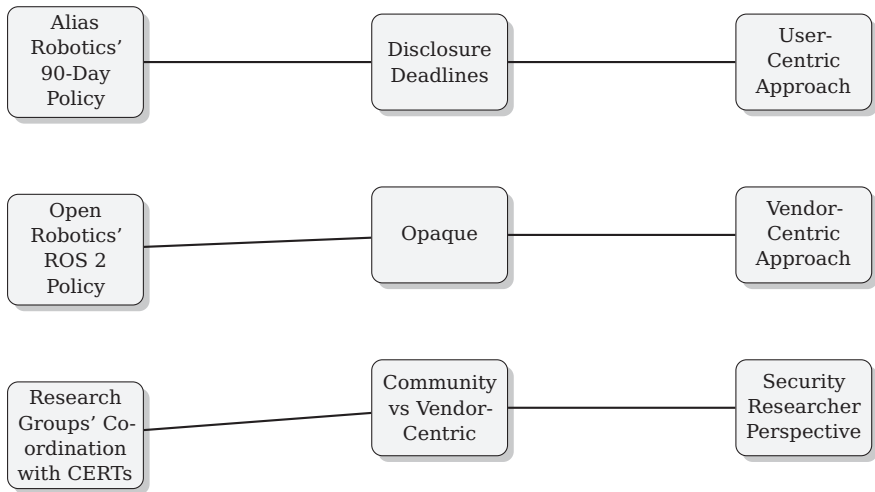
\begin{figure}[htbp]
\centering
\begin{tikzpicture}[
    auto,
    block/.style={rectangle, draw, fill=black!5, text width=5em, text centered, rounded corners, minimum height=4em, drop shadow, font=\footnotesize},
    line/.style={draw, thick}
]

\node [block] (alias) {Alias Robotics' 90-Day Policy};
\node [block, below=of alias] (openr) {Open Robotics' ROS 2 Policy};
\node [block, below=of openr] (certs) {Research Groups' Coordination with CERTs};
\node [block, right=3cm of alias] (discDeadlines) {Disclosure Deadlines};
\node [block, below=of discDeadlines] (transparency) {Opaque};
\node [block, below=of transparency] (community) {Community vs Vendor-Centric};
\node [block, right=3cm of discDeadlines] (userCentric) {User-Centric Approach};
\node [block, below=of userCentric] (vendorCentric) {Vendor-Centric Approach};
\node [block, below=of vendorCentric] (researcher) {Security Researcher Perspective};

\path [line] (alias) -- (discDeadlines);
\path [line] (openr) -- (transparency);
\path [line] (certs) -- (community);
\path [line] (discDeadlines) -- (userCentric);
\path [line] (transparency) -- (vendorCentric);
\path [line] (community) -- (researcher);

\end{tikzpicture}
\caption{Summary of Vulnerability Disclosure Policies in Robotics}
\label{figure:vdp}
\end{figure}

The landscape of vulnerability disclosure policies in robotics has undergone notable evolution, particularly influenced by the practices adopted by different organizations and the ethical implications of these choices. This subsection examines three significant contributions in the field, each highlighting different approaches and their resultant impact on the cybersecurity landscape in robotics.

\paragraph{Alias Robotics' 90-Day Disclosure Deadline}
In 2018 and 2019, Alias Robotics pioneered early work on vulnerability disclosure policies in robotics. This was part of their effort to establish the Robot Vulnerability Database (RVD) \cite{mayoralrvd}. This approach adhered to a 90-day \href{https://github.com/aliasrobotics/RVD#disclosure-policy}{disclosure deadline} for newly identified vulnerabilities. The rationale behind this timeframe was to balance the urgency for patch development against vendors' capability to respond effectively. The 90-day deadline aimed to push vendors towards improving their internal processes for vulnerability remediation, which historically had seen delays or even neglect. Alias Robotics' approach was based on the principle that quicker response times to disclosed vulnerabilities would ultimately result in better security for users.

\paragraph{Open Robotics' ROS 2 Vulnerability Disclosure Policy}

Open Robotics, in contrast to Alias Robotics' proactive approach, introduced a ROS 2 Vulnerability Disclosure Policy \cite{orros2vdp} in 2020, which notably did not incorporate relevant feedback from the Security Working Group \cite{rep2006feedback} and derived from Alias Robotics' initial work. The policy lacked crucial elements like disclosure deadlines, transparent communication, and acknowledgment of the educational value of post-mitigation security research. It diverged from established international security practices, such as assigning CVE IDs, and depended on a non-transparent process for security issue verification. While the policy mentioned a safe harbor, its actual efficiency and effectiveness were questioned. Open Robotics' approach suggested a missed opportunity to prioritize cybersecurity within the ROS community, reflecting a potential lack of commitment to rigorous security standards.

\paragraph{Security Research Groups' Direct Coordination with International CERTs}
The ROS 2 Vulnerability Disclosure Policy's limitations led several security research groups to bypass it in favor of direct coordination with international Computer Emergency Response Teams (CERTs). One of such groups included Alias Robotics contributions again and identified relevant vulnerabilities affecting a significant number of ROS robotic endpoints \cite{maggi2022security} and disclosed them responsibly after meeting reasonable deadlines \cite{ddsvulnerabilitiesdisclosure}. The public disclosure got further disseminated in the ROS robotics community \cite{ddsvulnsros2} (the one owned by Open Robotics) and empowered end-users with knowledge to mitigate outstanding security issues, in contrast to Open Robotics' response, which downplayed the significance of these disclosures and shifted responsibility to third parties\footnote{Representatives of Open Robotics responded publicly  \cite{ros2flawsarticle} ignoring the relevance of the disclosures and offloading responsibility into third parties (DDS vendors).}. This direct approach with CERTs highlighted a more efficient and impactful way of handling cybersecurity in the robotics ecosystem, as opposed to working with Open Robotics (the acting vendor, as community responsible).

\subsubsection{Ethical Implications of Disclosure Policies}

\paragraph{Necessity of Disclosure Deadlines}
The implementation of disclosure deadlines by Alias Robotics was driven by ethical considerations related to the timeliness of vulnerability patching. Slow patch timelines pose a significant risk, as attackers could potentially exploit vulnerabilities before they are patched. The 90-day deadline was designed to incentivize vendors to improve their response times, balancing the need for thorough patch development with the urgency to protect users from potential exploits.

\paragraph{Transparency and Educational Value of Disclosures}
Open Robotics' policy, which lacked transparency and discouraged the disclosure of security research for educational purposes, overlooked the long-term benefits of sharing detailed vulnerability information. Disclosing technical details post-patch offers educational value to the security community, helping to inform and strengthen overall cybersecurity practices. This approach contributes to a more informed and capable pool of security professionals and researchers.

\paragraph{Community-Centric versus Vendor-Centric Approaches}
The different policies reflect a divide between community-centric and vendor-centric approaches to cybersecurity. Alias Robotics' policy, with its emphasis on timely public disclosures, aligns with a community-centric approach that prioritizes user security and education. In contrast, Open Robotics' policy appears more vendor-centric, potentially limiting the flow of critical security information to the wider community and impeding collaborative efforts to address vulnerabilities.


\subsubsection{Conclusion: The Ethical Dimension of Cybersecurity in Robotics}

The exploration of vulnerability disclosure policies in robotics, particularly those of Alias Robotics and Open Robotics, offers a vital perspective on the ethical dimensions of cybersecurity in this domain. These cases illuminate the divergent paths that organizations can take in addressing security vulnerabilities, each path carrying its own ethical implications and impact on end-users.

\paragraph{User-Centric Approach and Its Importance}
Alias Robotics' approach, with its strict 90-day disclosure deadline, embodies a user-centric methodology. This policy not only impels vendors to act swiftly but also aligns with a broader ethical stance that prioritizes the safety and security of end-users. By setting clear deadlines and making vulnerability information public, Alias Robotics upholds the principle of transparency, empowering users with knowledge and means for protection. This approach is demonstrative of an ethical commitment to the community, where the security and well-being of users are placed at the center of cybersecurity practices.

\paragraph{Vendor-Centric Approach: A Cautionary Tale}
On the other hand, Open Robotics' less prescriptive ROS 2 Vulnerability Disclosure Policy, which lacks clear deadlines and a transparent process, reflects a vendor-centric approach. This policy, while ostensibly designed to manage vulnerabilities, falls short in fostering a proactive security culture within the community. Its lack of urgency and transparency may inadvertently lead to prolonged vulnerability exposure, thereby increasing risks for end-users. This approach can be seen as ethically problematic, as it does not adequately address the immediate security needs of the community and potentially places the interests of vendors above those of the users.

\paragraph{Implications for End-Users}
The contrasting policies and their implementation have direct implications for end-users, particularly in terms of their exposure to potential cyber threats. A policy that enforces strict deadlines and encourages open communication about vulnerabilities serves to enhance the overall security posture of the robotics ecosystem. It acknowledges the evolving nature of cyber threats and the importance of keeping users informed and equipped to respond to these threats. In contrast, a policy that lacks these elements may leave end-users vulnerable and uninformed, undermining the trust and reliability essential in the field of robotics.

\paragraph{Security Researcher Perspective: Navigating Ethical and Practical Challenges}

The exploration of vulnerability disclosure policies in the context of robotics also brings to the forefront the unique challenges faced by security researchers. These individuals, who play a pivotal role in identifying and reporting vulnerabilities, often navigate a landscape fraught with ethical dilemmas and practical hurdles. A significant concern in this realm is the tendency of some vendor-centric disclosure policies to pressure researchers into revealing their findings without proper recognition, compensation, or, at times, under the threat of legal action.

\paragraph{Compromised Incentives and Recognition}
Security researchers invest significant time and effort in uncovering vulnerabilities, with the primary goal of enhancing system security and user safety. However, the expectation set by certain disclosure policies to give away this valuable information, often without compensation or adequate acknowledgment, can be demoralizing. This lack of incentive not only undermines the researchers' contributions but also potentially devalues the importance of their work. In an environment where recognition and fair compensation are lacking, the motivation to engage in this critical work can diminish, leading to a scarcity of skilled individuals willing to dedicate their efforts to cybersecurity research in robotics.

\paragraph{Legal Threats and the Ethical Dilemma}
The situation becomes even more complex when legal threats enter the equation. Researchers, in their pursuit to enhance cybersecurity, may find themselves in a precarious position when their findings are met with hostility or legal intimidation from vendors. This scenario presents an ethical quandary: on the one hand, researchers have a moral obligation to disclose vulnerabilities to protect end-users; on the other hand, they face potential legal repercussions for doing so. Such an environment can create a chilling effect, discouraging researchers from participating in vulnerability discovery and reporting, ultimately hindering the advancement of cybersecurity in the robotics field.

\paragraph{Impact on the Security Landscape}
The cumulative effect of these challenges is a security landscape where the vital contributions of researchers are not adequately valued or supported. This situation can lead to a reduction in proactive security research, leaving vulnerabilities undiscovered and unaddressed. Consequently, the overall security posture of robotic systems may be weakened, exposing end-users to increased risks. To foster a more robust and resilient cybersecurity environment, it is essential to recognize and support the critical role of security researchers. This support can manifest in various forms, including fair and transparent policies, legal protections, recognition of their contributions, and avenues for compensation or incentives.

\paragraph{The Way Forward}
In conclusion, the ethical landscape of cybersecurity in robotics is one where user safety, transparency, and proactive engagement must be paramount. As robotics continue to integrate more deeply into various aspects of society and industry, the responsibility to ensure robust and ethical cybersecurity practices becomes increasingly critical. The experiences of Alias Robotics and Open Robotics serve as instructive examples, highlighting the necessity of policies that are not only technically sound but also ethically grounded. Such policies should aim to protect users, foster an informed and engaged community, and uphold the highest standards of safety and security in the rapidly evolving world of robotics.


\subsection{Lack of learning resources}

One of the significant challenges in the realm of robot cybersecurity is the lack of comprehensive learning resources. This shortage is exacerbated by the current vulnerability disclosure policies of various actors and groups, such as Open Robotics, as discussed above. These policies, often more vendor-centric than community-driven, do not facilitate the generation of educational content that could foster a deeper understanding of cybersecurity in robotics. This gap in knowledge and resources is particularly concerning given the increasing integration of robotics into various facets of society and industry.

In response to this void, the Robot Hacking Manual (RHM) \cite{mayoral2022robot}, authored as part of this research, emerges as a crucial resource. The RHM is an introductory series about cybersecurity for robots, conceived with the intent to raise awareness in the field and underscore the importance of adopting a security-first approach. The manual offers comprehensive case studies and step-by-step tutorials, designed to provide a foundational understanding of robot cybersecurity. The content within the RHM is independent and not aligned with any particular organization, reflecting a genuine effort to democratize knowledge in this critical area.

An integral part of the RHM is the robot hacks list displayed in table \ref{table:robot_hacks}. This list is a non-exhaustive compilation of cybersecurity research in robotics, detailing various robot vulnerabilities and attacks due to cybersecurity issues. The list serves as a testament to the evolving landscape of threats in robot cybersecurity and underscores the necessity for continued vigilance and education in the field.

\begin{table}[htbp]
\centering
\resizebox{\textwidth}{!}{%
\begin{tabular}{|c|c|c|p{0.5\textwidth}|c|}
\hline
\rowcolor{black}
\color{white}\textbf{Codename/Theme} & \color{white}\textbf{Robot Tech} & \color{white}\textbf{Researchers} & \color{white}\textbf{Description} & \color{white}\textbf{Date} \\ \hline
- & iRobot’s Roomba J7 & N/A & Personal pictures in a home environment were found on the Internet taken by an iRobot’s Roomba J7 series robot vacuum. & 19-19-2022 \\ \hline
- & Unitree's Go1 & d0tslash & A hacker found a kill switch for a gun–wielding legged robot. & 09-08-2022 \\ \hline
- & Enabot's Ebo Air & Modux & Researchers found a security flaw in Enabot Ebo Air robot leading to potential remote-controlled robot spy units. & 21-07-2022 \\ \hline
Analyzing DDS & Various DDS implementations & Various & Research on the Data Distribution Service (DDS) protocol revealing vulnerabilities. & 19-04-2022 \\ \hline
Hacking ROS 2 & ROS 2 & Various & Discovery of security vulnerabilities in ROS 2's communication middleware. & 22-04-2022 \\ \hline
JekyllBot:5 & Aethon TUG & Cynerio & Collection of vulnerabilities enabling remote control of Aethon TUG smart robots. & 01-04-2022 \\ \hline
Robot Teardown & Universal Robots & Various & Research uncovering security flaws in UR robots through hardware analysis. & 20-07-2021 \\ \hline
Rogue Automation & Various industrial robots & Various & Analysis revealing risks in industrial automation programming languages. & 01-08-2020 \\ \hline
Securing Disinfection Robots & UVD Robots & Alias Robotics & Vulnerabilities in disinfection robots used against COVID-19. & 19-09-2020 \\ \hline
MiR Bugs Week & Mobile Industrial Robots & Alias Robotics & Public release of multiple vulnerabilities in MiR robots. & 24-06-2020 \\ \hline
Smart Manufacturing Attacks & Mitsubishi Melfa & Various & Security analysis of attacks on a smart manufacturing system. & 01-05-2020 \\ \hline
UR Bugs Week & Universal Robots & Alias Robotics & Exposure of security issues in Universal Robots' products. & 31-03-2020 \\ \hline
Akerbeltz & Universal Robots UR3, UR5, UR10 & Alias Robotics & Creation of industrial robot ransomware demonstrated on UR robots. & 16-12-2019 \\ \hline
Rogue Robots & ABB IRB140 & Various & Theoretical and experimental challenges in the security of modern industrial robots. & 01-05-2017 \\ \hline
Hacking Robots Before Skynet & Various robots & IOActive & Discovery of critical cybersecurity issues in multiple robots. & 30-01-2017 \\ \hline
ROS: Safe & Insecure & ROS & Early study on security vulnerabilities in ROS. & 28-02-2014 \\ \hline
\end{tabular}%
}
\caption{Robot Hacks: Notable Cybersecurity Incidents in Robotics}
\label{table:robot_hacks}
\end{table}


\section{Conclusion}

This chapter, situated within the ethical framework of cybersecurity in robotics, has navigated through the intricate landscape of robot security, revealing its multifaceted nature and current state. The exploration, driven by the goal to understand and critique the prevailing ethical standards and practices in robot cybersecurity, delved deep into various dimensions of this field.

The findings from the systematic review of literature, combined with insights from surveys conducted across prominent robotics communities, have highlighted several key observations. A noticeable gap exists between the anticipated and actual investment in cybersecurity within the robotics domain, particularly in the ROS community (Observation \ref{observation:investment_gap}). This gap not only underlines the potential growth area in cybersecurity actions but also mirrors the nascent stage of maturity in this field compared to established sectors like IT or OT (Observation \ref{observation:nomeasures}).

A surprising revelation was the majority of the robotics community's lack of exposure to cyber-attacks (Observation \ref{observation:nohacks}), suggesting a lower perceived threat level. This could potentially lead to a complacency in adopting rigorous cybersecurity measures. Furthermore, the tendency to focus on perimeter defense strategies (Observation \ref{observation:perimiter}), as noted in both ROS and ROS-I communities, signals a possible oversight of insider threats, particularly crucial in industrial environments where robotics systems are integral.

The responsibility for cyber incidents in Europe is largely attributed to the supply chain, with system integrators and robot vendors bearing the brunt of the blame, rather than the end-users (Observation \ref{observation:responsibility}). This points to an ethical standpoint where the onus of security is more on those who manufacture and integrate rather than those who operate.

The situation with collaborative robot manufacturers like MiR and UR is particularly alarming, as they exhibit a pattern of unmitigated zero-day vulnerabilities (Observation \ref{observation:cobots}). This inaction contrasts sharply with the approach of manufacturers like ABB, who show a varied response to cybersecurity issues (Observation \ref{observation:abb}). The ratio of publicly disclosed vulnerabilities to those kept private further serves as an indicator of a manufacturer's readiness and approach towards cybersecurity (Observation \ref{observation:readiness}).

\begin{table}[htbp]
\centering
\scalebox{0.8}{
    \begin{tabular}{|p{0.2\textwidth} p{0.8\textwidth}|}
    \hline
    \rowcolor{black}
    \color{white}\textbf{Observation} & \color{white}\textbf{Description} \\
    \hline
    \ref{observation:investment_gap} & There is a gap between the expectations and the actual investment in cybersecurity in the ROS community, indicating potential future growth in cybersecurity actions. \\
    \hline
    \ref{observation:nomeasures} & The lack of robot-specific security measures and offensive assessments in the ROS community indicates a lower maturity level in cybersecurity compared to other sectors like IT or OT. \\
    \hline
    \ref{observation:nohacks} & Majority of respondents in both ROS and PX4 communities have not witnessed a cyber-attack, suggesting a lower perceived threat level. \\
    \hline
    \ref{observation:perimiter} & Security mitigations in both ROS and ROS-I groups primarily focus on perimeter defense, indicating a potential underestimation of insider threats in industrial environments. \\
    \hline
    \ref{observation:responsibility} & In Europe, most respondents believe that the responsibility for cyber incidents lies with the supply chain (system integrators and robot vendors), not the end-user. \\
    \hline
    \ref{observation:cobots} & Collaborative robot manufacturers MiR and UR have unmitigated zero-day vulnerabilities, indicating a lack of responsive action from these manufacturers. \\
    \hline
    \ref{observation:abb} & ABB robots show a mixed response to cybersecurity, with some vulnerabilities patched quickly while others remain unaddressed for years. \\
    \hline
    \ref{observation:readiness} & The ratio of publicly disclosed to private vulnerabilities is an indicator of a robot manufacturer's security readiness. ABB shows a higher rate of public disclosures, while MiR and UR have more undisclosed vulnerabilities. \\
    \hline
    \end{tabular}}
\caption{Summary of Observations from Robotics Security Surveys and Research}
\label{table:observations}    
\end{table}

Table \ref{table:observations} offers a summary of observations from robotics security surveys and research. Overall, this chapter has underscored the importance of re-evaluating and reinforcing cybersecurity measures in the rapidly evolving field of robotics. It advocates for a shift from traditional perimeter defense strategies to more comprehensive approaches that consider insider threats and embrace practices like \emph{zero-trust}. The observations made here emphasize the need for increased investment and focus on robot-specific security measures and offensive security assessments. This proactive approach is crucial to enhance the resilience of robotic systems and to bridge the gap between current practices and the evolving cybersecurity landscape in robotics.

This critical examination of the ethical standards in robot cybersecurity not only sheds light on the current state of affairs but also sets the stage for future advancements. One of the central themes uncovered is the critical role of vulnerability disclosure policies in shaping the ethical landscape of robot cybersecurity. As highlighted in the discussion, there is a stark contrast between the user-centric approach and the vendor-centric approach. The former underscores the ethical imperative of prioritizing end-user safety and system security through timely and transparent disclosure of vulnerabilities. In contrast, the latter's approach, characterized by a lack of urgency and transparency, potentially compromises the security and trust in robotic systems. By challenging established norms and scrutinizing practices like responsible disclosure, this chapter contributes to shaping a more ethical landscape and secure future in robotics, where the safety and integrity of both humans and machines are paramount.

    \part{Blueprints}\label{part:blueprints}
    \chapter{Cybersecurity in ROS}
\label{chapter:ros_security}

\epigraph{\textbf{Contributions}: \emph{Leading scientific author in \cite{mayoral2022sros2} wherein I led contributions to the DevSecOps methodology and multi-year contributions to software. Subject-matter expert in robotics and security in \cite{ichnowski2023fogros} wherein I kicked-off the initial \href{https://github.com/BerkeleyAutomation/FogROS2/issues/1}{implementation discussions}, \href{https://github.com/BerkeleyAutomation/FogROS2/issues/6}{architected the ROS-integration} (core of the paper), and \href{https://github.com/BerkeleyAutomation/FogROS2/issues/15}{released it into ROS Humble} for widespread use. Both works are open source.}}{}


\section{Introduction}

A robot is a network of networks \cite{mayoral2021hacking}. One that is comprised of sensors to perceive the world, actuators to produce a physical change, and computational resources to process it all and respond coherently, in time, and according to its application. Security is of paramount importance in this context, as any disruption of any of these robot networks can cause the complete robotic system to misbehave and compromise the safety of humans, as well as the environment \cite{kirschgens2018robot,zamalloa2017dissecting}. 

The Robot Operating System (ROS) \cite{quigley2009ros} is the \emph{de facto} framework for robot application development. Widely used to govern interactions across robot networks, at the time of writing, the original ROS article \cite{quigley2009ros} has been cited more than 9300 times, which shows its wide acceptance for research and academic purposes. ROS was born in this environment: its primary goal was to provide the software tools and libraries that users would need to employ to undertake novel robotics research and development. Adoption in industry has also been rapidly increasing over the last few years. According to the latest ROS community metrics \cite{rosmetrics} sampled every year in July, the number of ROS downloads has increased by over 50\%, with about 600 million downloads between July of 2020 and July of 2021. Moreover, based on the download percentages reported from \texttt{packages.ros.org}, one can observe a significant increase in adopting ROS 2, which suggests that by 2023 there would be more users using ROS 2 than its predecessor\footnote{The reader must also note that past studies estimated that by 2024, 55\% of the total commercial robots shipped that year would include at least one ROS package. For more details, refer to \emph{\url{https://www.businesswire.com/news/home/20190516005135/en/Rise-ROS-55-total-commercial-robots-shipped}}.}.

ROS was not designed initially with security in mind, but as it started being adopted and deployed into products or used in government programs, more attention was drawn to security issues. Some of the early work on securing ROS included \cite{lera2016ciberseguridad, ApplicationSecROS} or \cite{white2016sros}, both appearing in the second half of 2016. At the time of writing, none of these efforts remain actively maintained and the community focus on security efforts has switched to ROS 2. A recent study \cite{mayoral2022review} that surveyed the security interests in the ROS community presented data indicating that 73\% of the survey participants considered that they had not invested enough to protect their robots from cyber-threats. The same number of participants indicated that their organizations were open to invest, however only 26\% acknowledged to actually have invested. This led the original authors to conclude that there is a gap between the security expectations and the actual investment. In this work, it's argued that this gap is the result not only of the immaturity of security in robotics or the know-how but also by the lack of usability of the available security tools. Being conscious that security in robotics is not a product, but a process that needs to be continuously assessed in a periodic manner \cite{mayoral2022robot,zhu2021cybersecurity,mayoral2020alurity}, this work advocates for a usable security approach in robotics as the best way to remain secure.

In this chapter we introduce SROS2, a series of developer tools, meant to be usable and that facilitate adding security capabilities to ROS 2 computational graphs. We present in here a security methodology consisting of six steps that allow securing ROS 2 graphs iteratively, with the aid of SROS2. Driven by an application use case, we discuss how SROS2 allows achieving security in complex graphs involving popular ROS 2 packages and analyze the security trade-offs and limitations of the current tooling. The key contributions of this work are:
\begin{itemize}
    \item Create SROS2, a set of usable tools for adding security to ROS 2 that: (1) help introspect the computational graph by extracting communication middleware-level information; (2) simplify the security operations creating Identity and Permissions Certificate Authorities (CA) that govern the security policies of a ROS 2 graph; (3) help organize all security artifacts in a consistent manner and within a directory tree that is generated within the current ROS 2 workspace overlay; (4) help create a new identity for each enclave, generating a keypair and signing its x.509 certificate using the appropriate CA; (5) create governance files to encrypt all DDS traffic by default; (6) support specifying enclave permissions in familiar ROS 2 terms which are then automatically converted into low-level DDS permissions; (7) support automatic discovery of required permissions from a running ROS 2 system; and (8) dissect communication middleware interactions, to extract key information for the security monitoring of the system.
    \item Propose a methodology for securing ROS 2 computational graphs that provides roboticists with a structured process to continuously assess their security.
    \item Expose insights into how to apply SROS2 to real ROS 2 computational graphs by presenting an application case study focused on analyzing the Navigation2 and SLAM Toolbox stacks in a TurtleBot3 robot.
\end{itemize}

The core components of SROS2 are disclosed under a commercially friendly open-source license and are available and maintained at \emph{\url{https://github.com/ros2/sros2}}.

\section{Background}

Considering how ROS was originally intended as a fast prototyping robotic framework, security was not considered a priority feature in its first iteration. As ROS has evolved from the prototyping to the real-world industrial applications the entire stack came to be in dire need of cybersecurity attention \cite{mayoral2022robot} \cite{phdCaiazza}.  

A first partial analysis, with the goal of understanding what prevented ROS from being used industrially, was conducted by McClean \emph{et al.} \cite{McClean2013Preliminary}. By means of a 'honeypot' system, at DEFCON-20, they collected how malicious users would tackle a robot in the wild. Dieber \emph{et al.} \cite{dieber2020penetration} provided a complete and in-depth analysis of the security vulnerabilities and attack surfaces in ROS systems and how to exploit them, highlighting the gaps in the security of the framework. A considerable amount of research has been done as regards the publish-subscribe paradigm reviewing the performance and the techniques to secure it either via the communication channel, and ROS' internal mechanisms \cite{Goerke2021}. In the first case, via message authentication \cite{Toris2014Message}, within the later addition of using of encryption and security artifacts \cite{lera2016cybersecurity} \cite{rodriguez2018message} \cite{breiling2017secure}.
In the latter case, by enhancing the middleware behaviour with some extra, such as a run-time monitor to filter out and log all the requests and operations sent in the graph \cite{Huang2014ROSRV}, an Application Level Gateway - that wraps the existing API calls to enforce authentication and authorization - that exposes a permission token to be evaluated before executing the requested operation \cite{Doczi2016Increasing}; to the extent of forking the implementation, modifying the transport mode via IPSec \cite{Sundaresan2017SecureROS}, or via a security architecture intended with the addition of x.509 certificates and authorization server \cite{ApplicationSecROS}. Unfortunately, those approaches suffered to some extent with limitations and downsides, such as a lack of flexibility (e.g. Single Point of Failure (SPOF)) and usability, which were tackled in the Secure ROS (SROS) initiative \cite{white2019sros1}. With the objective of providing additions to the ROS API and ecosystem to support modern cryptography and security measures, the project introduced new security features to the core of ROS’ codebase and, more importantly, a set of tooling to ease the burden on the developers of correctly implementing security.

With the second iteration of the framework in ROS 2, thanks to the adoption of DDS as the communication middleware\footnote{https://design.ros2.org/articles/ros\_on\_dds.html}, we observed how the inherited security measures and methodologies in the system have evolved the framework. However, we can no longer overlook how its complexity still remains prone to human error in processes such as the access control artifacts distribution \cite{white2018procedurally}, or even to overlooking exposed attack surfaces \cite{white2019network}. Moreover, keeping track of all the new pieces to the ROS puzzle became even more demanding and lengthy procedurally, requiring continuous attention and systematic security analysis--which left usability challenging \cite{phdWhite}. The work presented here addresses this challenge with a security toolset (SROS2) and a security methodology for robotics.


\section{Approach}
\showkeysfalse  
\label{sec:approach}
\showkeystrue  



A methodology inspired by the DevSecOps approach \cite{mayoral2020devsecops} is proposed to secure ROS 2 computational graphs:
(A) introspect the graph and model its security landscape to determine the necessary security policies and enclaves; implement such policies by (B) defining the authentication and (C) authorization configurations; (D) generate all the required security artifacts; (E) deploy them appropriately across robotic systems; and (F) continuously monitor the network, reverting to (A) modeling when appropriate.

\subsection{Modeling}
\showkeysfalse  
\label{approach:modeling}
\showkeystrue  

Modeling refers to the use of abstractions to aid in a thought process. In security, threat modeling aids in thinking about risks and determines the threat landscape. The output of this effort is often called the threat model. Commonly, a threat model enumerates the potential attackers, their capabilities, resources and their intended targets. In the context of robot cybersecurity, a threat model identifies security threats that apply to the robot and/or its components\footnote{both software and hardware, including computational graph resources.} while providing means to address or mitigate them for a particular use case. A threat model also provides inputs that are used to then determine a set of policy rules (or principles) that direct how ROS 2 should provide security services to protect sensitive and critical graph resources. When put together these policy rules are called the \emph{security policy}.

SROS2 aims to provide tools to introspect and model the security of ROS 2 computational graphs into the desired security policies. Introspection of the graph can be performed in two ways:

\begin{enumerate}
    \item By leveraging the ROS 2 API and the framework for ROS 2 command line tools (\texttt{ros2cli}), we can pull ROS \emph{Nodes}, \emph{Topics}, \emph{Services} or \emph{Actions} information (among others) from the ROS 2 graph and display these in the CLI, see Listing \ref{lst:modeling}. This allows us to get a grasp of the computational graph from a ROS 2 perspective. Other tools such as \texttt{RViz} \cite{kam2015rviz} or \texttt{rqt} help get a visual depiction of the graph and its abstractions.
    \item Monitoring network interactions at the ROS communication middleware-level can be extremely helpful to model security but incredibly cumbersome from a usability perspective unless the right tooling is provided. ROS 2 uses OMG's Data Distribution Service (DDS) \cite{dds14} as its default communication middleware, which is a complex specification. To facilitate introspection of DDS, SROS2 leverages \emph{scapy} \cite{rohith2018scapy}, a powerful interactive packet manipulation library that allows us to forge or decode network packets. Particularly, we contributed an open source \emph{scapy} dissector\footnote{see \url{https://github.com/secdev/scapy/pull/3403}} that allows us to dissect the wire-level communication protocol that is used by the default ROS 2 communication middleware: the Real-Time Publish Subscribe protocol (RTPS) \cite{rtps25}. Using this, SROS2 provides tooling that allows monitoring network interactions, capturing DDS databus information directly and displaying these for the security analyst's consumption.
\end{enumerate}

\lstset{language=bash}
\lstset{label={lst:modeling}}
\lstset{basicstyle=\ttfamily\footnotesize,
    commentstyle=\color{MidnightBlue}}
\lstset{caption={SROS2 extends ROS 2 APIs to facilitate computational graph introspection at the networking level for modeling purposes.} 
}
\lstset{escapeinside={<@}{@>}}
\begin{lstlisting}
# ROS 2 CLI API allows direct introspection
ros2 topic list
<@\textcolor{gray}{/cmd\_vel}@> <@\tikz[remember picture] \node [] (p1) {}; @>
<@\textcolor{gray}{/robot\_state\_publisher}@> <@\tikz[remember picture] \node [] (p1) {}; @>
...
ros2 node list
<@\textcolor{gray}{/turtlebot3\_diff\_drive}@> <@\tikz[remember picture] \node [] (p1) {}; @>
...
# SROS2 extensions allow introspecting DDS
ros2 security introspection
<@\textcolor{gray}{DDS endpoint detected (hostId=16974402, appId=2886795267, instanceId=10045242)}@> <@\tikz[remember picture] \node [] (p1) {}; @>
    <@\textcolor{gray}{- version: 2.4}@> <@\tikz[remember picture] \node [] (p1) {}; @>
    <@\textcolor{gray}{- vendorId: ADLINK - Cyclone DDS}@> <@\tikz[remember picture] \node [] (p1) {}; @>
    <@\textcolor{gray}{- IP: 192.168.1.34}@> <@\tikz[remember picture] \node [] (p1) {}; @>
    <@\textcolor{gray}{- transport: UDP}@> <@\tikz[remember picture] \node [] (p1) {}; @>
<@\textcolor{gray}{DDS endpoint detected (...)}@>
\end{lstlisting}

For complete threat modeling, we refer the reader to \cite{ros2securitytm} which discusses details around security modeling ROS 2 computational graphs.

\subsection{Authentication}
\showkeysfalse  
\label{approach:authentication}
\showkeystrue  

Authentication provides proof of a claimed identity ($\neq$ identification,  determination of an unknown entity). ROS 2 offloads authentication to its underlying communication middleware, DDS. By default, DDS allows any arbitrary \emph{DomainParticipant} to join any \emph{Domain} without authentication. DDS however provides the means to verify the identity of the application and/or the user that invokes operations on the DDS databus through its \emph{DDS Security} extensions \cite{ddssecurity11}.  With these, for protected DDS \emph{Domains}, a \emph{DomainParticipant} that enables authentication will only communicate with other \emph{DomainParticipants} that also have authentication enabled. 

To favour usability and reduce human errors, all implementation details of authentication in ROS 2 through DDS are abstracted away by the SROS2 tools. The appropriate artifacts for enabling authentication capabilities are produced in the \emph{Generation} step (\ref{approach:generation}) of the DevSecOps methodology, and default to the security mechanisms specified by OMG's DDS Security \cite{ddssecurity11}. In particular, each \emph{DomainParticipant} uses a Public Key Infrastructure (PKI) with a common shared Certificate Authority (CA): \texttt{Identity CA}. All participants interoperating securely must be pre-configured with \texttt{Identity CA} and have a signed certificate from it. Participants are expected to use mutual authentication through a challenge-response mechanism supported by either the Rivest Shamir Adleman (RSA) \cite{rivest1983cryptographic} or the Elliptic Curve Digital Signature Algorithm (ECDSA) \cite{johnson2001elliptic} asymmetric encryption algorithms. Shared secrets are established using using the  Diffie-Hellman (DH) \cite{diffie1976new} or Elliptic Curve DH (ECDH) (Ephemeral Mode) \cite{merkle1978secure} key agreement protocols.

Listing \ref{lst:authentication} shows an example of how SROS2 tools abstract the complexity of DDS authentication away from ROS developers. The \texttt{governance.xml} policy document is auto-generated by SROS2 and captures domain-wide security settings that include authentication aspects. Additional details about the underlying authentication process and the security artifacts are available in \cite{ddssecurity11}, \cite{ros2securityintegration} and \cite{ros2securityenclaves}. 

\lstset{language=xml}
\lstset{label={lst:authentication}}
\lstset{basicstyle=\ttfamily\footnotesize,
    commentstyle=\color{blue},
    stringstyle=\ttfamily\color{red!50!brown},}
\lstset{caption={An extract from \texttt{governance.xml} policy document generated by SROS2 illustrating domain-wide security settings such as how to handle unauthenticated participants, whether to encrypt discovery or the default rules for access to topics.} 
}
\lstset{escapeinside={<@}{@>}}
\begin{lstlisting}
...
<<@\textcolor{Plum}{allow\_unauthenticated\_participants}@>><@\textcolor{ForestGreen}{false}@></<@\textcolor{Plum}{allow\_unauthenticated\_participants}@>>
<<@\textcolor{Plum}{enable\_join\_access\_control}@>><@\textcolor{ForestGreen}{true}@></<@\textcolor{Plum}{enable\_join\_access\_control}@>>
<<@\textcolor{Plum}{discovery\_protection\_kind}@>><@\textcolor{ForestGreen}{ENCRYPT}@></<@\textcolor{Plum}{discovery\_protection\_kind}@>>
<<@\textcolor{Plum}{liveliness\_protection\_kind}@>><@\textcolor{ForestGreen}{ENCRYPT}@></<@\textcolor{Plum}{liveliness\_protection\_kind}@>>
<<@\textcolor{Plum}{rtps\_protection\_kind}@>><@\textcolor{ForestGreen}{SIGN}@></<@\textcolor{Plum}{rtps\_protection\_kind}@>>
<<@\textcolor{Plum}{topic\_access\_rules}@>>
    <<@\textcolor{Mulberry}{topic\_rule}@>>
        <<@\textcolor{Periwinkle}{topic\_expression}@>><@\textcolor{ForestGreen}{*}@></<@\textcolor{Periwinkle}{topic\_expression}@>>
        <<@\textcolor{Periwinkle}{enable\_discovery\_protection}@>><@\textcolor{ForestGreen}{true}@></<@\textcolor{Periwinkle}{enable\_discovery\_protection}@>>
        <<@\textcolor{Periwinkle}{enable\_liveliness\_protection}@>><@\textcolor{ForestGreen}{true}@></<@\textcolor{Periwinkle}{enable\_liveliness\_protection}@>>
        <<@\textcolor{Periwinkle}{enable\_read\_access\_control}@>><@\textcolor{ForestGreen}{true}@></<@\textcolor{Periwinkle}{enable\_read\_access\_control}@>>
        <<@\textcolor{Periwinkle}{enable\_write\_access\_control}@>><@\textcolor{ForestGreen}{true}@></<@\textcolor{Periwinkle}{enable\_write\_access\_control}@>>
        <<@\textcolor{Periwinkle}{metadata\_protection\_kind}@>><@\textcolor{ForestGreen}{ENCRYPT}@></<@\textcolor{Periwinkle}{metadata\_protection\_kind}@>>
        <<@\textcolor{Periwinkle}{data\_protection\_kind}@>><@\textcolor{ForestGreen}{ENCRYPT}@></<@\textcolor{Periwinkle}{data\_protection\_kind}@>>
    </<@\textcolor{Mulberry}{topic\_rule}@>>
</<@\textcolor{Plum}{topic\_access\_rules}@>>
...
\end{lstlisting}

\subsection{Authorization}

\showkeysfalse  
\label{approach:authorization}
\showkeystrue  

Authorization helps define and verify the policies that are assigned to a certain identity. Access control instead --also called permissions or privileges-- are the methods used to enforce such policies. While access control is handled by the DDS implementation, authorization policies need to be defined by the developer. SROS2 helps map these policies from the ROS 2 computational graph to the underlying DDS databus abstractions through two resources: the \texttt{Permissions CA} and a \texttt{permissions.xml} policy document. Listing \ref{lst:authorization} shows an extract from one of the policy documents that defines the authorization profile for a particular ROS 2 Node. Details about how access control is implemented by the underlying communication middleware are discussed in \cite{ddssecurity11} and \cite{ros2securityaccesscontrol}.

\lstset{language=xml}
\lstset{label={lst:authorization}}
\lstset{basicstyle=\ttfamily\footnotesize,
    commentstyle=\color{blue},
    stringstyle=\ttfamily\color{red!50!brown},}
\lstset{caption={SROS2 provides means to define authentication policies through XML files.} 
}
\lstset{escapeinside={<@}{@>}}
\begin{lstlisting}
<<@\textcolor{Plum}{profile}@> node="turtlebot3_diff_drive" ns="/">
    <<@\textcolor{Mulberry}{xi:include}@> href="common/node.xml"
      xpointer="xpointer(/profile/*)"/>
    <<@\textcolor{Mulberry}{topics}@> subscribe="ALLOW">
      <<@\textcolor{Periwinkle}{topic}@>><@\textcolor{ForestGreen}{/cmd\_vel}@></<@\textcolor{Periwinkle}{topic}@>>
    </<@\textcolor{Mulberry}{topics}@>>
    <<@\textcolor{Mulberry}{topics}@> publish="ALLOW">
      <<@\textcolor{Periwinkle}{topic}@>><@\textcolor{ForestGreen}{odom}@></<@\textcolor{Periwinkle}{topic}@>>
      <<@\textcolor{Periwinkle}{topic}@>><@\textcolor{ForestGreen}{tf}@></<@\textcolor{Periwinkle}{topic}@>>
    </<@\textcolor{Mulberry}{topics}@>>
</<@\textcolor{Plum}{profile}@>>
<<@\textcolor{Plum}{profile}@> node="turtlebot3_imu" ns="/">
...
</<@\textcolor{Plum}{profile}@>>
\end{lstlisting}

\subsection{Generation}

\showkeysfalse  
\label{approach:generation}
\showkeystrue  

\emph{Modeling} (\ref{approach:modeling}), \emph{Authentication} (\ref{approach:authentication}) and \emph{Authorization} (\ref{approach:authorization}) steps of our methodology (which follows DevSecOps in robotics \cite{mayoral2020devsecops}) help define one or multiple security policies. To help implement such policies, SROS2 provides means to automate the generation of the corresponding security artifacts and simplify the translation to the underlying DDS implementation. To do so, SROS2 maps a security policy to an \emph{enclave}: a set of ROS 2 computational graph resources that operate in the same security domain, use particular \texttt{Identity CA} and \texttt{Permissions CA} authorities, and share the protection of a single, common, continuous security perimeter. 

All secure interactions in ROS 2 computational graphs must use an enclave that contains the runtime security artifacts unique to that enclave, yet each Node may not necessarily have a unique enclave. Multiple enclaves can be encapsulated in a single security policy to accurately model the information flow control. Users can then tune the fidelity of such models by controlling at what scope enclaves are applied at deployment. For example, one unique enclave per robot, or per swarm, or per network, etc.

Listing \ref{lst:generation} shows how SROS2 tools help generate all artifacts to implement a new security policy, inferred directly from the running ROS 2 graph. For a more complex policy that involves multiple enclaves, we refer the reader to \cite{sros2tb3demo_gazebopolicy}.

\lstset{language=bash}
\lstset{label={lst:generation}}
\lstset{basicstyle=\ttfamily\footnotesize,
    commentstyle=\color{MidnightBlue}}
\lstset{caption={SROS2 provides tools to implement security policies and in ROS 2 computational graphs, generating all security artifacts necessary.} 
}
\lstset{escapeinside={<@}{@>}}
\begin{lstlisting}
# Generate a new keystore with Identify and Permission CA keys, 
# associated certificates and a authentication structure through 
# governance.xml file
ros2 security create_keystore new_keystore

# Inspect current ROS graph and produce a security policy
ros2 security generate_policy new_keystore/my_policy.xml

# Generate all security artifacts necessary to enforce the policy, 
# this includes enclaves and the access control permission files
ros2 security generate_artifacts \
  -k new_keystore \
  -p new_keystore/my_policy.xml 
\end{lstlisting}

\subsection{Deployment}

\showkeysfalse  
\label{approach:deployment}
\showkeystrue  

Deployment is a relevant phase in the DevSecOps methodology \cite{mayoral2020devsecops} and must be also exercised securely. We consider three types of deployments of both artifacts and secure information:

\begin{enumerate}
    \item {Distribution of policy artifacts}: the resulting artifacts from the \emph{Generation phase} (\ref{approach:generation}) must be securely deployed into the targeted robots and related systems. At the time of writing SROS2 does not provide any particular special utilities to deploy security artifacts. We however direct readers to the ongoing efforts to launch ROS 2 graphs remotely and in multi-machine environments \cite{multimachinedeployros2} for inspiration.
    \item {Message authentication}: verification of the Message Authentication Codes (MAC) is performed using Advanced Encryption Standard (AES) with Galois MAC (AES-GMAC). DDS security extensions abstract this away from the ROS developer.
    \item {Encryption of secure DDS interactions}: authenticated symmetric cryptography governs all DDS interactions within a security policy using also AES in Galois Counter Mode (AES-GCM). Similar to message authentication, DDS abstracts this away from the developer and is enabled automatically provided that the security policy is configured appropriately.
\end{enumerate}

\subsection{Monitoring and mitigation}

\showkeysfalse  
\label{approach:monitoring}
\showkeystrue  

The last phase in the DevSecOps methodology leads to a never ending loop of continuous \emph{Monitoring, mitigation} (\ref{approach:monitoring}) and \emph{Modeling} (\ref{approach:modeling}). This way, security in ROS 2 computational graphs becomes a moving target, a process--one that demands continuous assessments as changes occur in the robots, the network, or as new security flaws are discovered affecting the running systems.

SROS2 provides tools for monitoring running ROS 2 graphs and detecting possible flaws. Listing \ref{lst:monitoring} shows an example:



\lstset{language=bash}
\lstset{label={lst:monitoring}}
\lstset{basicstyle=\ttfamily\footnotesize,
    commentstyle=\color{MidnightBlue}}
\lstset{caption={SROS2 provides tools to dissect DDS interactions, extract key information and map it to outstanding security flaws affecting DDS.} 
}
\lstset{escapeinside={<@}{@>}}
\begin{lstlisting}
# monitor the network segment for vulnerabilities 
# affecting DDS participants
ros2 security monitor
<@\textcolor{gray}{sniffing the DDS network...}@>
<@\textcolor{RedViolet}{Vulnerable DDS endpoint found (hostId=16974402, appId=2886795267, instanceId=10045242)}@>
    <@\textcolor{RedViolet}{- vendorId: Real-Time Innovations, Inc. - Connext DDS}@>
    <@\textcolor{RedViolet}{- version: 6.0.1.25}@>
    <@\textcolor{RedViolet}{- CVE IDs:}@>
        <@\textcolor{RedViolet}{* CVE-2021-38487}@>
        <@\textcolor{RedViolet}{* CVE-2021-38435}@>
\end{lstlisting}

\begin{landscape}
\vspace*{\fill} 
    \begin{figure}[!htb]
        \centering
        \includegraphics[
            width=\linewidth,
            trim= 0 300 1500 0,
            clip
        ]
        {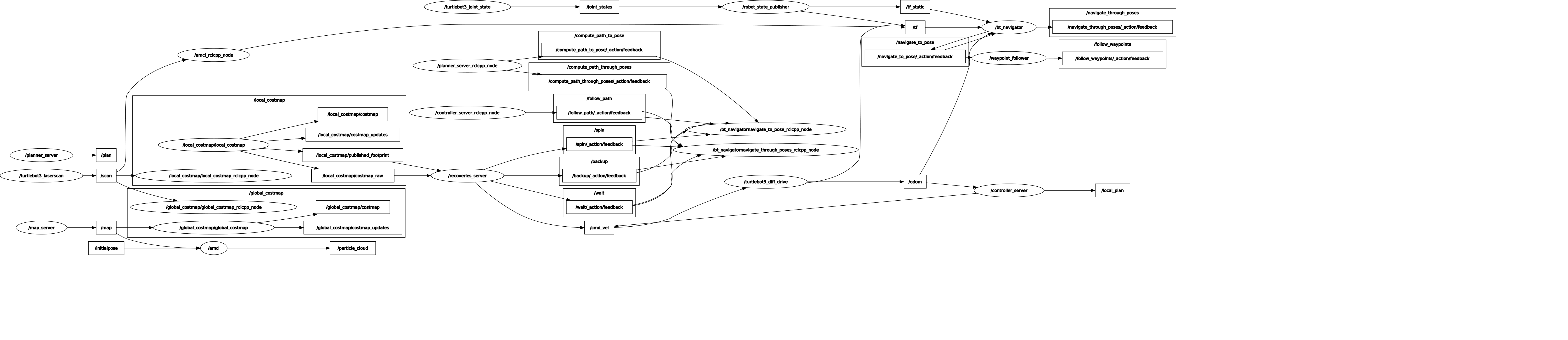}
        \caption{A subset of the computational graph of the simulated robot including sensor and control topics, adapted from \cite{mayoral2022sros2}.
        }
        \label{fig:navigationgraph}
        \vfill 
    \end{figure}
\vspace*{\fill} 
\end{landscape}


\section{Application and analysis}

To apply the DevSecOps methodology, as defined originally in \cite{mayoral2020devsecops}, we demonstrate the application of SROS2 using two of the most commonly used frameworks in ROS 2, the \texttt{navigation2} \cite{macenski2020marathon2} and \texttt{slam\_toolbox} \cite{macenski2021slamtoolbox} stacks. Particularly, the Navigation2 project\footnote{\url{https://github.com/ros-planning/navigation2}} provides a software stack including path planning algorithms and behavioral navigation servers that can be seamlessly integrated with existing sensor perception pipelines, localization and mapping services, and drivetrain velocity controllers to support various mobile robotic applications. While Navigation2 remains mostly agnostic of robotic platforms, we selected the widely accessible and community supported TurtleBot3\footnote{\url{https://www.robotis.us/turtlebot-3}} as the target robot for analysis--consisting of a differential drive, circular base footprint, and ground level 2D scanning LIDAR. The application case study is depicted in Figure \ref{fig:navigationgraph}.

To start, we begin with the modeling (Subsection \ref{approach:modeling}), authentication (Subsection  \ref{approach:authentication}) and authorization (Subsection  \ref{approach:authorization}) phases in order to bootstrap an initial security policy that captures the minimal spanning set of security measures required for the nominal function of the application across the distributed computation graph. We can either first bring up the ROS 2 application under a controlled network environment with security mode disabled, or provision an initial keystore enclave with temporary key-material and only access control governance disabled; the first option includes minimal setup while being more transparent to debug, while the later is advantageous in modeling policies directly from field deployments across untrusted networks.

With the ROS 2 application running, SROS2 can capture the topology of the computation graph to populate the permission profiles within the initial policy, registering each active ROS 2 node and respective topic publication and subscription. A limitation in SROS2’s current snapshot approach however is in accurately modeling more ephemeral resource access events, such as service clients or action requests. While ROS 2’s internal graph API (that SROS2 uses to sample topology measurements) provides a middleware agnostic interface, the observation window is only instantaneous and can easily miss asynchronous resource access events.

Given the graph API limitations, it’s often necessary to iteratively test the generated policy by using it to update the signed permission and governance files and relaunch the application with access control enabled. For moderate to advanced applications such as those relying on Navigation2, permission access denied errors may inevitably be encountered. With ROS 2 however, such events can be logged and aggregated into policy refinement, specifying the node and resource namespaces denied.

After iterative policy refinement, once the tested application is fully functional with enforced access control, the policy can then be further optimized. Such policy optimizations include sorting common permission patterns into smaller sub-profiles, being more manageable to audit and modularly reusable across repeating permission sets in a global policy. We demonstrate it in \cite{sros2tb3demo_policies}. This auditing process also provides an opportunity to assess the granularity of the policy as well, from both permission Access Control (AC) and Information Flow Control (IFC) perspectives.

While the minimal spanning set of AC permissions may be optimally secure in terms of the Principle of Least Privilege \cite{schneider2003least}, it may not be optimally usable for a target application domain. Though most computation graphs in ROS 2 are largely static at runtime, cases where resource namespaces change over the application’s lifecycle do exist. For example, multi-robot systems may fluctuate as agents enter or exit networks for missions or maintenance. Additionally, node namespaces sometimes include sequence numbers to ensure namespace uniqueness. To accommodate such scenarios, permissions could be modified to include wildcards as necessary. While static permissions are straightforward to interpret and less likely to inadvertently introduce policy flaws, wildcarding select permissions provides a usable compromise when required.

When auditing from an IFC perspective, optimizing the policy into assorted enclaves becomes a key consideration. As all ROS 2 nodes composed into a shared process share a common DDS context, they subsequently share the same security enclave or set of permission profiles. This of course is inherently coupled with how the application is architected and to be deployed across a distributed system. As such, security requirements for IFC may then instead dictate aspects of the application's designs. The degree of granularity of IFC sought then dictates the allotment of enclaves used to contain sub-profiles for the application’s policy.

In the case of Navigation2 and its large degree of coupling and composition of nodes, the planning stack derived from a single source tree is perhaps best relegated to its own enclave, while still being readily separable from any other enclave dedicated to perception or control nodes. Admittedly, such auditing procedures in determining the allotment of enclaves remains rather ambiguous for users, and so presents another area of ergonomics for SROS2 to help automate or advise through formal analysis.

The source code of the demonstrated application case study is available at \cite{sros2_casestudy}. The resulting security policies of applying the DevSecOps methodology are also available in the same repository and show various profiles that result from a systematic assessment.

\section{ROS 2 over VPN: Tunneling Communications}

\subsection{Challenges in Securing ROS 2 Communications}

ROS 2, built upon the Data Distribution Service (DDS), faces significant security challenges. Despite the existence of DDS Security extensions \cite{ddssecurity11}, these measures are still in their infancy concerning robust security implementations. As outlined in upcoming Chapter \ref{chapter:attacking_software}, modern DDS implementations exhibit numerous vulnerabilities, some of which result from collaborative research by security experts. These vulnerabilities open up potential attack vectors for malicious actors, compromising the integrity of ROS 2 systems.

Furthermore, the DDS Security Specification itself is plagued with unresolved security issues, detailed at \url{https://issues.omg.org/issues/spec/DDS-SECURITY}. This list highlights various bugs and loopholes that further exacerbate the security risks associated with ROS 2, making it difficult to assert that ROS 2, in its current state, can be deemed secure.

\subsection{ROS 2 over VPN: A Layered Security Approach}

Given the aforementioned security concerns, a practical solution is to encapsulate ROS 2 and DDS communications within an additional security layer. This approach involves tunneling all ROS 2 communications through a secure channel using a Virtual Private Network (VPN). This method, referred to as \emph{ROS 2 over VPN}, provides an extra layer of security by encrypting the data transmitted over the network, thereby mitigating the risks associated with potential vulnerabilities in DDS implementations and specifications.

\subsection{Implementation}

The concept of \emph{ROS 2 over VPN} was researched and successfully demonstrated in the FogROS2 project \cite{ichnowski2023fogros}, which builds upon the original contributions and principles presented earlier at \cite{mayoral2020devsecops}. Figure \ref{figure:fogros2} illustrates the architecture employed in this implementation. By routing ROS 2 communications through a VPN, data privacy is enhanced, and the network is safeguarded against eavesdropping and other forms of cyberattacks. FogROS2 relies on a series of extensions of the ROS 2 launch system. We implemented the FogROS2 launch sequence, shown in Fig.~\ref{figure:fogros2} that starts after processing the launch script.  

The steps FogROS2 takes are:
(1)~connect to the cloud provider through its programmatic interface to create and start a new instance along with setting up security groups to isolate from other cloud computers, and generating secure communication key pairs;
(2)~install the ROS libraries and dependencies on the cloud machine needed for the robot application to run in the cloud;
(3)~set up virtual private networking (VPN) on robot and cloud machine to secure the ROS\,2 DDS communication between them;
(4)~copy the workspace and all of its node software, from the robot to the cloud machine;
(5)~configure the DDS provider's discovery mechanism to work across the VPN;
(6)~launch docker instances;
(7)~launch cloud-based nodes; and finally,
(8)~launch nodes on the robot.

Once the launch process is complete, the nodes running on the robot and on the cloud machine(s) securely communicate and interact with each other---and the only change needed was a few lines of the launch script.

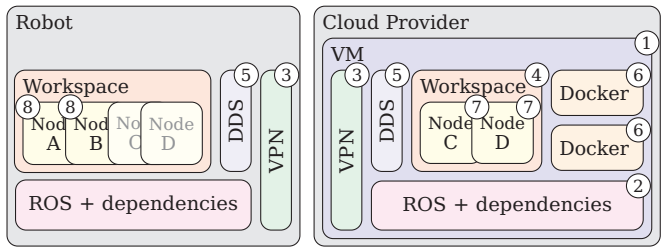
\begin{figure}[h!]
    \centering
\begin{tikzpicture}[node distance=6pt,>=stealth',font=\footnotesize,
   rosblock/.style={draw, rectangle, rounded corners, align=center, inner sep=3pt},
   roslabel/.style={inner sep=0pt, xshift=3.2pt},
   rosseq/.style={circle, inner sep=1pt, fill=white, draw=black, xshift=-2pt, yshift=-2pt, font=\scriptsize}]

\newcommand{\sep}{3pt}



\node[rosblock, text width=102pt, align=left, inner sep=4pt, minimum height=90pt, xshift=-1pt, fill=black!10] (robot) {};
\node[roslabel, below=3.2pt of robot.north west, anchor=north west] { Robot };

\node[rosblock, text width=122pt, align=left, inner sep=4pt, minimum height=90pt, fill=black!10, right=3.4in of robot.north west, anchor=north east] (cloud) {};
\node[roslabel, below=3.2pt of cloud.north west, anchor=north west] { Cloud Provider };
\draw let \p1 = (cloud.north west), \p2 = (cloud.south east) in
 node[rosblock, below=12pt of cloud.north west, anchor=north west, text width=(\x2 - \x1 - 12.8pt), align=left, xshift=3.2pt, inner sep=3pt, fill=blue!10, minimum height=(\y1 - \y2 - 12pt - 3.2pt)] (vm) {};
\node [roslabel, below=3.2pt of vm.north west, anchor=north west] { VM };
 
\draw let \p1 = (vm.north west), \p2 = (vm.south east) in
 node[rosblock, text width=(\y1 - \y2 - 12pt - 3*3.2pt), above=3.2pt of vm.south west, anchor=north west, xshift=3.2pt, rotate=90, fill=green!10] (vpn) {VPN};
 
\draw let \p1 = (vm.north east) in
 node[rosblock, below=12pt of vm.north east, anchor=north east, fill=orange!10, xshift=-3.2pt, minimum height=17pt] (docker1) { Docker };
\node [rosblock, below=3.2pt of docker1.south west, anchor=north west, minimum height=17pt, fill=orange!10] (docker2) { Docker };

\draw let \p1 = (vpn.south east), \p2 = (docker2.south west) in
 node[rosblock, right=3.2pt of vpn.south east, anchor=north east, text width=(\y1 - \y2 - 2*3.2pt + 0.4pt), fill=blue!5, rotate=90] (dds) { DDS };
 
\draw let \p1 = (dds.south east), \p2 = (docker2.south west) in
 node[rosblock, right=3.2pt of dds.south east, anchor=north west, text width=(\x2 - \x1 - 4*3.2pt), minimum height=(\y1 - \y2), fill=red!10] (workspace) {};
\node [roslabel, below=3.2pt of workspace.north west, anchor=north west] {Workspace};

\draw let \p1 = (workspace.north west), \p2 = (workspace.south east) in node [rosblock, below=12pt of workspace.north west, anchor=north west, fill=yellow!10, xshift=3.2pt, minimum height=(\y1 - \y2 - 12pt - 0.2pt - 3.2pt)] (node1) { \scriptsize Node \\[-2pt] C };
\draw let \p1 = (workspace.north west), \p2 = (workspace.south east) in node [rosblock, below=12pt of workspace.north east, anchor=north east, fill=yellow!10, xshift=-3.2pt, minimum height=(\y1 - \y2 - 12pt - 0.2pt - 3.2pt)] (node2) { \scriptsize Node\\[-2pt] D };

\draw let \p1=(dds.north west), \p2=(vm.south east) in
 node [rosblock, below=3.2pt of dds.north west, anchor=north west, text width=(\x2 - \x1 - 3*3.2pt), minimum height=(\y1 - \y2 - 0.4pt - 2*3.2pt), fill=magenta!10] (ros) { ROS + dependencies };
 
\draw let \p1=(vpn.north west), \p2=(vpn.north east) in
  node [rosblock, rotate=90, below=12pt of robot.north east, xshift=(-4*3.2pt + 0.8pt), yshift=3.2pt, anchor=south east, text width=(\y2 - \y1 - 2*3.2pt), fill=green!10] (robot vpn) { VPN };

\draw let \p1 = (vpn.south east), \p2 = (docker2.south west) in
 node[rosblock, left=3.2pt of robot vpn.north east, anchor=south east, text width=(\y1 - \y2 - 2*3.2pt + 0.4pt), fill=blue!5, rotate=90] (robot dds) { DDS };
 
\draw let \p1=(ros.north west), \p2=(ros.south east) in
  node [rosblock, left=3.2pt of robot vpn.north west, anchor=south east, text width=(\x2 - \x1 - 20pt), minimum height=(\y1 - \y2), fill=magenta!10] (robot ros) {ROS + dependencies};
  
\draw let \p1=(robot dds.north east), \p2=(robot dds.south west), \p3=(robot ros.north west), \p4=(robot ros.south east) in node [rosblock, left=3.2pt of robot dds.north east, anchor=north east, minimum width=(\x4 - \x3 + \x1 - \x2 - 3.2pt), minimum height=(\y1 - \y2), fill=red!10] (robot workspace) {};
\node [roslabel, below=3.2pt of robot workspace.north west, anchor=north west] { Workspace };

\draw let \p1 = (robot workspace.north west), \p2 = (robot workspace.south east) in node [rosblock, below=12pt of robot workspace.north west, anchor=north west, fill=yellow!10, xshift=3.2pt, minimum height=(\y1 - \y2 - 12pt - 0.2pt - 3.2pt)] (robot node1) { \scriptsize Node \\[-2pt] A };
\draw let \p1 = (robot workspace.north west), \p2 = (robot workspace.south east) in node [rosblock, below=12pt of robot workspace.north west, anchor=north west, fill=yellow!10, xshift=(3.2pt + 16pt), minimum height=(\y1 - \y2 - 12pt - 0.2pt - 3.2pt)] (robot node2) { \scriptsize Node \\[-2pt] B };
\draw let \p1 = (robot workspace.north west), \p2 = (robot workspace.south east) in node [rosblock, below=12pt of robot workspace.north west, anchor=north west, fill=yellow!5, xshift=(3.2pt + 32pt), minimum height=(\y1 - \y2 - 12pt - 0.2pt - 3.2pt)] { \scriptsize \color{gray} Node \\[-2pt] \color{gray} C };
\draw let \p1 = (robot workspace.north west), \p2 = (robot workspace.south east) in node [rosblock, below=12pt of robot workspace.north east, anchor=north east, fill=yellow!5, xshift=-3.2pt, minimum height=(\y1 - \y2 - 12pt - 0.2pt - 3.2pt)] { \scriptsize \color{gray} Node \\[-2pt] \color{gray} D };

\node [rosseq, right=0pt of vm.north east, anchor=center] { 1 };
\node [rosseq, right=0pt of ros.north east, anchor=center] { 2 };
\node [rosseq, right=0pt of vpn.south east, anchor=center] { 3 };
\node [rosseq, right=0pt of robot vpn.south east, anchor=center] { 3 };
\node [rosseq, right=0pt of workspace.north east, anchor=center] { 4 };
\node [rosseq, right=0pt of dds.south east, anchor=center] { 5 };
\node [rosseq, right=0pt of robot dds.south east, anchor=center] { 5 };
\node [rosseq, right=0pt of docker1.north east, anchor=center] { 6 };
\node [rosseq, right=0pt of docker2.north east, anchor=center] { 6 };
\node [rosseq, right=0pt of node1.north east, anchor=center] { 7 };
\node [rosseq, right=0pt of node2.north east, anchor=center] { 7 };
\node [rosseq, right=0pt of robot node1.north west, xshift=4pt, anchor=center] { 8 };
\node [rosseq, right=0pt of robot node2.north west, xshift=4pt, anchor=center] { 8 };

\begin{pgfonlayer}{background}

  

\end{pgfonlayer}

\end{tikzpicture}




    \caption{FogROS2 proposes an additional security layer to ROS 2 communications using a VPN, reproduced from \cite{ichnowski2023fogros}.
    }
    \label{figure:fogros2}
\end{figure}

The use of VPN in this context offers several advantages:

\begin{enumerate}
    \item Enhanced Data Encryption: VPN encrypts all data transmitted between ROS 2 nodes, ensuring that sensitive information remains confidential even if intercepted.

    \item Network Isolation: By creating a virtual network overlay, ROS 2 communications are isolated from other network traffic, reducing the exposure to external threats.

    \item Access Control: VPNs can be configured to allow access only to authenticated and authorized devices and users, providing a controlled environment for ROS 2 operations.

    \item Flexibility and Scalability: VPN solutions can be easily scaled and adapted to various deployment scenarios, from small-scale laboratory setups to large-scale industrial applications.

\end{enumerate}

In summary, while the native security features of ROS 2 and DDS are still maturing, employing a VPN to tunnel communications offers an effective interim solution to address the current security limitations. This approach provides robust encryption, enhanced network control, and improved data privacy, thereby fortifying ROS 2 applications against potential cybersecurity threats. 

However, this added security does not come without trade-offs. Notably, the introduction of a VPN can impact the performance of communications in ROS 2 systems. This topic will further expanded in Chapter \ref{chapter:benchmarking}.


\section{Conclusion}

In this chapter we presented various approaches to add security to ROS 2 in a scalable manner. Namely, SROS2, a series of developer tools focused on usable security that allow adding security capabilities to ROS 2 computational graphs. We introduce a methodology around these tools consisting of 6 basic steps  and aligning to the common DevSecOps flows \cite{mayoral2020devsecops}: (A) introspect the computational graph and model its security to determine the necessary security policies and enclaves; (B) define authentication and (C) authorization configurations; (D) generate all the required security artifacts for implementing such policies; (E) deploy them appropriately across robotic systems; and (F) continuously monitor the network, reverting to (A) modeling when needed. SROS2 facilitates each one of these steps by integrating itself tightly into the usual ROS 2 development flows.

We present an application case study discussing how to propose a secure architecture for the TurtleBot3 robot using the \texttt{navigation2} and \texttt{slam\_toolbox} stacks. This is of special interest since it aligns to the software architecture that many industrial and professional robots are using today, given the popularity of these packages.

We introduce security as a process in robotics and correspondingly, the work herein aims to pave the way for enabling security processes, particularly in ROS 2. Alongside the never-ending reality of security, we acknowledge that SROS2 has various limitations that deserve further attention and improvements. Some of these include the lack of granularity of security configurations in the current abstractions, which makes it difficult to configure encryption and authentication options separately. Others refer to the lifecycle management of security artifacts, including updating certificates and keys, wherein secure deployment plays a key role. We are particularly keen on improving SROS2 mechanisms in the future to ensure secure lifecycles while minimizing the downtime impact in ROS 2 graphs. 
Promising directions for future work also include the development of more advanced monitoring and introspection capabilities, the extension of SROS2 to other communication middlewares (beyond DDS) and finally, the continuous improvement of the usability of the tools. For this, we believe that the use of Graphical User Interfaces (GUIs) represents an interesting opportunity to further facilitate SROS2 usability to non-roboticists. On top of SROS2, we also discuss the shortcomings of the DDS Security extensions \cite{ddssecurity11} and how additional layers of security via VPN could help mitigate them.

The work presented here aims to inspire groups in robotics to add security to their robotic computational graphs. We look forward to security in robotics becoming more usable and accessible, minimizing the threat landscape that lies before us now, and closing the window of opportunity for bad actors. In the next chapter we will discuss the computational performance impact of adding security to ROS robotic systems and how it can be estimated experimentally with state-of-the-art benchmarking methods.

    \chapter{The Cost of Security: Benchmarking Robot Computations}
\label{chapter:benchmarking}

\epigraph{\textbf{Contributions}: \emph{Leading author at \cite{mayoral2023robotperfIROS}, both scientifically and implementation-wise. Rallied the robotics community towards its adoption and through \href{https://www.youtube.com/playlist?list=PLf4Fnww4KiFeiP1fNQXgJhyuEI760NVIl}{recorded online sessions}, throughout many months. Funded contributors and engaged other researchers and industry to support the final research product. This work is open source.}}{}

%

\section{Introduction}
As discussed in the previous chapter, while the inherent security mechanisms of ROS 2 and DDS are undergoing development and refinement, the strategy of tunneling communications through a VPN emerges as an effective interim measure to mitigate existing security vulnerabilities. This method introduces an additional layer of robust encryption, affording enhanced control over the network and bolstering data privacy. As a result, ROS 2 applications gain a fortified defense against a spectrum of cybersecurity threats. However, this added security does not come without trade-offs. Notably, the introduction of a VPN can impact the performance of communications in ROS 2 systems.

\begin{mybox}
{\textbf{\textit{VPNs in robotic deployments and performance challenges}}}

As robotics increasingly integrate into industries like healthcare and manufacturing, VPNs are becoming essential for remote operation and access. This shift presents performance challenges due to VPN overhead, especially notable in use cases like teleoperated surgical robots and industrial manipulators. The initial performance hurdle arises from the inherent computing load of VPNs in linking robotic systems.

A second, more complex challenge emerges with the implementation of finer security measures. Traditional VPN setups in robotics often adopt a \emph{'castle-and-moat'} approach, where insiders within each location’s VPN are considered trusted, often lacking encryption and access control among them. However, a more secure, \emph{'zero-trust'} model is gaining traction. This model treats all participants, even within the same network, as potential risks, mandating encryption and access control for each interaction. Although this significantly bolsters security, it also adds substantial computational overhead, intensifying the performance challenges introduced by VPNs in robotic deployments.
\end{mybox}

The performance downgrade can manifest in various ways, primarily due to the additional processing required for encrypting and decrypting data, as well as the overhead introduced by the VPN protocol itself. While VPN solutions like WireGuard \cite{donenfeld2017wireguard} are designed for efficiency and minimal overhead, they inevitably add latency to communications. In real-time robotic applications, where milliseconds can be critical, this added latency can affect the responsiveness and synchronization of robotic components. Moreover, the throughput of data can be constrained by the VPN, potentially limiting the bandwidth available for intensive data streams, such as video feeds or large sensor data arrays.

Another aspect to consider is the complexity introduced by VPN routing and network management. While VPNs can enhance security by segmenting and controlling network traffic, they also add layers of network configuration that can be challenging to optimize for high-performance robotic applications. This complexity can lead to inefficiencies in network routing, further impacting communication speed and reliability.

Therefore, while employing a VPN provides a strong interim solution for securing ROS 2 communications, it's important to carefully consider the performance implications, especially in scenarios where real-time processing and high data throughput are essential. Future advancements in ROS 2 and DDS security features may offer more integrated solutions that balance robust security with the performance needs of advanced robotic systems. For now, the choice to use a VPN must be weighed against the specific requirements and constraints of each individual robotic application.

Beyond security itself, more generally in robotics, in order for robotic systems to operate safely and effectively in dynamic real-world environments, their computations must run at real-time rates while meeting power constraints.
Towards this end, accelerating robotic kernels on heterogeneous hardware, such as GPUs and FPGAs, is emerging as a crucial tool for enabling such performance~\cite{neuman2021robomorphic,liu2021archytas,makoviychuk2021isaac,plancher2021grid,mayoral2022robotcore,wan2022robotic,liu2021robotic}. This is particularly important given the impending end of Moore's Law and the end of Dennard Scaling, which limits single CPU performance~\cite{Esmaeilzadeh11,Venkatesh10}.

While hardware-accelerated kernels offer immense potential, they necessitate a reliable and standardized infrastructure to be effectively integrated into robotic systems. As the industry leans more into adopting such standard software infrastructure, ROS~\cite{quigley2009ros} has emerged as a favored choice. Serving as an industry-grade middleware, it aids in building robust computational robotics graphs, reinforcing the idea that robotics is more than just individual algorithms. The growing dependency on ROS~2~\cite{ros-robotics-companies}, combined with the computational improvements offered by hardware acceleration, accentuates the community's demand for a standardized, industry-grade benchmark to evaluate varied hardware solutions. Recently, there has been a plethora of workshops and tutorials focusing on benchmarking robotics applications~\cite{icra2021,icra2022-metrics,icra2022-sbd,iros2020,iros2021,iros2022,iros2023,rss2020,rss2021,rss2022,rss2023}, and while benchmarks for specific robotics algorithms~\cite{bakhshalipour2022rtrbench,neuman2019benchmarking} and certain end-to-end robotic applications, such as drones~\cite{boroujerdian2018mavbench,krishnan2022automatic,krishnan2022roofline,nikiforov2023rose}, do exist, the nuances of analyzing general ROS 2 computational graphs on heterogeneous hardware is yet to be fully understood.

\begin{figure}[tbp]
\vspace{.5em}
\centering
\includegraphics[scale=0.60]{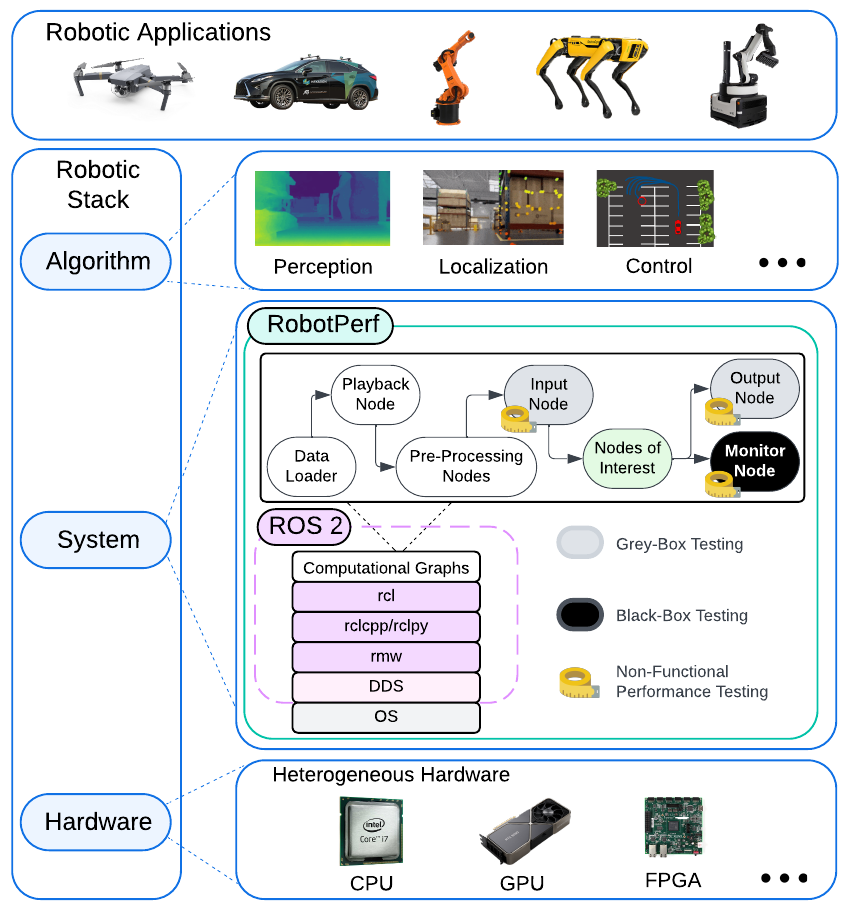}
\caption{A high level overview of RobotPerf, reproduced from \cite{mayoral2023robotperfIROS}}
\label{figure:robotperf_overview}
\end{figure}



In this chapter, we introduce \textit{RobotPerf}, an open-source and community-driven benchmarking tool designed to assess the performance of robotic computing systems in a standardized, architecture-neutral, and reproducible way, accommodating the various combinations of hardware and software in different robotic platforms.  Figure~\ref{figure:robotperf_overview} presents a high level overview of RobotPerf, which targets industry-grade real-time systems with complex and extensible computation graphs using the Robot Operating System (ROS~2) as its common baseline. Emphasizing adaptability, portability, and a community-driven approach, RobotPerf aims to provide fair comparisons of ROS 2 computational graphs across CPUs, GPUs, FPGAs and other accelerators. It focuses on evaluating robotic workloads in the form of ROS~2 computational graphs on a wide array of hardware setups, encompassing a complete robotics pipeline and emphasizing real-time critical metrics. The framework incorporates two distinct benchmarking methodologies that utilize various forms of instrumentation and ROS \emph{nodes} to capture critical metrics in robotic systems. These approaches are: black-box testing, which measures performance by eliminating upper layers and replacing them with a test application, and grey-box testing, an application-specific measure that observes internal system states with minimal interference. The framework is user-friendly, easily extendable for evaluating custom ROS 2 computational graphs, and collaborates with major hardware acceleration vendors for a standardized benchmarking approach. It aims to foster research and innovation as an open-source project. We validate the framework's capabilities by conducting benchmarks on diverse hardware platforms, including CPUs, GPUs, and FPGAs, thereby showcasing RobotPerf's utility in drawing valuable performance insights. 

RobotPerf's source code and documentation are available at \url{https://github.com/robotperf/benchmarks} and its methodologies are currently being used in industry to benchmark industry-strength, production-grade systems.


\section{Background}

\begin{table}
\vspace{7pt}
\centering
\scalebox{0.7}{
    \begin{tabular}{lccccccccc}
    \toprule
    & \multicolumn{7}{c}{\textbf{Characteristics}} \\
    \cmidrule(lr){2-8}
    & \rotatebox{90}{\scriptsize Real-time Performance Metrics}
    & \rotatebox{90}{\scriptsize Spans Multiple Pipeline Categories }
    & \rotatebox{90}{\scriptsize Evaluation on Heterogeneous Hardware}
    & \rotatebox{90}{\scriptsize Integration with ROS/ROS~2 Framework}
    & \rotatebox{90}{\scriptsize Functional Performance Testing}
    & \rotatebox{90}{\scriptsize Non-functional Performance Testing}
    & \rotatebox{90}{\scriptsize Community Led} \\
    \midrule
    OMPL Benchmark \cite{sucan2012open} & \cmark  & \xmark & \xmark & \xmark & \xmark & \cmark & \xmark \\
    MotionBenchMaker \cite{chamzas2021motionbenchmaker} & \cmark  & \xmark & \xmark & \xmark & \cmark & \cmark & \xmark \\
    OpenCollBench \cite{tan2020opencollbench} & \xmark & \xmark & \cmark & \xmark & \cmark & \xmark & \xmark \\
    BARN \cite{perille2020benchmarking} & \xmark  & \xmark & \xmark & \cmark & \cmark & \xmark & \xmark \\
    DynaBARN \cite{nair7dynabarn} & \cmark & \xmark & \xmark & \cmark & \cmark & \xmark & \xmark \\
    MAVBench \cite{boroujerdian2018mavbench} & \cmark  & \cmark & \cmark & \cmark & \cmark & \cmark & \xmark \\
    Bench-MR \cite{heiden2021bench} & \cmark  & \xmark & \xmark & \xmark & \cmark & \xmark & \xmark \\
    RTRBench \cite{bakhshalipour2022rtrbench} & \cmark  & \cmark & \xmark & \xmark & \xmark & \cmark & \xmark \\
    \midrule
    \textbf{RobotPerf} (\textbf{ours}) & \cmark  & \cmark & \cmark & \cmark & \xmark & \cmark & \cmark \\
    \bottomrule
    \end{tabular}}
\caption{Comparative evaluation of representative existing robotics benchmarks with RobotPerf across essential characteristics for robotic systems. 
}
\label{table:comparative_benchmarks}
\end{table}

%
%
%
%
%
%
%
%

\subsection*{Robotics Benchmarks}


There has been much recent development of open-source robotics libraries and associated benchmarks demonstrating their performance as well as a plethora of workshops and tutorials focusing on benchmarking robotics applications~\cite{icra2021,icra2022-metrics,icra2022-sbd,iros2020,iros2021,iros2022,iros2023,rss2020,rss2021,rss2022,rss2023}. However, most of these robotics benchmarks focus on algorithm correctness (\emph{functional} testing) in the context of domain specific problems, as well as end-to-end latency on CPUs~\cite{sucan2012open,chamzas2021motionbenchmaker,tan2020opencollbench,perille2020benchmarking,mollbenchmarking,nair7dynabarn,kingston2022robowflex,heiden2021bench,ahn2020robel,weisz2016robobench,del2006benchmarks,michel2008rat,murali2019pyrobot,james2020rlbench,leitner2017acrv,zhu2020robosuite,fan2018surreal,althoff2017commonroad}.
A few works also analyze some \emph{non-functional} metrics, such as CPU performance benchmarks, to explore bottleneck behaviors in selected workloads~\cite{bakhshalipour2022rtrbench,neuman2019benchmarking,delmerico2018benchmark}.


Recent work has also explored the implications of operating systems and task schedulers on ROS~2 computational graph performance through benchmarking~\cite{reke2020self,barut2021benchmarking,puck2020distributed,yang2020exploring,arafat2022response} as well as by optimizing the scheduling and communication layers of ROS and ROS~2 themselves~\cite{sugata2017acceleration,ohkawa2019high,choi2021picas,suzuki2018real,gutierrez2018tsn,gutierrez2018real,gutierrez2018towards,gutierrez2018time}. These works often focused on a specific context or (set of) performance counter(s).

Finally, previous work has leveraged hardware acceleration for select ROS Nodes and adaptive computing to optimize the ROS computational graphs~\cite{yamashina2015proposal,yamashina2016crecomp,podlubne2019fpga,eisoldt2021reconfros,lienen2020reconros,9415584,ohkawa2016architecture,panadda2021low,8956928,9397897,ohkawa2018fpga,9355892,amano2021dataset,nitta2018study,chen2021fogros,nvidia2022isaacros,wan2022analyzing}. However, these works do not provide comprehensive frameworks to quickly analyze and evaluate new heterogeneous computational graphs except for two works that are limited to the context of UAVs~\cite{boroujerdian2018mavbench,nikiforov2023rose}.

Research efforts most closely related to our work include \texttt{ros2\_tracing} \cite{bedard2022ros2_tracing} and RobotCore \cite{mayoral2022robotcore}. \texttt{ros2\_tracing} provided instrumentation that demonstrated integration with the low-overhead LTTng tracer into ROS 2, while RobotCore illuminates the advantages of using vendor-specific tracing to complement \texttt{ros2\_tracing} to assess the performance of hardware-accelerated ROS 2 Nodes. 
Building on these two specific foundational contributions, RobotPerf offers a comprehensive set of ROS~2 kernels spanning the robotics pipeline and evaluates them on diverse hardware.

Table~\ref{table:comparative_benchmarks} summarizes our unique contributions. It includes a selection of representative benchmarks from above and provides an evaluation of these benchmarks against RobotPerf, focusing on essential characteristics vital for robotic systems. We note that while our current approach focuses only on non-functional performance benchmarking tests, RobotPerf's architecture and methodology can be extended to also measure functional metrics.

\section{RobotPerf: Principles \& Methodology}

RobotPerf is an open-source, industry-strength robotics benchmark for portability across heterogeneous hardware platforms. This section outlines the important design principles and describes the implementation methodology.

\subsection{Non-Functional Performance Testing}


Currently, RobotPerf specializes in non-functional performance testing, evaluating the efficiency and operational characteristics of robotic systems. Non-functional performance testing measures those aspects not belonging to the system's functions, such as computational latency, memory consumption, and CPU usage. 
In contrast, traditional functional performance testing looks into the system's specific tasks and function, verifying its effectiveness in its primary goals, like the accuracy of the control algorithm in following a planned robot's path. While functional testing confirms a system performs its designated tasks correctly, non-functional testing ensures it operates efficiently and reliably.

\subsection{ROS 2 Integration \& Adaptability} 


RobotPerf is designed specifically to evaluate ROS~2 computational graphs, rather than focusing on independent robotic algorithms. We emphasize benchmarking \textit{ROS~2 workloads} because the use of ROS~2 as middleware allows for the easy composition of complex robotic systems. 
This makes the benchmark versatile and well-suited for a wide range of robotic applications and enables industry, which is widely using ROS, to rapidly adopt RobotPerf. 



\subsection{Platform Independence \& Portability} 
RobotPerf allows for the evaluation of benchmarks on a variety of hardware platforms, including general-purpose CPUs and GPUs, reconfigurable FPGAs, and specialized accelerators (e.g., ray tracing accelerators \cite{deng2017toward}). 
Benchmarking robotic workloads on heterogeneous platforms is vital to evaluate their respective capabilities and limitations. This facilitates optimizations for efficiency, speed, and adaptability, as well as fine-tuning of resource allocations, ensuring robust and responsive operation across diverse contexts.

\begin{table}[!t]
\vspace{8pt}
\centering
\footnotesize
\renewcommand{\arraystretch}{1.1} 
\setlength{\tabcolsep}{6pt} 
\begin{tabularx}{\columnwidth}{lXX}
\rowcolor{black}
\color{white}\textsc{\scriptsize Criteria} & \color{white}\textbf{Grey-Box} & \color{white}\textbf{Black-Box} \\
\textsc{\scriptsize Precision} & Utilizes tracers from in-code instrumentation. & Limited to ROS 2 message subscriptions. \\
\textsc{\scriptsize Performance} & Low overhead. Driven by kernelspace. & Restricted to ROS 2 message callbacks. Recorded by userspace processes.  \\
\textsc{\scriptsize Flexibility} & Multiple event types. & Limited to message subscriptions in current implementation. \\
\textsc{\scriptsize Portability} & Requires a valid tracer. Standard format (CTF). & Standard ROS 2 APIs. Custom JSON format. \\
\textsc{\scriptsize Ease of use} & Requires code modifications and data postprocessing.
& Tests unmodified software with minor node additions. \\
\textsc{\scriptsize Real-Robots} & Does not modify the computational graph. & Modifies the computational graph adding extra dataflow. \\
\bottomrule
\end{tabularx}
\caption{Grey-box vs. black-box benchmarking trade-offs.}
\label{table:tradeoffs}
\end{table}

\subsection{Flexible Methodology}


We offer grey-box and black-box testing methods to suit different needs. Black-box testing provides a quick-to-enable external perspective and measures performance by eliminating the layers above the layer-of-interest and replacing those with a specific test application. Grey-box testing provides more granularity and dives deeper into the internal workings of ROS 2, allowing users to generate more accurate measurements at the cost of increased engineering effort. 
As such, each method has its trade-offs, and providing both options enables users flexibility. We describe each method in more detail below and highlight takeaways in Table~\ref{table:tradeoffs}.

\subsubsection{Grey-Box Testing}

Grey-box testing enables precise probe placement within a robot's computational graph, 
generating a chronologically ordered log of critical events using a tracer that could be proprietary or open source, such as LTTng~\cite{desnoyers2006lttng}. As this approach is fully integrated with standard ROS 2 layers and tools through \texttt{ros2\_tracing}, it incurs a minimal average latency of only 3.3 \si{\micro\second}~\cite{bedard2022ros2_tracing}, making it well-suited for real-time systems. 
With this approach, optionally, RobotPerf offers specialized input and output nodes that are positioned outside the nodes of interest to avoid the need to 
instrument them.
These nodes generate the message tracepoints upon publish and subscribe events 
which are processed to calculate end-to-end latency. 

\subsubsection{Black-Box Testing}

The black-box methodology utilizes a user-level node called the \texttt{MonitorNode} to evaluate the performance of a ROS~2 node. 
The \texttt{MonitorNode} subscribes to the target node, recording the timestamp when each message is received. By accessing the propagated ID, the \texttt{MonitorNode} determines the end-to-end latency by comparing its timestamp against the \texttt{PlaybackNode}'s recorded timestamp for each message. While this approach does not need extra instrumentation, and is easier to implement, it offers a less detailed analysis and alters the computational graph by introducing new nodes and dataflow.

\subsection{Opaque Performance Tests} 

The requirement for packages to be instrumented directly within the source code poses a challenge to many benchmarking efforts. To overcome this hurdle, for most benchmarks, we refrain from altering the workloads of interest and, instead, utilize specialized input and output nodes positioned outside the primary nodes of concern. This setup allows for benchmarking without the need for direct instrumentation of the target layer. We term this methodology ``opaque tests," a concept that RobotPerf adheres to when possible. 

\subsection{Reproducibility \& Consistency} 
To ensure consistent and reproducible evaluations, RobotPerf adheres to specific common robotic dataformats. In particular, it uses ROS 2 \texttt{rosbags}, including our own available at \url{https://github.com/robotperf/rosbags}, as well third-party bags (e.g., the \texttt{r2b} dataset~\cite{nvidia_r2bdataset2023}). 


To ensure consistent data loading and finer control over message delivery rates, we drew inspiration from~\cite{ros2_benchmark}. Our computational graphs incorporate \emph{modified and improved} \texttt{DataLoaderNode} and \texttt{PlaybackNode} implementations, which can be accessed at \url{https://github.com/robotperf/ros2_benchmark}. These enhanced nodes offer improvements that report  worst-case latency and 
enable the reporting of maximum latency, introduce the ability to profile power consumption and so forth.

\subsection{Metrics}


We focus on three key metrics: latency, throughput and power consumption including energy efficiency. 
Latency measures the time between the start and the completion of a task. Throughput measures the total amount of work done in a given time for a task. Power measures the electrical energy per unit of time consumed while executing a given task. Measuring energy efficiency (or performance-per-Watt) captures the total amount of work (relative to either throughput or latency) that can be delivered for every watt of power consumed and is directly related to the runtime of battery powered robots~\cite{boroujerdian2018mavbench}. 

\subsection{Current Benchmarks and Categories}

RobotPerf \texttt{beta}~\cite{robotperf_benchmarks_repo} introduces benchmarks that cover the robotics pipeline from perception, to localization, to control, as well as dedicated benchmarks for manipulation. 
The full list of benchmarks in the \texttt{beta} release can be found in Table~\ref{table:comparative_benchmarks}. 
Aligned with our principles defined above, each benchmark is a self-contained ROS 2 package which describes all dependencies (generally other ROS packages). To facilitate reproducibility, all benchmarks are designed to be built and run using the common ROS 2 development flows (\texttt{ament} build tools, \texttt{colcon} meta-build tools, etc.).
Finally, so that the benchmarks can be easily consumed by other tools, a description of each benchmark, as well as its results, is defined in a machine-readable format. 
As such, accompanying the \texttt{package.xml} and \texttt{CMakeLists.txt} files required for all ROS packages, a YAML file named \texttt{benchmark.yaml} is  in the root of each benchmark which  describes the benchmark and includes  accepted results.

\subsection{Run Rules}

To ensure the reliability and reproducibility of the performance data, we adhere to a stringent set of run rules. 
First, tests are performed in a controlled environment to ensure that performance data is not compromised by fluctuating external parameters. 
As per best practices recommended by \texttt{ros2\_tracing}~\cite{bedard2022ros2_tracing}, we record and report settings like clock frequency and core count.
%
%
Second, we look forward to the possibility of RobotPerf being embraced by the community and have results undergo peer review, which can contribute to enhancing reproducibility and accuracy. 
%
%
Finally, we aim to avoid overfitting to specific hardware setups or software configurations by encompassing a broad spectrum of test scenarios. 

\section{Evaluation}

\begin{figure}[!t]
\centering
\vspace{2em}
\includegraphics[width=\textwidth]{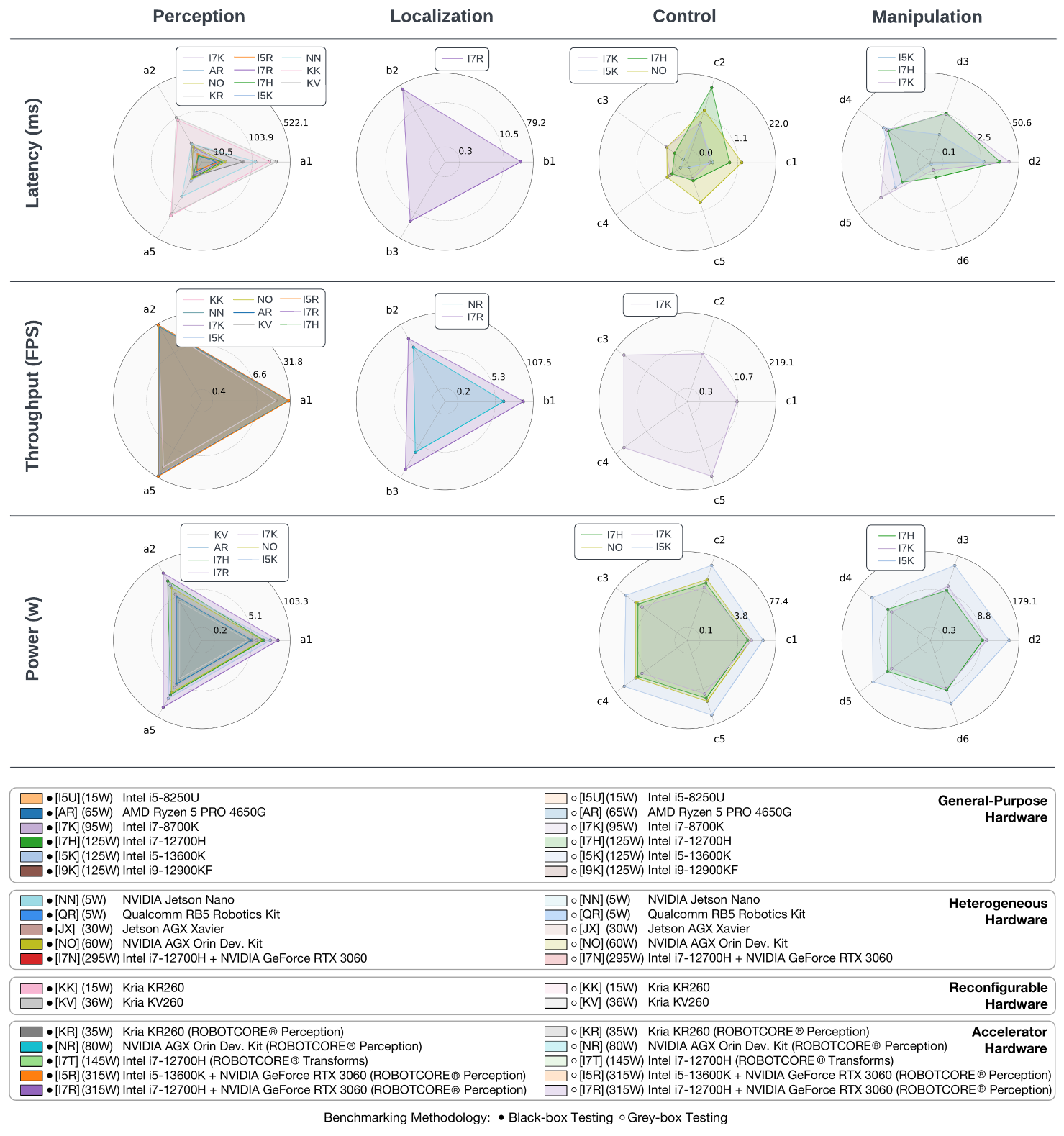}  
\caption{Benchmarking results on diverse hardware platforms across perception, localization, control, and manipulation workloads defined in RobotPerf \texttt{beta} Benchmarks. Reproduced from \cite{mayoral2023robotperfIROS}.
}
\label{figure:robotperf}
\end{figure}


%


We conduct comprehensive benchmarking using RobotPerf to evaluate its capabilities on three key aspects vital for a robotics-focused computing benchmark. First, we validate the framework's capacity to provide comparative insights across divergent heterogeneous platforms from edge devices to server-class hardware. Second, we analyze the results to understand RobotPerf's ability to guide selection of the optimal hardware solution tailored to particular robotic workloads. Finally, we assess how effectively RobotPerf reveals the advantages conferred by hardware and software acceleration techniques relative to general-purpose alternatives. All of our results and source code can be found open-source at: \url{https://github.com/robotperf/benchmarks}.

\subsection{Fair and Representative Assessment of Heterogeneity}

Assessing hardware heterogeneity in robotic applications is imperative in the ever-evolving field of robotics. Different robotic workloads demand varying computational resources and efficiency levels. Therefore, comprehensively evaluating performance across diverse hardware platforms
is crucial. 

We evaluated the RobotPerf benchmarks over a wide list of hardware platforms, including general-purpose CPUs on edge devices (e.g., Qualcomm RB5), server-class CPUs (e.g., Intel i7-8700), and specialized hardware accelerators (e.g., AMD Kria KR260). Figure \ref{figure:robotperf} illustrates benchmark performance in robotics per category of workload (perception, localization, control, and manipulation) using radar plots, wherein the different hardware solutions are depicted together alongside different robotic workloads per category.  Each hardware solution is presented with a different color. Radar plots illustrate the latency, throughput, and power consumption for each hardware solution and workload, with reported values representing the maximum across a series of runs. Each hardware platform and performance testing procedure is delineated by a separate color, with darker colors representing Black-box testing and lighter colors Grey-box testing. In the figure's key, the hardware platforms are categorized into four specific types: general-purpose hardware, heterogeneous hardware, reconfigurable hardware, and accelerator hardware. Within each category, the platforms are ranked based on their Thermal Design Power (TDP), which indicates the maximum power they can draw under load. The throughput values for manipulation tasks and power values for localization tasks have not been incorporated into the \texttt{beta} version of RobotPerf. As RobotPerf continues to evolve, more results will be added in subsequent iterations.

%
%
Given our ability to benchmark 18 platforms (bottom of Figure~\ref{figure:robotperf}), RobotPerf is capable of benchmarking heterogeneous hardware platforms and workloads, 
paving the way for community-driven co-design and optimization of hardware and software. 



\subsection{Quantitative Approach to Hardware Selection}



The rapid evolution and diversity of tasks in robotics means we need to have a meticulous and context-specific approach to computing hardware selection and optimization. A ``one-size-fits-all'' hardware strategy would be an easy default selection, but it fails to capitalize on the nuanced differences in workload demands across diverse facets like perception, localization, control, and manipulation, each exhibiting distinctive sensitivities to hardware capabilities. Therefore, a rigorous analysis, guided by tools like RobotPerf, becomes essential to pinpoint the most effective hardware configurations that align well with individual workload requirements. 

The results in Figure \ref{figure:robotperf} demonstrate the fallacy of a ``one-size-fits-all'' solution. 
For example, focusing in on the latency radar plot for control from Figure \ref{figure:robotperf} (col 3, row 1), we see that the i7-12700H (I7H) outperforms the NVIDIA AGX Orin Dev. Kit (NO) on benchmarks \texttt{C1}, \texttt{C3}, \texttt{C4}, and \texttt{C5}, but is $6.5\times$ slower on benchmark \texttt{C2}.
%
As such, by analyzing data from the RobotPerf benchmarks, roboticists can better determine which hardware option best suits their needs given their specific workloads and performance requirements.

One general lesson learned while evaluating the data is that each workload is unique, making it hard to generalize across both benchmarks and categories. To that end, RobotPerf results help us understand how the use of various hardware solutions and dedicated domain-specific hardware accelerators significantly improves the performance.

\begin{figure}[!t]
\centering
\includegraphics[width=.9\columnwidth]{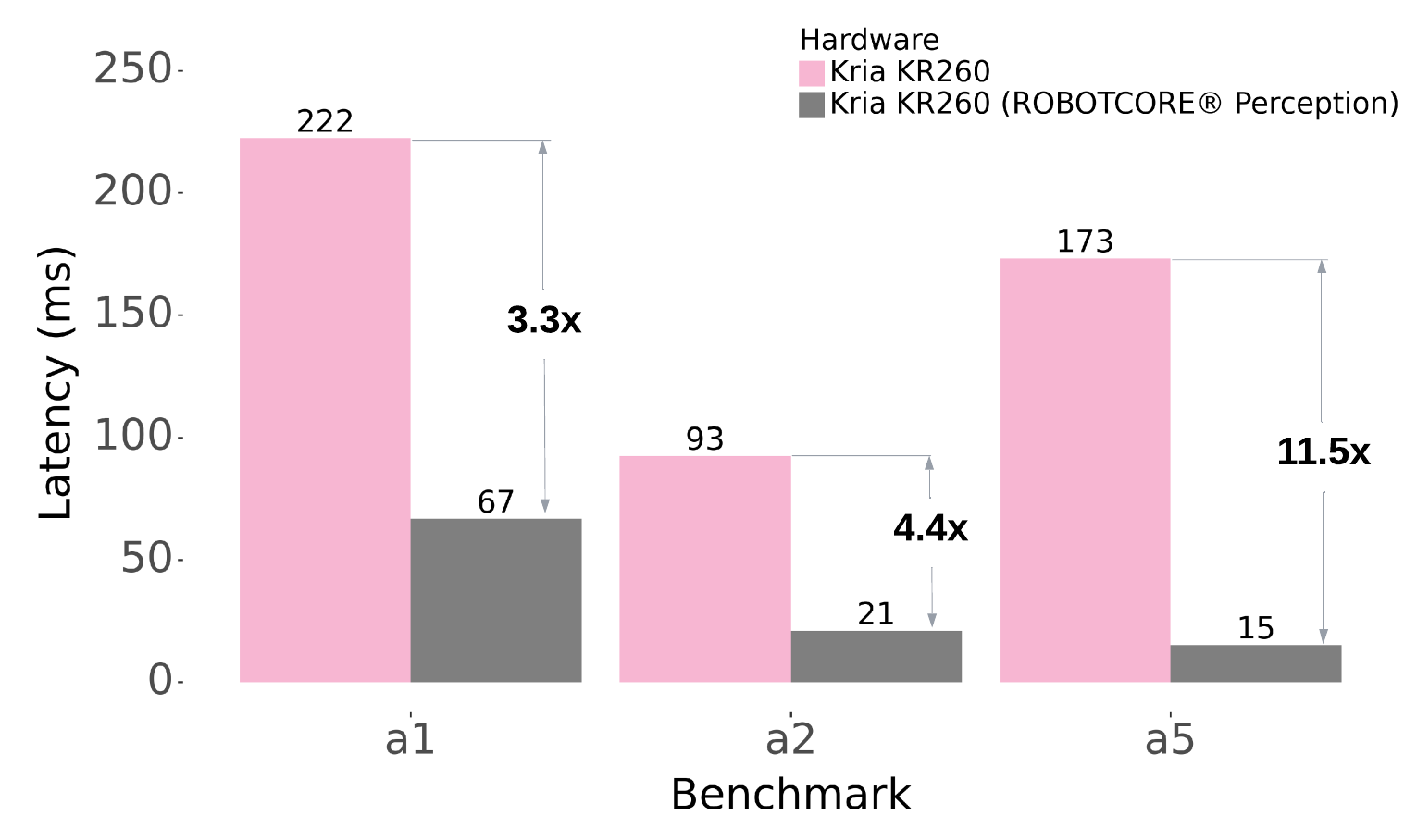}
\vspace{-5pt}
\caption{Benchmark comparison of perception latency (ms) on AMD's Kria KR260 with and without the ROBOTCORE Perception accelerator.  Reproduced from \cite{mayoral2023robotperfIROS}.}
\vspace{-1em}
\label{fig:robotperf_perception_comparison_KR260}
\end{figure}

\subsection{Rigorous Assessment of Acceleration Benefits}

In the rapidly advancing field of computing hardware, the optimization of algorithm implementations 
is a crucial factor in determining the success and efficiency of robotic applications. The need for an analytical tool, like RobotPerf, that facilitates the comparison of various algorithmic implementations on uniform hardware setups becomes important. 

Figure \ref{fig:robotperf_perception_comparison_KR260} is a simplified version of Figure \ref{figure:robotperf}, depicting AMD's Kria KR260 hardware solution in two forms: the usual hardware and a variant that leverages a domain-specific hardware accelerator (ROBOTCORE Perception, a soft-core running in the FPGA for accelerating perception robotic computations). 
The figure demonstrates that hardware acceleration can enable performance gains of as much as 11.5$\times$ (from 173~ms down to 15~ms for benchmark \texttt{a5}). 
We stress that the results obtained here should be interpreted according to each end application and do not represent a generic recommendation on which hardware should be used. Other factors, including availability, the form factor, and community support, are relevant aspects to consider when selecting a hardware solution.




\subsection{Benchmarking the security impact in ROS 2 communications}

%
%
\begin{figure*}[h!]
    \centering 
    \resizebox{0.98\textwidth}{!}{
        \begin{tikzpicture}
          \begin{axis}[
            xbar,
            xmin=0,
            y axis line style = { opacity = 0 },
            axis x line       = none,
            tickwidth         = 0pt,
            bar width         = 3em,
            enlarge y limits  = 0.2,
            enlarge x limits  = 0.02,
            symbolic y coords = {ROS 2 Intra-network,ROS 2 Intra-network + DDS Security,ROS 2 Intra-network + VPN, ROS 2 Intra-network + DDS Security + VPN},
            nodes near coords={\pgfmathprintnumber\pgfplotspointmeta~us},
            ytick             = data,                
          ]
          \addplot[black, fill=lightgray] coordinates { 
            (312,ROS 2 Intra-network) 
            (485,ROS 2 Intra-network + DDS Security) 
            (955,ROS 2 Intra-network + VPN) 
            (1310,ROS 2 Intra-network + DDS Security + VPN) 
         };
          \end{axis}
        \end{tikzpicture}
    }
    \caption{Mean latency of ROS 2 intra-network communications with various different security measures after 1000000 samples}
    \label{figure:security_overhead}
\end{figure*}
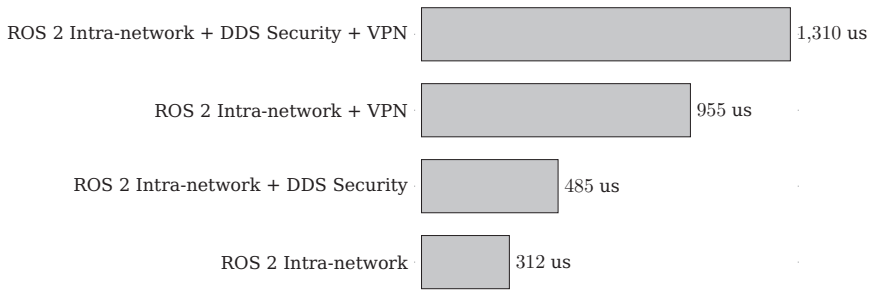

Building upon the previous RobotPerf work and results, in this subsection, we delve into a series of benchmarks designed to evaluate the latency and performance impacts of implementing security protocols within ROS 2 environments. ROS 2, as a critical framework in the robotics domain, must balance the dual demands of robust security and efficient performance. Our benchmarks focus on various configurations, encompassing communications with no security (\emph{\small ROS 2 Intra-network}), internal security measures within the communication middleware, specifically the DDS (Data Distribution Service) Security extensions (\emph{\small ROS 2 Intra-network + DDS Security}), as well as the effects of tunneling communications through VPNs (\emph{\small ROS 2 Intra-network + VPN}). Additionally, we explore the compounded impact when these two security layers are combined (\emph{\small ROS 2 Intra-network + DDS Security + VPN}). Results are depicted in Figure \ref{figure:security_overhead}.

The DDS Security extensions \cite{ddssecurity11} are integral to securing ROS 2 communications, providing mechanisms for authentication, encryption, and access control. However, the computational overhead associated with these security features can potentially affect the system's responsiveness and throughput. Our communication benchmarks displayed in Figure \ref{figure:security_overhead} illustrate the DDS Security extensions overhead (\emph{\small ROS 2 Intra-network + DDS Security}). When implemented, they lead to a mean latency which is \textbf{1.55$\times$} higher than the no security baseline (\emph{\small ROS 2 Intra-network}), offering insights into the trade-offs between security robustness and communication efficiency in ROS 2 environments.

Furthermore, the use of VPNs for remote operation and access of robotic systems introduces another layer of security, albeit with its own performance implications. VPNs can safeguard data transmission across various networks, crucial for operations like teleoperated surgery or remote industrial automation. Yet, the encryption and routing processes inherent in VPNs can introduce additional latency. Our benchmarking efforts demonstrate that tunneling ROS 2 over a modern VPN (\emph{\small ROS 2 Intra-network + VPN}) lead to a mean latency which is \textbf{3.06$\times$} higher than the no security baseline (\emph{\small ROS 2 Intra-network}), providing a comprehensive understanding of the performance costs associated with secure, remote robotic operations over VPN.

Lastly, we investigate scenarios where DDS Security extensions and VPNs are used concurrently (\emph{\small ROS 2 Intra-network + DDS Security + VPN}). This dual-layered security approach represents a robust defense mechanism, aligning with the zero-trust model. However, it also poses the greatest challenge in terms of computational overhead and potential latency increases. Our benchmarks hint towards a mean latency increase \textbf{4.19$\times$} higher than the no security baseline (\emph{\small ROS 2 Intra-network}). This result elucidates the cumulative impact of security measures, offering a holistic view of their implications on ROS 2's latency performance. This analysis will aid stakeholders in making informed decisions about security implementations, balancing the crucial aspects of security and efficiency in robotic systems.


\section{Conclusion}

RobotPerf represents an important step towards standardized benchmarking in robotics, including its security capabilities. With its comprehensive evaluation across the hardware/software stack and focus on industry-grade ROS 2 deployments, RobotPerf can pave the way for rigorous co-design of robotic hardware and algorithms. As RobotPerf matures with community involvement, we expect it to compare CPU, GPU and FPGA, exploring their power consumption and flexibility in augmenting real-world robotic computations. With a standardized robotics benchmark as a focal point, the field can make rapid progress in delivering real-time capable systems that will unlock the true potential of robotics in real-world applications. 


Through RobotPerf, we observe quantitatively significant overheads due to security implementations. Specifically, the DDS Security extensions increase latency by approximately 1.55$\times$ compared to a non-secure baseline. When implementing VPNs, this latency escalates to about 3.06$\times$ the baseline. Moreover, combining DDS Security with VPNs results in a latency increase of around 4.19$\times$. These insights clearly indicate that VPNs constitute a major bottleneck in secure robotic communications. This finding highlights the urgency of developing solutions that can accelerate these security computations, thereby meeting the real-time demands of advanced robotic systems. Addressing this limitation is crucial, especially for applications requiring real-time responses. The next chapter aims to explore strategies to mitigate these overheads, focusing on the hardware acceleration enablement of ROS computations and communications to align with the real-time requirements of robotic systems.
    \chapter{Hardware Accelerators in Robotics}
\label{chapter:accelerators}

\epigraph{\textbf{Contributions}: \emph{Leading author at \cite{mayoral2022robotcore} and sole code contributor. Self-funded work. Minor help in the article visualizations and final edition from co-authors. This work is partially open source.
}}{}

In this chapter we embark on a critical exploration of how hardware accelerators and their corresponding acceleration kernels can address the computational and communication bottlenecks identified in robotic systems, specifically those related to security, as revealed in the previous chapter. The focus is on how integrating hardware accelerators—such as GPUs and FPGAs—into robotic architectures can drastically reduce latency and improve efficiency. This chapter delves into the potential of hardware accelerators to revolutionize robotics by enabling real-time processing capabilities, a vital aspect for many robotics applications. We discuss how these accelerators can be optimized for robotic tasks, including but not limited to, security protocol computations, thereby aligning with the real-time performance requirements crucial in robotics. The chapter particularly focuses on highlighting the innovative approaches and challenges involved in integrating these accelerators into the existing ROS 2 framework, shaping the future of robotics towards more efficient and secure systems.

\section{Introduction}

Recent work has seen an explosion of specialized robotics acceleration on nontraditional computing platforms such as GPUs, FPGAs, and ASICs~\cite{wan2021survey,mayoral2021adaptivecomputing,mayoral2021kria,murray2016robot,murray2016microarchitecture,murray2019programmable,plancher2021accelerating,neuman2021robomorphic,plancher2021grid,austin2020titan,freeman2021brax,suleiman2019navion,liu2020hardware,asgari2020pisces,liu2021archytas,mayoral2017shift,williams2017model,sacks2018robox,PlancherParallelDDP,PlancherRealtimeMPC,gupta2021efficient}.
This has been sparked by the decline of Moore's Law and Dennard Scaling, which limits the performance of traditional CPU computing, positioning hardware acceleration as an emerging solution to achieve high performance and power efficiency in robotics applications.

However, this increased diversity of computing platforms leads to a dramatic growth in design space complexity that makes it difficult for users to easily deploy robotics applications on hardware accelerators without substantial expertise in each specific accelerator platform.
The Open Computing Language (OpenCL)~\cite{munshi2009opencl} is an effort to standardize hardware acceleration under a common language, but its adoption across silicon vendors has been uneven and support for it varies.
As a result, current hardware acceleration usage is often tied to a particular vendor's solutions and platforms.
This not only impedes interoperability and reuse of acceleration kernels, but presents yet another layer of complexity that users must overcome while implementing robotic systems that use acceleration kernels.
A key obstacle is that each hardware acceleration vendor provides their own framework for development, but these are often disconnected from the common tools and libraries in robotics, and mostly aimed at hardware engineers, not roboticists.

\begin{figure}[!t]
    \centering
    \includegraphics[width=0.85\columnwidth]{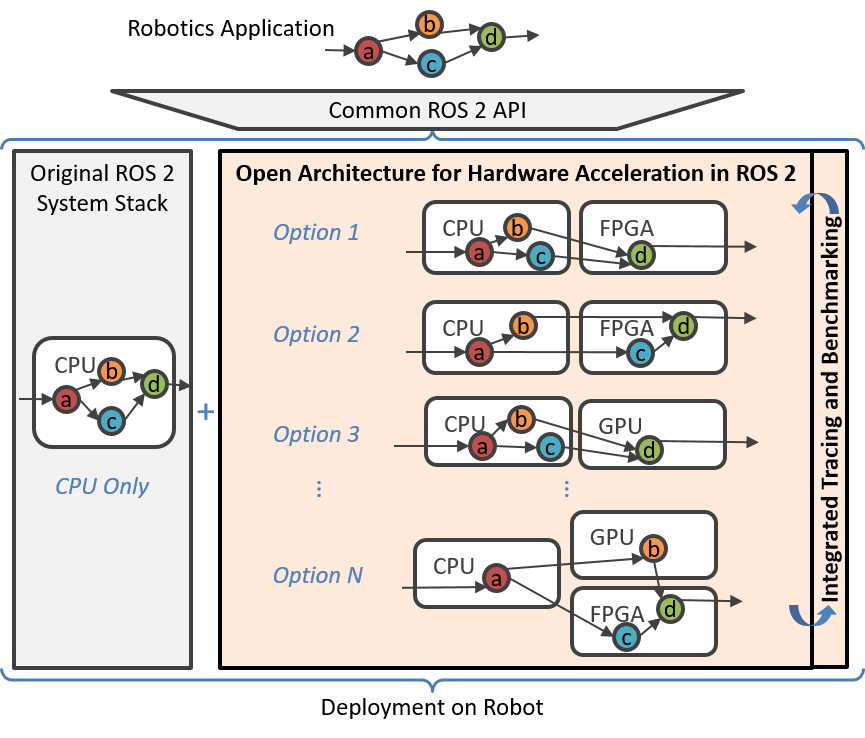}
    \caption{The open architecture for hardware acceleration in ROS~2, reproduced from \cite{mayoral2022robotcore}.
    }
    \label{figure:robotcore_overview}
\end{figure}

To address this challenge,
we present \emph{RobotCore}, an \emph{open architecture for hardware acceleration} that extends the Robot Operating System (ROS)~\cite{quigley2009ros}.
ROS is widely used by academia and industry, and early work has demonstrated its potential for hardware-accelerated robotics applications~\cite{mayoral2021adaptivecomputing,mayoral2021kria,nvidia2022isaacros}.
We facilitate this emerging direction by implementing a vendor and platform-agnostic abstraction layer for hardware acceleration in robotics. Figure~\ref{figure:robotcore_overview} depicts the open architecture for hardware acceleration in ROS~2, which extends the ROS~2 build system to support vendor and platform-agnostic deployment of robotics applications on accelerator hardware. The integrated tracing and benchmarking infrastructure enables users to analyze the system and make strategic design improvements to optimize performance.
Starting with a popular robotics API as the foundation, our ROS~2-based acceleration architecture provides a common ground for both academic researchers and silicon vendors alike to develop specialized robotics acceleration kernels, and deploy them for easy usage by a large, established user base.

Once roboticists can easily harness hardware acceleration across multiple platforms, the next major challenge is \textit{profiling and benchmarking the application}. Benchmarking is needed to determine the best mapping of the robotics computational graph to the different hardware resources available to optimize overall robot system performance.
This is a difficult task, however, since every application is different and deployment scenarios are widespread. Full end-to-end system analysis is required to understand how different implementation tradeoffs impact overall performance. To enable this analysis, we demonstrate how to leverage prior work~\cite{Bedard_2022} to benchmark accelerated ROS~2 kernels with a low-overhead framework for real-time tracing based on the Linux Tracing Toolkit next generation (LTTng)~\cite{desnoyers2006lttng}. We demonstrate analysis of a case study deployment using CPU and FPGA nodes for a simple perception pipeline. 

Using our framework and benchmarking, we diagnose that substantial latency bottlenecks in this computational graph come from inter-node interactions across ROS~2 layers in the CPU. We recognize this as an opportunity for design optimization in hardware accelerators, because interaction with the CPU should not be necessary for dataflow between nodes co-located on the same non-CPU platform (e.g., FPGA). 

Based on the benchmarking analysis, we demonstrate two novel separate paths toward \emph{hardware acceleration}: (1) kernel fusion, and (2) improved message passing. Kernel fusion results in the highest speedup, an average of $26.96\%$, but it requires manual redesign of the underlying kernels. To avoid manual redesign entirely and improve design re-use and portability, we alternatively develop an intra-FPGA ROS~2 node communication queue template that leverages AXI4-Stream interfaces~\cite{amba4axi4} and transfers data in a sequential streaming manner directly between acceleration kernels. Using this design pattern improves the overall inter-node performance in our computational graph by $24.42~\%$ on average, while requiring no change in the accelerated kernels.
This template extends to applications beyond our case study, since it can be reused for any ROS~2 inter- or intra-process communication by adapting its data types.


In summary, key contributions of this chapter are that we:
\begin{itemize}
    \item Create a new open infrastructure to increase the performance of robotics applications by enabling \emph{integration of hardware acceleration} into ROS~2 that is \emph{flexible across accelerator platforms} (e.g., FPGAs, GPUs) and system deployments (e.g., edge devices, workstations, data centers, and cloud);
    \item Expose insights into how to optimize overall system-wide performance by extending and providing a template API for \emph{low-overhead tracing and benchmarking framework} to analyze application performance across hardware accelerated ROS~2 computational graphs, laying foundation to analyze mixed-platform systems (e.g., combinations of CPU and FPGA-based nodes); and
    \item Increase ROS~2 node-to-node
    dataflow performance to achieve an average overall accelerator speedup of $24.42\%$ over CPU in our experiments by designing a template for \emph{intra-FPGA ROS~2 node communication queues}, based on insights uncovered using our open acceleration infrastructure and low-overhead benchmarking on a case study of a simple perception graph.
\end{itemize}

The core components of our architecture are disclosed under a commercially friendly open-source license and are available and maintained at the ROS~2 Hardware Acceleration Working Group GitHub organization: \url{https://github.com/ros-acceleration}.

\section{Background}

\subsection{ROS and ROS~2}

The Robot Operating System (ROS) is an open-source collection of software frameworks and tools designed to provide a \emph{structured communications layer} for robotics applications running on heterogenous computer hardware~\cite{quigley2009ros}. 

ROS applications are designed around event driven graphs of \emph{Nodes} which communicate through \emph{Messages} on various \emph{Topics}, \emph{Services}, and \emph{Actions}. Each Node can be thought of as a software process which applies an algorithm to the input message and then broadcasts the resulting output message. By managing all inter-Node communications across abstraction layers (e.g., \texttt{rclcpp}, \texttt{rcl}, \texttt{rmw}), ROS simplifies the robotic system deployment process and enables roboticists to quickly develop and test new algorithms.
ROS also provides substantial infrastructure to facilitate the automatic building, evaluation, and deployment of robotic systems, including dependency managers, package managers, build systems and tools, simulators, and visualizers.

ROS~2 is a re-design of ROS that modernizes and updates all of its components while adhering to its core design principles. ROS~2 provides a stronger partitioning of the communication middleware from the robotics logic, enabling more flexibility, scalability, and reliability~\cite{mayoral2021adaptivecomputing}.
ROS~2 also provides an updated build system, \texttt{ament}, and a new universal build tool, \texttt{colcon}. This provides a single simple interface for managing the building and deployment of complete robotics applications. Leveraging these tools, roboticists can write new algorithms and rely on ROS~2 to handle all lower level operations and middleware management.

\subsection{Hardware Acceleration for ROS and ROS~2}

There has been previous work that has focused on ways to accelerate robotics applications by developing tools and methodologies to help roboticists leverage hardware acceleration for select ROS Nodes and to optimize the ROS computational graph through adaptive computing~\cite{yamashina2015proposal,yamashina2016crecomp,podlubne2019fpga,eisoldt2021reconfros,lienen2020reconros,9415584,ohkawa2016architecture,panadda2021low,8956928,9397897,ohkawa2018fpga,9355892,amano2021dataset,nitta2018study,chen2021fogros,nvidia2022isaacros}. There has also been some work to accelerate the scheduling and communication layers used by ROS and ROS~2~\cite{sugata2017acceleration,ohkawa2019high,choi2021picas,suzuki2018real,gutierrez2018time,gutierrez2018real,gutierrez2018towards,gutierrez2018tsn}. Unfortunately, the majority of these efforts assume an end-user has substantial experience with embedded systems and embedded hardware flows, or is customized to a specific hardware acceleration board or deployment scenario.

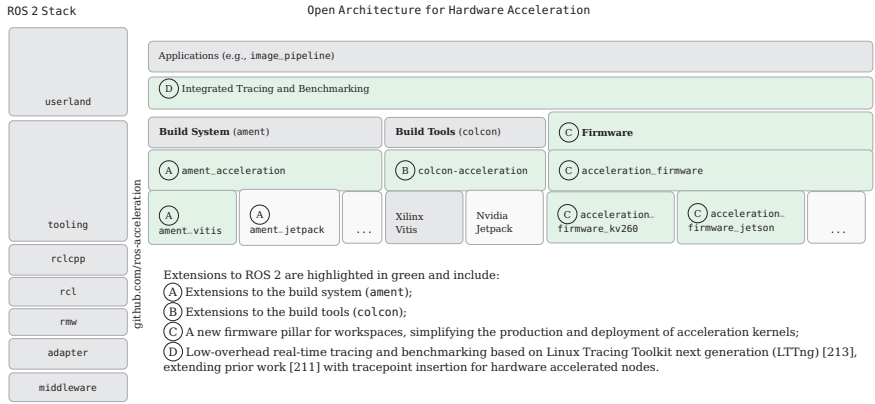
\begin{figure}[!t]
    \centering
    \scalebox{0.4}{
    \tikzset{
      state/.style={
        rectangle, rounded corners, draw=gray!80, fill=gray!20, thick,
        minimum height=2em, inner sep=10pt, text centered},
      state2/.style={
        rectangle, rounded corners, draw=black, fill=gray!70, thick,
        minimum height=2em, inner sep=10pt, text centered},
      epath/.style={draw, <->, shorten <=1pt, shorten >=1pt},
      cir/.style={draw, circle, fill, inner sep=2.5pt, text=white}
    }
    \begin{tikzpicture}[->, >=latex, line width=0.75pt]
      \draw[fill=white, color=white] (-10.8,-4.5) rectangle (9.2,4.5);
      
      \node[text width=4cm] (mug) at (-12.9,9.) {\Large \tt ROS 2 Stack};
      \node[text width=15cm] (mug) at (2.5,9.) {\Large \tt Open Architecture for Hardware Acceleration};
      
      
      \node[rotate=90] (site) at (-10.6,1) {
        \large github.com/ros-acceleration 
      };
      

    \node[state, text width=7cm, text ragged]
        (buildsystem) at (-6.4, 5.) {
            {\textbf{Build System} (\texttt{ament})}\\
    };
    \node[state, text width=7cm, text ragged, fill=green!10]
        (amentacceleration) at (-6.4, 3.75) {
            {\circledwhite{A} \tt ament\_acceleration}\\
    };

    \node[state, text width=2.2cm, text height=0.6cm, text ragged, fill=green!10]
        (amentvitis) at (-8.8, 2.2) {
            {\circledwhite{A} \tt ament\_vitis}\\
    };

    \node[state, text width=2.6cm, text height=0.6cm, fill=gray!5, text ragged]
        (amentrocm) at (-5.6, 2.2) {
            {\circledwhite{A} \tt ament\_jetpack}\\
    };

    \node[state, text width=0.6cm, text height=1cm, fill=gray!5]
        (amentmore) at (-3.2, 2.2) {
            {\tt ...}\\
    };
    
    \node[state, text width=4.6cm, text ragged]
        (colcon) at (0.2, 5.) {
        {\textbf{Build Tools} (\texttt{colcon})}\\
    };

    \node[state, text width=4.6cm, text ragged, fill=green!10]
        (colconaccel) at (0.2, 3.75) {
        {\circledwhite{B} \tt colcon-acceleration}\\
    };
    
    \node[state, text width=1.85cm, text height=0.6cm, text ragged] (vitis) at (-1.15, 2.2) {
        Xilinx \\Vitis
    };

    \node[state, text width=1.85cm, text height=0.6cm, text ragged, fill=gray!5] (cuda) at (1.5, 2.2) {
        Nvidia\\ Jetpack
    };

    \node[state, text width=10cm, text ragged, fill=green!10]
    (firmware) at (8.3, 5.) {
        {\circledwhite{C} \textbf{Firmware}}\\
    };
    \node[state, text width=10cm, text ragged, fill=green!10]
    (accelerationfirmware) at (8.3, 3.75) {
        {\circledwhite{C} \tt acceleration\_firmware}\\
    };
    \node[state, text width=3.5cm, text height=0.5cm, text ragged, fill=green!10]
        (accelerationfirmwarekv260) at (5.0, 2.2) {
            {\circledwhite{C} \tt acceleration\_\\firmware\_kv260}
    };
    \node[state, text width=3.5cm, text height=0.5cm, text ragged, fill=green!10]
        (accelerationfirmwarejetson) at (9.3, 2.2) {
            {\circledwhite{C} \tt acceleration\_\\firmware\_jetson}
    };
    \node[state, text width=1.2cm, text height=1cm, fill=gray!5]
        (accelerationfirmwaremore) at (12.45, 2.2) {
            {\tt ...}
    };

    \node[state, text height=0.15cm, text width=23.2cm, text ragged, fill=green!10]
        (tracing) at (1.7, 6.3) {
            {\circledwhite{D} Integrated Tracing and Benchmarking}\\
    };
    \node[state, text width=23.2cm, text ragged]
        (accelerationexamples) at (1.7, 7.5) {
            {Applications (e.g., {\tt image\_pipeline})}\\
    };

    \node[state, text width=3.2cm, text height=2.2cm]
      (userland) at (-12.9, 7.) {
        {\tt userland}\\
    };
    \node[state, text width=3.2cm, text height=3.2cm]
      (tooling) at (-12.9, 3.4) {
        {\tt tooling}\\
    };

    \node[state, text width=3.2cm]
      (rcl) at (-12.9, 0.8) {
        {\tt rclcpp}\\
    };

    \node[state, text width=3.2cm]
      (rcl) at (-12.9, -0.25) {
        {\tt rcl}\\
    };
    \node[state, text width=3.2cm]
      (rmw) at (-12.9,-1.25) {
        {\tt rmw}\\
    };
    \node[state, text width=3.2cm]
      (adapter) at (-12.9,-2.3) {
        {\tt adapter}\\
    };
    \node[state, text width=3.2cm]
      (middleware) at (-12.9,-3.4) {
        {\tt middleware}\\
    };

    \node[text width=23.5cm] (pilar1) at (2,-1.25) {\Large Extensions to ROS 2 are highlighted in green and include:\\\circledwhite{A} Extensions to the build system (\texttt{ament}); \\\circledwhite{B} Extensions to the build tools (\texttt{colcon}); \\\circledwhite{C} A new firmware pillar for workspaces, simplifying the production and deployment of acceleration kernels; \\\circledwhite{D} Low-overhead real-time tracing and benchmarking based on Linux Tracing Toolkit next generation (LTTng)~\cite{desnoyers2006lttng}, extending prior work~\cite{bedard2022ros2_tracing} with tracepoint insertion for hardware accelerated nodes.};

    \end{tikzpicture}}
    \caption{Overview of the components of the open architecture for hardware acceleration in ROS~2. Reproduced from \cite{mayoral2022robotcore}.}
    \label{figure:robotcore_architecture}
\end{figure}

Our proposed open architecture takes a ROS-centric approach to integrate the hardware and embedded flows directly into the core ROS~2 ecosystem. This enables a separation between those who produce accelerated kernels and those who use them by providing end-users with a build and deployment experience for hardware accelerators similar to the standard, non-accelerated ROS~2 experience.

\section{An Open Architecture for Hardware Acceleration in ROS~2}
\showkeysfalse  
\label{sec:architecture}
\showkeystrue  

Our open architecture (Figure~\ref{figure:robotcore_architecture}) extends the core ROS~2 build system and tools to provide platform-agnostic (i.e., supports  edge, workstation, data center, or cloud targets) and technology-agnostic (i.e., supports FPGAs and GPUs), hardware-accelerated ROS~2 capabilities for roboticists. We: A)~extend the ROS~2 build system, \texttt{ament}; B)~extend the ROS~2 meta build tool, \texttt{colcon}; and C)~develop integrated ROS~2 firmware extensions. We also D)~integrate a low-overhead tracing and benchmarking framework to enable the analysis of holistic application performance across ROS graphs. This section describes these extensions in detail.


\subsection{Extending the ROS~2 Build System}

\showkeysfalse  
\label{sec:architecture:step1}
\showkeystrue  

The first pillar of our open architecture, Figure~\ref{figure:robotcore_architecture}~\circledwhite{A}, allows roboticists to generate acceleration kernels directly from the ROS~2 build system (\texttt{ament}) in the same way they generate CPU binaries. To do so, the \texttt{ament\_acceleration} ROS~2 package and its extensions abstract the ROS build system from vendor-specific accelerators (e.g. FPGAs or GPUs), including their frameworks and software platforms. This allows the build system to easily support hardware acceleration across commercial solutions while using the same syntax, simplifying the work of ROS~2 package maintainers. 

Under the hood, \texttt{ament\_acceleration} abstracts away the corresponding vendor-specific firmware. For example, \texttt{ament\_vitis}\footnote{{github.com/ros-acceleration/ament\_vitis}} relies on the proprietary Xilinx Vitis~\cite{xilinx2022vitis} and on the Xilinx Runtime (XRT) library~\cite{xilinx2022xrt}. 
This simplifies the creation of acceleration kernels and separates firmware concerns from algorithm development. This way, robotics engineers can focus on improving their computational graphs with a ROS-centric development flow. Separately, hardware experts, potentially sponsored by silicon vendors, can improve acceleration kernels for a particular commercial solution. Overall, these extensions help achieve the objective of simplifying the creation and integration of acceleration kernels from different vendors into ROS~2 computational graphs.


Figure~\ref{figure:robotcore_architecture} depicts the build system extensions showing how  \texttt{ament\_acceleration} abstracts the build system from vendor-specific solutions. As an example of an alternative acceleration technology supported, \texttt{ament\_jetpack} is included and illustrates the integration of Nvidia JetPack~\cite{nvidia2022jetpack}. 


\subsection{Extending the ROS~2 Build Tools}
\showkeysfalse  
\label{sec:architecture:step2}
\showkeystrue  

The second pillar of our open architecture, Figure~\ref{figure:robotcore_architecture}~\circledwhite{B}, extends the \texttt{colcon} ROS~2 meta build tool to integrate hardware acceleration flows into the ROS~2 Command Line Interface (CLI) commands. Examples of these extensions include the selection of the target accelerator and build-time through mixins, emulation capabilities to speed-up the development process and facilitate design without access to the real hardware, raw disk image production tools, and simplified configuration of hypervisors. These extensions are implemented by the \texttt{colcon-acceleration}\footnote{{github.com/ros-acceleration/colcon-acceleration}} ROS~2 package. As in Section~\ref{sec:architecture:step1}, \texttt{colcon\_acceleration} further enables roboticists to leverage hardware accelerators while using standard ROS~2 commands and flows.

\subsection{ Adding Firmware Extensions}
\showkeysfalse  
\label{sec:architecture:step3}
\showkeystrue  

Represented by the abstract \texttt{acceleration\_firmware} ROS package and its corresponding specializations (e.g. \texttt{acceleration\_firmware\_kv260}\footnote{{github.com/ros-acceleration/acceleration\_firmware\_kv260}} for the Xilinx Kria KV260 board), the third pillar of our open architecture, Figure~\ref{figure:robotcore_architecture}~\circledwhite{C}, firmware extensions, are meant to provide firmware artifacts for each supported technology solution. This again simplifies the process for ROS package consumers and maintainers, and further aligns hardware acceleration workflows with typical ROS development flows.
Each ROS~2 workspace can leverage multiple firmware packages, but can only use one at a time. As \texttt{colcon\_acceleration} supports the selection of the active firmware in the ROS workspace, by separating the firmware out into their own packages, our open architecture enables silicon vendors to maintain an \texttt{acceleration\_firmware\_<solution>} package that automatically integrates into standard ROS~2 workflows. 

\subsection{Low-Overhead Real-Time Tracing \& Benchmarking}

\showkeysfalse  
\label{sec:architecture:step4}
\showkeystrue  

In the context of hardware acceleration in robotics, it is fundamental to be able to inspect performance improvements. To that end, it is important to benchmark and trace the system. Benchmarking is the process of running a computer program to assess its relative performance, whereas tracing is a technique used to understand what is happening in a system while it is running. Tracing helps determine which pieces of a Node are consuming more compute cycles or generating indeterminism, and are thereby good candidates for hardware acceleration. Benchmarking instead helps investigate the relative performance of an acceleration kernel versus its CPU scalar computing baseline. Similarly, benchmarking also helps with comparing acceleration kernels across different hardware acceleration technology solutions (e.g., Kria KV260 vs. Jetson Nano) and across kernel implementations within the same hardware acceleration technology solution.

\begin{figure}[!t]
    \centering
    \includegraphics[width=1.0\columnwidth]{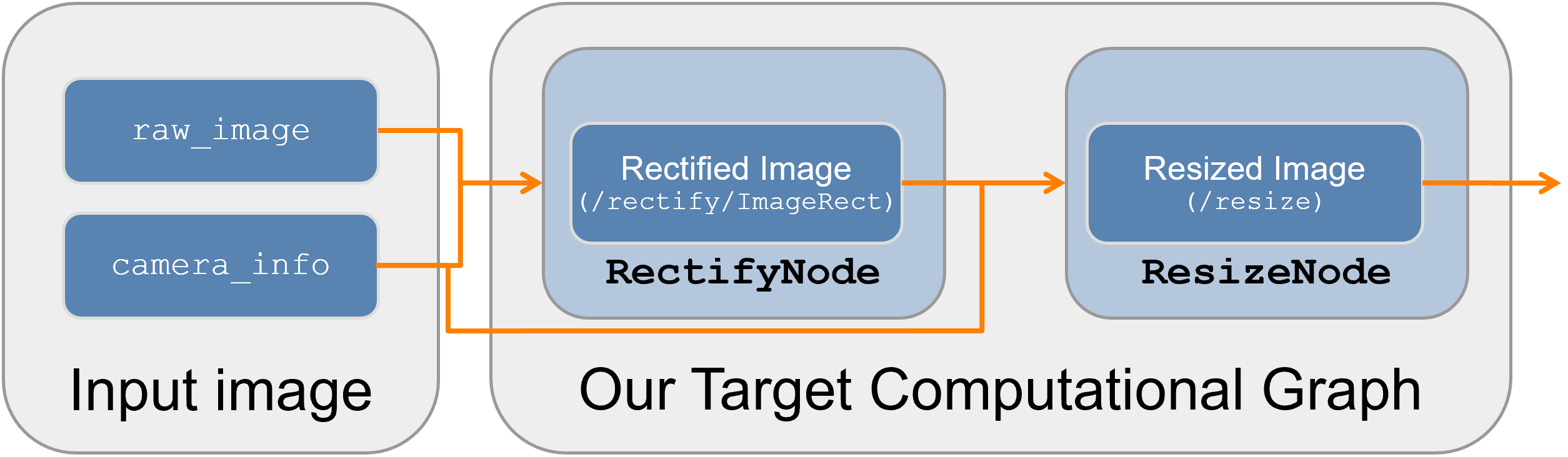}
    \caption{
        Computational graph of our case study perception application, reproduced from \cite{mayoral2022robotcore}.
    }
    \label{figure:perception_use_case}
\end{figure}

In order to trace and evaluate the relative performance of both ROS~2 individual Nodes and complete computational graphs, we leverage Linux Tracing Toolkit next generation (LTTng \cite{desnoyers2006lttng}) for tracing and benchmarking, Figure~\ref{figure:robotcore_architecture}~\circledwhite{D}. Building upon prior work \cite{bedard2022ros2_tracing}, LTTng provides a collection of flexible tracing tools and multipurpose instrumentation for ROS~2 that allow collecting runtime execution information in real-time in distributed systems using low-overhead tracers. 
For example, when enabling all ROS~2 instrumentation, end-to-end message latency overhead is below $5.5$us \cite{bedard2022ros2_tracing}, making it suitable for a wide variety of hardware acceleration use cases.
Building on top of this foundation, we developed a tracing and benchmarking template that enables roboticists to easily instrument both their accelerated and non-accelerated code in a vendor-agnostic manner. This infrastructure also lays a foundation for future integration with platform-specific performance counters and tracing tools that can extend analysis to more fine-grained introspection and profiling of the kernels running onboard an accelerator device.

%

\section{Case Study: Accelerating ROS~2 Perception}

For our case study, we trace, benchmark, and accelerate a subset of  \texttt{image\_pipeline}~\cite{mihelich2022image_pipeline},
one of the most popular packages in the ROS~2 ecosystem, and a core piece of the ROS perception stack. We compose a simple computational graph consisting of two nodes, \texttt{resize} and \texttt{rectify}, as shown in Figure~\ref{figure:perception_use_case}. The computational graph of our case study perception application in Figure~\ref{figure:perception_use_case} uses the ROS \texttt{image\_pipeline} package, containing two ROS~2 nodes: (1)~\texttt{RectifyNode} subscribes to the \texttt{/camera/image\_raw} and \texttt{/camera/camera\_info} topics from Gazebo \cite{koenig2004design} and publishes a \emph{rectified image} to (2)~\texttt{ResizeNode}, which publishes the final \emph{resized image}. We leverage our open architecture for hardware acceleration (Section~\ref{sec:architecture}) to benchmark, trace and accelerate our computational graph, comparing a CPU to an FPGA implementation. In this section we describe the methodology of our approach, and analyze our timing results, presenting a case study for how our open architecture can help enable hardware accelerated applications in ROS~2.





\begin{figure*}[!t]
    \centering
    \includegraphics[width=0.91\textwidth]{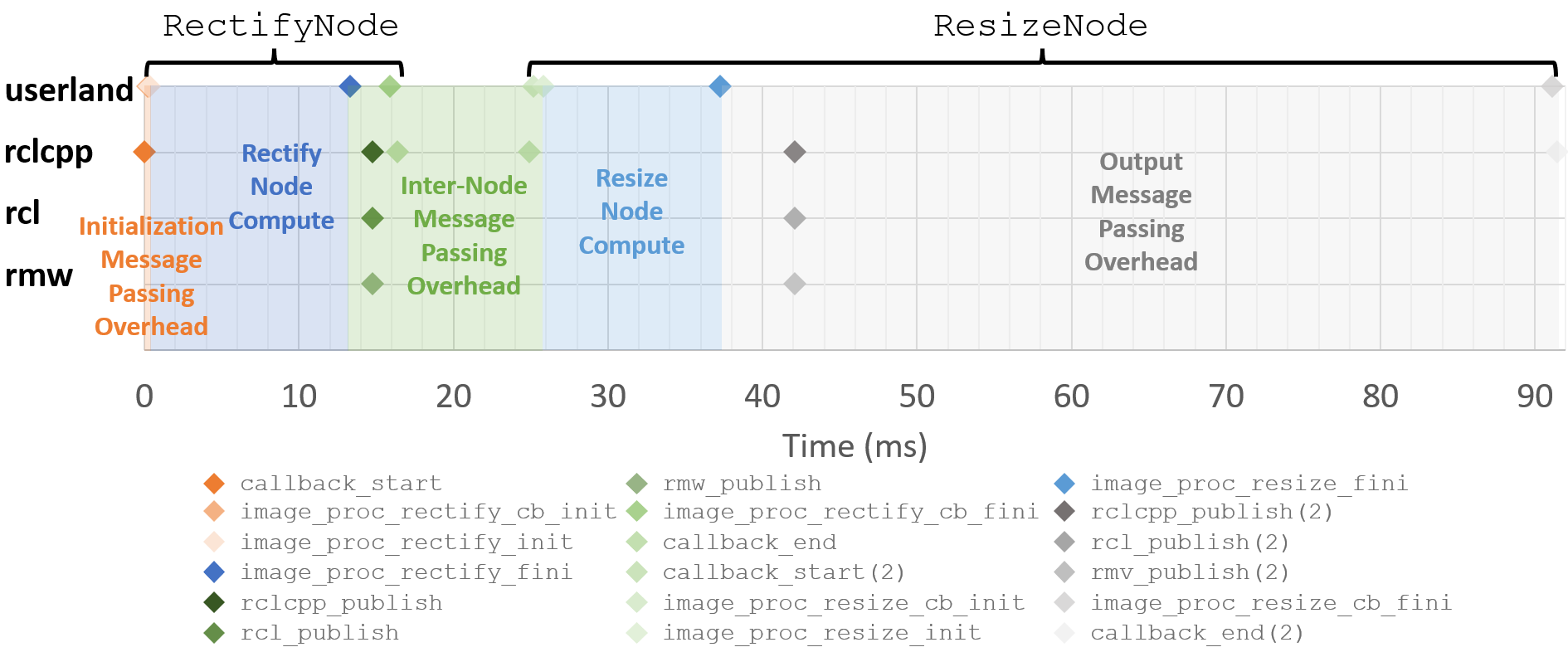}
    \caption{Tracepoints instrumented across ROS~2 abstraction layers on CPU for case study computational graph (Figure~\ref{figure:perception_use_case}). Breakdown summary in Figure~\ref{figure:benchmarking_breakdown_cpu}. Reproduced from \cite{mayoral2022robotcore}.}
    \label{figure:benchmarking_tracepoints}
    \vspace{-10pt}
\end{figure*}

\subsection{Method}
\showkeysfalse  
\label{sec:casestudy:methodology}
\showkeystrue  

We propose the following steps
to analyze a ROS~2 application and design appropriate acceleration:
(i)~instrument both the core components of ROS~2 and the target kernels; (ii)~trace and benchmark the kernels on the CPU to establish a baseline; (iii)~develop a hardware accelerated implementation on alternate hardware (e.g., GPU, FPGA); and (iv)~trace, benchmark against the CPU baseline, and improve the accelerated implementation.

Following this methodology, in our case study we begin by instrumenting both ROS~2 and our target kernels with LTTng probes. Reusing past work and probes~\cite{bedard2022ros2_tracing} allows us to easily get a grasp of the dataflow interactions within \texttt{rmw}, \texttt{rcl}, and \texttt{rclcpp} ROS~2 layers. We then also instrument the \texttt{ResizeNode} and \texttt{RectifyNode} components of the \texttt{image\_pipeline} package used in our case study.
The relevant tracepoints placed in our computational graph across ROS~2 stack layers are listed in Figure~\ref{figure:benchmarking_tracepoints}~and~\ref{figure:benchmarking_breakdown_cpu} (full list in Pull Request $717$ in the \texttt{image\_pipeline} repository~\cite{mihelich2022image_pipeline}).
On the CPU, these tracepoints enable us to isolate the latency of computation within a node from the time it takes ROS~2 to package and pass information between nodes.


In the following sections we report timing results from using a Xilinx Kria® KV260 Vision AI Starter Kit~\cite{xilinx2022kria}, which has an onboard integrated Quad-core Arm® Cortex®-A53 CPU and an FPGA containing 256K System Logic Cells and 1.2K DSP Slices. All benchmark results report the mean value obtained from a $60$ second continuous run of the computational graph. The FPGA kernels are synthesized, placed and routed with a $250$MHz clock.

\subsection{CPU-Only Tracing Results}
\showkeysfalse  
\label{sec:casestudy:cpuresults}
\showkeystrue  

Figure~\ref{figure:benchmarking_tracepoints} demonstrates the results of instrumenting and tracing our target computational graph (Figure~\ref{figure:perception_use_case}) across multiple ROS~2 stack layers on the CPU,
and Figure~\ref{figure:benchmarking_breakdown_cpu} summarizes the breakdown of timing results across operations, establishing the CPU baseline for our application.
The breakdown in Figure~\ref{figure:benchmarking_breakdown_cpu} shows the time taken to do the computations within each node, as well as the time taken by the ROS~2 lower-level message-passing system across the various abstraction layers. We find that the message-passing overhead in our application consumes more than $73.3\%$ of the total time and is therefore a large bottleneck in the total computation time of the full graph.
We next explore FPGA hardware acceleration options, comparing performance to the CPU baseline.


\begin{figure}[!t]
    \centering
    \includegraphics[trim=0 0 0 12, clip, width=1.0\columnwidth]{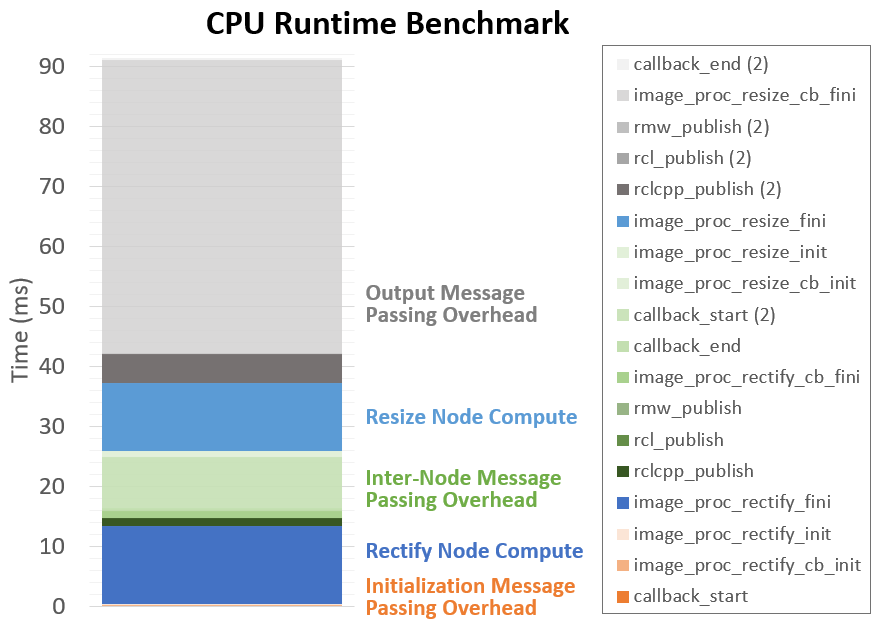}
    \vspace{-20pt}
    \caption{Breakdown of CPU runtime derived from tracing and benchmarking. Total computation time of our case study graph is dominated by message passing overheads, a bottleneck consuming over $73.3\%$ of total runtime. Reproduced from \cite{mayoral2022robotcore}.}
    \label{figure:benchmarking_breakdown_cpu}
\end{figure}




\subsection{Accelerating and Benchmarking CPU \& FPGA}
\showkeysfalse  
\label{sec:casestudy:acceleration}
\showkeystrue  

In this section, we explore hardware acceleration options for an FPGA for our case study application (Figure~\ref{figure:perception_use_case}).
In Section~\ref{sec:casestudy:acceleration:nodes}, we first explore hardware acceleration kernels for the core logic of each of the Nodes (\emph{rectify} and \emph{resize}), harnessing our open architecture for implementation.
In Section~\ref{sec:casestudy:acceleration:graph}, we then explore two different FPGA designs to accelerate the computational graph by optimizing dataflow interactions between FPGA-based nodes, addressing the ROS~2 \emph{communication infrastructure} performance bottleneck revealed by the CPU baseline in Section~\ref{sec:casestudy:cpuresults}.

\subsubsection{Accelerating Nodes \& Components on an FPGA}

\showkeysfalse  
\label{sec:casestudy:acceleration:nodes}
\showkeystrue  
We first accelerate the computations at each one of the graph nodes. The \texttt{RectifyNode} and \texttt{ResizeNode} \emph{Components} of Figure~\ref{figure:perception_use_case} are accelerated using Xilinx's HLS, XRT, and OpenCL targeting the Kria KV260\footnote{github.com/ros-acceleration/image\_pipeline/blob/ros2/image\_proc/src/ \{rectify,resize\}\_fpga.cpp}.
Each ROS~2 \emph{Component} has an associated acceleration kernel\footnote{{github.com/ros-acceleration/image\_pipeline/tree/ros2/image\_proc/src/ image\_proc}} that leverages the Vitis Vision Library, a computer vision library optimized for Xilinx silicon solutions and based on OpenCV APIs. These accelerated \emph{Components} and their kernels easily integrate with the rest of the ROS meta-package through our open architecture (Figure~\ref{figure:robotcore_architecture}), and are openly available to the public. Building the accelerators is abstracted away from roboticist end-users, and takes no significant additional effort than the standard build of the \texttt{image\_pipeline}.

After benchmarking the accelerated \emph{Components} using the trace points of Section~\ref{sec:casestudy:cpuresults}, we observe an average $6.22\%$ speedup in the total computation time of the perception pipeline when offloading tasks to the FPGA (see Figure~\ref{figure:benchmarking_runtime}).
For this case study example, it is not surprising that accelerating the computational nodes and components alone only gives a modest performance increase because, as we saw in Section~\ref{sec:casestudy:cpuresults}, the performance bottleneck in the baseline CPU system was communication overhead, not computation.

\subsubsection{Accelerating the \underline{Computational Graph} on an FPGA}
\showkeysfalse  
\label{sec:casestudy:acceleration:graph}
\showkeystrue  

In our case study application, message-passing overheads across the ROS~2 abstraction layers far outweigh other operations, so in this section we focus on optimizing these dataflows.
Addressing performance bottlenecks in our system leads to overall lower computational graph latency, and to faster robots. 
To seize this acceleration opportunity in our case study example, we optimize the dataflow within the computational graph and across ROS~2 Nodes and Components through two different design approaches: (a) kernel fusion, and (b) dedicated streaming queues. 

\begin{figure}[!t]
    \centering
    \includegraphics[width=0.93\columnwidth]{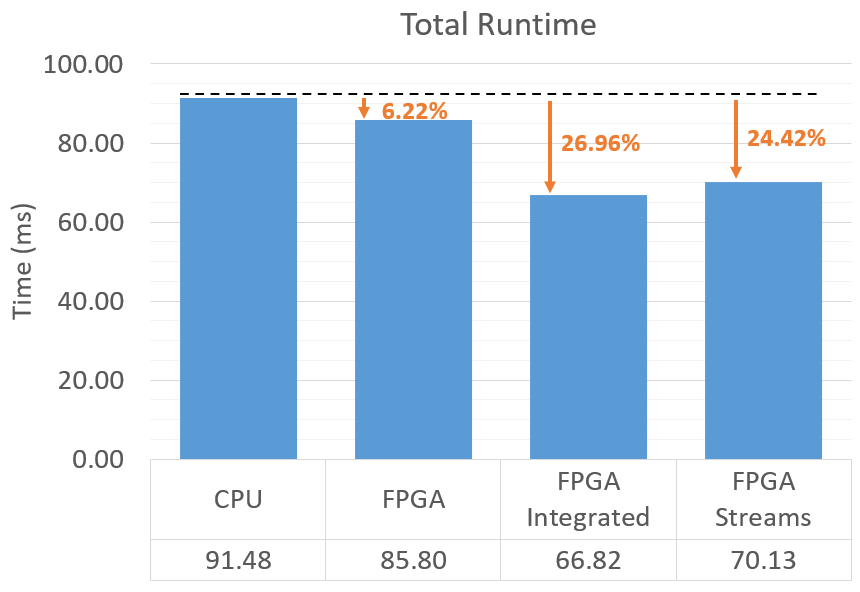}
    \caption{Total runtime of CPU baseline and FPGA, FPGA-Integrated, and FPGA-Streaming hardware-accelerated implementations of case study application.
    Acceleration enables up to $26.96\%$ speedup over CPU. Reproduced from \cite{mayoral2022robotcore}.}
    \label{figure:benchmarking_runtime}
\end{figure}

The speedup obtained by integrating both ROS \emph{Components} on the FPGA into a \emph{single unified kernel} is shown in Figure~\ref{figure:benchmarking_runtime}. The benefits of doing this are two-fold. First, we avoid any message-passing between the \emph{Rectify} and \emph{Resize} Nodes' \emph{Components}. Second, we avoid the compute cycles wasted while memory is mapped back and forth between the host CPU and the FPGA. This results in an overall latency speedup of $26.96\%$ over the CPU. In addition to speeding up the perception stage, another added benefit of this improvement is that such speedups make room for other robot tasks in a complete end-to-end system. Note, however, that this improvement required the construction of an entirely new ROS Node and unified acceleration kernel on the FPGA.

We then develop a template for an \emph{accelerated ROS~2 message passing} interface on the FPGA. This interface is \emph{Node} and \emph{Component}-agnostic and can be leveraged by roboticists to accelerate the communication channels of any computational graph on an FPGA. 
This is done by leveraging an AXI4-Stream interface to create an intra-FPGA ROS~2 communication queue template which is then used to pass data across Nodes in the FPGA without sending messages to the CPU\footnote{AXI4-Stream interfaces are data-type specific and as such our template may require type adaptations for other use cases depending on the Node-to-Node data interactions.}. This allows us to completely bypass the original CPU-centric ROS~2 message-passing system and optimizes dataflow, achieving an overall latency improvement of $24.42\%$ over the CPU in our application (see Figure~\ref{figure:benchmarking_runtime}).


Based on these results, for this case study, we show that implementing FPGA-accelerated versions of key ROS~2 \emph{Components} is easily feasible, and that addressing the right bottleneck is key to improving performance.
Tracing and benchmarking the CPU baseline suggested that communication is the bottleneck in our case study.
In fact, independent examination of, e.g., a single run of the fused-kernel accelerator using the Xilinx Vitis Analyzer, confirms that this is also the case on the FPGA---we note that integrating device-specific profiling tools into our foundational tracing infrastructure in future work can further automate this type of fine-grained introspection of kernels onboard accelerator devices.
We can achieve overall performance improvements by either combining Nodes or streamlining intra-FPGA communication.
While combining nodes may result in slightly higher performance, it is a much more labor-intensive design effort. By contrast, our accelerated intra-FPGA-Node communication queue template can be applied by any roboticist, to any computational graph.





\section{Conclusion}

In this chapter we presented a new open infrastructure to introduce hardware acceleration in ROS~2 in a scalable and technology-agnostic manner.
Our architecture allows us to increase the performance of robotics applications through the \emph{integration of hardware acceleration} with ROS~2 APIs and its conventional flows.
We do so by extending ROS~2 in a way that is \emph{portable across accelerator platforms} (e.g., FPGAs, GPUs) and system deployments (e.g., edge devices, workstations, data centers, and cloud).
We also present a \emph{template for low-overhead tracing and benchmarking} to analyze performance across both hardware accelerated and standard ROS~2 computational graphs.

We use our open architecture and our tracing and benchmarking infrastructure to demonstrate a principled design methodology for ROS~2 hardware acceleration, exposing insights into how to optimize overall system-wide performance by analyzing a CPU baseline, and comparing accelerator design iterations to that original baseline.
We examine a case study using the Xilinx Kria KV260 platform to demonstrate FPGA acceleration of one of the most popular packages in the ROS perception pipeline: \texttt{image\_pipeline}.
We first demonstrate a modest performance speedup of $6.22\%$ from offloading perception tasks to the FPGA,
and then increased speedup by additionally addressing the communication overheads that we identified as bottlenecks by analyzing our CPU baseline.
We achieved a speedup of $26.96\%$ from re-architecting the graph to combine nodes and avoid inter-FPGA-node communication delays inflicted by interactions with the CPU, but this approach requires substantial effort from users to re-architect their graphs.
Instead, to avoid this overhead and stay in alignment with the ROS~2 programming model, we then design a novel \emph{template for intra-FPGA ROS~2 Node communication queues} that allows ROS \emph{Nodes} and \emph{Components} to deliver faster dataflows, achieving a $24.42\%$ speedup over a CPU without excessive manual per-kernel design effort.

We contribute our open architecture to the ROS community, so that future work can use our infrastructure and extend to new applications beyond our case study example. Promising directions for future work include: benchmarking computational graphs with other hardware solutions (e.g., GPUs) to establish consistent cross-accelerator comparisons; extending our tracing and benchmarking approach to include additional tracing information (e.g., profiling within FPGA or GPU devices) for more fine-grained introspection of kernels running onboard accelerators; and applying our open architecture and analysis to other ROS~2 packages.

Our code is disclosed under a commercially friendly open-source license and is available and maintained at the ROS~2 Hardware Acceleration Working Group GitHub organization: \url{https://github.com/ros-acceleration}.
This work is being further integrated into the ROS ecosystem through a community standardization effort, REP-2008~\cite{hawg2022rep2008}.

\section{Future work}


The promising outcomes of this chapter lay the groundwork for the development of dedicated \emph{robot security accelerators}. Future initiatives will concentrate on commercializing these innovations. Preliminary findings suggest that our security accelerator prototypes could significantly enhance intra-network ROS 2 communications, potentially reducing average networking latency to single-digit microseconds, which is over 100$\times$ faster than current software implementations on CPUs. Particularly in worst-case scenarios, these accelerators maintain deterministic and isochronous performance, a stark contrast to the higher latencies observed with CPU-based software implementations. The potential for a maximum latency speedup exceeding 100,000$\times$ faster communications and security computations marks a significant leap forward in the field of robotics. Due to the commercial potential of these developments, a detailed discussion is outside the scope of this research, but these early results are indicative of the transformative impact these accelerators could have on the future of robotic communications.


    \part{Cyber-attacking robots}\label{part:attacking_robots}
    
    \chapter{Attacking Robot Software}
\label{chapter:attacking_software}

\epigraph{\textbf{Contributions}: \emph{One of the top two contributors of \cite{maggi2022security} and core contributor to software, methodology and research results. Overall research conducted over a 6-month period, with other leading security researchers from industry participating. Produced various open source security package dissectors (e.g. \href{https://github.com/secdev/scapy/pull/3462}{this one}, and \href{https://github.com/secdev/scapy/pull/3403}{this other one}). All findings have been publicly disclosed.}}{}

%

This chapter delves into the security analysis of robot software and particularly, the Data Distribution Service (DDS) \cite{dds14} standard, a middleware pivotal in operating a wide range of robotic systems and devices, including ROS 2 (DDS is the default communication middleware of ROS 2). The summarized investigation presented herein is the result of a multi-year research effort involving various researchers \cite{maggi2022security} and led to uncover multiple vulnerabilities in DDS, leading to the issuance of various new CVE IDs for common DDS implementations and even in the standard specifications. An extensive scanning exercise revealed numerous public-facing DDS services across various industries, some affected by these CVEs, highlighting significant security risks due to their unintended public exposure. We emphasize the need for continuous security testing of DDS and similar technologies, offering both immediate and long-term recommendations for secure integration and advocating for robust supply-chain management and ongoing security practices.

\section{Introduction}

Even within the industry, a big percentage of practitioners are unaware that the Data Distribution Service (DDS) drives systems such as industrial robots (manipulators), railways, autonomous cars, airports, surgical robots, spacecraft, diagnostic imaging machines, luggage handling, industrial robots, military tanks, and frigates, among others. It has been in use for about a decade, and its adoption continues to steadily increase.

We discovered and reported vulnerabilities in DDS that warranted new CVE IDs: Five with a score of greater or equal than 7.0, four with a score of > 8.5, one vulnerability in the standard specifications, and other deployment issues in the DDS software ecosystem (including a fully open production system).

Successful exploitation of these vulnerabilities can facilitate initial access (MITRE ATT\&CK \cite{strom2018mitre} Technique ID TA0108) via exploitation of remote services (T0866, T0886) or supply chain compromise (T0862), and allow the attacker to perform discovery (TA0102, T0856) by abusing the discovery protocol. The consequences of successful exploitation, in any of the critical sectors where DDS is used, range from denial of service (T0814) via brute forcing (T0806), to loss of control (T0827), or loss of safety (T0880). The DDS protocol itself can also be abused to create an efficient command and control channel (T0869). Based on our analysis, we recommend mitigations such as vulnerability scanning (ATT\&CK Mitigation ID M1016), network intrusion prevention (M1031), network segmentation (M1030), filter network traffic (M1037), execution prevention (M1038), and auditing (M1047).

\subsection*{Findings in Brief}


\begin{figure}[h!]
    \includegraphics[width=1\textwidth]{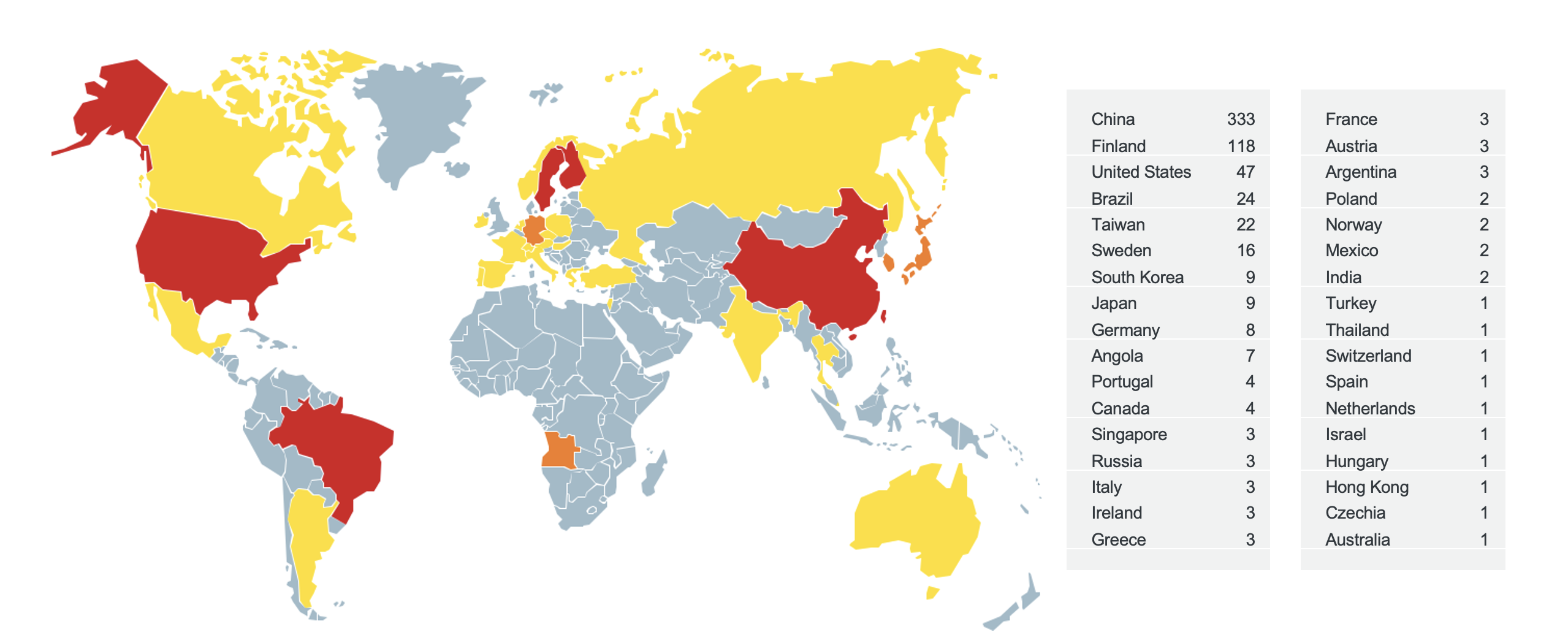}
    \centering
    \caption{We found exposed DDS systems in 34 countries \cite{maggi2022security}, including vulnerable ones, identified via distinct IPs leaking data. Color code hints about criticality by vulnerable targets: red for most, yellow/grey for least. Reproduced from \cite{maggi2022security}.}
    \label{figure:dds_findings_brief}
\end{figure}

Given this technology’s versatility, we analyzed and discovered multiple security vulnerabilities, resulting in 13 new CVE IDs for the six most common DDS implementations. This includes one vulnerability in the standard specifications and other deployment issues in the DDS software ecosystem (including a fully open production system). Some of these vulnerabilities have been patched or mitigated by the vendors since we reported them. Others remain exposed and can still be abused.

By measuring the exposure of DDS services, in one month we found over 600 distinct public-facing DDS services in 34 countries affecting 100 organizations via 89 internet service providers (ISPs). Of the DDS implementations by seven distinct vendors (one of which we were initially unaware of), 202 leaked private IP addresses (referring to internal network architecture details), and seven supposedly secret URLs. Some of these IP addresses expose unpatched or outdated DDS implementations, which are affected by some of the vulnerabilities that we’ve discovered and disclosed in November 2021.

During our research, we interviewed key DDS users and system integrators to collect their feedback on our findings and the importance of DDS for innovation in their respective sectors. In this chapter, we analyze and discuss the specifications of DDS and the six most actively developed implementations maintained by certified vendors and with millions of deployments worldwide. We also released an open-source software: a Scapy-based dissector and several fuzzing harnesses for three open DDS implementations.

\section{Background}

DDS is a standardized middleware software based on the publish-subscribe paradigm, helping the development of middleware layers for machine-to-machine communication. This software is integral especially to embedded systems or applications with real-time6 requirements. Maintained by the Object Management Group (OMG), DDS is used in all classes of critical applications to implement a reliable communication layer between sensors, controllers, and actuators.

\begin{mybox}
{\textbf{\textit{About real-time in DDS and robotics}}}
\emph{Real-time} which in robotics is interpreted as meeting time deadlines in their computations, is used in a generic and non-strict (hard real-time) manner in DDS specifications. We were unable to find timing guarantees (hard real-time, firm real-time, or soft real-time) provided in any of the reviewed documents. For the purposes of this research, we conclude that DDS targets remote soft real-time communications at best, leaving firm and hard real-time interactions to other technologies.
\end{mybox}

DDS is at the beginning of the software supply chain, making it easy to lose track of and is an attractive target for attackers. Between 2020 and 2021, 66\% of attacks focused on the suppliers’ codes \cite{ENISA2021}. While we were in the process of doing this research, we encountered an exposed source-code repository hosting a proprietary implementation of DDS. Left open, this would have let an attacker infect the source code (MITRE ATT\&CK T0873, T0839).

\begin{figure}[h!]
    \includegraphics[width=1\textwidth]{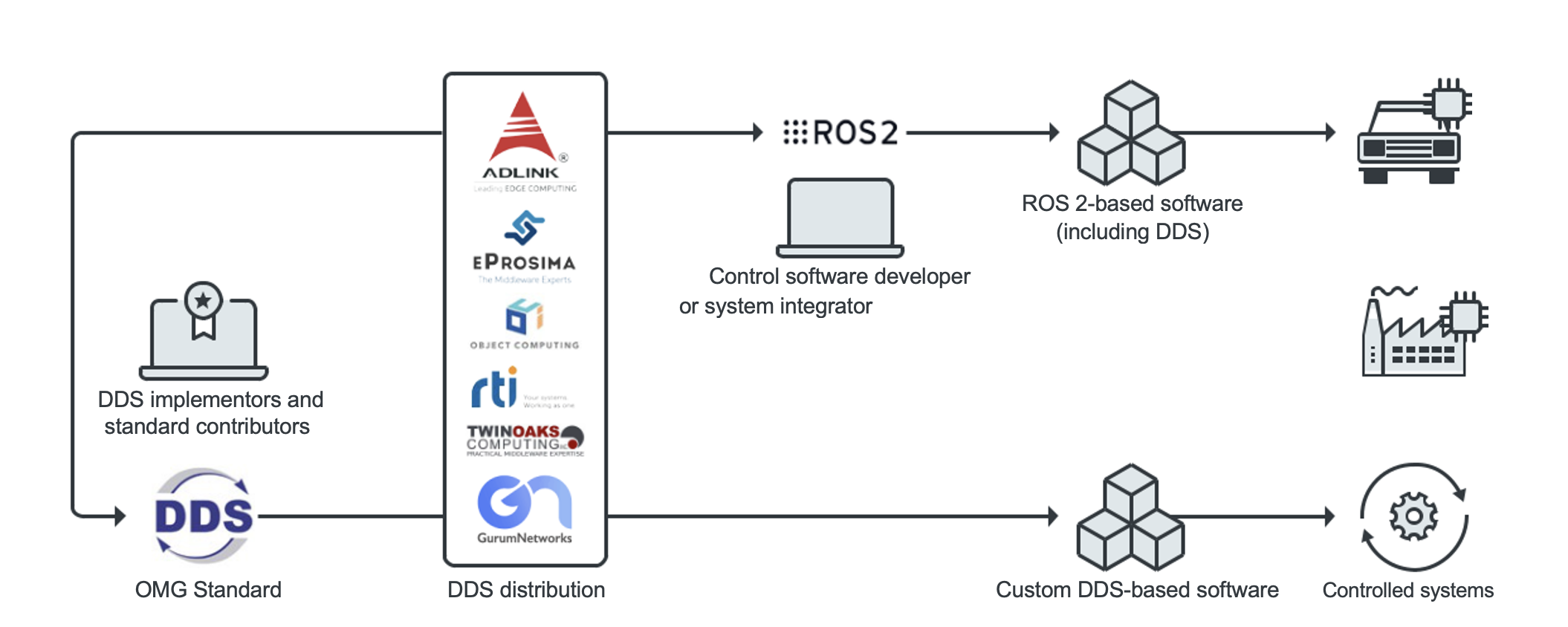}
    \centering
    \caption{DDS is a standardized software library used for software-based controlled systems, directly or via ROS 2. Reproduced from \cite{maggi2022security}.}
    \label{figure:dds}
\end{figure}

Notably, the following companies and agencies use DDS (note that this is not an exhaustive list of currently using the technology):

\begin{itemize}
    \item Military robots (e.g. legged robots from Ghost Robotics)
    \item National Aeronautics and Space Administration (NASA) at the Kennedy Space Center
    \item Siemens in wind power plants
    \item Volkswagen and Bosch for autonomous valet parking systems
    \item Nav Canada and European CoFlight for air-traffic control
\end{itemize}

DDS is the foundation of other industry standards such as OpenFMB \cite{OpenFMB2022} for smart-grid applications and Adaptive AUTOSAR \cite{RichterGuijarro2021}, among other sectors that we identify in the next section. The Robot Operating System 2 (ROS 2), which is the de facto standard operating system for robotics and automation and as previously mentioned, also uses DDS as the default middleware. We also noted that, according to a confidential document leaked online, NVIDIA has listed DDS as a strategic tool for system-virtualization and cloud-computing applications, mainly for exchanging data within and across virtual machines.


\section{DDS and Real-Time Publish-Subscribe (RTPS) Packets}

There are many software-based controlled systems in the world that connect sensors, actuators, and controlling and monitoring applications. DDS was invented for such systems, with a strong focus on interoperability and fault tolerance. It is optimized for publish-subscribe and peer-to-peer applications as most applications can’t afford a single point of failure. The middleware relies on multicast (group communication or data transmission to multiple recipients) for discovery, allowing everything to run without needing initial configurations.

\begin{figure}[h!]
    \includegraphics[width=1\textwidth]{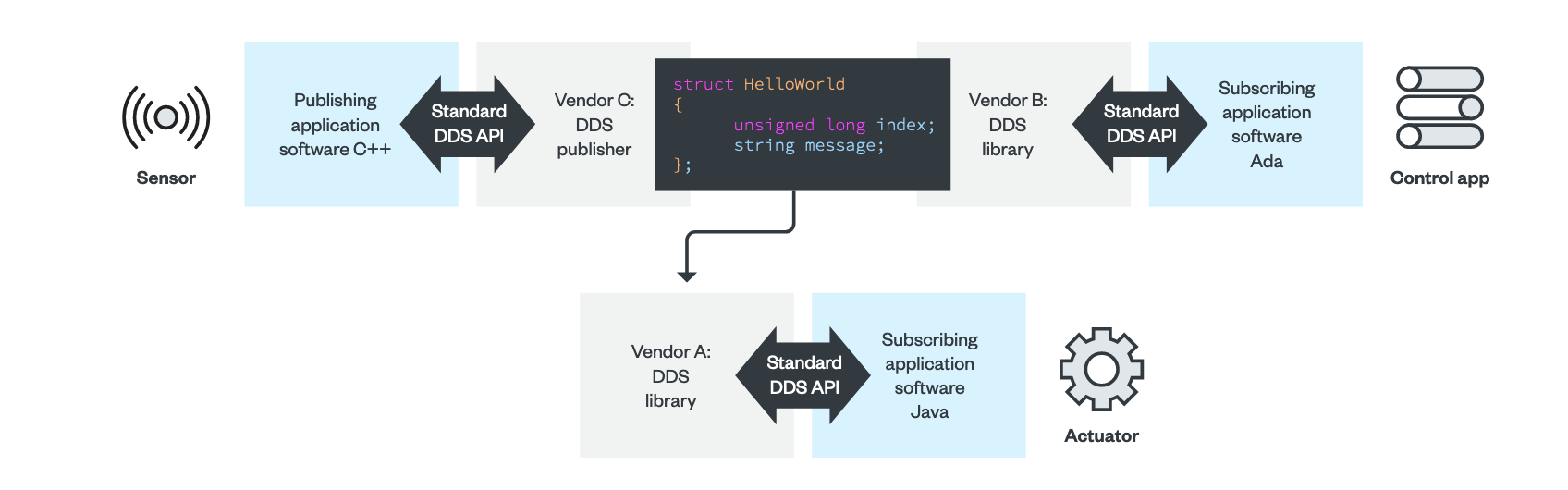}
    \centering
    \caption{Simplified software-based control system with actuators, controller, sensors, communicating by exchanging data over DDS. Reproduced from \cite{maggi2022security}.}
    \label{figure:dds_data}
\end{figure}

From a software developer standpoint, DDS is a communication middleware that facilitates interoperability of processes across machines in all main programming languages. From another viewpoint, DDS is a data-centric, publish-subscribe communication protocol that allows developers to build a flexible shared data “space” or “bus” for virtually any application that requires two or more nodes to exchange typed data.
\begin{figure}[h!]
    \includegraphics[width=0.7\textwidth]{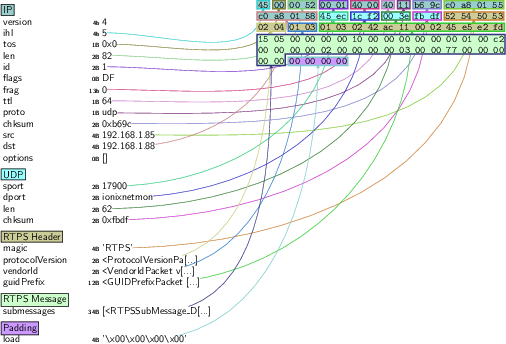}
    \centering
    \caption{An example of a dissected RTPS message with a DATA submessage. Reproduced from \cite{maggi2022security}.}
    \label{figure:rtps_dissection}
\end{figure}
The DDS layer is encapsulated into real-time publish-subscribe (RTPS) packets, which for now can be considered as a collection of sub-messages (such as timestamp, discovery, data, and security metadata), as shown in Figure 5. Given the strong dependency between DDS and RTPS, we focused our research on RTPS for increased generality. Because of its flexibility, DDS and its underlying layers do not come as a ready-to-use, off-the-shelf product like other middleware tools (such as Message Queuing Telemetry Transport or MQTT). Rather, DDS is a programming library that developers use to build custom middleware protocols with advanced features such as custom data types, quality of service (QoS) policies, network partitioning, authentication, and encryption.

\begin{mybox}
{\textbf{\textit{Understanding DDS as an robotics databus (as opposed to a database)}}}
DDS can be understood as databus for robotic solutions. A databus is a data-centric software framework for distributing and managing real-time data in intelligent distributed systems. In intelligent distributed systems, managing dataflow is critically important. The databus — designed specifically to manage dataflow in intelligent distributed systems — simplifies application and integration logic with a powerful data-centric paradigm.
\end{mybox}

\subsection*{Research Scope: RTPS, DDS, and Robot Operating System 2 (ROS 2)}

\begin{table}[h!]
    \includegraphics[width=1\textwidth]{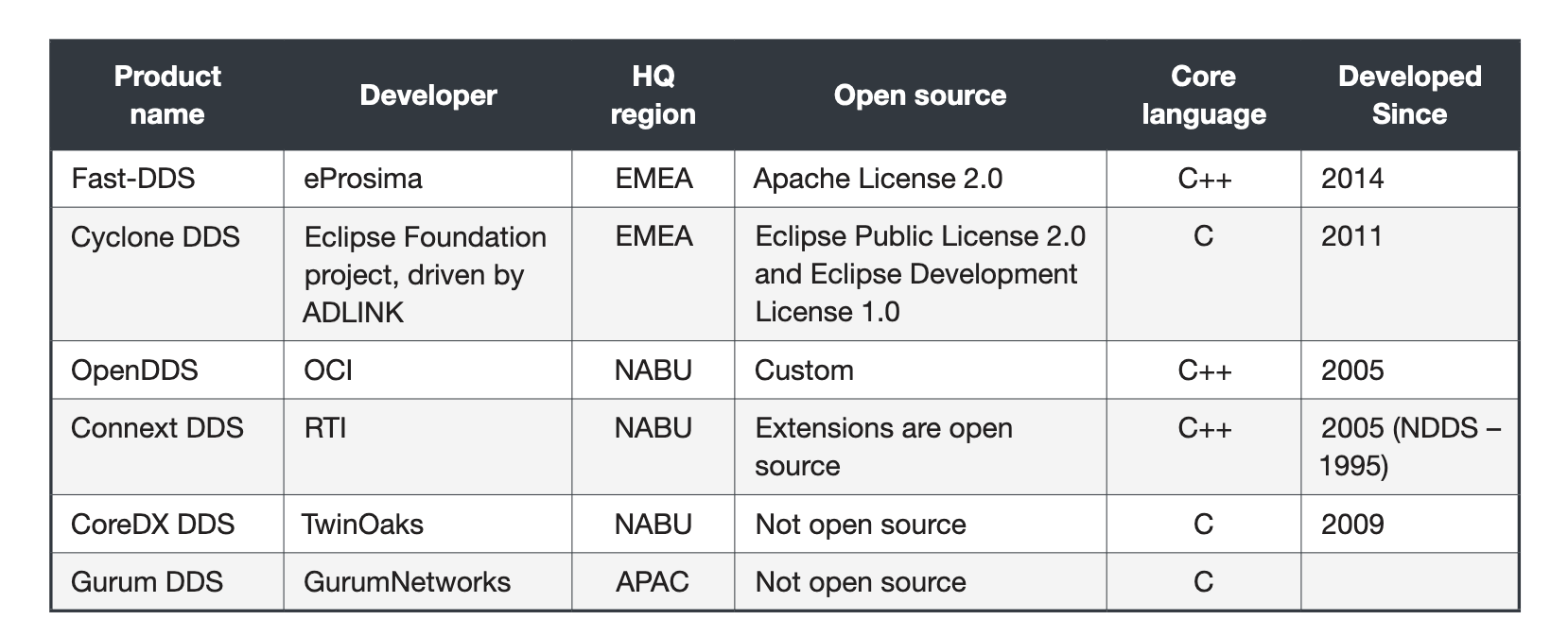}
    \centering
    \caption{A list of DDS implementations that we analyzed in this research. Reproduced from \cite{maggi2022security}.}
    \label{table:dds_analyzed}
\end{table}

In addition to the DDS standard specifications, we focused our investigation on the six DDS implementations listed in Table \ref{table:dds_analyzed}. These implementations are used globally and have customers and users in the critical sectors identified in the same table. Because DDS depends on RTPS as a lower-layer standard protocol, each DDS implementation ships with its own RTPS implementation. In other words, DDS data is contained as a sub-message within RTPS, thereby focusing on both protocols. To make sure we highlight the criticality of this for robotics, again, the Robot Operating System 2 (ROS 2), which is the default standard meta-operating system for robotics and automation, has DDS as its default middleware. For this reason, the impact of each vulnerability extends beyond DDS alone, and includes all ROS 2 instances.

\section{Research Methodology and Technical Details}

Refer to \cite{maggi2022security} for details on the final findings derived from this research. This subsection  instead discusses the technical details of how those findings were encountered. The offensive methodology explained here applies to other (software) robotic endeavours. Figure \ref{figure:attack_research_methodology} provides an overview of the research methodology we followed. We analyzed each DDS implementation from two main angles: network and source — or binary — code. We wanted to have a deep understanding of the low-level details of the RTPS network layer to craft arbitrary test packets, and prove that it is possible for an attacker to mass scan a network and map the attack surface. In a parallel investigation, we manually studied the original or decompiled source code, with the main goal to find good fuzz targets (i.e., functions that receive and process untrustworthy data like network packets).

\begin{figure}[h!]
    \includegraphics[width=1\textwidth]{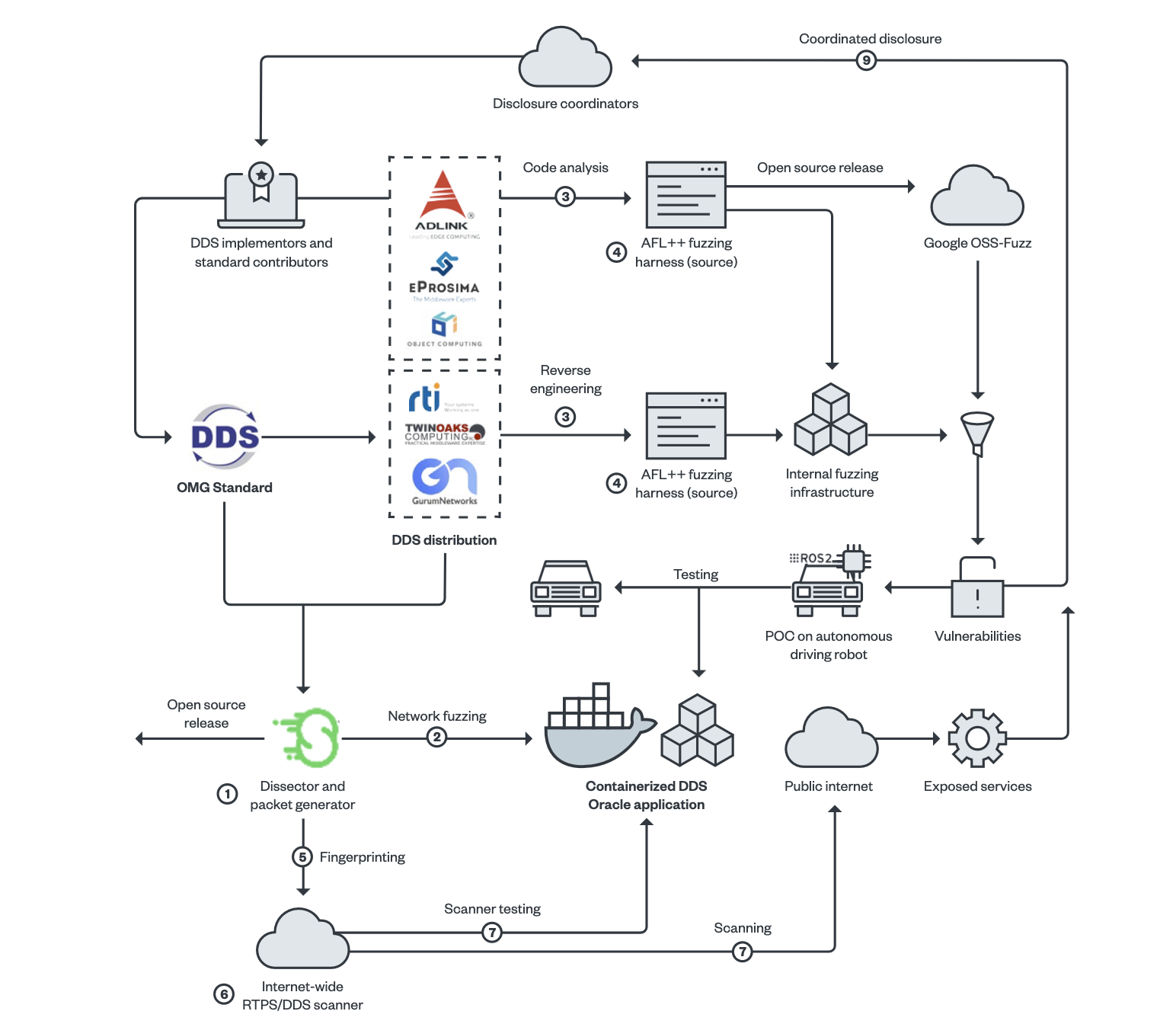}
    \centering
    \caption{Diagram of the research methodology and workflow we used: The numbered items indicate the sequence of steps we followed throughout this research, beginning with the development of a packet dissector. Reproduced from \cite{maggi2022security}.}
    \label{figure:attack_research_methodology}
\end{figure}

\subsection{A New Scapy Layer to Dissect and Forge RTPS and DDS Data}

Although Wireshark already includes an RTPS dissection plugin, we needed something more scriptable. Since we spent some time manually crafting RTPS packets at the beginning, we decided to develop a RTPS Scapy-based dissector. We released the resulting Scapy layer as open-source code under the GNU General Public License v2.0.74 which is available ever since \faGithub~  \url{https://github.com/secdev/scapy/pull/3403}.

\subsubsection{Crafting RTPS probes with Scapy}

Without going into the details of our Scapy RTPS implementation, note that it can be used to programmatically create RTPS packets by writing Python code, like any other Scapy layer. In practice, that’s seldom what a researcher would do, especially for “thick” protocols with lots of options.

\begin{figure}[h!]
    \includegraphics[width=1\textwidth]{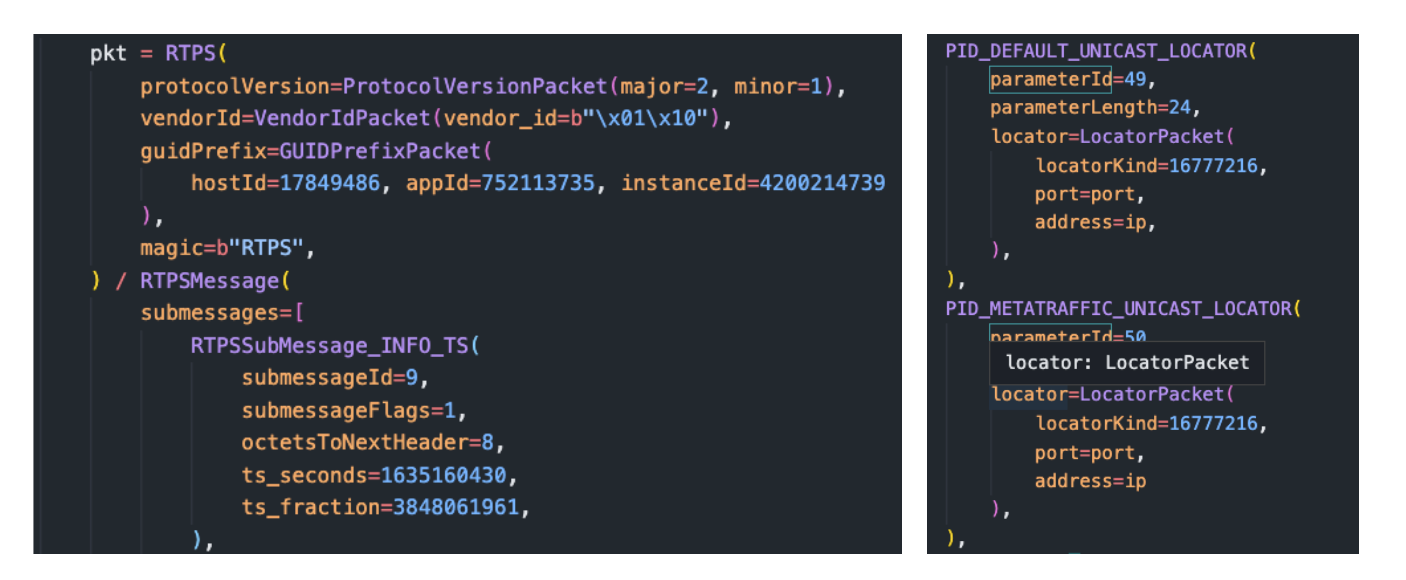}
    \centering
    \caption{With the Scapy RTPS layer, a developer can create arbitrarily complex (and unexpected but valid) RTPS packets. Reproduced from \cite{maggi2022security}.}
    \label{figure:rtps_crafting}
\end{figure}

Instead, the developer’s typical workflow can be:
\begin{itemize}
    \item Intercept traffic. Use Tcpdump or Wireshark to collect the traffic generated by the “hello world” example typically provided with a DDS distribution.
    \item Extract UDP payload. Use Scapy (or manually via Wireshark) to select the packet of interest and extract the UDP payload (which contains the RTPS layer, as shown in Figure \ref{figure:attack_research_methodology}).
    \item Dissect with the RTPS class. Pass the extracted payload to the Scapy RTPS class, which will automatically dissect it.
    \item Generate Python code automatically. Use Scapy’s built-in \verb|.command()| method to output the Python code that will declaratively generate the packet that has just been dissected. If necessary, modify the packet so obtained (as exemplified in Figure \ref{figure:rtps_exchange}).
    \item Test modified packet against the endpoint. Either use Scapy’s built-in send()/sendp() functions or Python’s socket module to send the packet to a target DDS endpoint and check if it triggers the desired behavior.
\end{itemize}

\begin{figure}[h!]
    \includegraphics[width=1\textwidth]{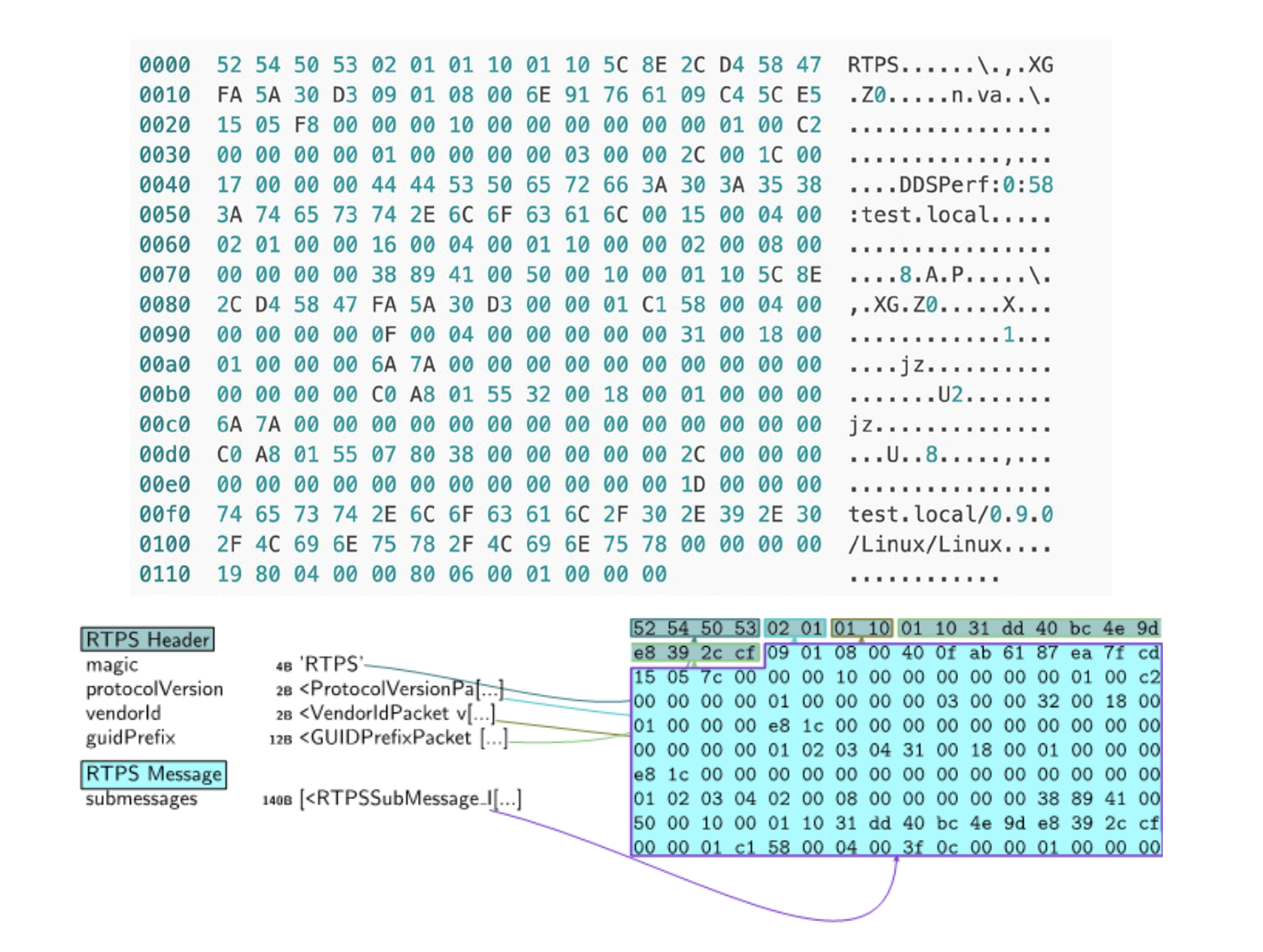}
    \centering
    \caption{The UDP payload contains the RTPS header and subsequent data. Reproduced from \cite{maggi2022security}.}
    \label{figure:rtps_udp}
\end{figure}

The following section briefly describes how, almost by accident, we discovered the amplification vulnerability while dissecting and modifying packets during the early stages of the development of our Scapy RTPS layer.

\subsubsection{Finding the Amplification Vulnerability}

Although network fuzzing via Scapy was not directly effective in our research, creating a Scapy layer helped and motivated us to investigate all the RTPS packets’ fields in depth. The activity led us to find the amplification vulnerability (CVE-2021-38425, CVE-2021-38429, CVE-2021-38487, CVE-2021-43547). In the long run, we recommend that developers and users leverage our Scapy RTPS layer — or similar libraries — as a building block for building continuous network fuzzers for RTPS and DDS.

The goal of the RTPS discovery phase is to send “probe” packets (e.g., to multicast addresses) and wait for responses from new locators. Locators could be IP-port pairs (see the \verb|PID_DEFAULT_UNICAST_ LOCATOR| in Figure \ref{figure:rtps_crafting}, right side of the screenshot) or memory offsets in a shared-memory transport. Before reading the specifications in depth, we assumed that an RTPS discovery packet would allow us to restrict the locator to the IP addresses within the network the machine is connected to, and would not blindly send RTPS data to any IP-port found in the locator field. On a second read, however, this is exactly how discovery works by design.

\begin{figure}[h!]
    \includegraphics[width=1\textwidth]{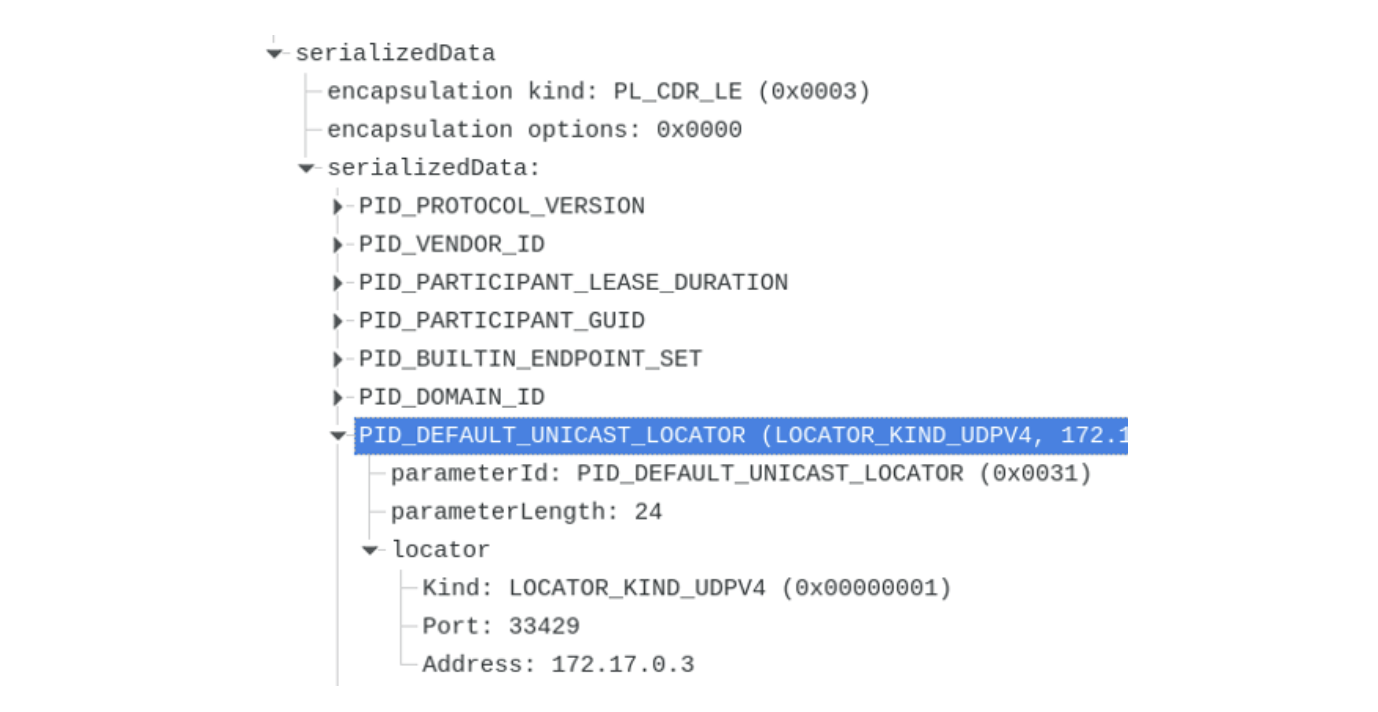}
    \centering
    \caption{In case of UDP or TCP transport, the locator is the IP-port pair. Reproduced from \cite{maggi2022security}.}
    \label{figure:locator}
\end{figure}

We discovered this by setting the \verb|PID_DEFAULT_UNICAST_LOCATOR| to the first IP address that came to mind (in this case, the Google DNS, because it’s easy to type at “8.8.8.8”). Almost immediately, a flow of outbound packets came from the DDS node, as shown in Figure \ref{figure:rtps_udp} and \ref{figure:amplification_discovery}.

\begin{figure}[h!]
    \includegraphics[width=1\textwidth]{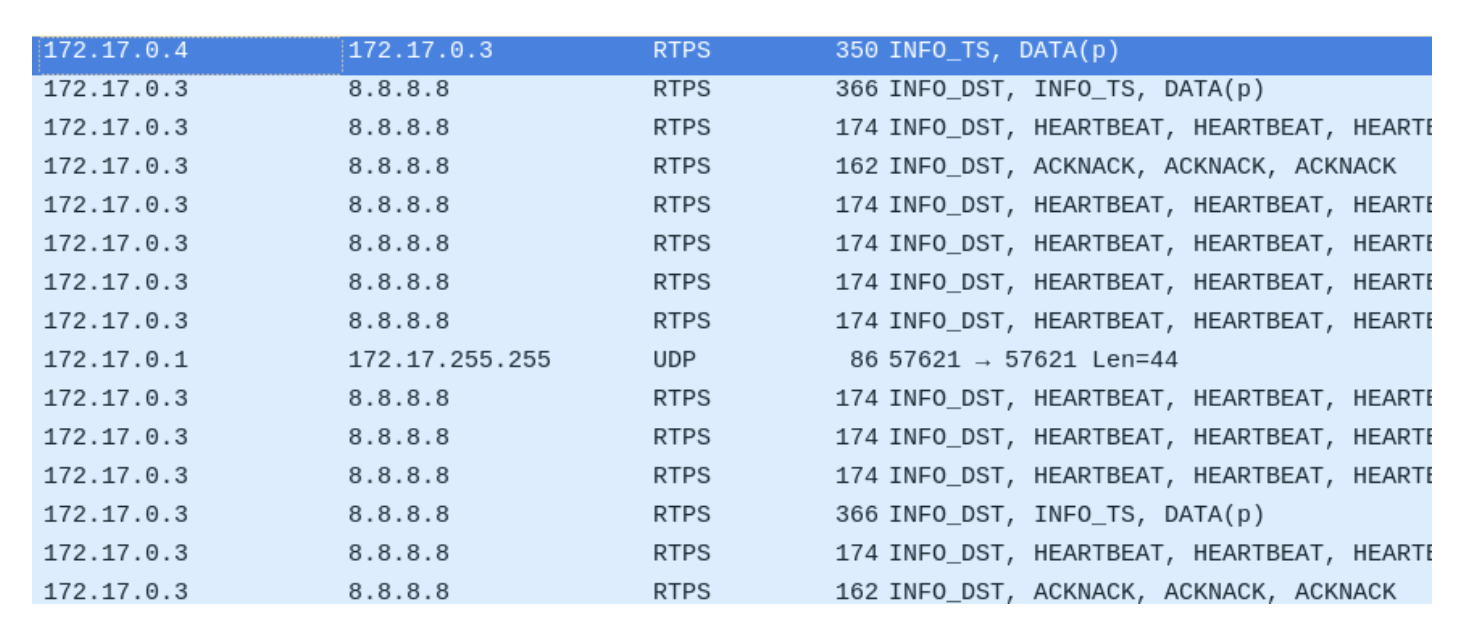}
    \centering
    \caption{We found the amplification vulnerability almost by accident, by setting the PID\_DEFAULT\_UNICAST\_LOCATOR to the first IP address that came to mind and easy to type. Reproduced from \cite{maggi2022security}.}
    \label{figure:amplification_discovery}
\end{figure}

\subsection{Source-code and Binary Fuzzing}

Of all the implementation vulnerabilities that we disclosed for this research, all but three have been found through source-code or binary fuzzing, and three through scripting a file-format input mutator (RADAMSA). There are many fuzzing tools freely available to researchers, and we chose one based on what has been used successfully for years by the largest public fuzzing platform (Google OSS-Fuss), which uses a combination of AFL++, libFuzzer, and Honggfuzz. Although the choice of the specific tool can influence the efficiency of a fuzzing campaign, we focused our attention on the most important piece: finding good fuzz targets and writing good fuzzing harnesses.

\subsubsection{Source-code Fuzzing with AFL++ and libFuzzer}

We used AFL++ for fuzzing with multiple sanitizers in LLVM. AFL++ requires the project compile with the latest version of LLVM and the build system of some DDS implementations required some work. Aside from this, most of the effort in this phase went into finding the right fuzz target and implementing a harness while keeping the code deterministic (for example, no threading).

From the high-level viewpoint depicted in Figure \ref{figure:rtps_exchange} and \ref{figure:opendds_afl}, we were interested in finding the most self-contained function in charge of processing data coming from the network. We found a repeating pattern in all the DDS implementations: upon receiving network data (i.e., \texttt{recv()} or some abstraction on top of it), there are one or more deserialization functions in which we likely find a switch-case control structure, used to dispatch the RTPS sub-message IDs to the right routine.

Given the importance of finding the right fuzz target, we dedicate the remainder of this section to showcase some examples of fuzz targets.

\begin{figure}[h!]
    \includegraphics[width=1\textwidth]{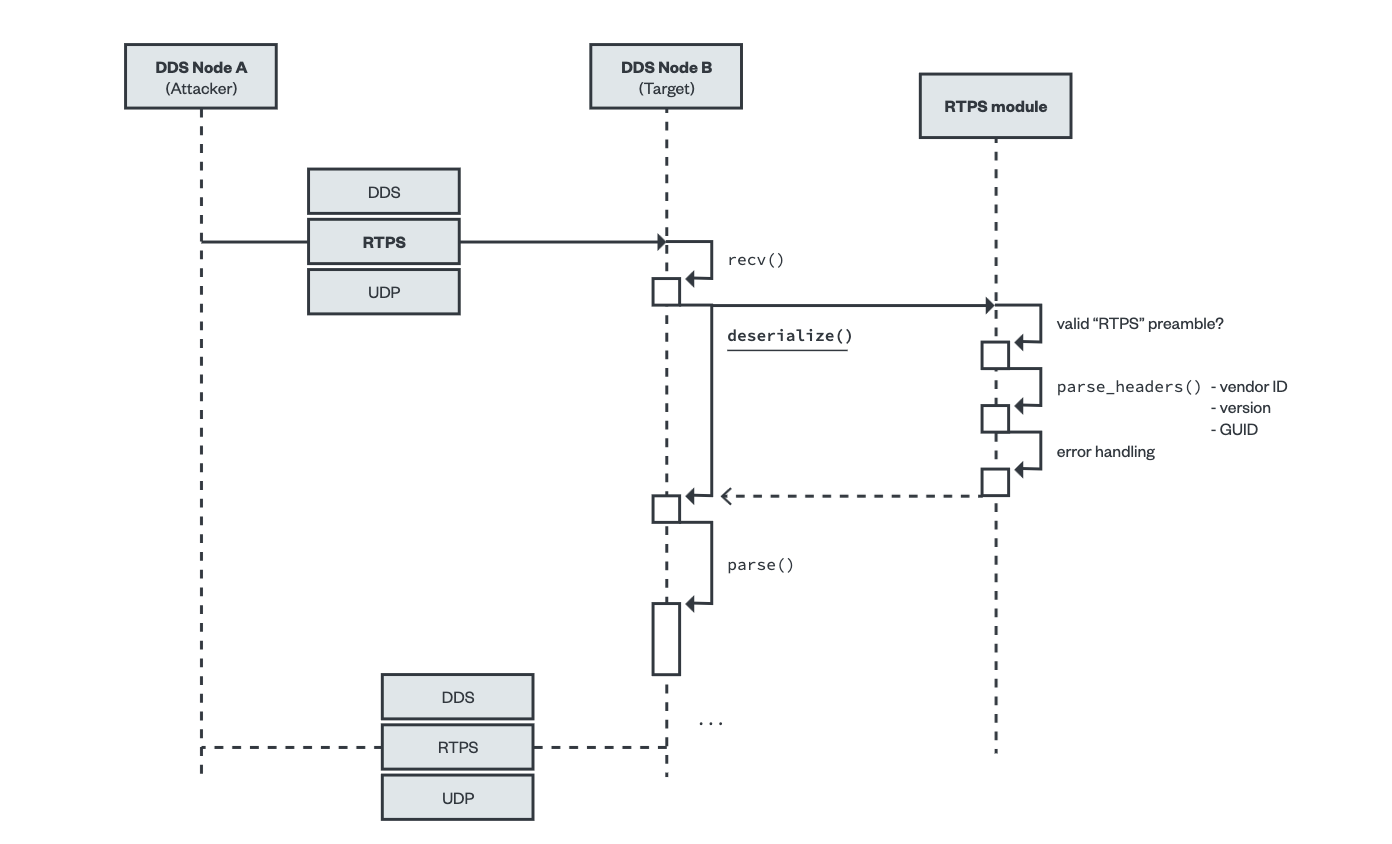}
    \centering
    \caption{Abstract representation of the data flow in a typical DDS/RTPS message exchange. From a fuzzing perspective, the \texttt{deserialize()} step is the fuzz target. Reproduced from \cite{maggi2022security}.}
    \label{figure:rtps_exchange}
\end{figure}

We began with the supplied Docker images or make files to compile example programs, run them under GDB, and inspect debug traces, like exemplified for OpenDDS in Figure \ref{figure:rtps_exchange}.

\begin{figure}[h!]
    \includegraphics[width=1\textwidth]{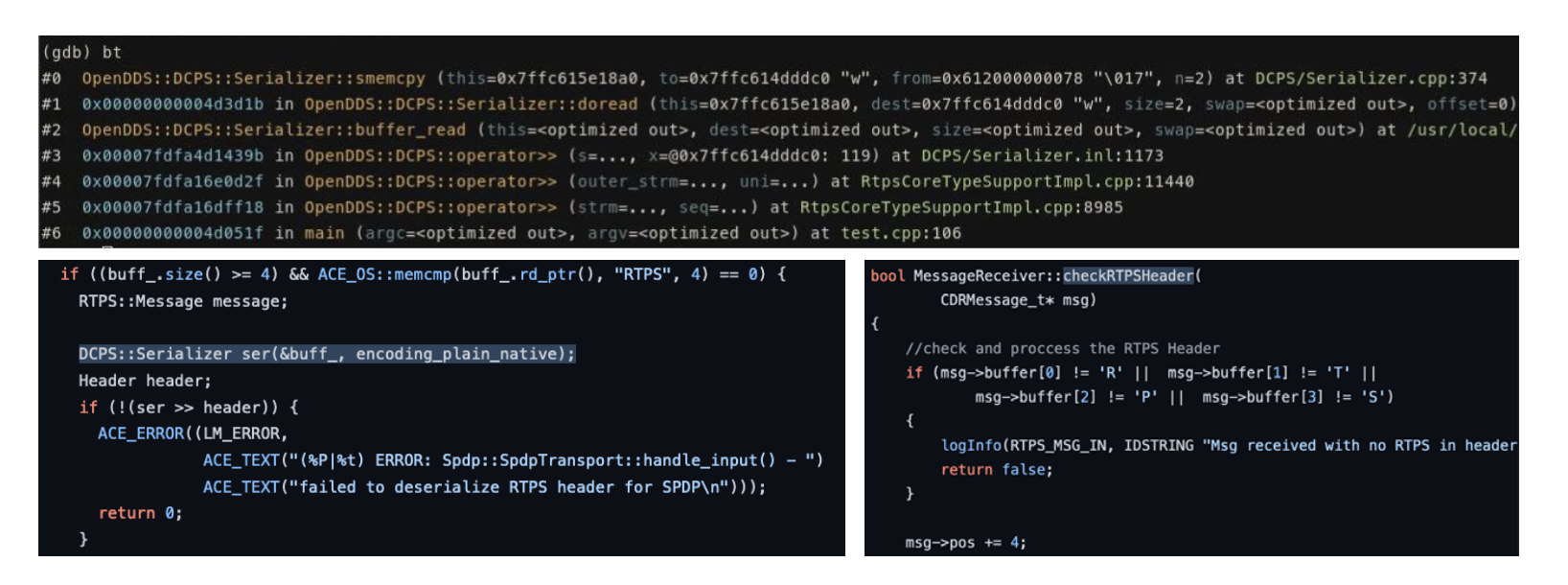}
    \centering
    \caption{Starting from a debug trace, we found interesting functions and explored further by manually looking into the source code with the aid of Visual Studio Code engine. Reproduced from \cite{maggi2022security}.}
    \label{figure:rtps_exchange}
\end{figure}

By following the function calls with the aid of the code analyzer part of Visual Studio Code, we were able to see that all three DDS implementations were using very similar procedures for deserializing network payloads. In particular, we found that they all had a switch-case to handle the RTPS sub-message types, as exemplified for OpenDDS in Figure \ref{figure:cyclonedds_harness}.

\begin{figure}[h!]
    \includegraphics[width=1\textwidth]{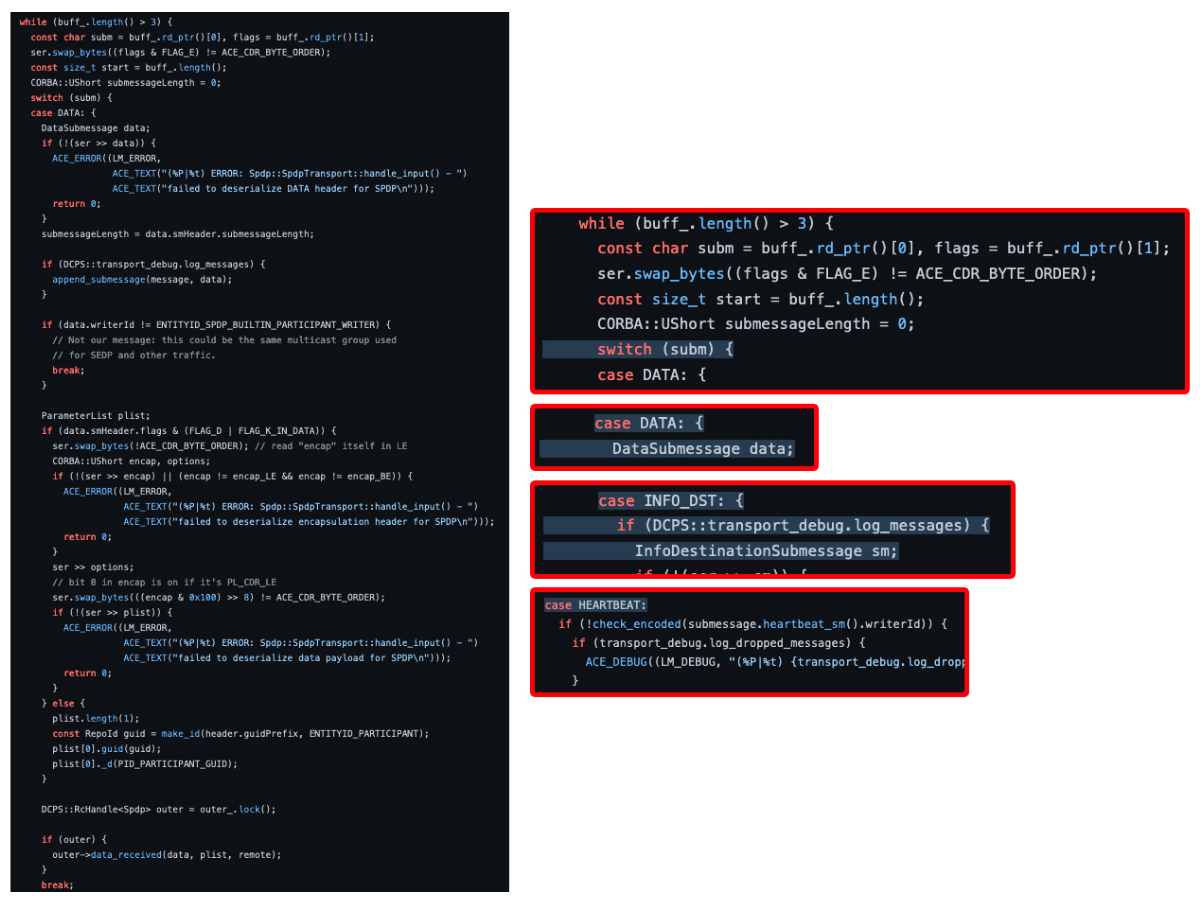}
    \centering
    \caption{Typical switch-case control structure found similar in all DDS implementations. Each of the branches takes care of one RTPS sub-message type (e.g., \texttt{DATA}, \texttt{INFO\_DST}, \texttt{HEARTBEAT}). Reproduced from \cite{maggi2022security}.}
    \label{figure:dds_switch}
\end{figure}

In some cases, we adjusted the source code right before the beginning of the de-serialization to dump the binary data being passed to the first function. This was useful to confirm that it was the network payload that we expected the function to receive, as exemplified for Cyclone DDS in Figure \ref{figure:cyclonedds_backtrace}.\\

\begin{figure}[h!]
    \includegraphics[width=1\textwidth]{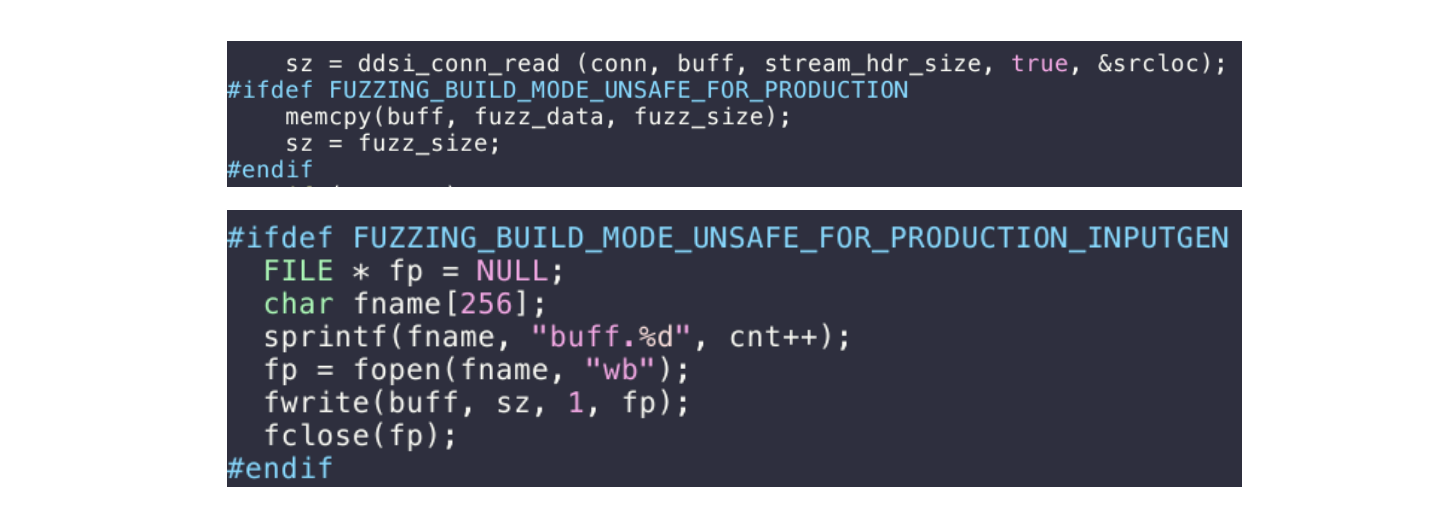}
    \centering
    \caption{(Top screenshot) In some cases, we inserted a memcpy() to directly fuzz the input in the right place when it was not possible to further decompose the function into a smaller, self-contained fuzz target. In other cases, we used the original code to dump the data received by the de-serialization routine to verify that we found the right fuzz target (bottom screenshot). Reproduced from \cite{maggi2022security}.}
    \label{figure:fuzz_strategy}
\end{figure}


\newpage

The following list showcases the groundbreaking vulnerabilities discovered by the author of this thesis, in collaboration with an international cohort of security researchers \cite{ddsvulnerabilitiesdisclosure,ddsvulnsros2}. Over an intensive 18-month study dedicated to Data Distribution Service (DDS) protocols, the author played a pivotal role in identifying these novel security flaws. These contributions, which are original and previously undocumented in the literature, highlight the author's  research results and innovative approach to cybersecurity.

Our methodical and responsible disclosure process further underscores our commitment to ethical research practices. Each vulnerability was first reported to the respective manufacturers, allowing for necessary remediation, and subsequently to relevant authorities after a stipulated grace period, ensuring an orderly public disclosure \cite{ddsvulnsros2}. This diligent process not only demonstrates our dedication to advancing the field but also our leadership in setting new standards for responsible vulnerability reporting. These discoveries represent significant advancements in DDS security and mark a notable contribution to the broader cybersecurity community in robotics.

\paragraph{CVE-2021-38445 (OpenDDS): Failed Assertion Check in RTPS Handshake}

Using the harness (shown in Figure \ref{figure:fuzzing_other}) we found out that, in OpenDDS less or equal v3.17, while receiving a RTPS packet with valid headers, with DATA sub-message, any attached serialized sub-data segment with a parameterLength of 0 will cause an assertion to fail in Serializer::doread, which subsequently called Serializer::smemcpy with a const char* from of zero. This vulnerability can be exploited via the network even without authorization and can cause the DDS node to crash. It cannot be developed into a buffer overflow so it does not grant any code-execution primitives.

More specifically, Serializer::doread does not check for segments of 0 length and continues to handle the messages. This is passed by RtpsCoreTypeSupportImpl.cpp near: 

\begin{verbatim}
bool operator >>(Serializer\& outer_strm, 
    ::OpenDDS::RTPS::Parameter\& uni) 
\end{verbatim}

which extracts size information from parameterLength but does not check if it is a valid value. It only makes sure extracting values from the serializer is successful.

\begin{figure}[h!]
    \includegraphics[width=1\textwidth]{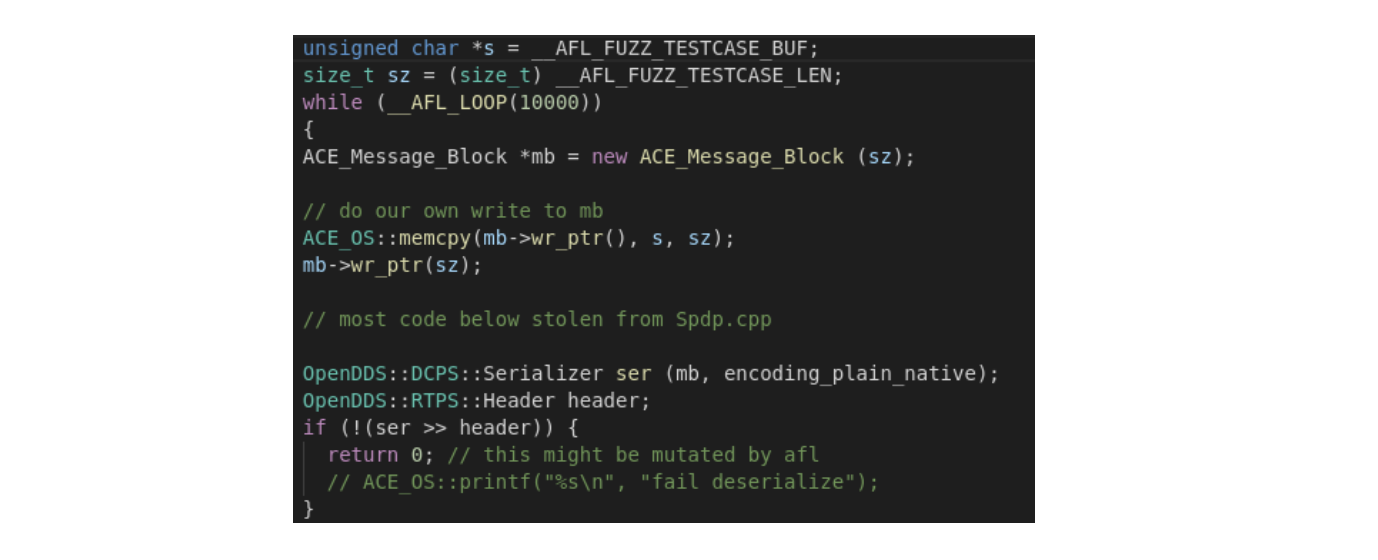}
    \centering
    \caption{Example harness for OpenDDS RTPS deserialization routine written for AFL++ using persistent mode. Reproduced from \cite{maggi2022security}.}
    \label{figure:opendds_afl}
\end{figure}

\paragraph{CVE-2021-38445 (OpenDDS): Memory exhaustion}

The opposite occurs with CVE-2021-38445 explained in the previous section, wherein the serializer is tricked into allocating very large chunks of memory. AFL++ found a crash in OpenDDS’s serializer: While deserializing data with parameter ID type 0x55, it does not sanitize the value in its length field. This allows attackers to exhaust a server’s memory by crafting a packet with a very large number in that field. In 
\begin{verbatim}
bool operator>>(Serializer\& strm, 
    ::OpenDDS::RTPS::FilterResult_t\& seq) 
    (RtpsCoreTypeSupportImpl. cpp:1977)
\end{verbatim}
a check should be made to make sure it never allocates more memory than it’s allowed, or a hard limit should be implemented.

\paragraph{CVE-2021-38441 and CVE-2021-38443 (Cyclone DDS): XML Parsing to Heap-write}

Some DDS implementations had networking functionalities plugged deep into the application code, which required some mock functions in the harness, as exemplified in Figure \ref{figure:fuzzing_unicorn} for Cyclone DDS.
\begin{figure}[h!]
    \includegraphics[width=1\textwidth]{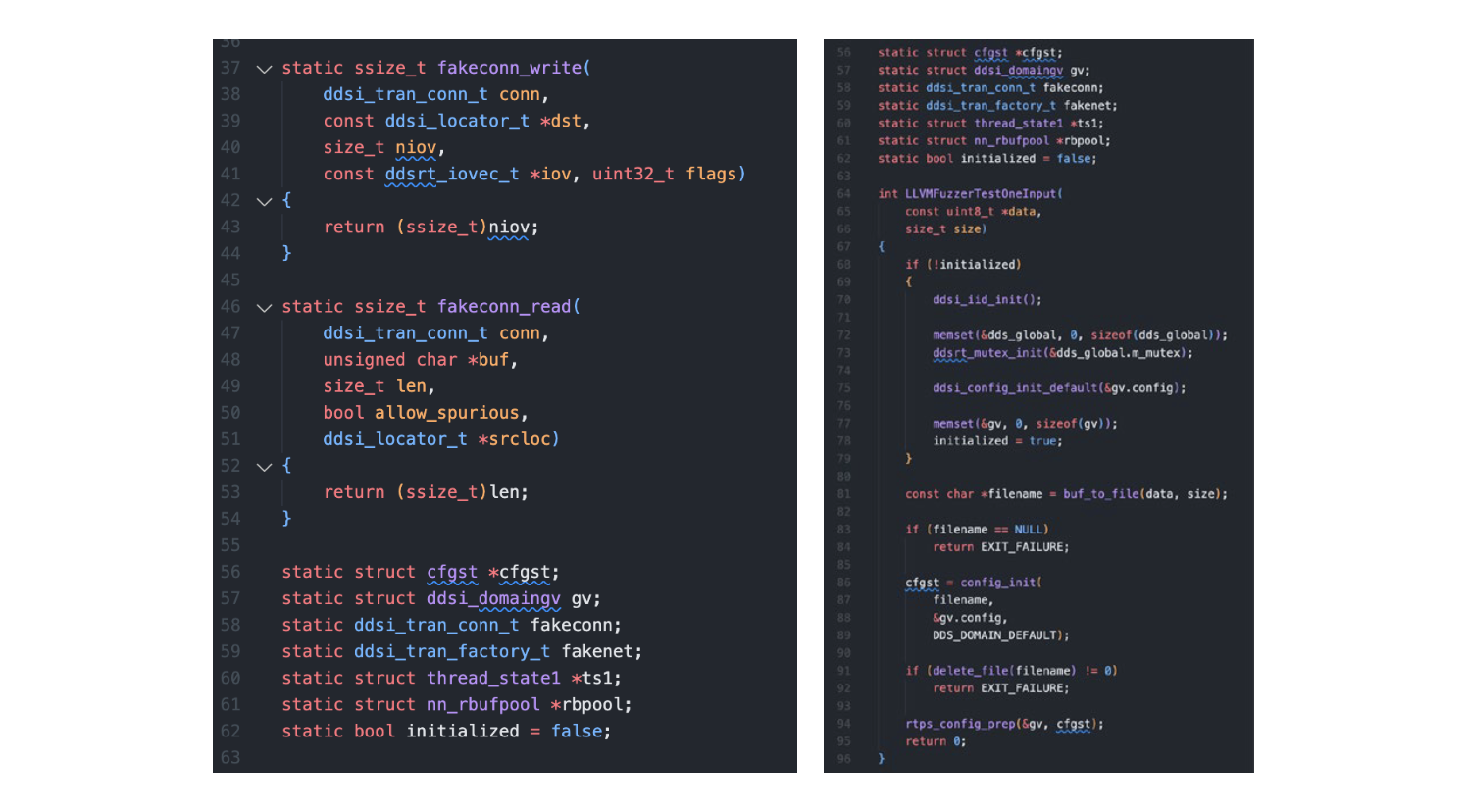}
    \centering
    \caption{Cyclone DDS harness required a mock network subsystem. The actual fuzzing is happening at line 86, where we pass the configuration initializer a pointer memory-mapped XML file. Reproduced from \cite{maggi2022security}.}
    \label{figure:cyclonedds_harness}
\end{figure}
The harness in Figure \ref{figure:fuzzing_unicorn} found several crashes, which led to two vulnerabilities. One is exemplified in Figure \ref{figure:radamsa_xml}, a multi-byte heap-write primitive. Upon checking the source code, we noticed that there were multiple inputs that can lead to a heap overflow in the XML parsing routines. This causes at least a crash and can be exploited to write in the heap, potentially overflowing into the stack. Without heap protections, this vulnerability is exploitable as it is a write primitive of at least 8 bytes, and certainly causes the program to crash in the best case.

\begin{figure}[h!]
    \includegraphics[width=1\textwidth]{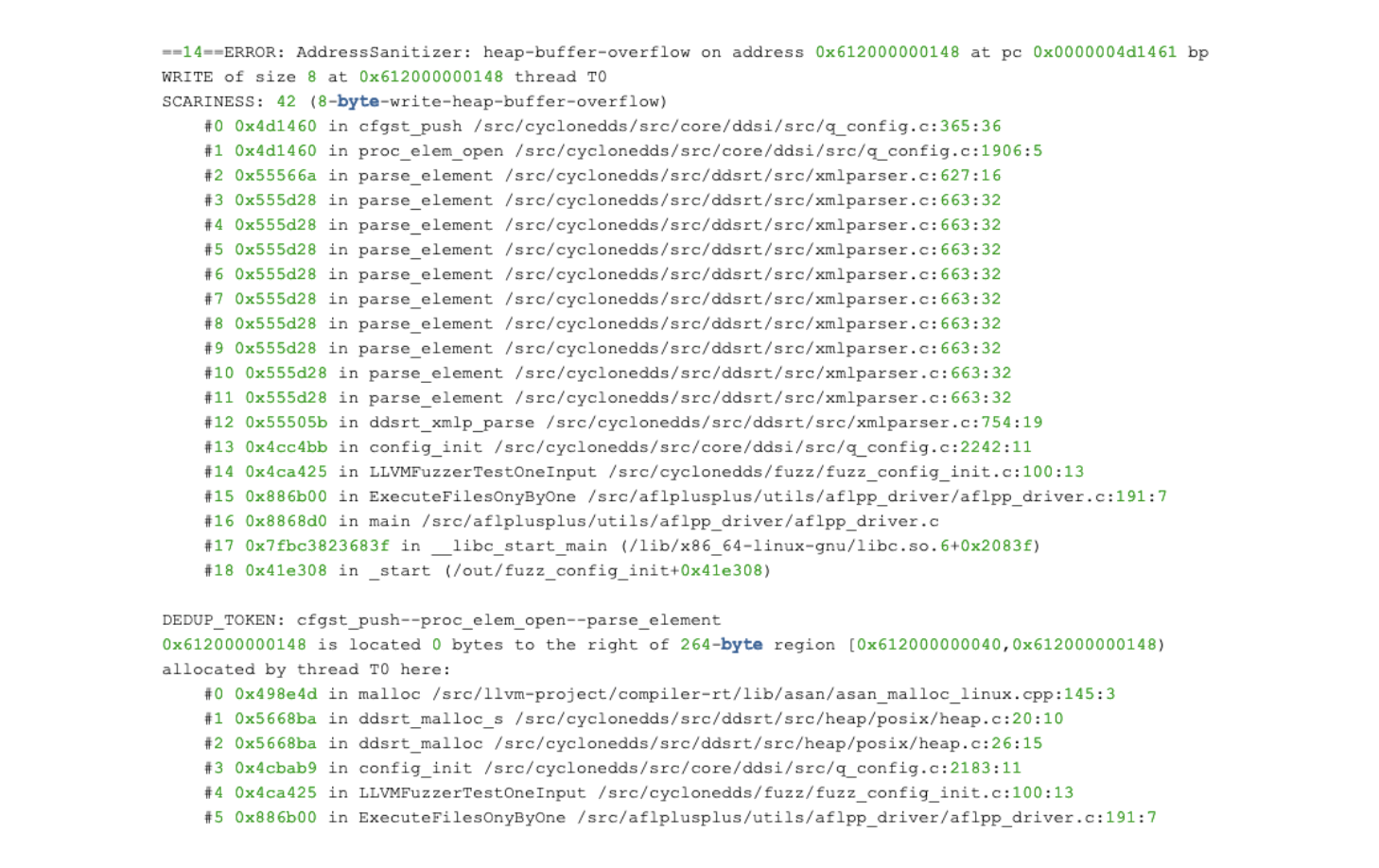}
    \centering
    \caption{Backtrace of a crash found by libFuzzer on Cyclone DDS, which led us to CVE-2021-38441, a multi-byte heap-write primitive. Reproduced from \cite{maggi2022security}.}
    \label{figure:cyclonedds_backtrace}
\end{figure}

\subsubsection{Binary Fuzzing with UnicornAFL}

The trial licenses for RTI Connext DDS, CoreDX DDS, and Gurum DDS grant access only to binary distributions of the libraries. After compiling the example programs that ship with the original software distribution, we used GDB to inspect run traces. This turned out to be quite verbose given the presence of several debug symbols. We filled the missing information by inspecting the listing via Ghidra and IDA Pro. This allowed us to find interesting fuzz targets, as seen in Figure \ref{figure:radamsa_crash}. For coverage-guided fuzzing we used UnicornAFL, which is a fork of AFL++ that uses the Unicorn emulation engine to “execute” the target and employs block-edge instrumentation in a similar fashion to AFL’s QEMU mode.
 
In practice, we dumped the context of a running process with GDB and prepared a harness (see Figure \ref{figure:dds_scan_world})	that lets UnicornAFL restore that context, set registers and memory state, the RIP register, and start emulation. Like AFL, UnicornAFL will take care of mutating the input, passing it to the fuzz target, and keep track of the coverage. The main shortcoming is that we had to re-implement some memory management functions (e.g., malloc, memset). This approach is inherently slow due to emulation, but was good enough for initial vulnerability research. It costed us a few hours of AWS EC2 computation (c5a.8xlarge), and we found that AMD EPYC machines were three times faster than Intel Xeons while fuzzing using UnicornAFL.

\begin{figure}[h!]
    \includegraphics[width=1\textwidth]{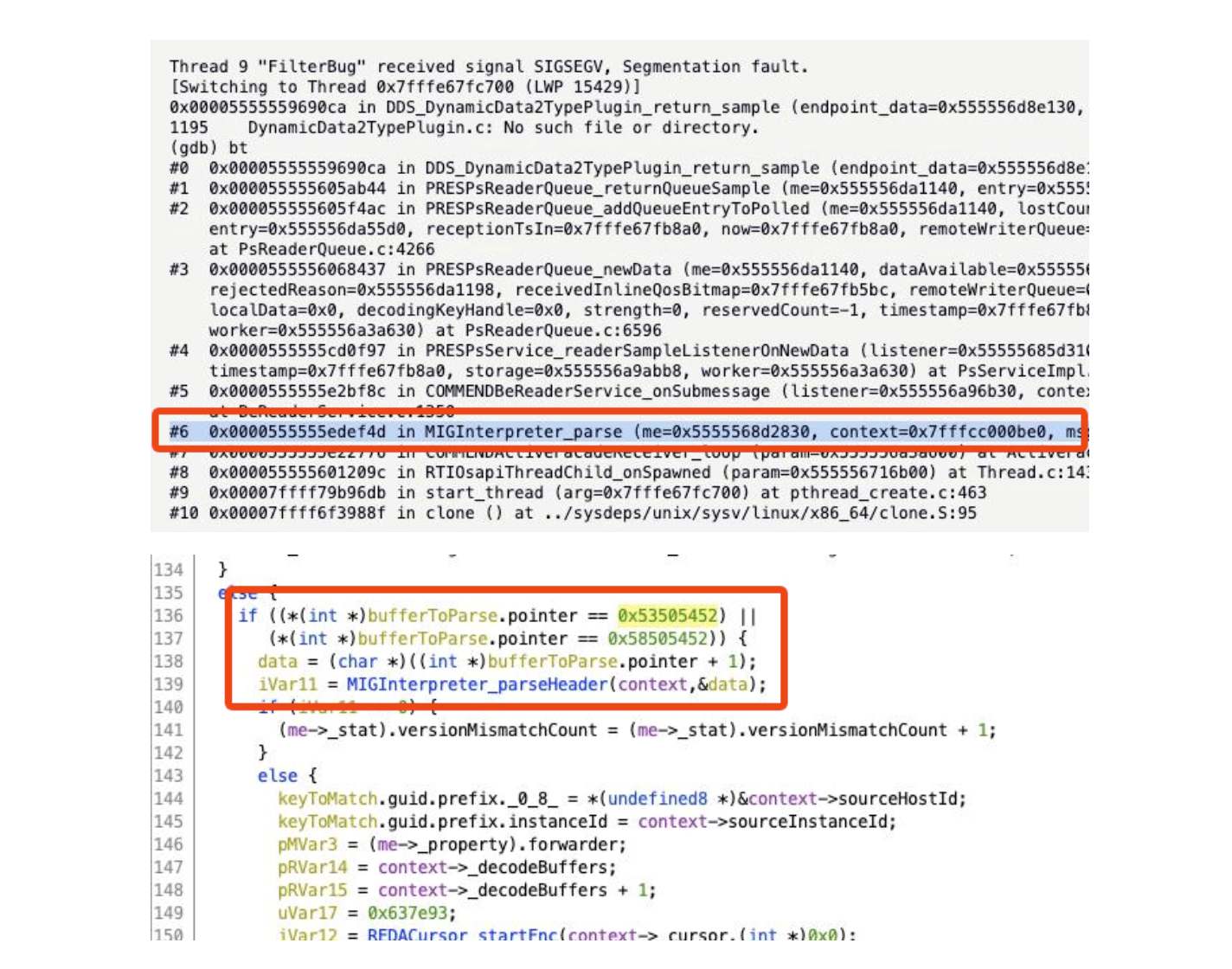}
    \centering
    \caption{Finding fuzz targets in RTI Connext DDS, CoreDX DDS, and Gurum DDS required us to reverse engineer the binary libraries, which was easy as the vendor did not use any anti-reverse engineering measures. Reproduced from \cite{maggi2022security}.}
    \label{figure:fuzzing_other}
\end{figure}

\begin{figure}[h!]
    \includegraphics[width=1\textwidth]{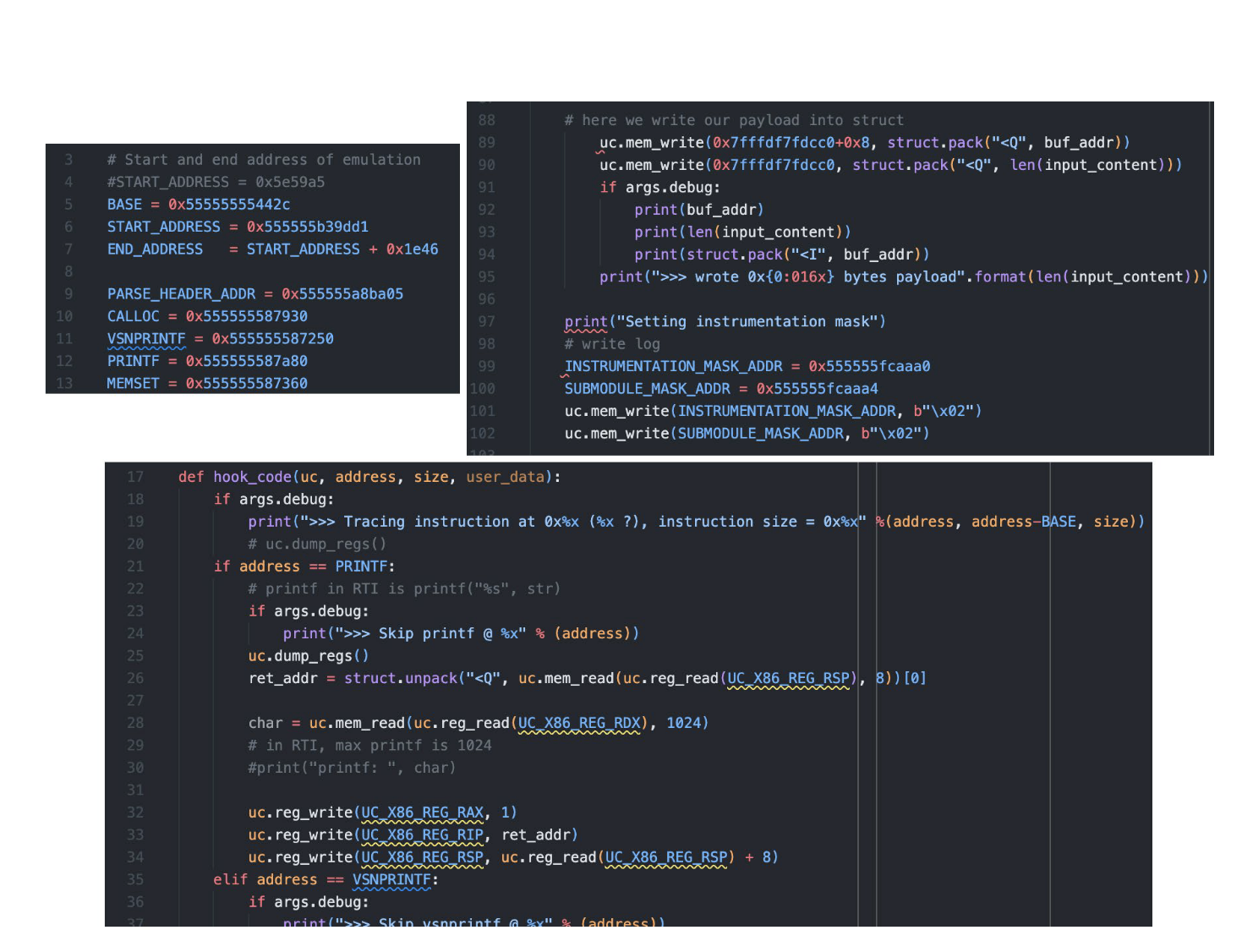}
    \centering
    \caption{We used the debug trace and the decompiled code (see Figure \ref{figure:radamsa_crash}) to create a harness for UnicornAFL. Reproduced from \cite{maggi2022security}.}
    \label{figure:fuzzing_unicorn}
\end{figure}

\paragraph{CVE-2021-38435 (RTI Connext DDS): Segmentation Fault on Malformed RTPS Packet}

The UnicornAFL instrumentation that we prepared found a segmentation-fault in the RTPS deserializer in RTI Connext DDS when receiving a malformed packet. This would cause runtimes to exit immediately and a DoS. In particular, the RTICdrStream\_skipStringAndGetLength() function does not properly check inputs, using the result straight from RTICdrStream\_align(), thus triggering a segmentation fault. Both publisher and subscriber are affected.

\paragraph{CVE-2021-38439 and CVE-2021-38423 (Gurum DDS): Heap Overflow and Segmentation Fault}

While using UnicornAFL on Gurum DDS fuzz targets, we discovered that there is a heap overflow in the RTPS routine that handles payload parsing. This causes a segmentation fault leading to DoS. More specifically, the crash is triggered in rtps\_read\_AckNackMessage() function when called in read\_ Submessage(), which creates a multi-byte heap overflow condition. We found this crash by using a harness that passes RTPS payload directly to the rtps\_read\_Data(…, buf, len, …) function through the buf argument.

Another case we\footnote{For the full list of contributors, refer to \cite{ddsvulnerabilitiesdisclosure,ddsvulnsros2}} found is in the \texttt{rtps\_Parameter\_load2()} function, which does a type conversion from a buffer and does a check to exclude specific IDs. During this conversion, we found some inputs causing a segmentation fault. We verified that this is exploitable via network by crafting a packet based on the crash dump provided by the fuzzing engine.

\subsubsection{Scripting RADAMSA to Mutate XML Files}

At the beginning of this research and before using AFL++ and UnicornAFL, we used RADAMSA directly, with some simple shell scripting (see Figure \ref{figure:dds_scans_methodology}). Without any prior knowledge on the target software, this simple technique can be surprisingly effective at finding crashes, which can also lead to the discovery of vulnerabilities.

\begin{figure}[h!]
    \includegraphics[width=1\textwidth]{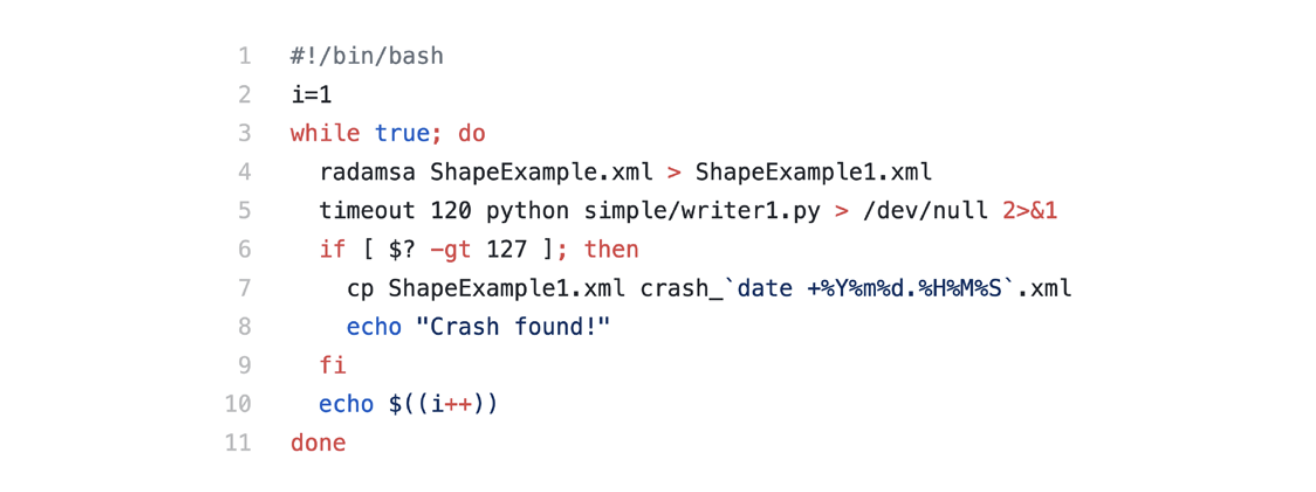}
    \centering
    \caption{A simple scripting of RADAMSA can lead to surprising results. Reproduced from \cite{maggi2022security}.}
    \label{figure:radamsa_xml}
\end{figure}

\paragraph{CVE-2021-38427 and CVE-2021-38433 (RTI Connext DDS): Stack-based Buffer Overflows Python Bindings}

The simple “harness” shown in Figure \ref{figure:dds_scans_methodology} allowed us to find two vulnerabilities; one could be exploited beyond just a crash to control a pointer using a malformed XML file. When the length of an attribute value in a configuration XML file is longer than a certain limit, RTIXMLObject\_lookUpRef() would trigger a buffer overflow. If the length is exactly 894 characters, we could overwrite RIP register (see Figure \ref{figure:dds_scans_methodology}). We have not investigated further, but we see the possibility of preparing a ROP chain for this target. However, the XML parser does not accept arbitrary hexadecimal characters, so we’re limited within the Unicode range.

\begin{figure}[h!]
    \includegraphics[width=1\textwidth]{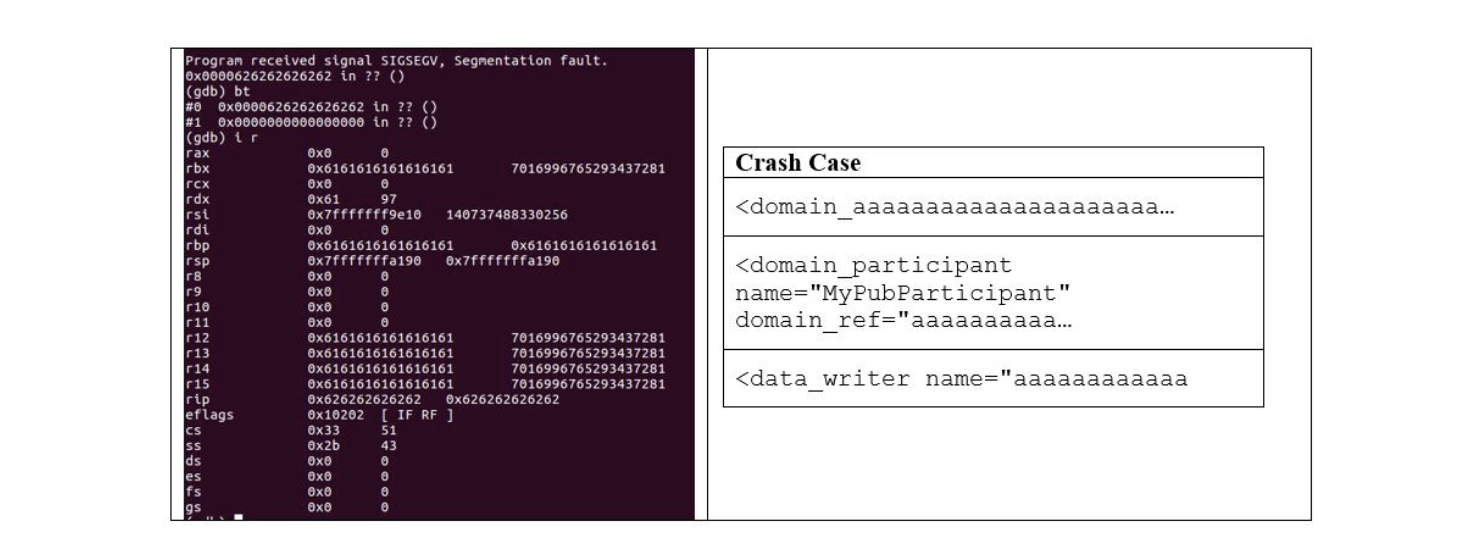}
    \centering
    \caption{Register state and sample crash cases in RTI Connext DDS Connector (Python bindings) found with RADAMSA from the original XML configuration file. Reproduced from \cite{maggi2022security}.}
    \label{figure:radamsa_crash}
\end{figure}

A variant of this vulnerability is in RTIXMLDtdParser\_getElement(), which does not properly validate the length of an element prior to copying it to a fixed-length stack buffer.

\paragraph{Unmaintained XML Parsing Libraries}

Unfortunately, we discovered that Gurum DDS uses ezXML, an open-source XML library that has been in beta status since 2006 that has never been updated. The mailing list of the project has been silent since 2010, showing that no users are actively discussing it. The latest version was 0.8.6, but an inspection of the binary code revealed that the developer changed the version number to 1.0.0, which was the only change. EzXML currently has 16 known vulnerabilities (eight in 2021), all with medium to high severity ratings and have never been fixed. Probably because of its small footprint, we discovered that ezXML is also used in many embedded applications like router firmware, and has hundreds of downloads per week. We have reached out to Gurum DDS several times — about this and other vulnerabilities — since the summer of 2021 and have received no response.

\subsection{Internet-wide Scanning for RTPS Endpoints}

We wanted to demonstrate how an attacker could leverage the RTPS built-in discovery protocol for automated, large-scale reconnaissance of RTPS/DDS endpoints. We found hundreds of exposed services as a byproduct, which was unexpected. Understanding that RTPS/DDS was designed for local-network applications, we did not expect to find more than a couple of endpoints exposed by mistake. Not only did we find several hundreds, but 35 of them have never stopped sending responses to our scanner despite the fact that we only sent them one single RTPS packet.

After trying to use readily available internet scanning services (such as Shodan, Censys, and LeakIX), we ended up implementing our own scanning prototype because of the intricacies of the RTPS discovery phase. This makes it a bit convoluted to correctly fingerprint a service.
 
\begin{figure}[h!]
    \includegraphics[width=1\textwidth]{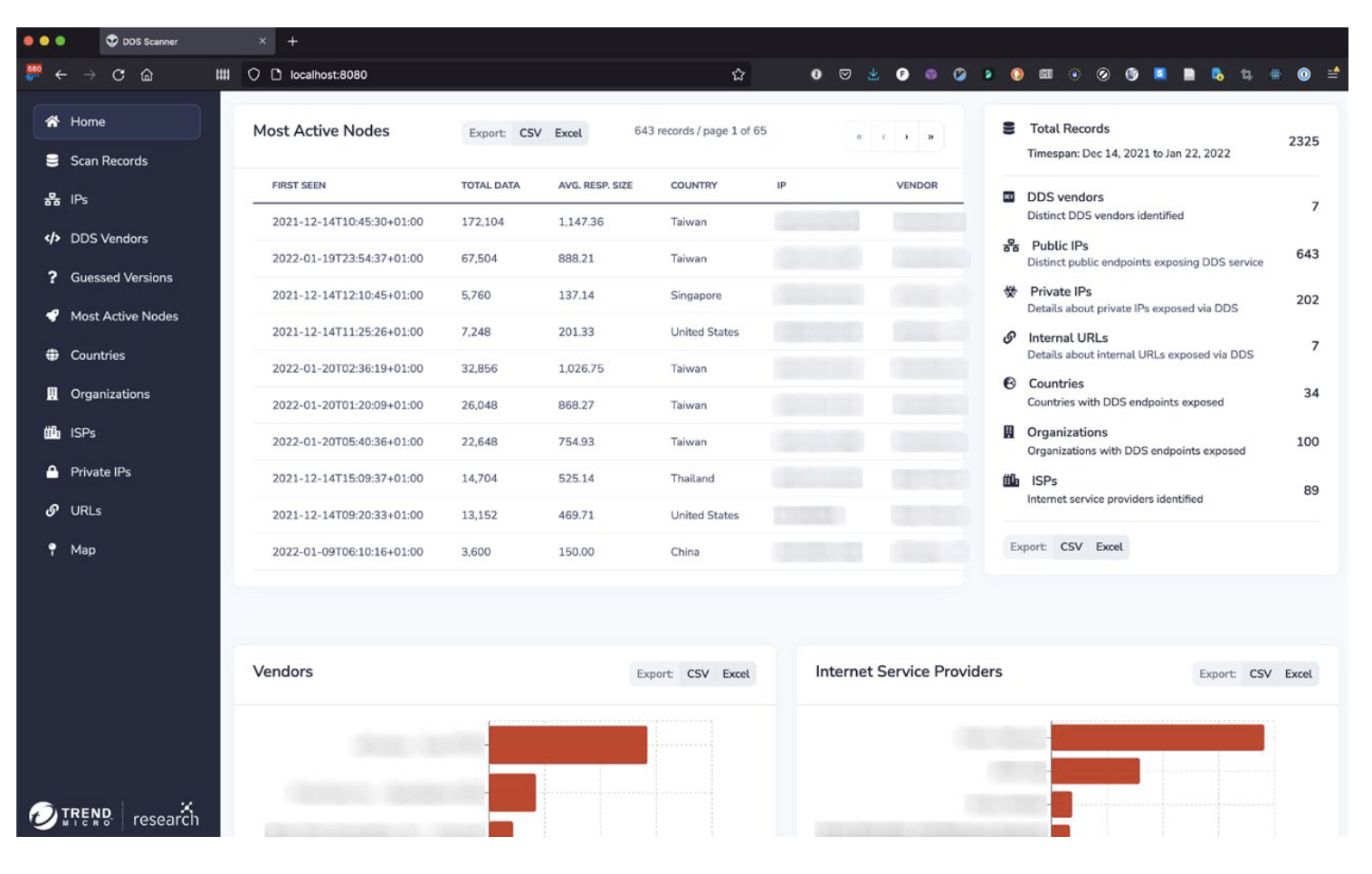}
    \centering
    \caption{The dashboard of our DDS-scanning system allows analysts to explore the data. User interface is not publicly available. Reproduced from \cite{maggi2022security}.}
    \label{figure:dds_scan_world}
\end{figure}

\subsubsection{Challenges of RTPS/DDS Reconnaissance}

In summary, the main challenges of RTPS/DDS reconnaissance are:
\begin{itemize}
    \item Dynamic and arbitrarily large port range. Depending on the number of participants in a RTPS/DDS network, there can be tens of thousands of ports to check. The formula to calculate the port is defined in the specifications \cite{rtps25}, and each implementation has distinct defaults, as shown in Table \ref{table:dds_udp_ports}.

    \item Latency and connectionless nature. Although RTPS/DDS are transport-agnostic, the de facto standard is to use UDP, which makes efficient scanning techniques useless. To verify if there is a valid RTPS/DDS endpoint bound to a given address (IP and UDP port), we need to wait for an answer, which may or may come within a few seconds. Given the size of the public IPv4 space, it’s impractical to wait for answers upon each request.

    \item Addressing information at application layer. Addressing information is exchanged at the application layer. Sending a valid RTPS packet to the correct UDP port (for example, the default 7400 discovery port) does not guarantee a response, even if there is an RTPS service running. To trigger a response, the RTPS discovery packet must include correct locator information (for instance, IP and UDP port), which will receive a response.
\end{itemize}

\begin{table}[h!]
    \includegraphics[width=1\textwidth]{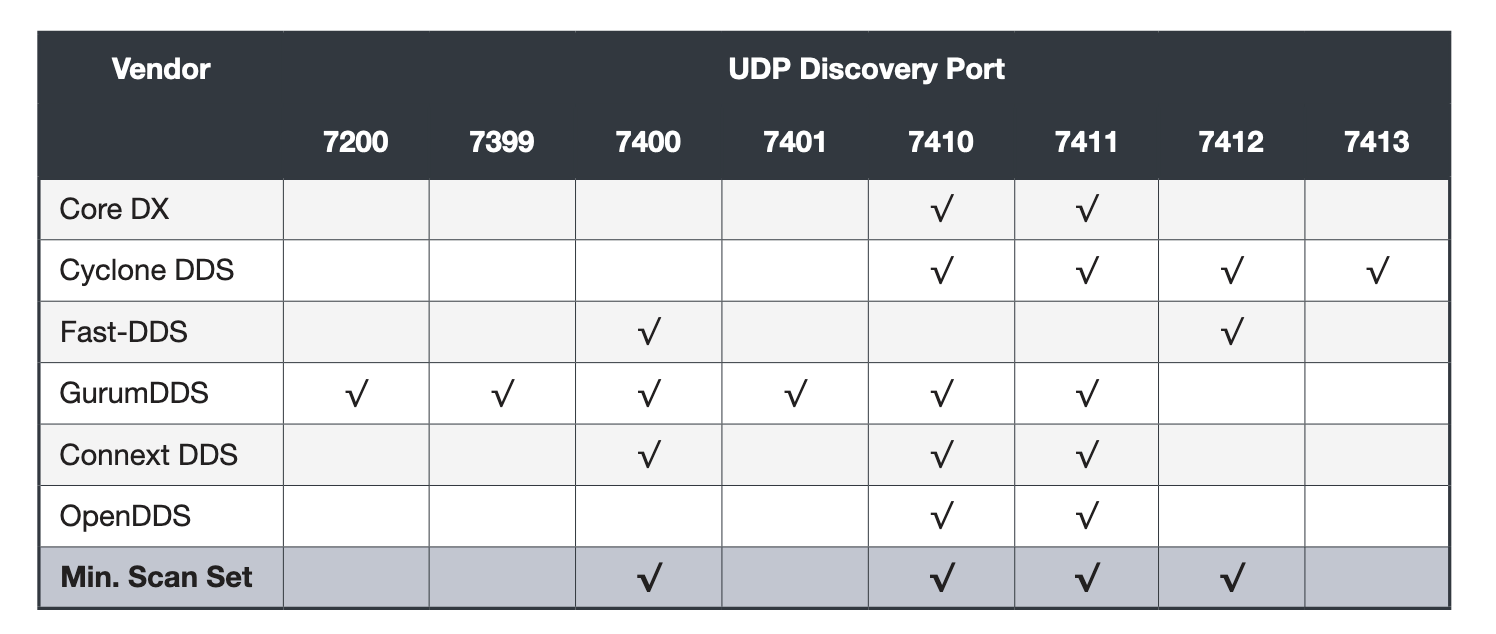}
    \centering
    \caption{Assuming up to one DDS domain and at least one participant, we tested the open UDP ports of each of the six reference implementations. All those marked with checks could detect any of the identified ports just by scanning for four ports listed on the last row of the table. Reproduced from \cite{maggi2022security}.}
    \label{table:dds_udp_ports}
\end{table}

\subsubsection{Scanning Approach}

Given the challenges mentioned in the previous section, we implemented a distributed scanning system (see Figure \ref{figure:amplification_discovery}) that we first validated in a private network against all six DDS implementations, under the simplifying assumption that the developer would not go too far from the “default” set of ports listed in Table \ref{table:dds_udp_ports}.

\begin{figure}[h!]
    \includegraphics[width=1\textwidth]{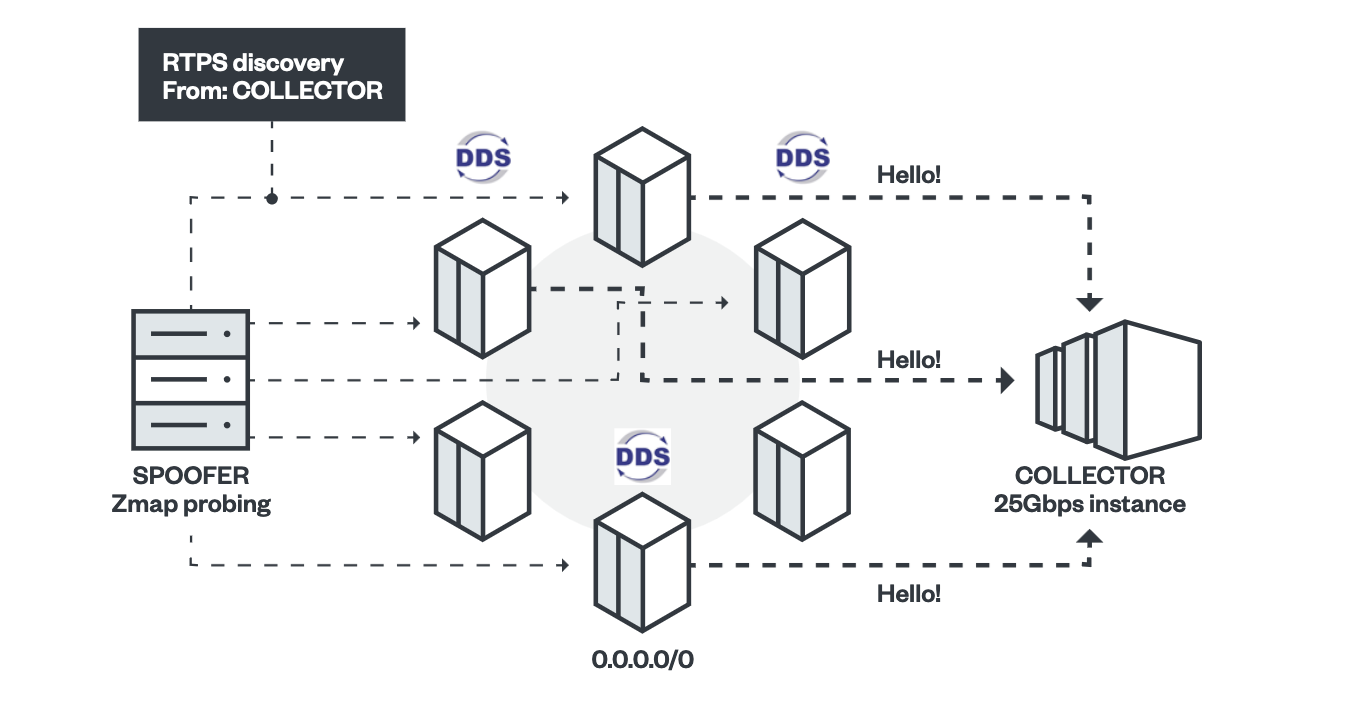}
    \centering
    \caption{We used spoofed RTPS discovery messages sent via ZMap to collect answers from valid DDS endpoints and filtered echoed and invalid responses. Reproduced from \cite{maggi2022security}.}
    \label{figure:dds_scans_spoofed}
\end{figure}

As shown in Figure \ref{figure:rtps_exchange}, we created a template RTPS packet (using our Scapy RTPS layer, as shown in Figure \ref{figure:rtps_exchange}) with a parametric locator IP and port number. We then generated an actual RTPS discovery packet by fixing the locator IP and port numbers according to the collector that we set up to receive the (reflected) packet. The collector will know what packet to expect given its IP and port number. Since there are several honeypots that simply reply to every request by echoing traffic they receive, the collector filters these “echoed” packets and keeps only valid responses. The collector checks if a received RTPS packet is valid by using the Scapy layer to dissect it and checks whether the globally unique identifier field (GUID) is new.

\begin{figure}[h!]
    \includegraphics[width=1\textwidth]{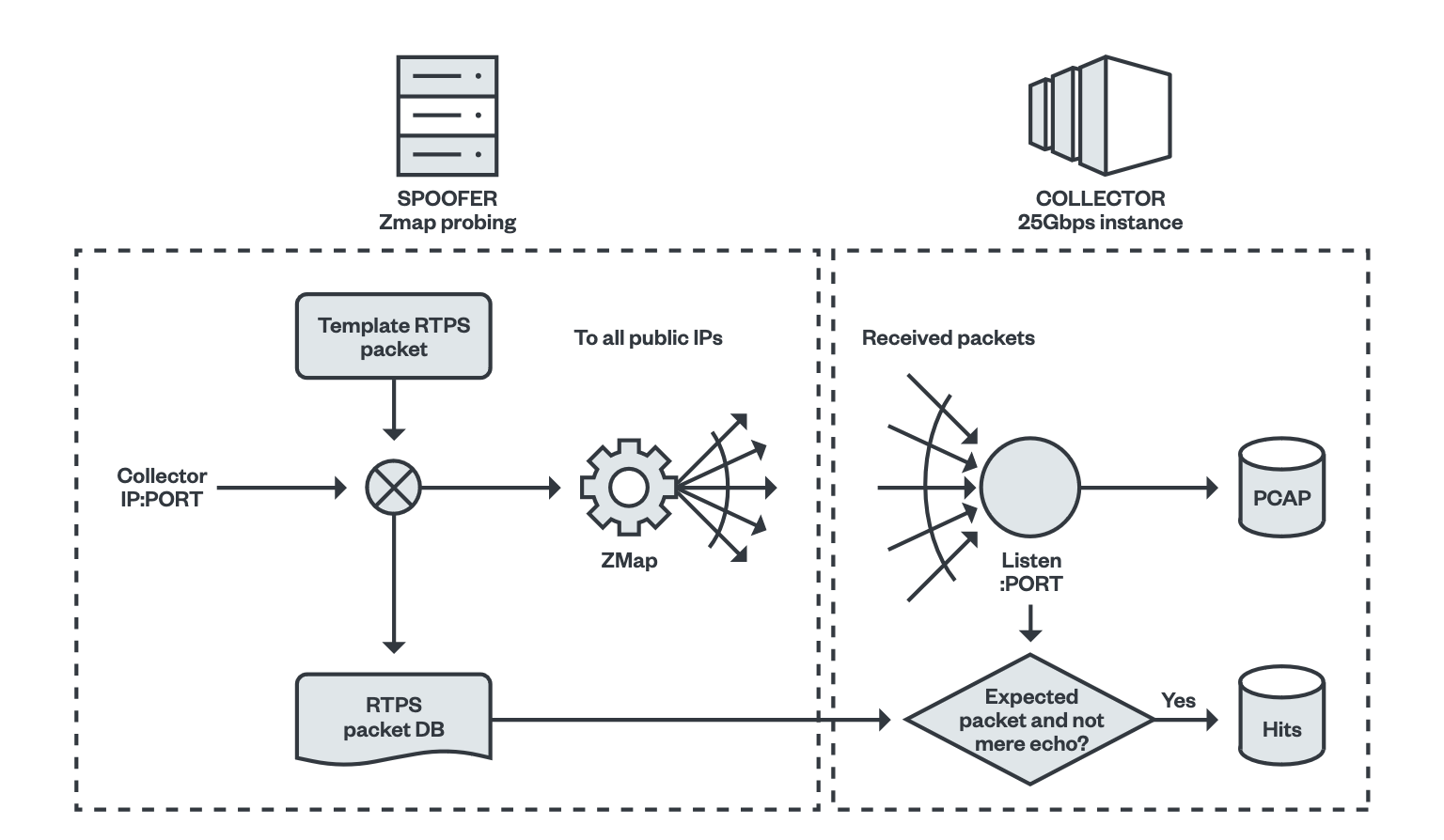}
    \centering
    \caption{Starting from a template RTPS packet, the spoofer instantiates it for a given collector (IP and port) and sends it out via ZMap. The collector uses the RTPS packets sent out to decide whether the received packets are valid and not simply duplicates of what was sent out. Reproduced from \cite{maggi2022security}.}
    \label{figure:dds_scans_methodology}
\end{figure}

\FloatBarrier

\section{Conclusion}

This chapter has provided an in-depth analysis of robot software with particular focus on the the Data Distribution Service (DDS) standard and its implementation across various critical sectors, revealing significant security vulnerabilities and widespread exposure of DDS services. Through rigorous research methodologies, including network and binary fuzzing, our investigation uncovered numerous vulnerabilities, resulting in the issuance of new CVE IDs. The findings highlight the urgent need for continuous security testing and robust supply chain management in DDS and similar technologies.


The extensive exposure of DDS services on public-facing networks and the discovery of vulnerabilities in standard specifications call for immediate and long-term mitigation strategies. As we move forward, the focus should be on enhancing the security frameworks of DDS implementations and advocating for more secure practices in the development and deployment of critical middleware technologies. This research serves as a foundation for future work in securing robot software and other similar systems.
    
    \chapter{Attacking Robot Hardware}
\label{chapter:attacking_hardware}

\epigraph{\textbf{Contributions}: \emph{Leading author at \cite{mayoral2021hacking}. Self-funded work for myself and the team participating. All findings have been publicly disclosed.}}{}


\section{Introduction}

\showkeysfalse  
\label{sec:introduction}
\showkeystrue  

Robotics is the art of system integration, as noted by \cite{mayoral2017shift}. Building a robot requires one to carefully select components that exchange information across networks while meeting timing deadlines. In a way, a robot is a network of networks. One that comprises sensors to ``read'' the world, actuators to produce a physical change, and one for dedicated computational resources to process it all and respond coherently, in time, and according to its application. Roboticists often conceive the robot not as one of its parts, but as the complete system including all of its components, whether they are assembled under the same structure or physically distributed. In the case of a robotic manipulator, these robots are often presented physically distributed and include the robot arm mechanics (which generally include actuators and sensors), the \ac{HMI}  or teach pendant, the controller (the main compute substrate for reasoning), and any additional safety mechanism related to the robot operation. The robotic system is thereby the \textit{composition} of all these sub-systems and networks.

Under such system integration complexity, it is not uncommon for one of the robot sub-components to fail over time, often leading to the complete system malfunction. Given the high price point of robots, it is reasonable to consider the need for repairing these machines, often replacing individual faulty components for continued operation, or simply for re-purposing them. The European Commission (EC) showed early interest on this topic in a report by~\cite{cordella2019analysis} evaluating different scoring systems for repairing and upgrading different consumer-oriented products, including robots. More recently, and as part of the Circular Economy Action Plan~\cite{eccircularap}, the EC has shown commitment towards establishing a new \emph{Right to Repair} in the context of reviewing directive 2019/771. \cite{hatta2020right} summarizes major events in the U.S. with regard the \emph{Right to Repair} and highlights that it wasn't until 2012 that the Automotive \emph{Right to Repair} passed in Massachussets, empowering customers with tools to fight planned obsolescence. Hatta summarizes how material obsolescence works:
\begin{itemize}
    \item Making items difficult to repair (by raising the cost of repair, requiring special tools, etc.)
    \item Failing to provide information (for instance, manuals are not provided)
    \item Systematic obsolescence (making parts among models incompatible or making it impossible to fix newer models with parts from the older models)
    \item Numbering (frequently changing the model numbers to make it psychologically less attractive to use old models)
    \item Legal approaches (prohibiting access and modification to the internal structure of products by means of copyrights and patents)
\end{itemize}
\cite{hatta2020right} noticed that, similar to Ford in the 1920s, most robot manufacturers follow several of these practices nowadays and organize dealers (often called distributors) or approved system integrators into private networks, providing repair parts only to \emph{certified} companies in an attempt to discourage repairs and evade competition. Amongst the most recent examples we observe an interesting development from \href{https://www.teradyne.com/}{Teradyne}, where two of its owned robotics companies (\href{https://www.universal-robots.com}{Universal Robots} and \href{https://www.mobile-industrial-robots.com}{Mobile Industrial Robots}), follow this practice. The case of Teradyne is of special interest because its robots are advertised as collaborative, that is: designed to augment human capabilities by closely (physically) cooperating without causing any harm. Past research however hints that the lack of security measures in these robots leads to safety hazards, as concluded by~\cite{kirschgens2018robot, mayoral2019industrial, taurer2019case}.

Cybersecurity in robotics is still on its early stages, as demonstrated by~\cite{quarta17:robosec,kirschgens2018robot,pogliani20:detecting}. Therefore, as in many other fields, it remains addressed mostly in disconnected silos. With most efforts concentrated in IT, hardware security in robotics has received very limited attention. Building secure robots, however, demands consideration throughout domains (hardware, firmware, OS, application, network, cloud, etc.), \cite{vilches2018introducing}, and across the robot lifecycle,~\cite{mayoral2020devsecops}.

The present chapter introduces and promotes robot teardown as a systematic \emph{process}
to repair robots, improve robot hardware and research its security. We advocate against the business priorities set in industry, which avoid repairs and planned obsolescence. Instead, we advocate for a \emph{Right to Repair} in robotics as a means to reduce robot e-waste and recycle components, both across robots and throughout use-cases. Ultimately, we argue that, in the long run, the more researchers and practitioners will learn to systematically teardown robots, the more this practice will impact the quality assurance of hardware in robotics, putting pressure on manufacturers to produce robots with better (hardware) security measures, thereby safer. Our contributions are fourfold: first, we discuss the empirical results of three robot teardowns performed on popular industrial collaborative robots and uncover various quality, security, and safety flaws in the process. Second, we demonstrate how, as a result of a teardown, we gain repairing capabilities in the robots. This leads us to acquire means to mitigate security flaws early, by simply extending the robotic system with additional, off-the-shelf hardware elements that increase the overall cybersecurity posture with a minimal cost impact. Third, we show how teardown helps pinpoint security vulnerabilities across internal and external robot networks while discussing some of them. Fourth, we show evidence of planned obsolescence practices in robotics on leading industrial collaborative robots and demonstrate how by applying minor fixes, we managed to bypass the obsolescence limitations obtaining full control of the hardware across subsequent releases.


The content below is organized as follows: Section \ref{sec:teardown} describes the robot teardown process in three different robots and Section~\ref{sec:reversing} the posterior reversing exercise to gain repairing capabilities. Section~\ref{sec:obsolescence} argues about the obsolescence indicators encountered and demonstrates how to bypass them as a result of the results in previous sections. Finally, Section~\ref{sec:conclusions} summarizes our work and draws some conclusions.

%
%
%
%



\section{Robot teardown}

\showkeysfalse  
\label{sec:teardown}
\showkeystrue  

A teardown is the process of taking apart a product to understand how it is made and works. More formally, it is the approach to modeling the functional behavior and physical components of a product, as described by~\cite{younis2010reverse, skorobogatov2017deep, tutunjireverse}. Robot teardown is thereby the process to study robot hardware architectures through systematic disassembly to understand how the robot works and what physical sub-systems compose it.


The motivation behind teardowns was previously researched by other groups such as~\cite{kohlweiss2020integration, sandborn2006using}. In robotics, we identify three key purposes: a) dissection and analysis to evaluate the status of a product, b) competitive benchmarking against similar products, and c) gain engineering experience and knowledge. This chapter focuses on a) and c). Particularly, we show three case studies on the robots from Universal Robots (UR) and Mobile Industrial Robots (MiR). Our motivation for selecting these targets is twofold: first, these robots are arguably widely used across use cases in the professional and industrial environments, with tenths of thousands of units sold \cite{teradyne_growth} and operating in close contact with humans (as collaborative robots). Second, past research by~\cite{hackingbeforeskynet, hackingbeforeskynet2, vilches2018aztarna, zhu2021cybersecurity, mayoralrvd, taurer2019case} has shown a lack of security concern and readiness from these two manufacturers making them attractive targets for adversaries aiming to disrupt industrial processes or causing injuries as reported by~\cite{kirschgens2018robot}. Disruption-based attacks, unfortunately, continue to be the most effective leverage used by financially driven threat actors such as DarkSide\footnote{\url{https://www.intel471.com/blog/darkside-ransomware-shut-down-revil-avaddon-cybercrime}}, just to name the most damaging and recent one (as of June 2021).

Based on common teardown practices \cite{kohlweiss2020integration, sandborn2006using}, we present in Figure \ref{figure:teardown_methodology} our teardown methodology for robots. The process involves 5 steps: \textbf{1.} Identifies the purpose and scope of the teardown exercise. \textbf{2.} Prepares for the teardown gathering required tools for documentation and dissasembly. \textbf{3.}  Examines the supply chain identifying how to acquire parts, what's the installation process and who's entitled for repairs, including costs and liabilities. \textbf{4.} Takes apart the robot, documenting each step and avoiding the damage of any component. \textbf{5.} Extracts relevant data (e.g. firmware version) from each robot component, constructs a \acp{BOM} and gathers additional information by researching public resources. To the best of our knowledge, we are the first to propose and document a teardown approach for industrial robotic products. The following subsections provide a walk-through on three case studies and discuss the most interesting findings on each one of them.

\begin{figure}[!h]
    \centering
    \begin{tikzpicture}[
        scale=0.9,
        >=latex,thick,
        /pgf/every decoration/.style={/tikz/sharp corners},
        fuzzy/.style={decorate,
        decoration={random steps,segment length=0.5mm,amplitude=0.15pt}},
        minimum size=6mm,line join=round,line cap=round,
        terminal/.style={scale=0.9,rectangle,draw,fill=white,fuzzy,rounded corners=3mm},
        nonterminal/.style={scale=0.9,rectangle,draw,fill=white,fuzzy},
        node distance=4mm
      ]
        \begin{scope}[start chain,
                every node/.style={on chain},
                terminal/.append style={join=by {->,shorten >=-1pt,
                    fuzzy,decoration={post length=4pt}}},
                nonterminal/.append style={join=by {->,shorten >=-1pt,
                    fuzzy,decoration={post length=4pt}}},
                support/.style={coordinate,join=by fuzzy}
            ]
            \node [nonterminal](reco)                  {1. Scope};
            \node [nonterminal](weapo)                 {2. Tooling};
            \node [nonterminal]                        {3. Supply chain};
            \node [nonterminal]                        {4. Dissasembly};
            \node [nonterminal](lateral)               {5. Info. gathering};
            \node [coordinate] (end)                   {};
        \end{scope}
    \end{tikzpicture}
    \vspace{1em}
    \caption{Our teardown methodology for robots}
    \label{figure:teardown_methodology}
\end{figure}
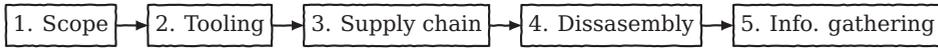




%
%


\subsection{Case Study 1: Teardown of an industrial collaborative robot}



\showkeysfalse  
\begin{figure*}[!h]
    \centering
    \begin{subfigure}[b]{0.3\textwidth}
        \includegraphics[width=\textwidth]{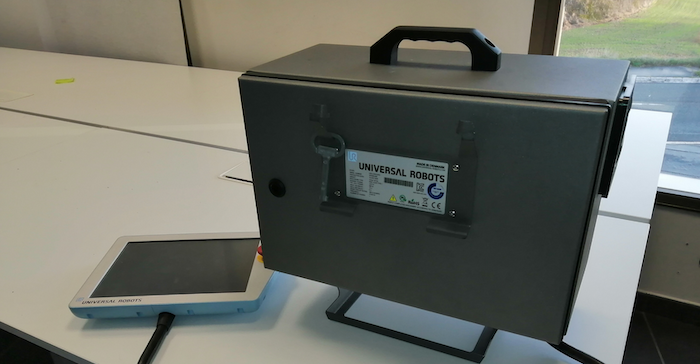}
        \caption{\footnotesize Universal Robots UR3 robot CB3.1 controller and teach pendant (HMI). Controller has a mechanical lock to secure physical access.}
        \label{fig:ur:1}
    \end{subfigure}
    ~ 
    \begin{subfigure}[b]{0.3\textwidth}
        \includegraphics[width=\textwidth]{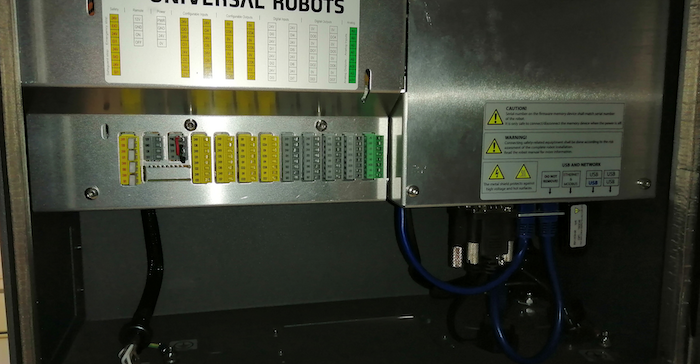}
        \caption{\footnotesize Inside the controller we learn about connectors and cables, which are exposed. The left side includes I/O and safety.}
        \label{fig:ur:2}
    \end{subfigure}
    ~
    \begin{subfigure}[b]{0.3\textwidth}
        \includegraphics[width=\textwidth]{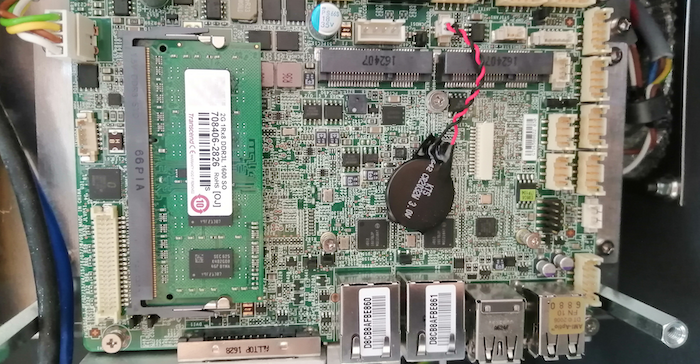}
        \caption{\footnotesize The main computer of the controller with a 2G DDR3L RAM module. Ethernet NICs are connected to controllers from Intel.}
        \label{fig:ur:3}
    \end{subfigure}
    \\[3ex]
    \begin{subfigure}[b]{0.3\textwidth}
        \includegraphics[width=\textwidth]{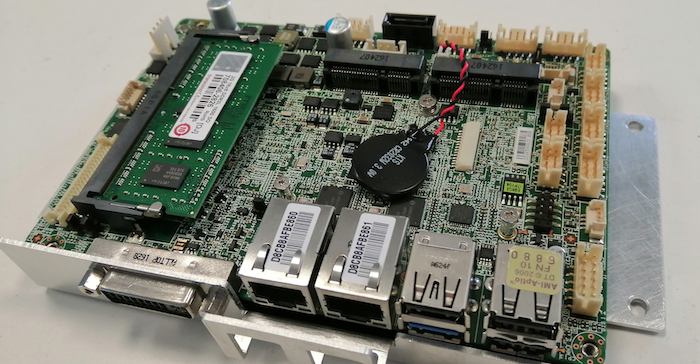}
        \caption{\footnotesize No secondary memory is located on the \ac{PCB} besides minor non-volatile memories and the USB stick we found connected outside.}
        \label{fig:ur:4}
    \end{subfigure}
    ~
    \begin{subfigure}[b]{0.3\textwidth}
        \includegraphics[width=\textwidth]{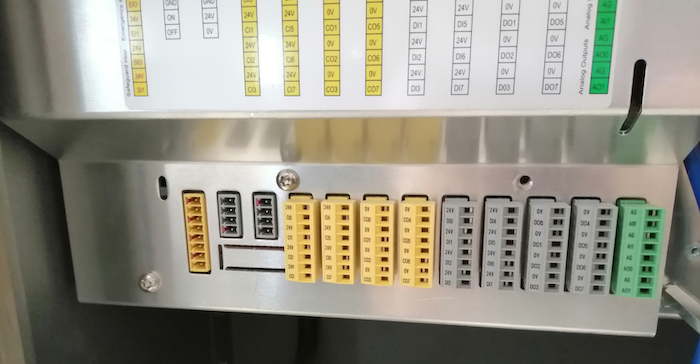}
        \caption{\footnotesize The safety side of the controller (documented in the user manuals) includes quick connectors which can be removed by carefully wiggling them out.}
        \label{fig:ur:5}
    \end{subfigure}
    ~
    \begin{subfigure}[b]{0.3\textwidth}
        \includegraphics[width=\textwidth]{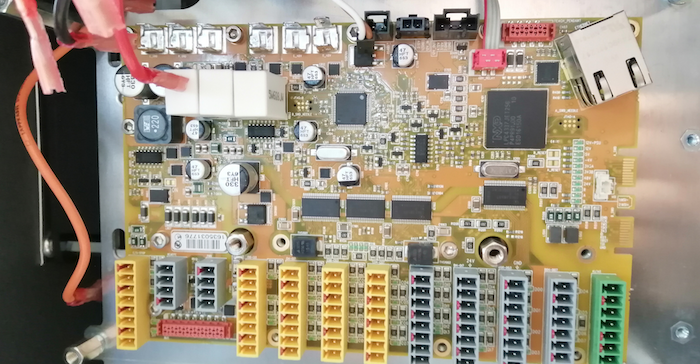}
        \caption{\footnotesize After removing the metal shields, the safety board electronics are fully visible. The main logic is driven by an NXP LPC4437JET256 \ac{MCU}.}
        \label{fig:ur:6}
    \end{subfigure}
    \\[3ex]
    \begin{subfigure}[b]{0.3\textwidth}
        \includegraphics[width=\textwidth]{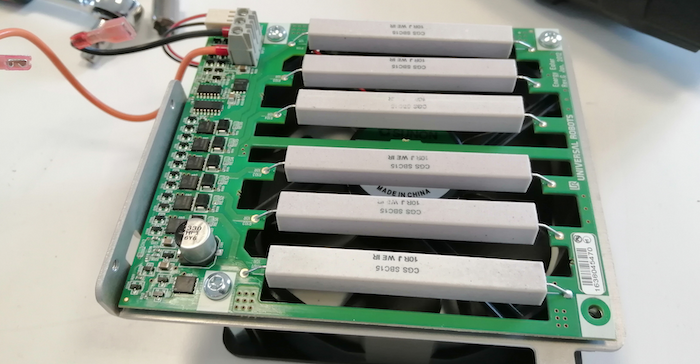}
        \caption{\footnotesize The energy-eater board. This component tends to overheat a fair bit and should generally be checked in case of failure for signs of degradation.}
        \label{fig:ur:7}
    \end{subfigure}
    ~
    \begin{subfigure}[b]{0.3\textwidth}
        \includegraphics[width=\textwidth]{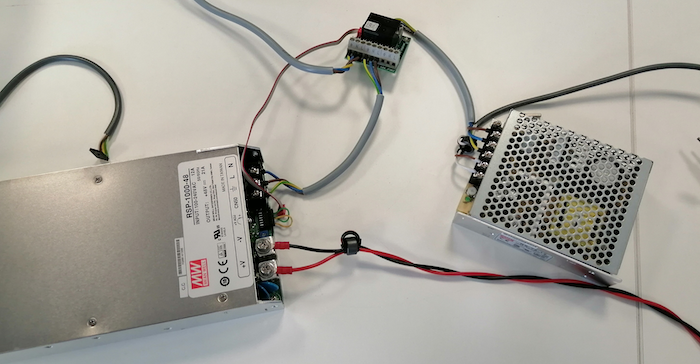}
        \caption{\footnotesize A safety relay and two \acp{PSU} identified, one for the compute logic (12V) and another one to power the actuators (48V).}
        \label{fig:ur:8}
    \end{subfigure}
    ~
    \begin{subfigure}[b]{0.3\textwidth}
        \includegraphics[width=\textwidth]{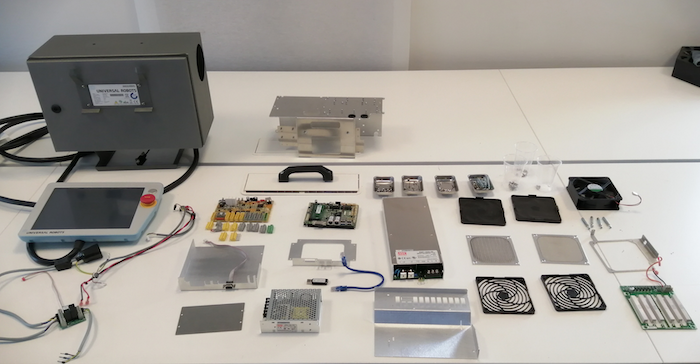}
        \caption{\footnotesize Final figure depicting all the components contained inside of the Universal Robots UR3 CB3.1 controller, leaving aside the teach pendant.}
        \label{fig:ur:9}
    \end{subfigure}
    \vspace{1em}
    \showkeystrue  
    \caption{UR3 collaborative robot teardown. Adapted from \cite{mayoral2021hacking}.}
    \label{figure:ur:teardown}
\end{figure*}

Figure~\ref{figure:ur:teardown} shows a selection of images obtained from the complete teardown of the UR3 CB3.1 industrial collaborative robot. Our goal is to show how a systematic teardown can lead to understanding how to obtain repairing capabilities of the complete robot, including the controller (i.e., the ``brain'' of the robot), teach pendant, and robot arm mechanics. We put particular emphasis in the CB3.1 controller since most safety-related electronics reside in there. The total time spent in the teardown is of five hours and 30 minutes.

An interesting observation is depicted in Figure \ref{fig:ur:6}, which displays that the compute substrate in charge of implementing the safety logic is the NXP LPC4437JET256 microcontroller. While doing hardware reconnaissance we found the following excerpt within the part datasheet \cite{LPC435xdatasheet} of the corresponding microntroller:

\begin{quote}
    \textbf{Suitability for use} — NXP Semiconductors products are \emph{\textbf{not designed}, authorized or warranted to be suitable for use in life support, life-critical or \textbf{safety-critical systems} or equipment, nor in applications where failure or malfunction of an NXP Semiconductors product can reasonably be expected to result in \textbf{personal injury}, death or severe property or environmental damage}. NXP Semiconductors and its suppliers accept no liability for inclusion and/or use of NXP Semiconductors products in such equipment or applications and therefore such inclusion and/or use is at the customer’s own risk.
\end{quote}

\begin{observation}
\label{observation:ur3_nosafetycompute}
The microcontroller implementing the safety logic in the UR3 CB-Series robot controller is in fact not suitable for safety-critical systems according to the silicon vendor. Confusingly, the list of applications on the first pages of the datasheet includes industrial automation or motor control, which are typical safety-critical use cases.
\end{observation}

In other words, this observation leads us to question the quality and reliability of the safety implementation within robots of the CB-Series by Universal Robots. \cite{hatta2020right} indicated that vendors have historically opposed to teardowns under the argument that \emph{closed networks of dealers guarantee quality}. However, our first observation indicates the exact opposite. Third parties with the required technical expertise might be able to identify and pinpoint hardware components that don't meet the quality standards for the safety situations the robot may have to face, leading to an overall improved scenario for end-users.


%
%

\subsection{Case Study 2: Teardown of a next-gen industrial collaborative robot}
Following the CB-Series, we proceeded and disassembled one of the latest releases from Universal Robots, the UR3e, an e-Series. Figure \ref{figure:ure:teardown} depicts the complete process through selected images. The complete teardown of the robot took us a total time of three hours\footnote{Note that a) the robot is similar to the one studied in case study 1 and b) repeated teardowns of manipulators helped us optimize the time.}.
%
%
\showkeysfalse
\begin{figure*}[p]
    \centering
    \begin{subfigure}[b]{0.3\textwidth}
        \includegraphics[width=\textwidth]{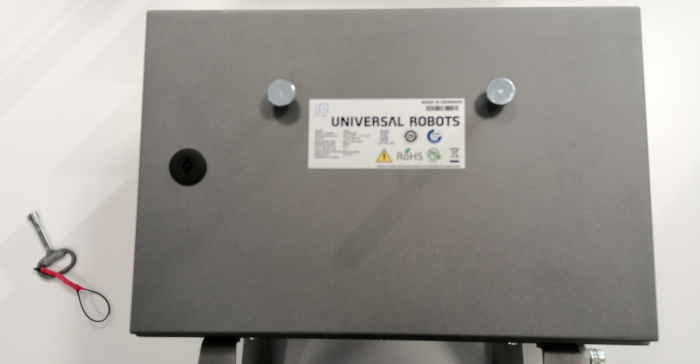}
        \caption{\footnotesize Universal Robots URe3 controller. The controller has a mechanical lock aimed to prevent ingress to the internals from non authorized parties.}
        \label{fig:ure:1}
    \end{subfigure}
    ~ 
    \begin{subfigure}[b]{0.3\textwidth}
        \includegraphics[width=\textwidth]{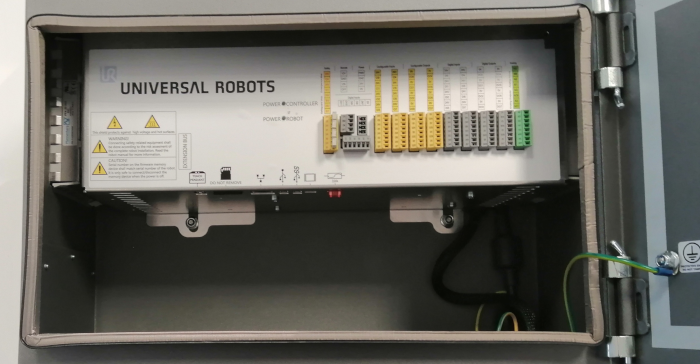}
        \caption{\footnotesize Inside the controller we can see various connectors and cables exposed. The right side includes I/O and safety, at the bottom USB, HMI and SD card.}
        \label{fig:ure:2}
    \end{subfigure}
    ~
    \begin{subfigure}[b]{0.3\textwidth}
        \includegraphics[width=\textwidth]{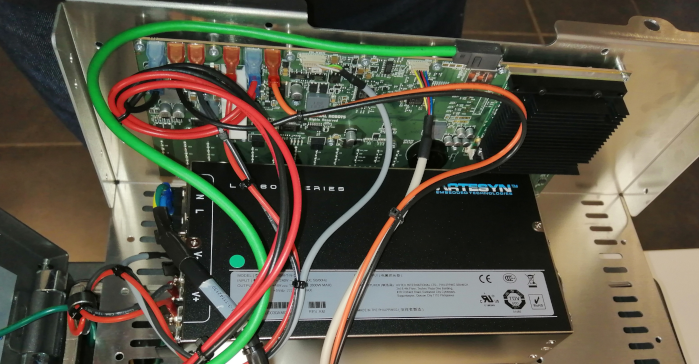}
        \caption{\footnotesize The main computer of the controller and the \ac{PSU} are affixed to the front plate. Given the real state available we miss some cable management. }
        \label{fig:ure:3}
    \end{subfigure}
    \\[3ex]
    \begin{subfigure}[b]{0.3\textwidth}
        \includegraphics[width=\textwidth]{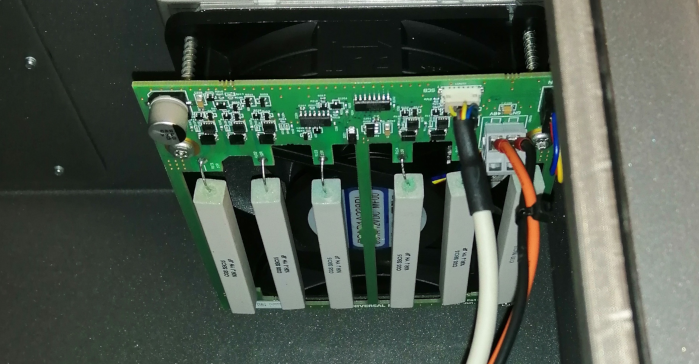}
        \caption{\footnotesize The energy-eater board. This component tends to overheat a fair bit and should generally be checked in case of failure for signs of degradation.}
        \label{fig:ure:4}
    \end{subfigure}
    ~
    \begin{subfigure}[b]{0.3\textwidth}
        \includegraphics[width=\textwidth]{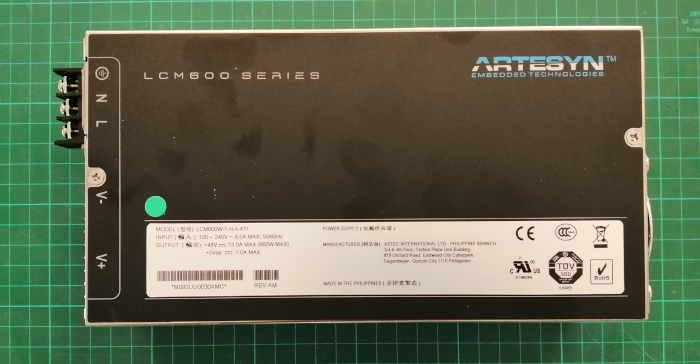}
        \caption{\footnotesize The \ac{PSU} is a Artesyn LCM600 series with an output of 48 V, and an input of 85–264 Vac. Has a typical full load efficiency of 89\% up to 600 watts. }
        \label{fig:ure:5}
    \end{subfigure}
    ~
    \begin{subfigure}[b]{0.3\textwidth}
        \includegraphics[width=\textwidth]{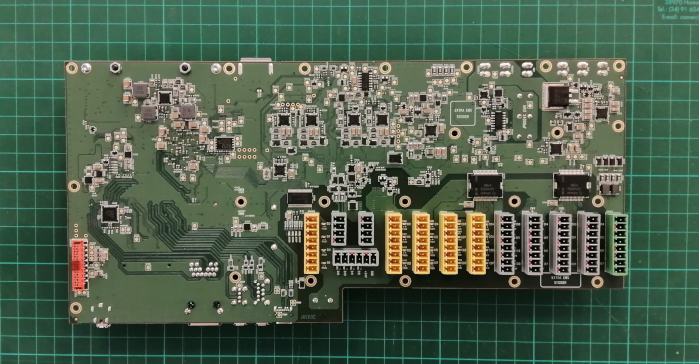}
        \caption{\footnotesize Unlike previous hardware iterations, the e-Series controller presents both the safety logic and the control logic merged into a single \ac{PCB}.}
        \label{fig:ure:6}
    \end{subfigure}
    \\[3ex]
    \begin{subfigure}[b]{0.3\textwidth}
        \includegraphics[width=\textwidth]{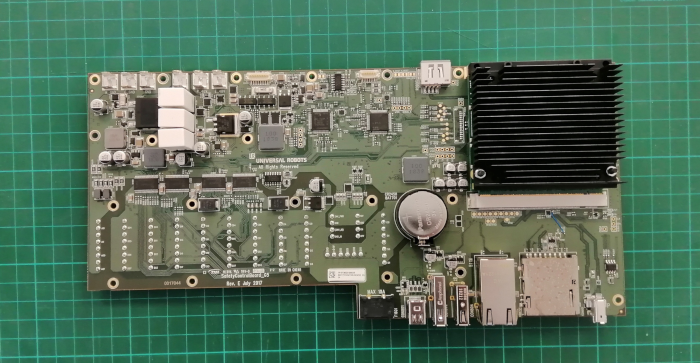}
        \caption{\footnotesize DC to DC power management takes place on the board unlike previous iteretaions. The positions of the relays may hinder transistor cooling }
        \label{fig:ure:7}
    \end{subfigure}
    ~
    \begin{subfigure}[b]{0.3\textwidth}
        \includegraphics[width=\textwidth]{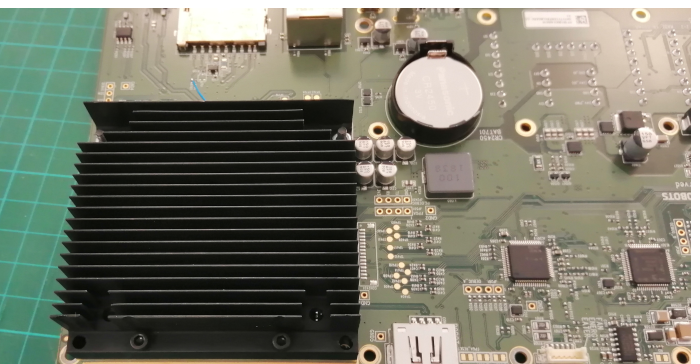}
        \caption{\footnotesize Under the heatsink we find a MSC Q7-BT module on an ECX form factor and an Intel SoC with DDR3L memory.}
        \label{fig:ure:8}
    \end{subfigure}
    ~
    \begin{subfigure}[b]{0.3\textwidth}
        \includegraphics[width=\textwidth]{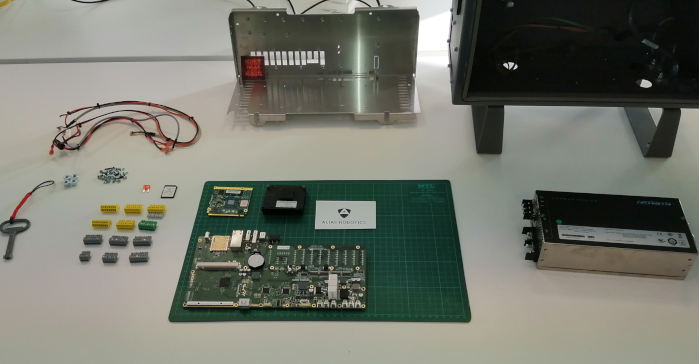}
        \caption{\footnotesize Final figure depicting all the components contained inside of the Universal Robots URe3 controller, leaving aside the teach pendant.}
        \label{fig:ure:9}
    \end{subfigure}
    \\[3ex]
    \begin{subfigure}[b]{0.3\textwidth}
        \includegraphics[width=\textwidth]{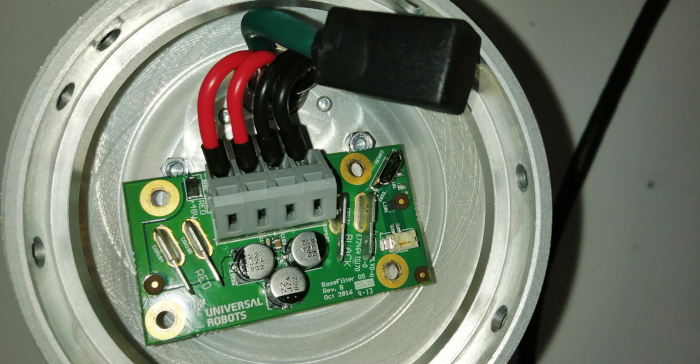}
        \caption{\footnotesize Connections coming from the controller lead to the 48V and micro-USB lines.}
        \label{fig:ure:10}
    \end{subfigure}
    ~
    \begin{subfigure}[b]{0.3\textwidth}
        \includegraphics[width=\textwidth]{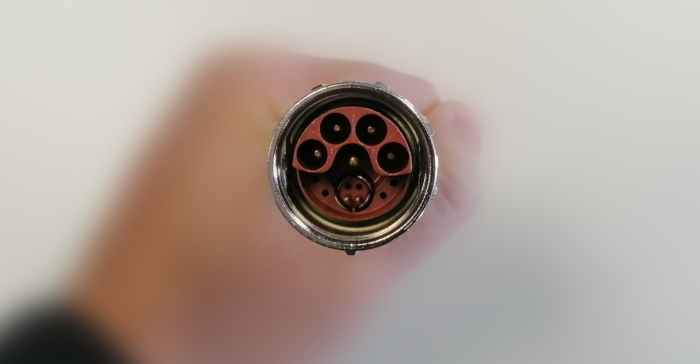}
        \caption{\footnotesize Connector has a different shape when compared to CB3 series}
        \label{fig:ure:11}
    \end{subfigure}
    ~
    \begin{subfigure}[b]{0.3\textwidth}
        \includegraphics[width=\textwidth]{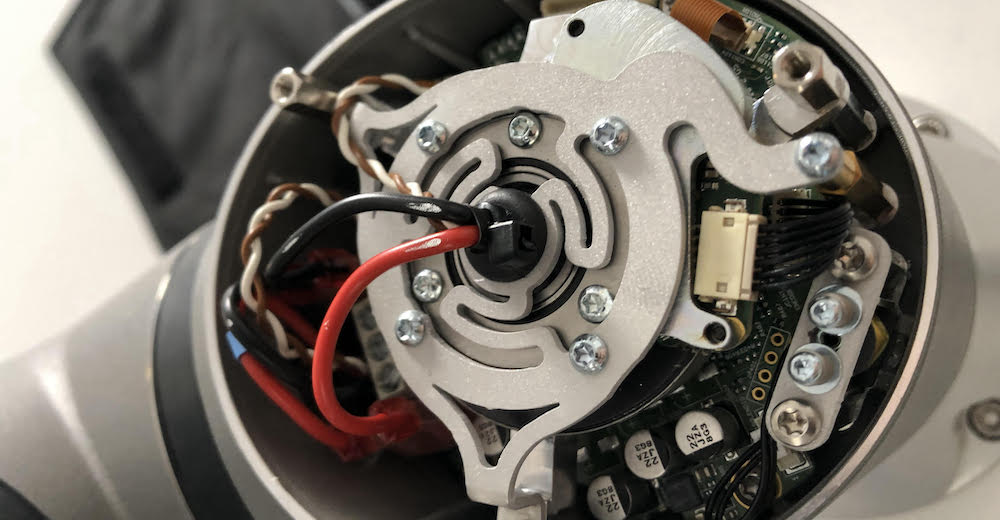}
        \caption{\footnotesize Harmonic drive actuators.}
        \label{fig:ure:12}
    \end{subfigure}
    \vspace{1em}
    \showkeystrue  
    \caption{UR3e collaborative robot teardown. Adapted from \cite{mayoral2021hacking}.}
    \label{figure:ure:teardown}
\end{figure*}
We observe how, while the overall outer look remains similar, the internals have suffered a significant change:



\begin{itemize}
    \item The e-Series controller integrates a single \ac{PSU}, as presented in Figure \ref{fig:ure:5}, while the CB-series had two (see Figure \ref{fig:ur:8}).
    \item While the CB-Series presented two boards containing compute, power, and safety logic (Figures \ref{fig:ur:3} and \ref{fig:ur:6}, respectively), the e-Series presents only one single \ac{PCB} named as ``SAFETYCONTROLBOARD'' and depicted in Figures \ref{fig:ure:6}, \ref{fig:ure:7}, \ref{fig:ure:8} and \ref{fig:ure:9}.
    \item Figure \ref{fig:ure:7} shows that the new \ac{PCB} includes a Xilinx Artix-7 series \ac{FPGA}, widely used for implementing safety logic in a variety of automotive and control domains, and a much more reliable compute substrate for safety-related tasks than a \ac{MCU}.
    \item Figure \ref{fig:ure:10} shows that the base filter \ac{PCB}---which helps interface power and RS485 communications from the controller (e-Series) to the robot arm mechanics---is similar to the one present in the CB-series. We also note that, while the arm mechanics connector changed in the e-Series (see Figure \ref{fig:ure:11}), power and communications lines remain coherent (through the base filter board).
    \item For the most part, the electronics contained in the arm mechanics (Figure \ref{fig:ure:12}) do not present relevant changes from an interoperability perspective. This facilitates re-purposing and reusing them (see Section~\ref{sec:reversing}).
\end{itemize}
Looking at the results of our teardown, we highlight the following observations.
\begin{observation}
\label{observation:ur3e_safety}
e-Series controllers from Universal Robots include a Xilinx Artix-7 series \ac{FPGA}, widely used for implementing safety logic in a variety of automotive and control domains, as noted by~\cite{hallett2015xilinx, gracic2016implementation, gracic2017evaluation}, a much more reasonable choice from a user's safety perspective.
\end{observation}

\begin{observation}
\label{observation:ur3e_comms_same}
While adopting different physical connectors, power and communication (RS485) lines remain coherent between CB-Series and e-Series. From the context of repairability, changing physical connectors is a clear planned obsolescense action.
\end{observation}



\showkeysfalse

\begin{figure*}[p]
    \centering
    \begin{subfigure}[b]{0.3\textwidth}
        \includegraphics[width=\textwidth]{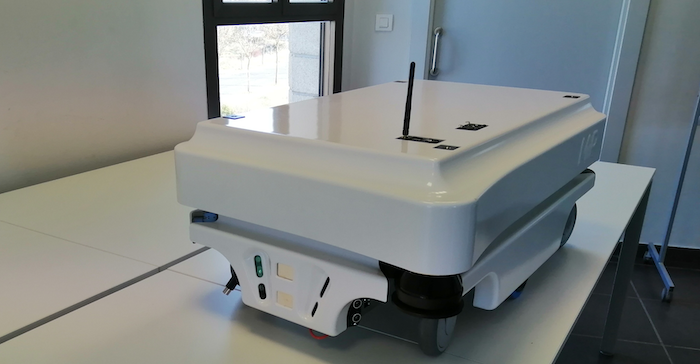}
        \caption{\footnotesize The top shell sits on top of a metal frame that protects all the electronic components. Simply lifting the top shell reveals the internal electronic components.}
        \label{fig:mir:1}
    \end{subfigure}
    ~ 
    \begin{subfigure}[b]{0.3\textwidth}
        \includegraphics[width=\textwidth]{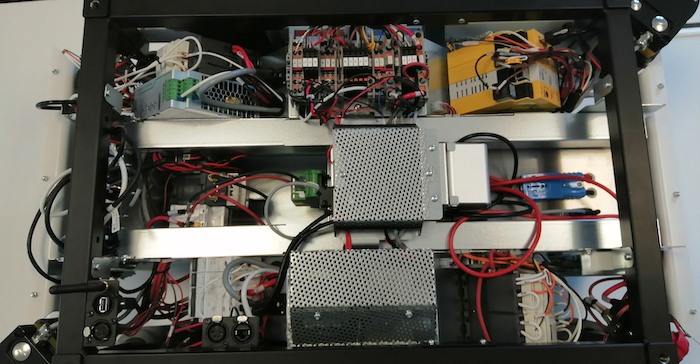}
        \caption{\footnotesize A circuit-breaker switch is present to disconnect the main power line going from the batteries to the rest of the robot. A quick-release connector is also present.}
        \label{fig:mir:2}
    \end{subfigure}
    ~
    \begin{subfigure}[b]{0.3\textwidth}
        \includegraphics[width=\textwidth]{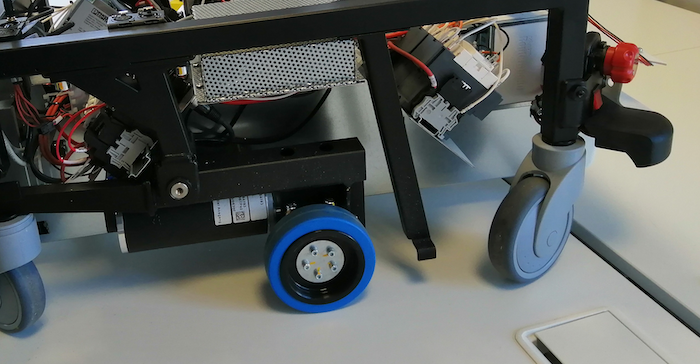}
        \caption{\footnotesize Plastic fenders are identified around the perimeter of the mobile robot to enclose and protect the internals. These are prone to crack under heavy mechanical stress.}
        \label{fig:mir:3}
    \end{subfigure}
    \\[3ex]
    \begin{subfigure}[b]{0.3\textwidth}
        \includegraphics[width=\textwidth]{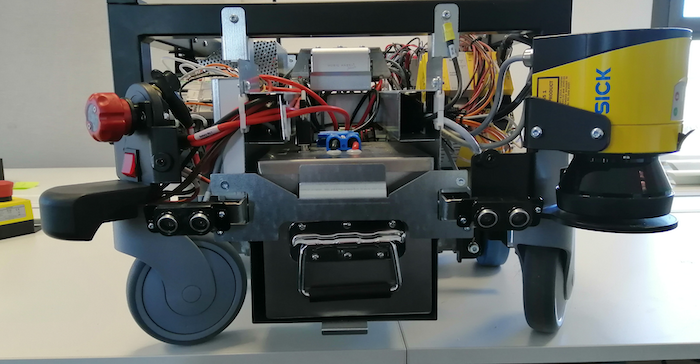}
        \caption{\footnotesize The battery is enclosed in a steel box and held by a retaining plate to prevent movement and connected to a DP9 connector and DC 24V wires.}
        \label{fig:mir:4}
    \end{subfigure}
    ~
    \begin{subfigure}[b]{0.3\textwidth}
        \includegraphics[width=\textwidth]{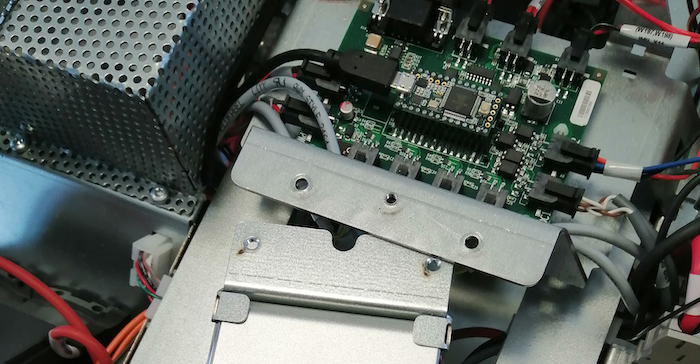}
        \caption{\footnotesize Under an RF cage we find a Teensy board for LED control and a third party speaker to play the sounds from the on-board controller.}
        \label{fig:mir:5}
    \end{subfigure}
    ~
    \begin{subfigure}[b]{0.3\textwidth}
        \includegraphics[width=\textwidth]{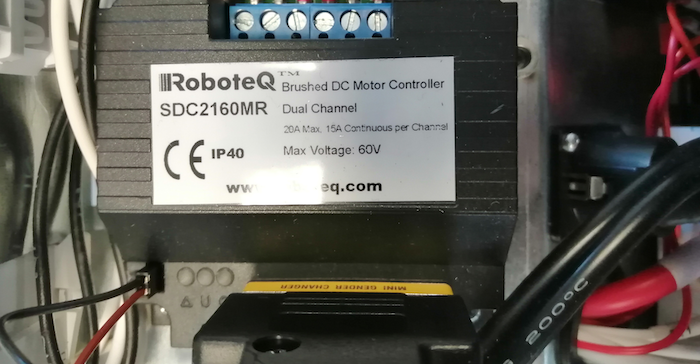}
        \caption{\footnotesize Under the sloped RF cage we find the Roboteq SDC2160, a dual-channel controller for brushed motors with no safe torque (safety) capabilities.}
        \label{fig:mir:6}
    \end{subfigure} 
    \\[3ex]
    \begin{subfigure}[b]{0.3\textwidth}
        \includegraphics[width=\textwidth]{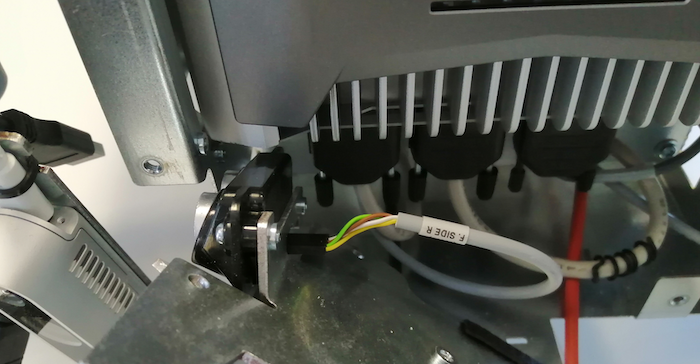}
        \caption{\footnotesize The on-board controller is the embedded and ``ruggedized'' EC70A-SU from DFI which features an Intel processor and a Ubuntu 16.04 file system.}
        \label{fig:mir:4}
    \end{subfigure}
    ~
    \begin{subfigure}[b]{0.3\textwidth}
        \includegraphics[width=\textwidth]{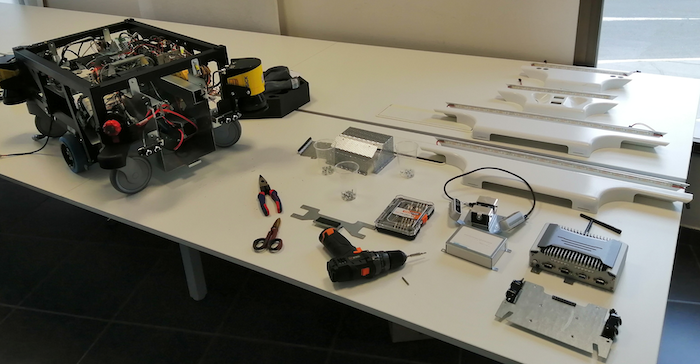}
        \caption{\footnotesize Final  figure  depicting  the main components  contained  inside  of  the  MiR100 robot, alongside the tools used for the teardown.}
        \label{fig:mir:5}
    \end{subfigure}
    ~
    \begin{subfigure}[b]{0.3\textwidth}
        \includegraphics[width=\textwidth]{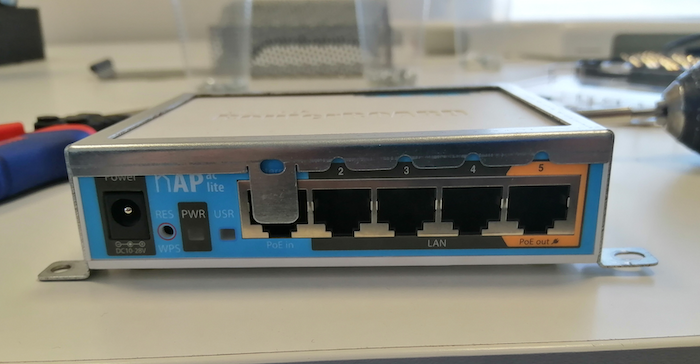}
        \caption{\footnotesize Both the safety \ac{PLC} and the on-board controller are connected to a Mikrotik hAP ac. A 2.4/5GHz dual-band omnidirectional access point.}
        \label{fig:mir:6}
    \end{subfigure}
    \\[3ex]
    \showkeystrue  
    \caption{MiR-100 mobile industrial robot teardown. Adapted from \cite{mayoral2021hacking}.}
    \label{figure:mir:teardown}
\end{figure*}

\subsection{Case Study 3: Teardown of a mobile industrial robot}

\showkeysfalse
\label{mirdissasembly}
\showkeystrue  


Figure \ref{figure:mir:teardown} depicts the teardown process of a MiR-100, a popular mobile robot manufactured by the Danish Mobile Industrial Robots (MiR), also owned by the US Teradyne. The teardown took four hours and 20 minutes approximately. The first impression is that various components of the robot  could be improved from a safety perspective, as highlighted in Figure \ref{fig:mir:4} or \ref{fig:mir:6}). Moreover, the teardown helped understand how this robot presents multiple (internal and external) networks and how each one of the sensors and actuators are connected across these networks, forming the data layer graph. One interesting finding resulting from the teardown is obtaining a better understanding of the robot's computational graph (the behavior itself). The robot itself is powered by \ac{ROS} (\cite{quigley2009ros}) and gaining further understanding of the \ac{ROS} computational graph requires understanding also its underlying hardware mapping (from which one derives the data layer graph). The teardown exercise yields exactly this, and allows to produce a data layer graph represented in the form of a hardware schematic which can then be used in combination with the computational graph to gain further understanding of the robot.

\begin{observation}
\label{observation:hardware_arch}
Teardown processes help determine the underlying networking architecture in a robot, from which the robot data layer graph can be inferred. Mapping the data layer graph to the computational graph (the robot behavior) is fundamental to gain better understanding of the robot and propose an appropriate security architecture.
\end{observation}


\section{Teardown-enabled security research}
\label{sec:reversing}

The previous section highlighted how teardown helped identify quality and safety issues in robots, as well as obtaining a better understanding of their architectures by matching each robot's data layer graph with their corresponding computational graph. Beyond this, we argue that robot teardown is also key for security research in robotics. Recall that safety and cybersecurity are very related and influence one another, as noted by~\cite{mayoral2020can, kirschgens2018robot, mayoral2019industrial}.

Teardown, as a process, is an essential part of a hardware reverse engineering task, and brings useful lessons and insights for the design of current and future robot systems.
Generally, teardown supports Kerckhoffs' principle in revealing all the details and weaknesses of a security system, excluding volatile secrets such as keys or credentials that are stored in memory and most likely disappear naturally once the power supply is taken away (with the exception of keys stored in permanent memory, which is generally discouraged, and would be discovered along the teardown process).
Overall, the history of proprietary systems violating Kerckhoffs' principle by pursuing ``security through obscurity'' is rich of failure cases (with the military domain as the sole exception), as a vast amount of related work demonstrates.

Reverse engineering has always been invaluable to discover vulnerabilities and develop remedies in many domains: network security (\cite{guha_network_1997}), access control (\cite{wang_towards_2008}), embedded systems (\cite{mcloughlin_secure_2008,rajendran_security_2013}), software engineering (\cite{lin_automatic_2010,treude_exploratory_2011}), or the internet of things (\cite{tellez_iot_2016}).
By promoting systematic teardown we want to extend this successful concept to the analysis of abandoned robots.

Particularly, and as part of this research, our group identified more than 100 security flaws across the three robots described above over a period of two years. Most of the flaws were cataloged as vulnerabilities and 17 obtained new Common Vulnerabilities and Exposures (CVE) IDs all of which are publicly disclosed at the \ac{RVD}, proposed by~\cite{mayoralrvd}. Table \ref{table:vulnerabilities} introduces some of the selected security vulnerabilities found. 
The information obtained through teardown helps pinpoint flaws across the multiple (internal and external) robot networks. In most cases, these robots present few or no security measures, allowing adversaries to easily exploit the flaws of internal components (e.g., \cite{rvd2558,rvd2561,rvd2562}, so compromising the robot behavior or taking full control of it.

We advocate for robot teardowns as a means to improve security in robotics and encourage manufacturers, integrators and end-users to carefully consider the underlying hardware architecture to protect their robotic systems. Similarly, we encourage teardowns as a tool to mitigate outstanding security flaws. Proper knowledge of the hardware helps determine which additional elements can help mitigate security issues when the manufacturer does not react. As an example, our group introduced an additional commercial off-the-shelf hardware firewall within MiR's internal network, between the main controller and the safety PLC, by SICK AG, mitigating \cite{rvd2558} without having to modify any parts of the firmware. This modification could enable users and system integrators frustrated with MiR's security policies to secure their robots directly.

\begin{observation}
\label{observation:security}
Teardown helps pinpoint security flaws across the multiple internal and external robot networks.
\end{observation}


\section{Lessons learned}
\label{sec:obsolescence}

Through this work we learned about the underlying hardware architectures that popular collaborative robots deploy today, researching security in the process. As a result of the teardown case studies described in Section~\ref{sec:teardown}, our group also identified several of the planned obsolescence indicators previously introduced in Section~\ref{sec:introduction}.  Planned obsolescence was particularly evident in the robots from Universal Robots. To further illustrate this, Figures~\ref{fig:ur_schematics} and \ref{fig:ure_schematics} depict the simplified electrical diagrams of the UR3 and UR3e robots. From an electrical point of view, these two robots present a similar layout for interfacing with the robot arm.

While we appreciate certain changes in the electronics, given the teardown results, we find no real reason why backwards or forward compatibility between controllers and robotic arms should not be possible. This would mean that existing customers with UR3 robots could repair and replace parts in either the controller or the robotic arm, without being forced to pay the premium price of buying a complete new set including both.

Unsurprisingly, we observe that the manufacturer introduced subtle changes meant to make this particular intent harder. One of such actions is depicted in Figure~\ref{fig:ure:11}, which shows the replacement of the controller-to-arm connector, which we can only justify with attempts to exercise obsolescence practices. Another of such actions includes the obscurity around the changes introduced in the UR3e robot arm itself. These changes can be summarized with the addition of an extra 6-axis force-torque sensor at the end of the robot. The \emph{exact} same result can be achieved in UR3 robot arms by adding commercial off-the-shelf robot components, gaining such capabilities.


\section{Conclusions}
\label{sec:conclusions}

In this chapter we presented robot teardowns as an approach to study robot hardware architectures, obtain repairing capabilities and research its security. We discuss the empirical results of three robot teardowns and the findings affecting quality and safety throughout the process. We then discuss how teardown is a relevant tool for security research in robotics which helps pinpoint security flaws early across the multiple internal and external networks in a robot. Moreover, we introduce our security findings and propose mitigations powered by the hardware know-how and repairing capabilities acquired. Ultimately, we research planned obsolescence practices in the robots from Teradyne and propose actions that could be taken to bypass obsolescence.

Our results show evidence that robot teardowns can help the robotics industry and supply chain by improving significantly quality, safety and security. Our findings extrapolate to most of the robots manufactured by Teradyne and its subsidiaries. We show concern for the currently growing trend in robotics to create private networks of certified groups, a common practice shown by   manufacturers like MiR or UR, both owned by Teradyne. This difficulties system integration, repairability and ultimately security. We advocate for a `\emph{Right to Repair}' in robotics and encourage end-users to reflect their needs into their supply chains and into the original upstream robot manufacturers.

\setlength{\tabcolsep}{10pt}
\renewcommand{\arraystretch}{1.6}

\begin{landscape}
\vspace*{\fill} 

    \begin{table}[h!]
        \centering
    	\resizebox{1.3\textwidth}{!}{ 
    		\begin{tabular}{|l|l|l|l|}
    			\hline
                \rowcolor{black}
                \color{white}\bf \textsc{CVE ID} & \color{white}\bf \textsc{RVD ID} & \color{white}\bf \textsc{Description} & \color{white}\bf \textsc{Report}  \\ 
                \hline
                CVE-2019-19626 & RVD\#1408  & Bash scripts (magic UR files) get launched automatically with root privileges and without validation or sanitizing &  \url{https://github.com/aliasrobotics/RVD/issues/1408} \\
                \hline
                CVE-2020-10290 & RVD\#1495  & Universal Robots URCaps execute with unbounded privileges &  \url{https://github.com/aliasrobotics/RVD/issues/1495} \\
                \hline
                CVE-2020-10267 & RVD\#1489  & Unprotected intelectual property in Universal Robots controller CB 3.1 across firmware versions &  \url{https://github.com/aliasrobotics/RVD/issues/1489} \\
                \hline
                CVE-2020-10266 & RVD\#1487  & No integrity checks on UR+ platform artifacts when installed in the robot &  \url{https://github.com/aliasrobotics/RVD/issues/1487} \\
                \hline
                CVE-2020-10265 & RVD\#1443 & UR dashboard server enables unauthenticated remote control of core robot functions & \url{https://github.com/aliasrobotics/RVD/issues/1443}\\
                \hline
                CVE-2020-10264 & RVD\#1444 & RTDE Interface allows unauthenticated reading of robot data and unauthenticated writing of registers and outputs & \url{https://github.com/aliasrobotics/RVD/issues/1444}\\
                \hline
                
                CVE-2020-10278 & RVD\#2561  & Unprotected BIOS allows user to boot from live OS image &  \url{https://github.com/aliasrobotics/RVD/issues/2561} \\
                \hline
                CVE-2020-10270 & RVD\#2557  & Hardcoded Credentials on MiRX00 Control Dashboard &  \url{https://github.com/aliasrobotics/RVD/issues/2557} \\
                \hline
                CVE-2020-10279 & RVD\#2569  & Insecure operating system defaults in MiR robots &  \url{https://github.com/aliasrobotics/RVD/issues/2569} \\
                \hline
                CVE-2020-10276 & RVD\#2558  & Default credentials on SICK PLC allows disabling safety features &  \url{https://github.com/aliasrobotics/RVD/issues/2558} \\
                \hline
                CVE-2020-10273 & RVD\#2560  & Unprotected intellectual property in Mobile Industrial Robots (MiR) controllers &  \url{https://github.com/aliasrobotics/RVD/issues/2560} \\
                \hline
                CVE-2020-10277 & RVD\#2562  & Booting from a live image leads to exfiltration of sensible information and privilege escalation &  \url{https://github.com/aliasrobotics/RVD/issues/2562} \\
                \hline
                CVE-2020-10269 & RVD\#2566  & Hardcoded Credentials on MiRX00 wireless Access Point &  \url{https://github.com/aliasrobotics/RVD/issues/2566} \\
                \hline
                CVE-2020-10275 & RVD\#2565  & Weak token generation for the REST API &  \url{https://github.com/aliasrobotics/RVD/issues/2565} \\
                \hline
                CVE-2020-10274 & RVD\#2556  & MiR REST API allows for data exfiltration by unauthorized attackers (e.g. indoor maps) &  \url{https://github.com/aliasrobotics/RVD/issues/2556} \\
                \hline
                CVE-2020-10271 & RVD\#2555  & MiR ROS computational graph is exposed to all network interfaces, including poorly secured wireless networks and open wired ones & \url{https://github.com/aliasrobotics/RVD/issues/2555} \\
                \hline
                CVE-2020-10272 & RVD\#2554  & MiR ROS computational graph presents no authentication mechanisms & \url{https://github.com/aliasrobotics/RVD/issues/2554} \\
                \hline
                
    		\end{tabular}
    		\vspace{1em}
    	}
    	\caption{The 17 novel (new CVE IDs) vulnerabilities encountered during a period of two years in the robots of Teradyne and as a result of an initial hardware teardown. 
        }
    	\label{table:vulnerabilities}
    \end{table}
\vspace*{\fill} 
\end{landscape}


\showkeysfalse
\begin{figure*}[!h]
    \centering

    \begin{subfigure}[b]{\textwidth}
        \includegraphics[width=\textwidth]{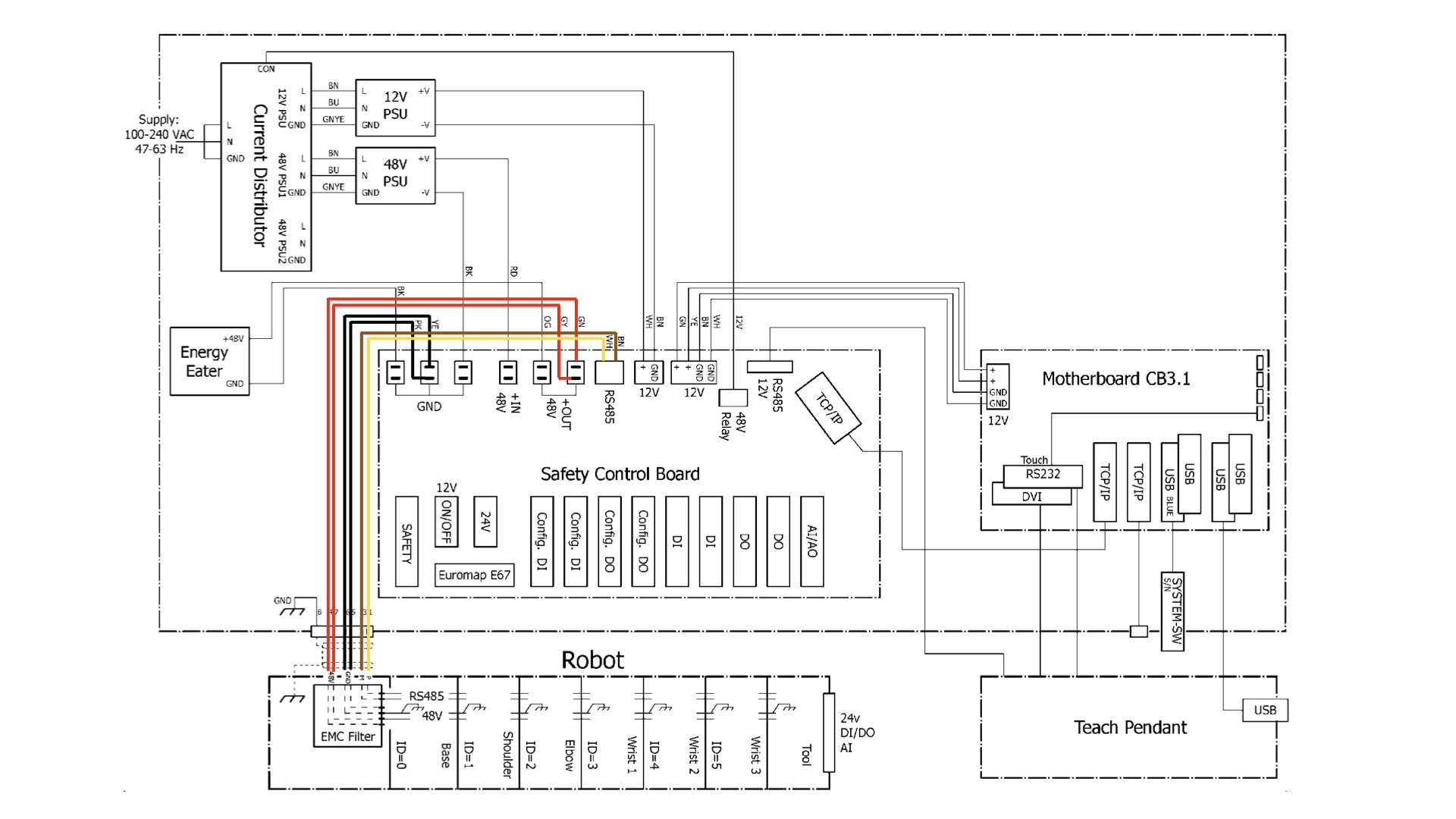}
        \caption{\footnotesize Simplified electrical diagram of Universal Robots UR3 CB-Series collaborative robot.}
        \label{fig:ur_schematics}
    \end{subfigure}
    ~
    \begin{subfigure}[b]{\textwidth}
        \includegraphics[width=\textwidth]{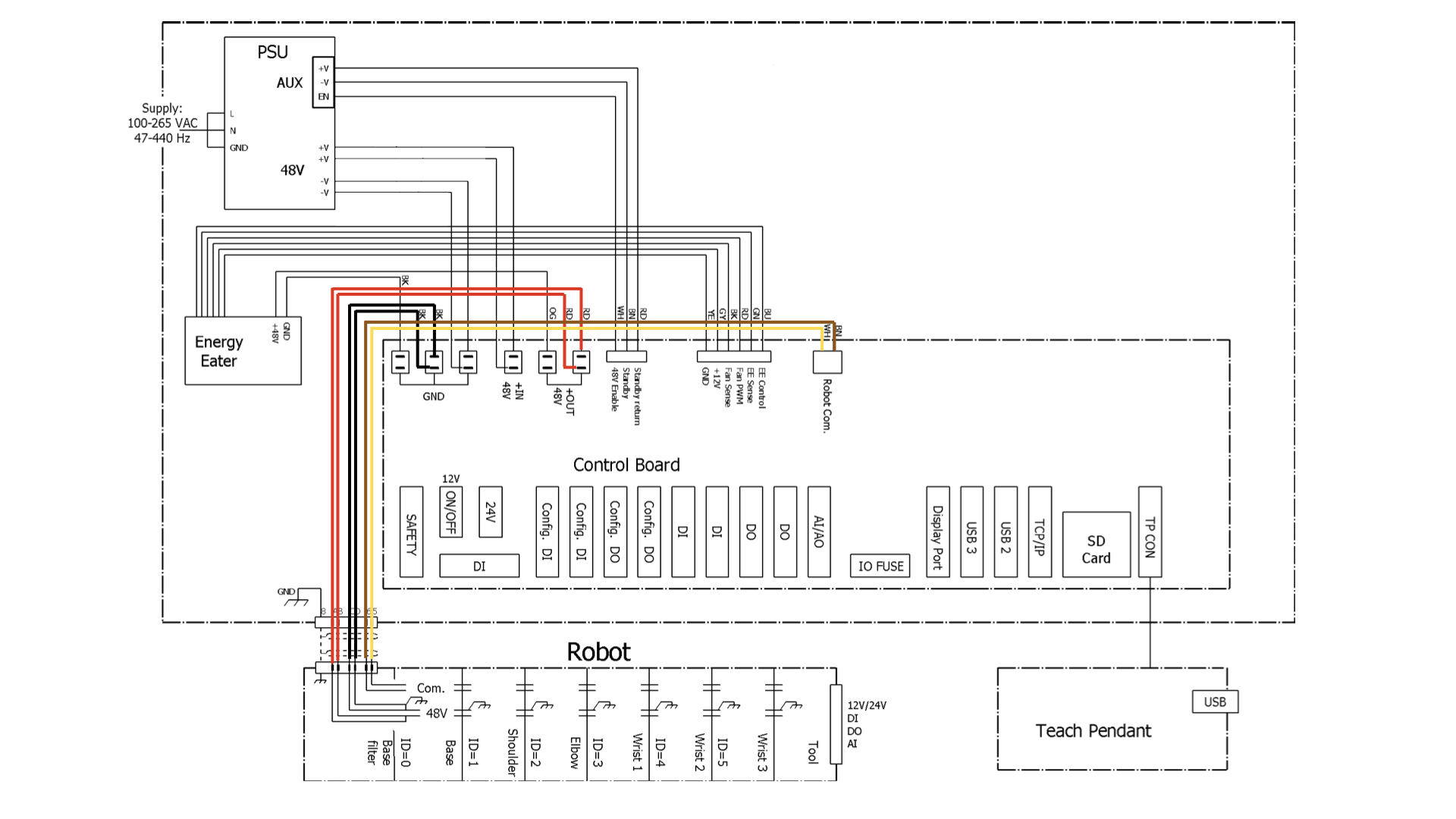}
        \caption{\footnotesize Simplified electrical diagram of Universal Robots UR3e e-Series collaborative robot.}
        \label{fig:ure_schematics}
    \end{subfigure}
    \\[3ex]
    
    \showkeystrue  
    \caption{
        Simplified electrical diagrams of Universal Robots UR3 CB-Series (\ref{fig:ur_schematics}) and UR3e e-Series (\ref{fig:ure_schematics}) collaborative robots. Reproduced from \cite{mayoral2021hacking}.
    }
    \label{figure:electrical_diagrams}
\end{figure*}

    \chapter{Attacking Robots in Industry}
\showkeystrue  
\label{chapter:attacking_industry}

\epigraph{\textbf{Contributions}: \emph{Leading scientific author and contributor at \cite{mayoral2019industrial}. Self-funded work and the contributor team. Led research, implemented malware and wrote article. Other co-authors contributed with experimental tests. All findings have been disclosed.}}{}




\section{Introduction}

In the context of computer security, ransomware is malicious software (malware) that either locks a computer, prevents  from accessing the data using encryption, or both, until the subject has paid a ransom. First ransomware Proof of Concept (PoC) appeared in 1989 \cite{richardson2017ransomware, formby2017out} when Joseph Popp, an evolutionary biologist and AIDS researcher, carried out an experimental attack by distributing 20,000 floppy disks in a Conference by the World Health Organization focused in AIDS research. The conference spanned to researchers from more than 90 countries, and malware was distributed claiming that the disks contained a program that analyzed risk of acquiring AIDS through the use of a questionnaire. Thereafter, ransomware called "AIDS Trojan" got distributed. Since then, it has evolved leading into two big subgroups:
\begin{itemize}
    \item \textbf{Crypto ransomware}: which encrypts data and asks the user for a ransom in exchange for the decryption key.
    \item \textbf{Locker ransomware}: locks the system by some means, prevents its use and asks for a ransom to re-enable it.
\end{itemize}


\section{Background}

According to literature \cite{richardson2017ransomware}, from 1989 to 2007 ransomware instances mostly focused on crypto-ransomware. In 2007, locker-ransomware began to appear and went mainstream. These ransomware instances locked systems and intimidated by displaying certain content (mostly pornographic images) while demanding a ransom to remove such content and unlock the systems. In 2013, Richardson et al.  \cite{richardson2017ransomware} observed that attackers pivot back to crypto-ransomware. According to the authors, the most famous piece of ransomware was released in August 2013. Named as CryptoLocker, it was originally distributed via a botnet and later through e-mail. CryptoLocker used public/private cryptographic key pairs to encrypt the target's file. Decrypting the files required to pay a ransom of 2 bitcoins (100 \$ at the time) within the first three days. To the extent of our literature review and to  date, variations of CryptoLocker remain being the most wide spread instances of ransomware \cite{ mcdonald2012ransomware, bhardwaj2016ransomware}.\\
\newline
In robotics, no targeted malware has yet been observed out of the PoC phase. Cesar Cerrudo and Lucas Apa earlier \cite{hackingbeforeskynet, hackingbeforeskynet2} published a ransomware attack over Nao, a social robot by Softbank Robotics, which got the media attention. According to research being conducted on the security concerns of the robotic market\footnote{\url{https://news.aliasrobotics.com/robot-security-survey-displays-first-results/} for more details on the ongoing survey.}, only 9 \% of robotics users have witnessed a cyber attack. This preliminary figure indicates that there is still very little activity yet known to the general public however, according to the same source, 51 \% of the users inquired confirmed having identified security flaws in robotic systems which leads to consider that there exists a relevant landscape of insecurities. Confirming this hypothesis, users inquired assigned a rating of 8 out 10 to the security relevance in robotics yet only 26 \% of the inquired acknowledged to have invested in robot cybersecurity, which includes evaluating security and protecting existing robot setups. Therefore, it seems that robot users do not fully apprehend the insecurity by design governing robots in the market.\\
\newline
In this chapter we aim to illustrate the existing insecurity status in some robotics vendors \cite{vilches2018introducing}. We have selected one of the most popular industrial collaborative robots and present \textbf{Akerbeltz}, an instance of ransomware targeting industrial robotic systems. We present the PoC ransomware attack, describe the rationale behind our target and discuss the general flow of the attack including the initial cyber-intrusion, lateral movement and later control phase. We then briefly discuss the resulting consequences from the installation of Akerbeltz and wrap up by sharing some major conclusions.


\section{Akerbeltz}

In basque mythology, \emph{Akerbeltz} is an antique deity impersonated in a male goat which is the animal-kind protector. Ethimologically coming from the Basque works \emph{Aker} (male goat) and \emph{Beltz} (black), its origin is attributed to a meadow in the surroundings of Zugarramurdi caves (Navarra), a pligrimage place for Basque Mythology.  Akerbeltz is the demon that is chairing "Akelarres" or Basque witch (sorginak) meetings. Some authors note that the mythological figure Akerbeltz represents was adored in many European countries, some of which remain up to present day. Akerbeltz often shows two different faces. On the one hand, it is the protector of animals and is even able to heal their illnesses if needed. Indeed, belief on Akerbeltz is thought to be the origin of hosting a black goat as the protector of all animals within a "Baserri" (Basque cottage or farm). On the other hand, when Akerbeltz participated Akelarres, it showed his darkest face, where witches obeyed and adored him as the genius representing revolution against established status-quo, amidst banquets of human flesh.\\

We advocate for a change in (most) robot manufacturer's attitude towards security and to do so,  we take Akerbeltz as inspiration and present below the first instance of industrial robotic ransomware. Due to our concerns about malicious applications of the software, authors will not be releasing the source code nor the low level method of operation of this industrial robot ransomware. 



%

\subsection{Target selection and rationale}

Our target was selected based on the results obtained from prior research \cite{kirschgens2018robot, mayoralrvd} on the vulnerability landscape across several industrial robot manufacturers. Our critical attitude was previously introduced by Alzola-Kirschgens et al. \cite{kirschgens2018robot} and essentially builds on the fact that several robot manufacturers, particularly collaborative robot (cobot) vendors, are profiting the popularity of these devices, via third parties (distributors and integrators) totally disregarding the ethical consequences of not caring about security, to the point that security vulnerabilities are left unadressed, or "up to the end user". To further understand our viewpoint, we ask the reader and potential robot users to consider the following questions: Thousands of insecure robots are being deployed all around the world, some of them thought to be networked and to increasingly collaborate with humans, who will be held responsible when these security holes get exploited and cause human damages? How would robot vendors respond when these reported, non-patched and public vulnerabilities cause safety hazards? Would they continue claiming that "their robots are designed to be open" and thereby never meant to be used beyond research? Should authorities regulate the use of these devices and their corresponding incomplete standards?\\
\newline
The authors discussed on all the above and coherently, decided to select what arguably is the most popular collaborative industrial robot manufacturer: Universal Robots. We select one of their best-selling robots, the UR3 and implement our PoC on it. Figure \ref{figure:industry_robot_target} shows a picture of our target together with the control box and teach pendant.\\

\begin{figure}[h!]
    \includegraphics[width=1\textwidth]{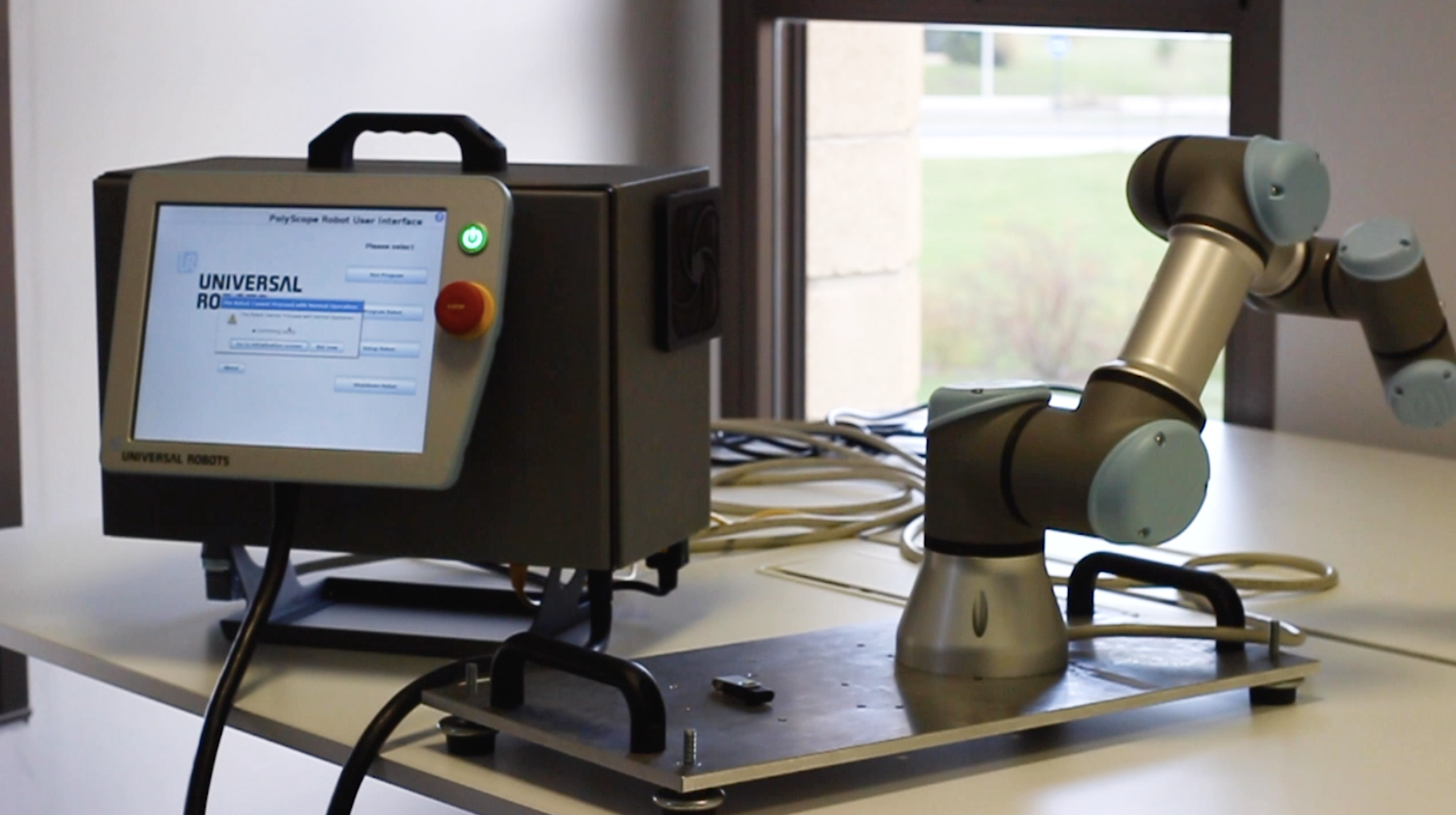}
    \centering
    \caption{Universal Robots UR3. Reproduced from \cite{mayoral2019industrial}.}
    \label{figure:industry_robot_target}
\end{figure}

Before our work, Universal Robots had other groups assessing their insecurity. In 2017, Cerrudo and Apa reported \cite{hackingbeforeskynet, hackingbeforeskynet2} five 0-day vulnerabilities. Several months later, representatives from the vendor acknowledged that security patches had been applied \cite{robo09_2018} yet there seems to be no public information available. Instead, it appears, the vendor disregarded the previously reported issues under the claim that attackers required very specific conditions. Further to that, Jacob Bom Madsen, Software Product Manager at Universal Robots publicly claimed that Universal Robots is \emph{"proud to have a fairly open architecture, that allows system integrators and UR+ Partners to easily develop and integrate the solutions they need."} \cite{robotiqcommunityurflaws}. This attitude conflicts directly with the very principle of Universal Robots safety claims, previously highlighted by Cerrudo and Apa \cite{hackingbeforeskynet2}: \emph{"Do not change anything in the safety configuration of the software (e.g. the force limit). If any safety parameter is changed the complete robot system shall be considered new, meaning that the overall safety approval process, including risk assessment, shall be updated accordingly"}. In other words, any modifications of the safety setup in the UR3 will lead to the complete invalidation of the robots' compliance with ISO 10218-1 \cite{standard10218} incurring in potential relevant losses and conflicts for the end user. Beyond the human and economical damages caused by modifying the safety setup of the UR3, the claim by Madsen leads to a troubling an arguably unethical statement: "The lack of security facilitates system integration". We have seen this statement repeatedly and yet, once again, this time coming from a leading cobot vendor, we see how openness and feature inclusion is used to justify the lack of security. Wielding the interoperability pitch, vendors push security up to their partners, collaborators or ultimately, to "the community" avoiding security actions, critical for the use of these products in human environments.\\
\newline
At the time of writing, our team knows yet of no  security patch mitigating these vulnerabilities. Moreover, we performed a penetration testing assessment in the UR3 CB series robot confirming the still unpatched existence of several of the previously reported flaws. Furthermore, we found \textbf{more than 300 new vulnerabilities} of relevant severity according to robot-specific scoring mechanisms \cite{2018arXiv180204082M}. The overall picture depicts a vendor which shows the little care not only for security, but also for quality of software.\\
\newline
The following subsection elaborates on how Akerbeltz acts on our UR3 CB series.

\subsection{Ransomware's flow}

\subsubsection{Cyber intrusion}

Initial infection gets realised by exploiting unpatched vulnerabilities in the robot. Akerbeltz's initial cyber intrusion is implemented exploiting these well known vulnerabilities and deployed via one of the following two attack vectors:
\begin{itemize}
    \item \textbf{physical USB ports in the teach pendant:} Exploiting \href{https://cve.mitre.org/cgi-bin/cvename.cgi?name=CVE-2019-19626}{CVE-2019-19626}\footnote{Remains confidential for responsible disclosure reasons at the time of writing.} an attacker can execute malicious code with root privileges by simply connecting an external USB stick. This attack vector is likely the easiest to implement given the exposure of the teach pendant in most UR3 robots. Moreover, the same attack vector applies not only in the teach pendant but also in the control box which exposes internal USB ports that can be accessed with standard mechanical fixations. 
    
    \item \textbf{remote access via adjacent networks: } \href{https://github.com/aliasrobotics/RVD/issues/672}{RVD\#672} (\href{https://cve.mitre.org/cgi-bin/cvename.cgi?name=CVE-2018-10633}{CVE-2018-10633}) permits an adjacent network attacker to easily ssh into the control box. Alternatively,  \href{https://github.com/aliasrobotics/RVD/issues/6}{RVD\#6} allows an adjacent attacker to exploit a stack-based buffer overflow in the UR3 ModBus TCP service, and execute commands as root equally fulfilling the cyber intrusion. Akerbeltz can be configured to exploit these flaws and take action over industrial LANs.\\
\end{itemize}

\subsubsection{Lateral movement}

Since the cyber-intrusion allowed to obtain root privileges in the control box, no privilege escalation phase is necessary within Akerbeltz. The "open" architecture of the robot facilitates lateral movement to other sub-devices within the robot. We briefly explored such setup and found that it is trivial to access the robot user interface, the PolyScope. Figure \ref{figure:polyscope_hijacked} shows a preview of the ransomware's message in the teach pendant:

\begin{figure}[h!]
    \includegraphics[width=1\textwidth]{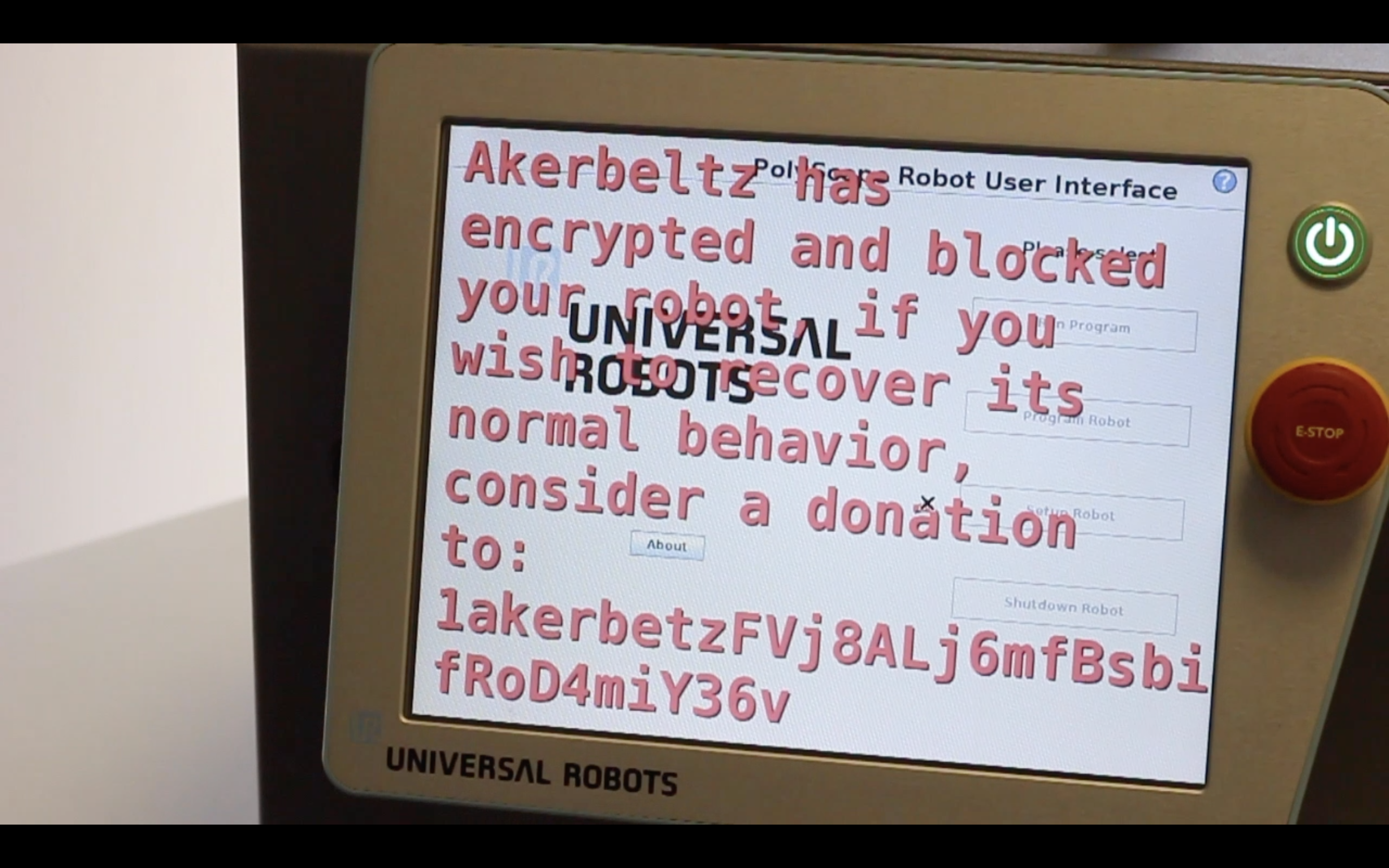}
    \centering
    \caption{UR3 PolyScope Robot User Interface  hijacked. In this case, Akerbeltz locks and disables most functions and a message is overlayed on top. Reproduced from \cite{mayoral2019industrial}.}
    \label{figure:polyscope_hijacked}
\end{figure}


The UR3 lack of security allows for additional lateral movement. Both the BIOS of the robot controller and the safety PLC are easily accessible and exposed. 
While it remains beyond the scope of our study, we argue that targeting any of these systems (or both together) will likely lead to malware that could damage the robot to a point of no return and would likely be matter of future security research.\\

\subsubsection{Control}

\begin{itemize}
	\item \textbf{Locking:} In our PoC, our team was able to lock the whole system while displaying a message (refer to Figure \ref{figure:industry_robot_target}) after booting, using previously mentioned CVEs, we managed to access the control box, acquire root privileges, disable the default safety configuration and change the default user and password. We also identified and disabled several open ports used to control the robot from the outside, blocking other mechanisms to reestablish normal operation. Our work with Akerbeltz finalized by performing some minor and non-exhaustive hardening, meant to avoid users to unlock the system.
	\item \textbf{Encrypting:} After identifying most of the critical files used for the control of the robot, we located previous programs and IP deployed within the control box. These files are encrypted using the local pgp binary in combination with a series of robot-specific identifiers.
\end{itemize}

\subsection{Discussion}

In an attempt to responsibly disclose and mitigate the existing flaws, we generated a series of reports for selected vulnerabilities and approached the manufacturer by e-mail. To this date, no formal answer with intention to establish discussions on security has been received. In a second attempt, making use of well established and \emph{de facto} approaches in security, we filed for a CVE identifier in one of the new discovered vulnerabilities. At the time of writing, no formal communication has been established via this channel either.\\
\newline
In a third attempt, the PoC attack was disclosed to Universal Robots directly, in a public robotics industrial conference, the ROS-Industrial Conference in Europe (Stuttgart, December 2019). While maintaining the vulnerabilities undisclosed, we publicly presented\footnote{Recording of the talk is available at \url{https://youtu.be/J5-8ptUT9qU?t=27052}.} to the vendor and the rest of the audience the consequences of their insecurity. We briefly presented Akerbeltz and followed with a possible solution to mitigate existing flaws in their robots. \\
\newline
The deployment of Akerbeltz in a UR3 via the physical attack vector has been recorded and made available at \url{https://youtu.be/VF7fcV5j1t0} arguing that by making this public and available we finally call  to a reaction of this particular vendor or its associated value chain.
\newline
Last but not least, our final discussion point is that these kind of targeted attacks have the potential to spread rapidly and across the use cases in which the Universal Robots CB-series are deployed. Given the low complexity of the attack, the easiness of programming on top of Universal Robots programming environment and the relative high cost of the assets involved, a series of attacks on this taxonomy are foreseen. 


\section{Conclusions}

The current insecurity status in robotics  allows for malware to be created easily and rapidly by simply evaluating known vulnerabilities. These pieces of malware could be used and exploited by malicious actors to gain economic profits via extorsion. In this chapter we presented Akerbeltz, a piece of ransomware  that locks and encrypts industrial collaborative robots from Universal Robots. After the targeted attack, the result is rendering the industrial robot totally useless, losing IP within the robot and potentially leading to economical losses and human or environmental damages. We described and prototyped Akerbeltz for the UR3 CB series robot which could be introduced following  physical or network based attacks on zero-day vulnerabilities.\\
\newline
At the time of writing, Universal Robots has yet to answer our vulnerability reports, which in some cases provide the manufacturer relevant insights on how to mitigate the most relevant flaws exploited by the attack. As a particular example, for the physical attack vector we exploited in Akerbeltz, a possible mitigation involves the validation and sanitization of specific files as well as the implementation of Discretionary Access Control (DAC) model by providing Mandatory Access Control (MAC) within the control box. This will limit the programs' capabilities with per-program profiles (e.g. via AppArmor).\\
\newline
When asked directly, face to face, representatives of the vendor, they indicated once again that their robot "is designed to be open" or that we are "disabling features" and therefore, "it's just normal". We would like to express our strong concern for a reiterated lack of security actions. While our team did not explore malicious attacks or their profitability on the robot any further, we argue that a more aggressive individual or group, with bad intentions, might easily come up with some sort of "self-destructing" malware (by disabling safety and repeatedly crossing the boundaries of the kinematics model and hitting itself) or worse, a "human or environment damaging" one, or any other kind of creative exploits on top of the vulnerabilities exploited by Akerbeltz.\\
\newline
The lack of communication from the vendor's perspective makes us guess that, at best, there is security by obscurity around Universal Robots. Once again as we did in the past, we argue against this and advocate for a proactive interaction of robot manufacturers with security researchers.\\
\newline
Other industry case studies have also been conducted as part of this research and have been made available at \cite{aliascasestudies}. These studies cover a broad spectrum of penetration testing scenarios, including the analysis of widely-used industrial robot arms from manufacturers like ABB and KUKA, the security testing of Mobile Industrial Robots (MiR) in healthcare environments, and the exploration of cybersecurity vulnerabilities in commercial drones.


    \part{Automation}\label{part:automation}
    
\chapter{Determining the Attack Target with Game Theory}
\label{chapter:game_theory}

\epigraph{\textbf{Contributions}: \emph{Supplied the case studies at \cite{rass2023game}, including basic data to run the experiments on, and the context thereof. Contributed also to the discussion of results and judgement of their applicability, validity and verification in the robotics use case.}}{}

As we venture into the intricate landscape of offensive robot cybersecurity, particularly within the realm of automation (Part \ref{part:automation}), the act of selecting an attack vector transcends brute force or mere choice—it becomes a strategic decision of profound consequence for both defense and offense approaches. This chapter ventures into the analytical domain where the discipline of cybersecurity intersects with the predictive prowess of game theory. Herein, we dissect the art of anticipating an adversary's move, transforming what is traditionally an intuitive decision (heuristics) into a calculable prediction.\\
The quintessence of this approach lies not just in its offensive application but equally in its defensive potential. By leveraging game-theoretic principles, we argue an automation engine can forecast an attacker's most probable vector of exploitation (the attack vector). Such predictive capacity is paramount; it informs how we can judiciously deploy our finite security defensive resources where they are most needed—effectively outmaneuvering potential breaches before they materialize. This is ultimately the objective pursued in this thesis with the \emph{offensive robot cybersecurity} approach.
\begin{mybox}
{\textbf{\textit{Cybersecurity cognitive engines}}}
\emph{Cybersecurity cognitive engines} refer to advanced computational systems that utilize Artificial Intelligence and Game Theory to understand, predict, and make decisions about cybersecurity threats. These engines act as a smart, evolving line of defense or offense against cyber threats, adapting to the ever-changing cybersecurity landscape.
\end{mybox}
This chapter is not only a discourse on methodology and theoretical concepts but also a tactical guide. It reveals how game theory serves as the toolbox in the creation of cybersecurity cognitive engines that adapt dynamically to the ever-shifting tides of digital conflict. These engines, armed with the ability to discern the most vulnerable vectors, provide an indispensable tool for entities tasked with safeguarding automated systems. Through a meticulous blend of theory and practical application, this chapter embarks on a quest to ascertain the optimal point of action—be it for a defender fortifying their bastion or an attacker seeking a breach—in the digital arena where automation's role is ever-expanding and critical.

\section{Introduction}

%
%
The daily business of a \ac{CISO} concerns seeking a proactive defense against an invisible intruder and more elaborated attacks. Such threats are often well represented by contemporary \acp{APT}, a type of cyber attack executed by well-resourced and capable attackers, such as nation-states or organized criminal groups, aiming to infiltrate a network to extract valuable information or disrupt operations. These threats are "advanced" in the sophistication of their tactics, "persistent" in their long-term objectives, and stealthy in nature, often remaining undetected for extended periods. APTs typically involve a high degree of covertness over a prolonged duration, with the intention to steal data rather than cause immediate harm. This work is concerned with the daily business of defense, under the assumption that the infection has already happened, but there has not been any damage so far, thus we are in the ``incubation'' phase in the APT life-cycle. A refined view on the evolution of an APT is the \emph{kill chain} \cite{kamhoua_game_2018}, depicted initially in Figure \ref{figure:offensive_pipeline}. 
%
%

The contributions presented here focus on the study of a game theoretic model intended to support \acp{CISO} and/or to serve as a tactical guide that helps autonomous cybersecurity agents to make decisions and reason about the threat landscape from an attacker's perspective. Thus, coherent with \acp{APT}, we assume that the adversary is already in the system\footnote{Following the famous quote of Robert S. Mueller: ``There are only two types of companies: Those that have been hacked and those that will be hacked.''}, and the \ac{CISO} duty is preventing damage. The battlefield on which the CISO, hereafter called the defender, matches the stealthy adversary, is an attack graph. This is a graph-theoretic model of a system, relating individual system components to threats and exploits on them, and thereby visualizing possible attack paths towards defined target nodes. Such attack graphs can be compiled from threat modeling activities and further refined with penetration tests and help of topological vulnerability scans (see, e.g., \cite{jajodia_cauldron_2011}). Alternatively, attack graphs (also referred to as the \emph{attack tree} or the \emph{exploit tree} in this thesis) can be built by tools like \exploitflow \cite{mayoral2023exploitflow}. Figure \ref{figure:cybersecurity_cognitive_engine:game_theory} depicts how this work fits into our architecture to develop fully automated offensive robot cybersecurity tools, a cognitive cybersecurity engine we denominate \framework{}.

\begin{figure}[h!]
    \centering 

    \begin{tikzpicture}[node distance=2cm, auto, scale=0.55, transform shape]
        \tikzstyle{box} = [rectangle, draw, fill=gray!20, text centered, rounded corners, minimum height=2.5em, minimum width=3cm]
        \tikzstyle{emphasize} = [box, fill=blue!30, draw=blue, line width=1.5pt]
        \tikzstyle{line} = [draw, -{Latex[scale=1.2]}]
        \tikzstyle{deemphasize} = [opacity=0.3]
        \tikzstyle{innerComp} = [rectangle, draw, fill=orange!30, text centered, minimum height=1.5em, minimum width=3em]
        \tikzstyle{innerComp_emphasize} = [rectangle, fill=blue!30, draw=blue, line width=1.5pt]
    
        \node[box] (U) {User};
        \node[box, deemphasize] (EF) at (11cm,0) {1. ExploitFlow}; 
        \node[box, deemphasize, below left=4cm and 2.5cm of EF] (PentestGPT) {2. PentestGPT};
        \node[box, deemphasize, below=8cm of EF] (PentestPerf) {3. PentestPerf};
        \node[box, right=3cm of EF] (T) {Target};
    
        \node[innerComp, deemphasize, above=2.5cm of EF, xshift=-4cm] (exploit) {exploit};
        \node[innerComp, deemphasize, right of=exploit] (flow) {flow};
        \node[innerComp, deemphasize, right of=flow] (graph) {graph};
        \node[innerComp, deemphasize, right of=graph] (adapters) {adapters};
        \node[innerComp, deemphasize, right of=adapters] (models) {models};
        \node[innerComp, deemphasize, right of=models] (state) {state};

        \node[innerComp, innerComp_emphasize, below=0.5cm of EF, xshift=3.5cm] (gt) {Game Theory (\CutTheRope)};
        
        \draw[dashed, shorten <=5pt, shorten >=5pt] (EF) -- (exploit);
        \draw[dashed, shorten <=5pt, shorten >=5pt] (EF) -- (flow);
        \draw[dashed, shorten <=5pt, shorten >=5pt] (EF) -- (graph);
        \draw[dashed, shorten <=5pt, shorten >=5pt] (EF) -- (adapters);
        \draw[dashed, shorten <=5pt, shorten >=5pt] (EF) -- (models);
        \draw[dashed, shorten <=5pt, shorten >=5pt] (EF) -- (state);
        \draw[dashed, shorten <=5pt, shorten >=5pt] (EF) -- (gt);
        \draw[dashed, shorten <=5pt, shorten >=5pt] (PentestGPT) -- (gt);
    
        \path[line] (U) -- node[font=\small, midway, sloped, above] {programatically in Python} (EF); 
        \path[line, deemphasize] (U) edge[bend right=40] node[font=\small, sloped, below, opacity=0.5] {goal description in text} (PentestGPT);
        \path[line, deemphasize] (PentestGPT) edge[bend right=20] node[font=\small, sloped, below] {exchange exploit tree} (EF);
        \path[line, deemphasize] (EF) edge[bend left=20] node {} (PentestGPT);
        \path[line, deemphasize] (EF) -- node[font=\small, sloped, above, opacity=0.5] {Benchmarks an exploit flow} (PentestPerf);
        \path[line] (EF) -- (T);
        \path[line] (T) -- (EF);
    
        \node[rectangle, fill=gray!5, draw, dashed, deemphasize, fit=(EF) (PentestGPT), inner sep=1cm] (group) {};
        \node[anchor=north west, deemphasize, inner sep=0.5cm] at (group.south west) {4. Malism};
    
        \node[box] (U) {User};
        \node[box, deemphasize] (EF) at (11cm,0) {1. ExploitFlow}; 
        \node[box, deemphasize, below left=4cm and 2.5cm of EF] (PentestGPT) {2. PentestGPT};
        \node[box, deemphasize, below=8cm of EF] (PentestPerf) {3. PentestPerf};
        \node[box, right=3cm of EF] (T) {Target};
        \node[innerComp, innerComp_emphasize, below=0.5cm of EF, xshift=3.5cm] (gt) {Game Theory (\CutTheRope)};
    
        \begin{scope}[node distance=0.3cm, font=\footnotesize]
            \coordinate (legendOrigin) at (-4,-6); 
            \node[box, right=0.3cm of legendOrigin, minimum height=1.5em, minimum width=1.5em, text width=1.5em] (legendBox) {}; 
            \node[right=0.3cm of legendBox] {External entity};
    
            \node[box, below=0.2cm of legendBox, fill=gray!5, minimum height=1.5em, minimum width=1.5em, text width=1.5em] (otherpapers) {}; 
            \node[right=0.3cm of otherpapers] {Other chapters};
    
            \node[emphasize, below=0.2cm of otherpapers, minimum height=1.5em, minimum width=1.5em, text width=1.5em] (legendEmphasize) {};
            \node[right=0.3cm of legendEmphasize] {This contribution};
    
            \node[innerComp, below=0.2cm of legendEmphasize, minimum height=1.5em, minimum width=1.5em, text width=1.5em] (legendInner) {};
            \node[right=0.3cm of legendInner] {Inner Component};
        \end{scope}
    \end{tikzpicture}
        
    \caption{Game theoretic contributions to our architecture to develop fully automated offensive robot cybersecurity agent, a cognitive cybersecurity engine denominated \framework{}. The \CutTheRope method implemented sits between \exploitflow and \tool{}, receiving the exploit tree from \exploitflow, reasoning about it to determine optimal attack vectors, passing such vectors to \tool{} to transform them into actionable exploits and finally, feeding such exploits back to \exploitflow for their execution and result digestion.
        %
        %
    }
    \label{figure:cybersecurity_cognitive_engine:game_theory} 
\end{figure}

On the attack graph, we play a stealthy intrusion game with infinite time horizon and repetitions, in which the defender takes action to keep the attacker away from the critical assets in the enterprise, while the adversary is on its way towards some target asset, along one or more attack paths. The game instantly terminates if the attacker has reached the critical asset, in which case the defender (permanently) lost. The interaction between the two players is constrained as follows:

\emph{Assumption 1}: The attack graph $G=(V,E)$ consists of $V$ nodes and $E$ are the edges (e.g., exploits).
We assume a single starting node (for all attacks), and a single target node, representing some critical asset to be captured. The graph is assumed as acyclic (thus, the attacker will never enter infinite loops along accomplished exploits), and all paths, more precisely the attack paths/vectors, lead to the target asset. We denote this target as $v_0\in V$ hereafter. The nodes in $V$ represent system threats and vulnerabilities, or system states, while edges represent threats or exploits to get from one component/state into the next component/state. For example, the adversary may jump from a desktop computer $v_{pc}$ to a server $v_{server}$, or may gain root privileges, denoted as $v_{pc,root}$ from user-level privileges $v_{pc,user}$ on the same computer. In either case, we would have a directed edge $v_{pc}\to v_{server}$, or $v_{pc,user}\to v_{pc,root}$ to express this attack path in the graph model.
	
\emph{Assumption 2}: The adversary is already somewhere in the system at an unknown location when the defender enters the gameplay (i.e., we are  past the event of infection/reconnaissance). We also assume that the defender has no indication of adversarial activity (for example, from an Intrusion Detection System (IDS)); the adversary is \emph{stealthy}. In absence of adversarial signals, the defender may assume all possible locations of the adversary as uniformly distributed (the inclusion of signals is discussed in Section \ref{sec:signals}).
	
\emph{Assumption 3}: the adversary may run \emph{parallel} or \emph{concurrent attacks}, thereby exploiting several, up to all, attack paths simultaneously to maximize its chances to conquer $v_0$.

\begin{figure}[h!]
	\centering
	\includegraphics[width=0.8\textwidth]{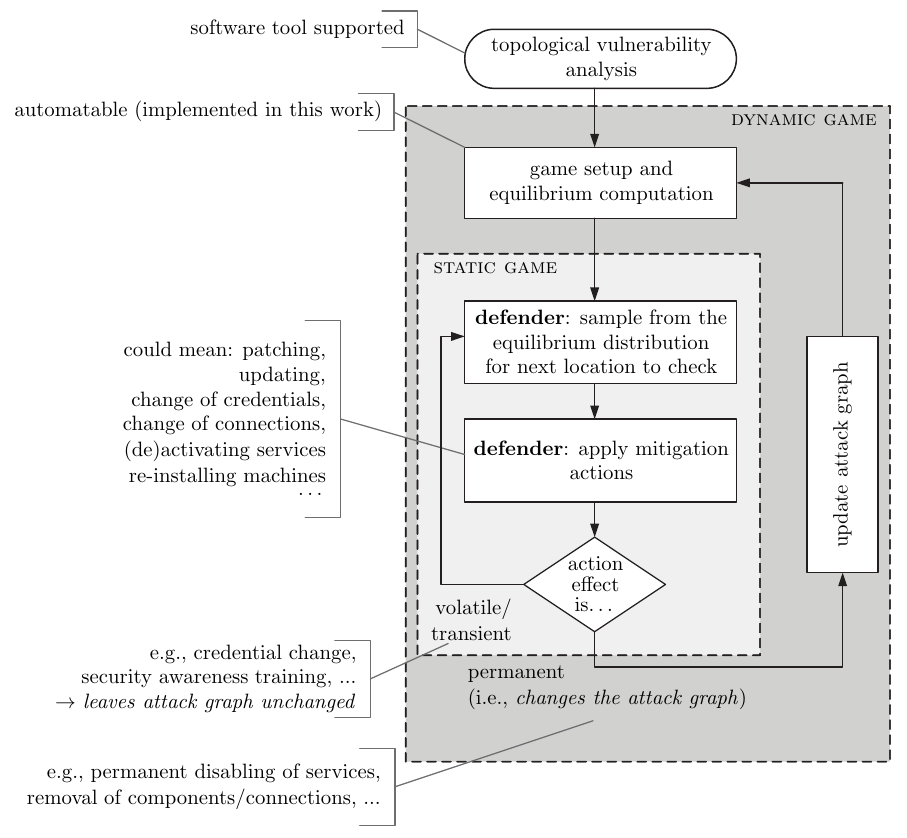}
	\caption{\CutTheRope (static game) inside the continuous process of permanent system hardening to fight against. Reproduced from \cite{rass2023game}.}
 \label{figure:game_theory_ctr}
\end{figure}

This general setting was converted into a game theoretic model in \cite{rass_cut--rope_2019}, named \CutTheRope: Like on a \emph{chess-board}, the adversary in \CutTheRope runs parallel attacks, one on each attack path available. Since the attacker's location on each attack paths is unknown to the defender, this player imagines a whole ``cohort'' of \emph{avatars} starting from all possible locations in the network and moving towards $v_0$. The strategic choice of the adversary, from the defender's perspective, is about existing attack paths, but the defender does not know where the attacker is, equivalently, how far the adversary has already come down an attack path $\pi$. To tackle this uncertainty, the defender plays the game as if the adversary would first (strategically) choose the path $\pi$, and move \emph{all} avatars on $\pi$ simultaneously towards $v_0$. In other words, it moves avatars at all \emph{possible}, \emph{not} necessarily also \emph{probable}, locations. The target asset (and security game) is lost (to the attacker), if at least one of the adversary's avatars reaches $v_0$. 

For the adversarial movement, \CutTheRope lets the defender assume a random distribution on how many steps an avatar can take when it is on the move. Figure \ref{figure:game_theory_ctr} provides a high-level overview of the games involved in the method. This \emph{random movement pattern} is a model design choice, and the original work \cite{rass_cut--rope_2019} about \CutTheRope assumed one specific movement regime, in which the defender acts periodically, and the attacker has some ``aggressiveness level'' $\lambda$, interpretable as an ``expected number exploits per day'' or within a defined unit of time. This amounts to a Poisson-distributed number of steps taken along the attack path. 



\subsection*{Research Questions and Contribution}

Some companies have \acp{CISO} with a fixed working schedule, which corresponds to a periodically active defender (working days, day/night-shifts, etc.). What if there is a 24/7 continuous response team available, like in bigger (globally distributed) companies that run their own security operations center? These (multiple) defenders may become active at random time intervals and at any time. For the security game, it means that the defender will not be active periodically, but rather at random times and possibly at any time. This is the \emph{first new} movement contributed and studied in this chapter. 

Both, the original Poisson movement and the just described continuous security response policy are agnostic of the particular details of exploits, like their difficulty or severity. However, many attack graphs do carry additional information about exploits, and if so, it is desirable to use it in the security model. These additional details can range from a security threat research and risk assessment or scoring like \ac{CVSS}, up to proof of concept implementations for each exploit. The \emph{second new} movement pattern proposed in this chapter makes the attacker's traversal dependent on exploit complexities (threats assigned a higher complexity would thereby be probabilistically less feasible), as far as they are known, and studies the defense performance against a defender that is again periodically active. 

The \emph{third new} movement pattern is a combination of a defender that can take action at any time in a 24/7 continuous security provisioning, against an attacker that has to deal with threats and exploits of different complexity. 

We stress that none of these cases assumes a purely reactive defense, i.e., we do not study security response patterns when the incident has been noticed. This is due to the assumption of stealthiness of the intrusion; once the attacker becomes visible, it is because the target asset $v_0$ was lost (permanently). 

The other contributions of this chapter are two case studies:
First, we provide an experimental study and illustration of how to use \CutTheRope in different settings, based on two documented attack graphs for industrial robots. Specifically, we look at the Modular Articulated Robotic Arm (MARA) and Mobile industrial Robotics MiR100 robots \cite{alias_robotics_case_2019,acutronicrobotics_threat_2021,alias_robotics_week_2020,alias_robotics_case_2021}, for which attack graphs have been compiled by security experts. On these, we instantiate \CutTheRope and compute results in the aforementioned settings of a periodically/randomly active defender versus an attacker that traverses an attack path with uniform speed at an average number of exploits per time unit, or mounts attacks with individually distinct exploit complexities, thus being slower or faster, depending on the chosen path. 
Experiments are conducted on the MARA robot, for which the threats and exploits are known, but without a \ac{CVSS} rating or further details. In that case, the original Poisson model from \cite{rass_cut--rope_2019} and the first of the new patterns announced above are usable. The other case study is on the MiR100 robot, whose attack graph carries additional vulnerability and exploit details.

\begin{mybox}
{\textbf{\textit{Robotic Applications of CTR in Cybersecurity}}}

To concentrate on the topic of robot cybersecurity addressed in this thesis, this chapter primarily discusses the application of \CutTheRope (CTR) within robotic contexts, rather than delving deeply into its theoretical foundations. For those interested in a comprehensive exploration of the theoretical principles that underpin CTR, references \cite{rass_cut--rope_2019} and \cite{rass2023game} are highly recommended for thorough review. Additionally, a concise summary of these theoretical aspects is provided in Appendix \ref{chapter:gt_appendix}, offering a brief overview to complement the focused discussions herein.
\end{mybox}

\section{Case Studies}
\showkeysfalse
\label{sec:case-studies}
\showkeystrue
We dedicate the next couple of subsections to numeric results, starting with a brief correction to past calculations in the literature, and then moving onward to the new case studies and the comparison of defense policies optimized with \CutTheRope, versus a heuristic common-sense defense policy.

To assess the game w.r.t. a real-life application, we conducted two case studies on the industrial robots in Section \ref{sec:robot-case-studies}. The game is similar to capture-the-flag competitions known in ethical hacking, since there and also here, the goal is to ``capture'' a target asset $v_0$. Our analysis, different from ethical hacking, is purely game-theoretic and optimization-based here. 

\subsection{Implementation Remarks}
\showkeysfalse
\label{sec:implementation-remarks}
\showkeystrue
We adapted the implementation from \cite{rass_cut--rope_2019} and thereby discovered a few bugs in this older code that we corrected in our version. The original code used fictitious play on the full distribution $U=(u_1,u_2,\ldots,u_n=\Pr$(adversary reaches $v_0))$ obtained from equation \eqref{eqn:adversary-utility}.

We compute an optimum w.r.t. a lexicographic order from right to left, first minimizing the last coordinate $u_n$, and breaking ties by continuing to minimize $u_{n-1}$ while keeping $u_n$ at minimum. The next tie is broken using $u_{n-2}$, while keeping the so-far optimized coordinates at their minima and so on. This introduces a dependence on the ordering of the coordinates, corresponding to a likewise ordering of locations in the attack graph. Therefore, the solution returned by the implementation from \cite{rass_cut--rope_2019} is ambiguous in the sense of depending on the node ordering.

The optimization, however, independently of the node order, always minimizes the chances to reach the target asset, and hence provides a valid defense policy w.r.t. the targets of the defender. Our implementation inherits this dependence on the node order, but since our sole interest is reaching or avoiding to reach $v_0$ anyway as \eqref{eqn:attacker-utility-formula} and \eqref{eqn:defender-utility-formula} define, this ambiguity is not a limitation. In light of this, we chose the graph-topological sorting to order the probabilities in $U'$ other than for $v_0$, which is the last element in this vector.

The attack graphs for our robot case studies have several entry points for the attacker and also several targets to reach. To handle them all in a single run of the analysis, we \emph{added an artificial} (virtual) \emph{entry node} from which all (real) entry nodes are trivially reachable (with probability 1). Since \CutTheRope in the original version, analyzed here, assumes only one target, we \emph{contracted the multitude of target nodes} into a single ``compound'' target node. This corresponds to the target being to reach any of the possible target nodes, not distinguishing which in particular. A \emph{target} node is, by default in our implementation, any node that does not have descendants (zero out-degree in the attack graph). Consequently, all inner nodes, except the virtual start, are possible defense spot-check locations. The technical simplification towards having one target (only) is to avoid multi-criteria optimization, which is theoretically possible (even supported by the packages to run the optimization practically), but is more involved to interpret for a defense policy. 

\subsection{Robot Case Studies}
\showkeysfalse
\label{sec:robot-case-studies}
\showkeystrue
For both of the robot cases to follow, we give computational results and a discussion of their practicality. To avoid confusion between the attack graphs appearing here and those found in the cited literature, we use the original versions thereof to visualize the battlefield and results. The actual simulation was done on an attack graph with added virtual starting and a single compound target node (if more than one exists).

It is perhaps practically interesting to remark that both attack graphs have inner nodes that classify as attack targets, but have descendant nodes as subsequent attack targets. With the convention of taking nodes with zero out-degree in the graph as targets (see Section \ref{sec:implementation-remarks}), the simulation will include all ``inner'' nodes as defense locations even though they may be attack targets too. This is not precluded by the game design, and may be interpreted as considering inner nodes as ``intermediate targets'' whose prevention may avoid subsequent final, perhaps more dangerous, attack targets. The game's defense policy would then advise to prevent a certain attack sub-target in the attack graph, with a certain level of effort (expressed as likelihood). From a simulation perspective, including or excluding any node from the defense policy is a simple matter of defining the action set for the defender accordingly.

\paragraph{Case \#1: Modular Articulated Robotic Arm (MARA)}
MARA is a collaborative robotic arm with ROS 2.0 in each actuator, sensor or any other representative module. Each module has native ROS 2.0 support, can be physically extended in a seamless manner and delivers industrial-grade features including synchronization, deterministic communication latencies, a ROS 2.0 software and hardware component life-cycle and more. Altogether, MARA empowers new possibilities and applications in the professional landscape of robotics. The use case considered contemplates the MARA modular robot operating in an industrial environment while performing a pick \& place activity. Details about MARA for this case study can be found in \cite{alias_robotics_case_2019,acutronicrobotics_threat_2021}.

\paragraph{Case \#2: MiR100 - Mobile industrial Robotics}
The MiR100 autonomous mobile robot is advertised as a safe and cost-effective mobile robot that quickly automates your internal transportation and logistics. The robot claims to optimize workflows, freeing staff resources so you can increase productivity and reduce costs. A case study analyzing the cyber-resilience of MiR100 robots was conducted and documented at \cite{alias_robotics_week_2020} and \cite{alias_robotics_case_2021}, which considered a single robot operating in a structured environment while connected to a local area network that gets compromised. Through the local area network, prior work demonstrated how an attacker could exploit vulnerabilities, pivoting across subsystems in the robot all the way into its safety system, disabling it fully in a remote manner.

For both robots, we took attack graphs out of industrial security assessments, which, in the particular case of MiR100, were also annotated with \ac{CVE} and \ac{CVSS} information, which allows an assessment of the ``hardness'' of vulnerabilities along the attack path. Such annotations were not available for the MARA use case, which, in lack of such details, suggests an application of the Poissonian movement pattern of Section \ref{sec:movement-patterns}. The more detailed attack graph for the MiR100 robot enables the consideration of probabilistic success on exploits as Section \ref{sec:probabilistic-succes} described.

\section{Results and Comparison}
\showkeysfalse
\label{sec:results and comparison}
\showkeystrue
To evaluate how much a game-theoretic defense may add to the security, we do not only give the absolute results from the simulations, but also compare them to a heuristic best-effort defense policy, described in Section \ref{sec:heuristic-defense}. Its simulation is run likewise with each of the four movement patterns from Sections \ref{sec:movement-patterns}, with the probability to reach $v_0$ given for each case as (i) optimized by \CutTheRope versus (ii) according to a best-effort defense.

\subsection{Baseline Comparison: A Best-Effort defense Policy}
\showkeysfalse
\label{sec:heuristic-defense}
\showkeystrue
For an assessment of the quality of the game theoretic defense, let us use the following heuristic defense policy to compare:
\begin{itemize}
	\item We assume that an adversarial avatar will always follow the shortest, or ``easiest'' attack path towards $v_0$. The distinction between shortest and easiest is made in dependence of how much is known about exploit complexities. In the MARA use case, the path choice will be for shortest, in terms of the number of exploits, since there is no further detail given about the exploit complexities. In the more detailed MiR100 use case, we have attack complexities and can likewise apply a shortest path algorithm to guide the attacker to the path whose success probability (as the product of all exploit success probabilities) is maximal\footnote{The usual trick of assigning the negative logarithm of probabilities as edge weights and computing a shortest path in the well known way}.
	\item The defender, unbeknownst of where the attacker is, and unable to actively detect it, applies a uniformly random defense strategy. That is, if the attacker is equally likely to be anywhere in the system, the defense policy would likewise be a uniformly random spot checking.
\end{itemize}
Under these hypotheses, we apply the same mechanism as in \CutTheRope, i.e., we let the attacker follow its chosen (shortest/easiest) path, and be occasionally sent back by the defender upon a coincidental cut of the path equivalently, closure of any backdoor. If so, then the avatar will keep retrying, until it hits the final target $v_0$. Note that this regime also includes lateral movement, since we still have a multitude of avatars attacking in parallel, each on its individually optimal route from its starting location $\theta$ towards $v_0$.

We implemented this defense policy simulation by adapting the code from the implementations of \CutTheRope accordingly, to implement the heuristic defense and attack policy of above. Like for the game optimization, the heuristic defense implementation outputs the probability to reach $v_0$ by simulating this defender-attacker interaction. We remark that this heuristic defense may still be overly optimistic relative to real life situations, in which defense teams may have only an incomplete view on the attack graph $G=(V,E)$. The defender would thus only be active on a subset $D\subset V$, so that all nodes in $V\setminus D$ would be zero-day exploits. 

\subsection{Overview of Experiments}
In total, comparing the periodic/exponential defense strategy against a randomly moving adversary in two use cases, gives a total of 4 evaluation scenarios, each accompanied with its own comparison to the baseline heuristic of Section \ref{sec:heuristic-defense}. Table \ref{table:ctr_results_overview} relates the sections and figures in the following to these four configurations.

\setlength{\tabcolsep}{20pt}
\renewcommand{\arraystretch}{1.2}
\begin{table}[h!]
    \scalebox{0.8}{
    \begin{tabular}{|p{0.28\columnwidth}|p{0.28\columnwidth}|p{0.28\columnwidth}|}
    	\hline 
    	\textbf{use case} & defender's policy: \textbf{periodic} & defender's policy: \textbf{exponential strategy}\tabularnewline
    	\hline 
    	\hline 
    	MARA (no particular exploit hardness & attacker movement model: \ref{sec:periodic-defender-poisson} & attacker movement model: \ref{sec:random-spot-checks}\tabularnewline
    	 annotations), Sec. \ref{sec:mara-results}  & results shown in: Fig. \ref{figure:ctr_experiment1_results} & results shown in: Fig. \ref{figure:ctr_experiment2_results}\tabularnewline
    	\hline 
    	MiR100 (known exploit complexities & attacker movement model: Sec. \ref{sec:probabilistic-succes} & attacker movement model: Sec. \ref{sec:double-exponential}\tabularnewline
    	to consider), Sec. \ref{sec:mir100-results}  & results shown in: Fig. \ref{figure:ctr_experiment3_results} & results shown in: Fig. \ref{figure:ctr_experiment4_results}\tabularnewline
    	\hline 
    \end{tabular}}
    \caption{Overview of Experiments}
    \label{table:ctr_results_overview}
\end{table}

%

\subsection{MARA: Results}
\showkeysfalse
\label{sec:mara-results}
\showkeystrue
The attack graph for the MARA robot is taken from \cite{acutronicrobotics_threat_2021} and shown in Figures \ref{figure:ctr_experiment1_results} and \ref{figure:ctr_experiment2_results}. This graph has 11 nodes and 10 edges in total, among them one entry point (node \textcircled{\footnotesize{1}}) for the attacker, and two targets (nodes \textcircled{\footnotesize{6}}, and  \textcircled{\footnotesize{9}}). 

We played \CutTheRope on this graph with a periodic defender versus an attacker that takes an average of 2 moves per time unit (i.e., in-between two appearances of the defender, e.g., per day). Figure \ref{figure:ctr_experiment1_results} shows a table with the probabilities to spot-check each node on the attack graph. For the visualization, we have put bubbles on the attack graph, whose size corresponds to the probability of spot-checking there. That is, the larger the bubble, the more effort should be out on defending at this point.

\begin{figure}[t!]
	\centering
	\begin{minipage}{0.4\textwidth}
		\scriptsize Optimal defense policy:\\[1mm]\centering
		\begin{tabular}{|l|l|}
			\hline
			Node & Probability\\\hline
			2 & 0\\
			3 & 0,279\\
			4 & 0,279\\
			5 & 0,0000384\\
			7 & 0,000268\\
			8 & 0,443\\\hline
		\end{tabular}
	\end{minipage}\quad
	\begin{minipage}{0.9\textwidth}
		\includegraphics[width=\textwidth]{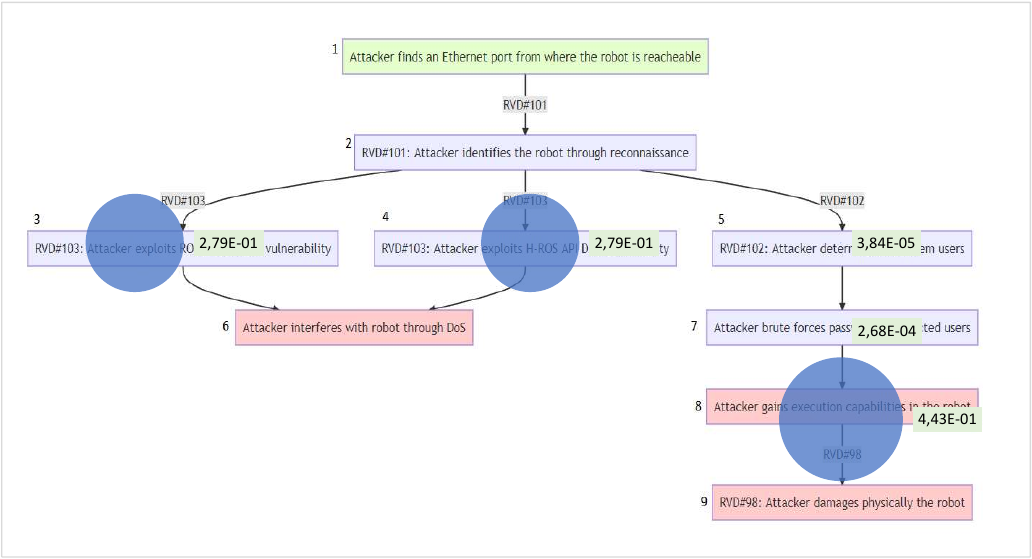}
	\end{minipage}\\[3mm]\scriptsize
Efficacy of the optimal vs. heuristic defense:\\[1mm]
	\begin{tabular}{|c|c|}
		\hline 
		 defense policy & chance to hit $v_0$ \\
		\hline 
		\CutTheRope & 12,8\% \tabularnewline 
		heuristic & 30,6\% \tabularnewline
		\hline 
	\end{tabular}
	\caption{MARA use case results for periodic spot checks (Section \ref{sec:periodic-defender-poisson}). Adapted from \cite{rass2023game}.}
    \label{figure:ctr_experiment1_results} 
\end{figure}

Turning to the case of the defender coming back in random time intervals, we let the game run in three configurations, with the defender moving slower ($\lambda_D=1<\lambda$), at equal speed ($\lambda_D=\lambda=2$) and faster than the attacker $(\lambda_D=3>\lambda)$ in \eqref{eqn:movement-by-geom}. The resulting spot checking probabilities are again displayed as bubbles located at the respective nodes in the attack graph, and put over one another in Figure \ref{figure:ctr_experiment2_results}.

\begin{figure}
	\centering
	\begin{minipage}{0.9\textwidth}
		\scriptsize
		Optimal defense policy:\\[1mm]
        \centering
		\begin{tabular}{|l|l|l|l|} 
			\hline 
			\multirow{2}{*}{Node} & \multicolumn{3}{c|}{probability for $\lambda_{A}=2$ and}\tabularnewline
			& $\lambda_{D}=1$ & $\lambda_{D}=2$ & $\lambda_{D}=3$\tabularnewline
			\hline 			
			2 & 0,000147 & 0,0000502 & 0,0000263\\
			3 & 0,272 & 0,291 & 0,303\\
			4 & 0,272 & 0,291 & 0,303\\
			5 & 0 & 0 & 0,0000123\\
			7 & 0,000473 & 0,000548 & 0,000606\\
			8 & 0,455 & 0,417 & 0,393\\
			\hline
		\end{tabular}
	\end{minipage}\quad
	\begin{minipage}{0.9\textwidth}
		\includegraphics[width=\textwidth]{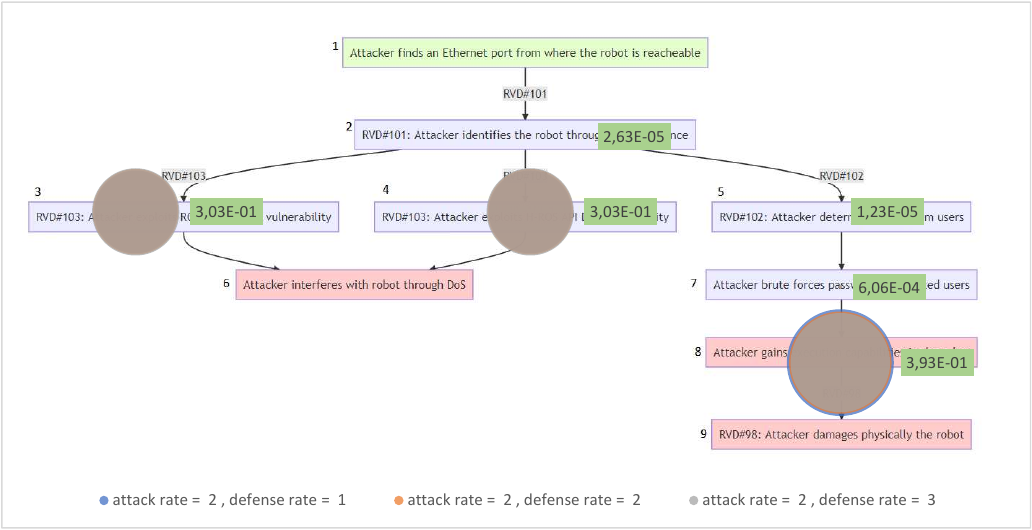}
	\end{minipage}\\[3mm]\scriptsize
Efficacy of the optimal vs. heuristic defense:\\[1mm]
	\begin{tabular}{|c|c|c|c|}
		\hline 
		\multirow{2}{*}{defense policy} & \multicolumn{3}{c|}{chances to hit $v_{0}$}\tabularnewline
		& $\lambda_{D}=1$ & $\lambda_{D}=2$ & $\lambda_{D}=3$\tabularnewline
		\hline  
		\CutTheRope & 7,4\% & 5,7\% & 4,6\%\tabularnewline 
		heuristic & 17,8\% & 13,8\% & 11,2\%\tabularnewline
		\hline 
	\end{tabular}
	\caption{MARA use case results for spot checks at random intervals. Adapted from \cite{rass2023game}.}
    \label{figure:ctr_experiment2_results}
\end{figure}

The numbers and bubbles are almost of the same size, showing that for the defense locations, the speed of spot checking has only a negligible impact, while the performance of the defense accordingly becomes better if the defender is ``more active''. The performances of the defense policy as displayed in the bottom table of Figure \ref{figure:ctr_experiment2_results} show that the optimized defense pays over the heuristic ``blind'' spot checking policy.

The takeaways from these findings is not that a more intense defense activity will reduce the chances of the attacker (this would be obviously the case), but rather giving the defender an indication of where to allocate its (limited) resources to gain the best possible effect. Without signaling and without additional information in the attack graphs, the results are necessarily a crude approximation of reality, and \CutTheRope has been designed to be workable in such a situation of limited information, as well as with cases when more details are available, such as for the MiR100 robot following next. The results in the rather little detailed MARA use case are quite evident but therefore also plausible (``guard the closest graph cut between the asset and the defender''). The nontrivial indication here is the advice to let the attacker come ``close'' to the asset, while a defender would perhaps otherwise try to guard the outer perimeter of the system to keep the intruder out in first place. The optimum to be at the closest graph-cut towards the asset is here explainable by our assumption that the attacker is stealthy and can start from anywhere, and in a practical situation, the defender may indeed have no reliable information about infected parts (otherwise, it would be trivial to disconnect and repair/replace the malfunctioning component). The defense policy that \CutTheRope computes is for practitioners operating blue teams that need to protect a large attack surface with no monitoring or signaling. A game-theoretic defense can help prioritize resources.

\subsection{MiR100: Results}
\showkeysfalse
\label{sec:mir100-results}
\showkeystrue
Similarly as for MARA, we used an attack graph for the MiR100 robot as shown in Figures \ref{figure:ctr_experiment3_results} and \ref{figure:ctr_experiment4_results}. The attack graph has 16 nodes and 24 edges. The attacker can enter at four points (nodes \textcircled{\footnotesize{1}}\ldots \textcircled{\footnotesize{4}}), and four targets (\textcircled{\footnotesize{12}}, \textcircled{\footnotesize{13}}, \textcircled{\footnotesize{14}} and \textcircled{\footnotesize{16}}). 

We conducted the likewise experiments under the same configurations as for the MARA use case, but this time making use of the CVE annotations to give information on how hard it is for the attacker to mount an exploit. For the defender, we again assume this one to be periodically active (as in Section \ref{sec:periodic-defender-poisson}) and to randomly spot check (as in Section \ref{sec:double-exponential}). Note that in this case we do not have an attack rate $\lambda$ as for the MARA use case before, since the movement of the attacker is solely governed by the difficulty to mount exploits. 

\begin{figure}[t!]
	\centering
	\begin{minipage}{0.4\textwidth}
		\scriptsize
		Optimal defense policy:\\[1mm]
		\centering\begin{tabular}{|l|l|}
			\hline
			Node & Probability\\\hline
			5 & 0\\
			6 & 0\\
			7 & 0\\
			8 & 0,163\\
			9 & 0\\
			10 & 0,000837\\
			11 & 0,37\\
			15 & 0,466\\
			\hline
		\end{tabular}
	\end{minipage}\quad
	\begin{minipage}{0.9\textwidth}
		\includegraphics[width=\textwidth]{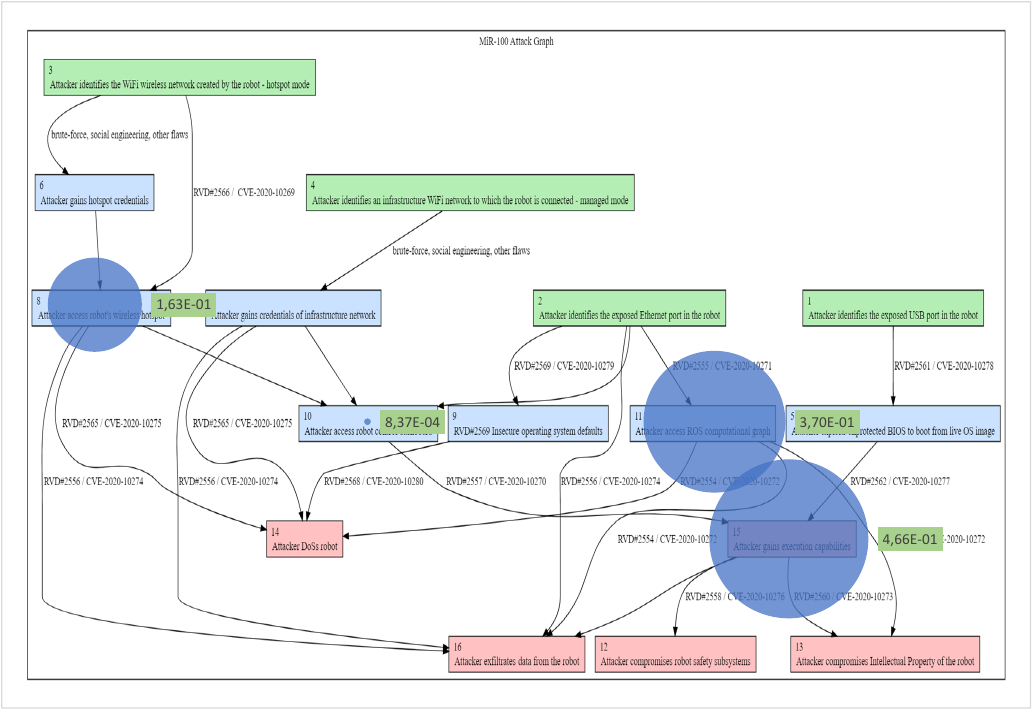}
	\end{minipage}\\[3mm]\scriptsize
	Efficacy of the optimal vs. heuristic defense:\\[1mm]
	\begin{tabular}{|c|c|}
	\hline 
	defense policy & chance to hit $v_0$ \\
	\hline 
	\CutTheRope & 7,7\% \tabularnewline 
	heuristic & 29,8\% \tabularnewline
	\hline 
	\end{tabular}
	\caption{MiR100 use case results for periodic spot checks (Section \ref{sec:periodic-defender-poisson}). Adapted from \cite{rass2023game}.}
    \label{figure:ctr_experiment3_results}
\end{figure}
It is interesting to note that the optimal defense policy does not advise to guard node \textcircled{\footnotesize{7}} or \textcircled{\footnotesize{9}}, which is a way towards reaching goal node \textcircled{\footnotesize{14}}. This may be assumption of the game, of the attacker already being somewhere in the network. The defense policy accounts for this and hence does not put more weight on lower nodes with higher incidence index. This way, the model accounts for defense in depth rather than entry prevention. 

Finally, let us turn to the case of the adversary working towards $v_0$ only in random time intervals between two appearances of the defender. This time, the defender's parameter $\lambda_D$ is the average ``window size'' $W$ (see Appendix \ref{chapter:gt_appendix} section \ref{apx:derivation-of-the-probability}), measured in units of time, e.g., days. It is the time that we give the attacker to mount activities in the game. The results are shown in Figure \ref{figure:ctr_experiment4_results}.

\begin{figure}
	\centering
	\begin{minipage}{0.9\textwidth}
		\scriptsize Optimal defense policy:\\[1mm]\centering
		\begin{tabular}{|l|l|l|l|} 
			\hline 
			\multirow{2}{*}{Node} & \multicolumn{3}{c|}{probability for}\tabularnewline
			& $\lambda_{D}=1$ & $\lambda_{D}=2$ & $\lambda_{D}=3$\tabularnewline
			\hline 			
			5 & 0 & 0 & 0\\
			6 & 0 & 0 & 0\\
			7 & 0,101 & 0,108 & 0,108\\
			8 & 0,249 & 0,281 & 0,291\\
			9 & 0 & 0 & 0,0212\\
			10 & 0,00194 & 0,00161 & 0,00206\\
			11 & 0,354 & 0,28 & 0,238\\
			15 & 0,294 & 0,33 & 0,34\\
			\hline
		\end{tabular}
	\end{minipage}\quad
	\begin{minipage}{0.9\textwidth}
		\includegraphics[width=\textwidth]{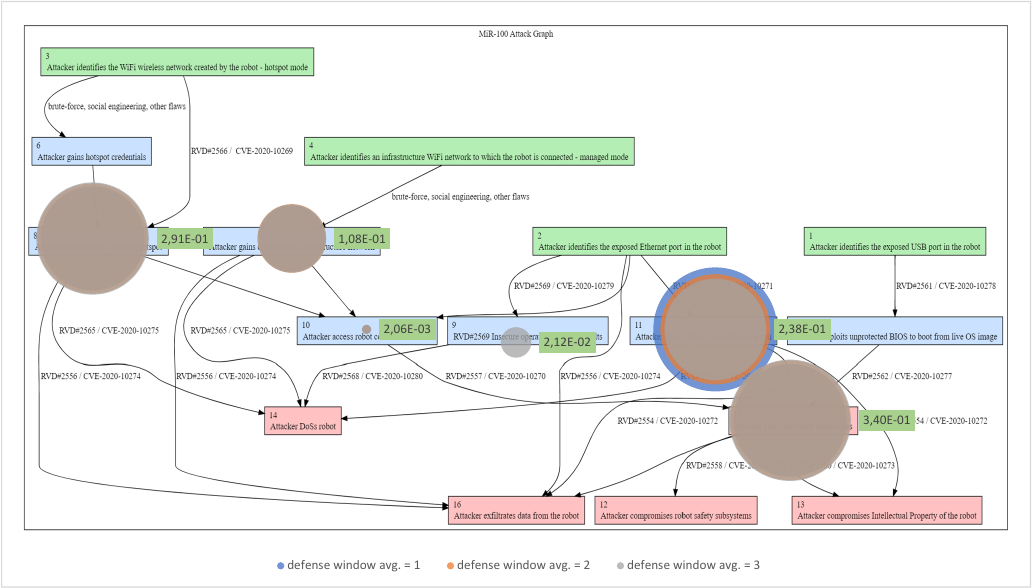}
	\end{minipage}\\[3mm]\scriptsize
	Efficacy of the optimal vs. heuristic defense:\\[1mm]
	\begin{tabular}{|c|c|c|c|}
	\hline 
	\multirow{2}{*}{defense policy} & \multicolumn{3}{c|}{chances to hit $v_{0}$}\tabularnewline
	& $\lambda_{D}=1$ & $\lambda_{D}=2$ & $\lambda_{D}=3$\tabularnewline
	\hline  
	\CutTheRope & 2.4\% & 3.5\% & 4.1\%\tabularnewline 
	heuristic & 11\% & 16.1\% & 19\%\tabularnewline
	\hline 
	\end{tabular}
	\caption{MiR100 use case results for spot checks at random intervals (Section \ref{sec:double-exponential}). Adapted from \cite{rass2023game}.}
    \label{figure:ctr_experiment4_results}
\end{figure}

Similar as for the MARA use case, the defense locations are the same in all cases, with the defense efforts only slightly differing according to how large the window is for the attacker, respectively, how frequently the defender comes back. The performance of the defense is shown in the bottom table of Figure \ref{figure:ctr_experiment4_results}. Consistent with the intuition, the attacker's chances to reach $v_0$ become larger if the defense window is made larger. In both, the experiments with the heuristic defense and optimized under \CutTheRope, the value $\lambda_D$ gives the average number of time units before the defender comes back. That is, larger $\lambda_D$ give the attacker more time to exploit (conversely to the interpretation of $\lambda_D$ in the other experiments, where it was the frequency of the defender's return). Again, the experiments show that \CutTheRope outperforms the heuristic defense considerably.

\section{Discussion}
\showkeysfalse
\label{sec:discussion}
\showkeystrue
The experimental findings suggest that the apparent optimal defense strategy delivered by \CutTheRope is to guard the immediate neighborhood of the target asset, so as to cover cases where the attacker has already deeply penetrated the system when the game begins. Indeed, an analytic characterization of the optimal defense under \CutTheRope is obtained in Appendix \ref{chapter:gt_appendix} section \ref{sec:analytic-results} as Proposition \ref{prop:defenders-equilibrium-support}. It confirms a certain graph cut to be optimal under certain assumptions, but not in all of our test cases. For this reason, we leave the discussion of analytic results as an appended remark here, and continue the discussion with more practical aspects.

\subsection{Incomplete Attack Graphs and Zero-Day Exploits}
\showkeysfalse
\label{sec:zero-days}
\showkeystrue
The heuristic defense of Section \ref{sec:heuristic-defense} may in reality be still over-optimistic, in its assumption of complete knowledge about the attack graph. Practical defense teams may only have a limited knowledge or possibility to construct the entire attack graph, and it is generally unaccomplishable for the defender to get exactly the same attack graph as the adversary has. 
The simulations implemented in this chapter have been made with the possibility to include only a randomly chosen subset of nodes in the defender's possibilities to spot check, so include such incomplete knowledge in the analysis. Concretely, the code was made to randomly reduce the defender's spot check locations to, for example, only 75\% of the nodes in the attack graph. Under such reduced possibilities, the game runs against an attacker with more, i.e., full, knowledge about the attack graph. We confine ourselves here to reporting that the defense policies performed worse than under full knowledge (not surprisingly), but both policies (\CutTheRope and the heuristic) lost performance at approximately equal magnitudes, leaving their relative quality over one another without substantial changes.

We emphasize that a simulation under such reduced knowledge for the defender, whereas giving the attacker full knowledge, can be viewed as a study of the impact of zero-day exploits used by the attacker. That is, any node excluded from the defense, but used by the attacker is nothing else than a zero-day vulnerability. Since a systematic account for this would be beyond the scope (and space limits) of this work, we will explore this route along future work.

\subsection{Cutting the rope vs. Changing the Attack Graph}
\showkeysfalse
\label{sec:cutting-the-rope-vs-changing-the-attack-graph}
\showkeystrue
In our experimental instance of the game, we let the attack graph remain unchanged over time. In particular, we assume that none of the defender's actions causes a permanent removal of a certain backdoor. 
This is practically motivated by the fact that spot checking may remove some, but not all vulnerabilities, so that, for example, one buffer overflow vulnerability in a secure shell implementation might get fixed, but other exploits of the same kind remain open, making the respective nodes remain unchanged in the graph after an inspection. 
Likewise, remote shell access may be required for the business workflow and hence cannot be deactivated, but only the access credentials might be updated. 
In that case, the remote shell access exists before and after an inspection. 
Even though the game model itself uses a static attack graph (see Figure \ref{figure:game_theory_ctr}), this one may itself require an update from time to time upon changes in the infrastructure.  
This is part of the business continuity management related to security, and accordingly changes the action sets for the defender and attacker. 
The implementation of the game, however, remains unaffected, except for the specification/input of the attack graph. We close the discussion at this point, referring to Appendix \ref{chapter:gt_appendix} section \ref{sec:application-for-risk} for a continuation of this discussion.

\subsection{Further Generalizations}
The movement patterns as studied admit further modifications and generalizations, yet to be explored, such as:
\subsubsection*{Probabilistic Success on Spot Checks}
First, to the advantage of the attacker, suppose that the defender is not necessarily successful on wiping out the adversary inside a node $c$, which may the more ``probable'' case in an enterprise or embedded network. It is not difficult to generalize the model towards this: 
If we write $p_c$ for the likelihood to actually cut the rope at $c\in V$ upon trying so,
\eqref{eqn:adversary-utility} becomes a mix of cut and uncut paths,
\begin{align*}
	\Pr(&\text{adversary's location}=v) = \\
	&p_c \cdot \Pr(\text{adversary's location}=v|V(\pi|_c))\\
	&\quad+ (1-p_c)\cdot\Pr(\text{adversary's location}=v|V(\pi)),
\end{align*}
and \eqref{eqn:adversary-utility} is defined alike by the entirety of all these values for all $v\in V$. Thus, the computation as such does not change, only the code needs to use the above formulas to compute the payoffs. If the probabilities are made conditional on the system state, the analysis can be made to account for changing system conditions too. 

\subsubsection*{Multiple Adversarial Targets} \CutTheRope may be modified towards a multi-criteria game, treating all target nodes as individual targets in the game. The concept of a security strategy has a multi-goal counterpart, which the software used for the experimental implementation already supports. The experiments reported here could, possibly, be re-conducted without the merge of targets, i.e., the graph-theoretic contraction.

\subsection{Complexity and Scalability}
\showkeysfalse
\label{sec:complexity}
\showkeystrue
The complexity of the analysis is governed by the time to solve a sequence of linear optimizations. The dimensions of these problems depend on the number of strategies for both players. Using interior point methods, the computational complexity is $O(\abs{AS_1}\cdot p(\abs{AS_1}\cdot\abs{AS_2}))$ for a polynomial $p$ that depends on the chosen optimization algorithm. The need for an exhaustive enumeration of attack paths can raise scalability issues, since the number of attack paths is worst-case exponential. However, the number of paths in an attack \emph{graph} may become large only because many paths overlap in large portions, and the defender may consider using only a subset of paths that cover all edges in the attack graph, so as to cover all known exploits (which is a polynomial number), rather than all possible paths (whose number is exponential). We did not implement such a dimensionality reduction, this modification of the model is a possible aisle of future studies. For the use cases in our research, the number of paths was sufficiently small to admit an exhaustive enumeration. Likewise is the number of paths feasibly small if the battlefield is an attack tree, rather than an attack graph. In any case, \CutTheRope itself does not conceptually change if the restriction is imposed only on the cardinality of the strategy sets to be polynomial in the number of nodes in the attack graph.

\subsection{Including Signals about Adversarial Activities}
\showkeysfalse
\label{sec:signals}
\showkeystrue
The model assumes zero information for the defender about where the adversary is located. Many real-life systems use intrusion detection, and other signaling means (here explicitly not to be understood in the game-theoretic sense of signaling games). If we can compile the entirety of indications about the adversary's activity into a weight $\Pr(\theta)\neq 1/\abs{AS_2}$ for location $\theta$ to possibly start from. Higher values may be assigned where we have stronger indication of recent adversarial activity at location $\theta$.

\section{Conclusions and Outlook}

\CutTheRope has been designed for ease of use in applications with little information (such as exemplified with the MARA robot use case), but also situations where there is detailed information encoded in the attack graph (such as for the MiR100 use case), or even when adversarial indications are available from auxiliary security systems, such as intrusion detection or others (including is possible as outlined in Section \ref{sec:signals}). The accuracy hence depends on how much information we can bring into the game, being a rather crude approximation for MARA, but much more fine-grained for the MiR100 use case. In both cases, however, the defender gets nontrivial advice on where to allocate its typically scarce resources for a best defense, beyond just guarding a graph-cut or choke point towards the critical asset $v_0$. When there are several such (evident) critical regions in the network to defend, different choke points may be of different criticality, depending on how many attack scenarios (each executed by another avatar in \CutTheRope) actually make use of this area in the attack graph.

The results obtained show some limitations for the practical use. First, and most substantially, the results depend on the ordering of the nodes, and -- in addition to the general non-uniqueness of equilibria -- hence may be ambiguous for the defender, leaving a residual chance of there being other defense possibilities. The algorithms applied in this work give only one solution, among perhaps many others. Second, the setting of probabilities from CVE, CVSS or likewise annotations is a nontrivial matter on its own, with only few first steps towards a systematic and sound derivation available in the literature \cite{Koenig2018b}. The assumption of invisibility of the intruder can perhaps be weakened by including signals from intrusion detection or other side-information in the defender's policy. The model simulated here does not include this possibility. Finally, the condensation of several attack goals into a single target node comes with the price of losing accuracy and some information about which attack goal may be more likely to be reached, thus making multi criteria optimization an interesting generalization to study.

Generally, \CutTheRope opens up an interesting class of games of mixed timing of moves between the actors, unlike as in extensive or normal form games, where players usually take actions in a fixed order. Likewise, and also different to many other game models, \CutTheRope has no defined start or finish for the defender (``security is never done''), while only one of the players knows when the game starts and ends, and the attacker can send its avatars from all possible locations in the network. The model is thus complementary to \textsc{FlipIt}, while it allows the attacker to spend any amount of time in the system, as long as the vital asset remains out of reach. This is actually to reflect the reality of security management: we cannot keep the adversary out, we can only try keeping as far away as possible.

In conclusion, \CutTheRope work presented in this chapter epitomizes the dynamic interplay between attack and defense in the realm of robotic cybersecurity, embodying a nuanced approach to resource allocation and threat anticipation. Its adaptability across various information landscapes—from sparse to rich data scenarios—provides a robust framework for decision-making in cybersecurity. While acknowledging its inherent limitations and the non-uniqueness of solutions, \CutTheRope underscores the perpetual nature of cyber defense. It represents a significant stride in the evolution of cybersecurity models, transitioning from static defense strategies to a more proactive, predictive stance. This model not only facilitates effective defense posturing but also enriches our understanding of the complex cyber battlefields, filled with automation, and where the security landscape is constantly reshaped by emerging technologies and sophisticated adversaries.

    \chapter{Guiding Attacks with Machine Learning}
\label{chapter:machine_learning}

\epigraph{\textbf{Contributions}: \emph{One of the top two contributors at \cite{deng2023pentestgpt}. Co-led software contributions and security methodology. All artifacts have been open sourced and results reproducible. Note various LLM models used within this work are not deterministic. This may difficult obtaining identical results.}}{}


In the rapidly evolving field of offensive robot cybersecurity, particularly in automated systems (referenced in Part \ref{part:automation}), the challenge lies in effectively comprehending and navigating the complex threat landscape. Following the previous Chapter \ref{chapter:game_theory}, which focused on automating attack vector selection through game theory, we now turn to the task of guiding an autonomous agent. This involves choosing the correct sequence of actions (like reconnaissance, exploitation, and exfiltration) and appropriate payloads to effectively utilize the chosen attack vector. This necessitates a fusion of in-depth cybersecurity knowledge with specialized domain expertise, such as robotics, each bringing unique challenges and technological nuances.

In this chapter, we delve into the sophisticated realm of guiding autonomous cybersecurity agents, whether for offensive or defensive purposes. The challenge lies in the enormity and obscurity of the cybersecurity state and action spaces, coupled with the unpredictability of the environment. While traditional machine learning approaches, including Reinforcement Learning, have demonstrated proficiency in learning basic policies, their effectiveness is often limited to the data sets they are trained on, lacking the capacity to generalize across diverse cybersecurity scenarios. 

This chapter introduces a novel method for autonomously steering cybersecurity agents towards specific objectives. Our approach harnesses the power of foundational models and Large Language Models (LLMs), which encompass a broad spectrum of general-purpose knowledge. These models are particularly adept at parsing, reasoning, and generating security-related artifacts, thereby playing a pivotal role in guiding cybersecurity exercises. We specifically focus on the application of Generative Pre-trained Transformer (GPT) models \cite{vaswani2017attention,radford2018improving}, with our prototype, \tool{}, demonstrating remarkable capabilities. \tool{} leverages the extensive knowledge base and analytical prowess of LLMs to offer innovative solutions and insights in the field of robotic cybersecurity.

Through this exploration, we aim to showcase how leveraging advanced machine learning techniques can profoundly impact the field of cybersecurity, particularly in automating and refining attack strategies. This chapter not only highlights the potential of these technologies but also sets the stage for future developments in the automated orchestration of cybersecurity tactics.





\section{Introduction}

\begin{figure}[h!]
    \centering 
   
    \begin{tikzpicture}[node distance=2cm, auto, scale=0.5, transform shape]
        \tikzstyle{box} = [rectangle, draw, fill=gray!20, text centered, rounded corners, minimum height=2.5em, minimum width=3cm]
        \tikzstyle{emphasize} = [box, fill=blue!30, draw=blue, line width=1.5pt]
        \tikzstyle{line} = [draw, -{Latex[scale=1.2]}]
        \tikzstyle{deemphasize} = [opacity=0.3]
        \tikzstyle{innerComp} = [rectangle, draw, fill=orange!30, text centered, minimum height=1.5em, minimum width=3em]
        \tikzstyle{innerComp_emphasize} = [rectangle, fill=blue!30, draw=blue, line width=1.5pt]
    
        \node[box] (U) {User};
        \node[box, deemphasize] (EF) at (11cm,0) {1. ExploitFlow}; 
        \node[box, emphasize, below left=4cm and 2.5cm of EF] (PentestGPT) {2. PentestGPT};
        \node[box, emphasize, below=8cm of EF] (PentestPerf) {3. PentestPerf};
        \node[box, right=3cm of EF] (T) {Target};
    
        \node[innerComp, deemphasize, above=2.5cm of EF, xshift=-4cm] (exploit) {exploit};
        \node[innerComp, deemphasize, right of=exploit] (flow) {flow};
        \node[innerComp, deemphasize, right of=flow] (graph) {graph};
        \node[innerComp, deemphasize, right of=graph] (adapters) {adapters};
        \node[innerComp, deemphasize, right of=adapters] (models) {models};
        \node[innerComp, deemphasize, right of=models] (state) {state};
        \node[innerComp, deemphasize, below=0.5cm of EF, xshift=3.5cm] (gt) {Game Theory (\CutTheRope)};
    
        \node[innerComp, above=2.5cm of PentestGPT, xshift=-2cm] (parsing) {parsing};
        \node[innerComp, right of=parsing] (reasoning) {reasoning};
        \node[innerComp, right of=reasoning] (generation) {generation};
    
        \draw[dashed, shorten <=5pt, shorten >=5pt] (EF) -- (exploit);
        \draw[dashed, shorten <=5pt, shorten >=5pt] (EF) -- (flow);
        \draw[dashed, shorten <=5pt, shorten >=5pt] (EF) -- (graph);
        \draw[dashed, shorten <=5pt, shorten >=5pt] (EF) -- (adapters);
        \draw[dashed, shorten <=5pt, shorten >=5pt] (EF) -- (models);
        \draw[dashed, shorten <=5pt, shorten >=5pt] (EF) -- (state);
        \draw[dashed, shorten <=5pt, shorten >=5pt] (EF) -- (gt);
        \draw[dashed, shorten <=5pt, shorten >=5pt] (PentestGPT) -- (gt);        
    
        \draw[dashed, shorten <=5pt, shorten >=5pt] (PentestGPT) -- (parsing);
        \draw[dashed, shorten <=5pt, shorten >=5pt] (PentestGPT) -- (reasoning);
        \draw[dashed, shorten <=5pt, shorten >=5pt] (PentestGPT) -- (generation);
    
        \path[line] (U) -- node[font=\small, midway, sloped, above] {programatically in Python} (EF); 
        \path[line] (U) edge[bend right=40] node[font=\small, sloped, below] {goal description in text} (PentestGPT);
        \path[line, deemphasize] (PentestGPT) edge[bend right=20] node[font=\small, sloped, below] {exchange exploit tree} (EF);
        \path[line, deemphasize] (EF) edge[bend left=20] node {} (PentestGPT);
        \path[line, deemphasize] (EF) -- node[font=\small, sloped, above, opacity=0.5] {Benchmarks an exploit flow} (PentestPerf);
        \path[line] (EF) -- (T);
        \path[line] (T) -- (EF);
    
        \node[rectangle, fill=gray!5, draw, dashed, deemphasize, fit=(EF) (PentestGPT), inner sep=1cm] (group) {};
        \node[anchor=north west, deemphasize, inner sep=0.5cm] at (group.south west) {4. Malism};
    
        \node[box, emphasize, below left=4cm and 2.5cm of EF] (PentestGPT) {2. PentestGPT};
    
        \begin{scope}[node distance=0.3cm, font=\footnotesize]
            \coordinate (legendOrigin) at (-4,-6); 
            \node[box, right=0.3cm of legendOrigin, minimum height=1.5em, minimum width=1.5em, text width=1.5em] (legendBox) {}; 
            \node[right=0.3cm of legendBox] {External entity};
    
            \node[box, below=0.2cm of legendBox, fill=gray!5, minimum height=1.5em, minimum width=1.5em, text width=1.5em] (otherpapers) {}; 
            \node[right=0.3cm of otherpapers] {Other chapters};
    
            \node[emphasize, below=0.2cm of otherpapers, minimum height=1.5em, minimum width=1.5em, text width=1.5em] (legendEmphasize) {};
            \node[right=0.3cm of legendEmphasize] {This chapter};
    
            \node[innerComp, below=0.2cm of legendEmphasize, minimum height=1.5em, minimum width=1.5em, text width=1.5em] (legendInner) {};
            \node[right=0.3cm of legendInner] {Inner Component};
        \end{scope}
    \end{tikzpicture}
    
    \caption{
        Machine Learning contributions to our architecture to develop fully automated offensive robot cybersecurity agent, a cognitive cybersecurity engine denominated \framework{}.
        %
        %
    }
    \label{figure:cybersecurity_cognitive_engine:machine_learning} 
\end{figure}
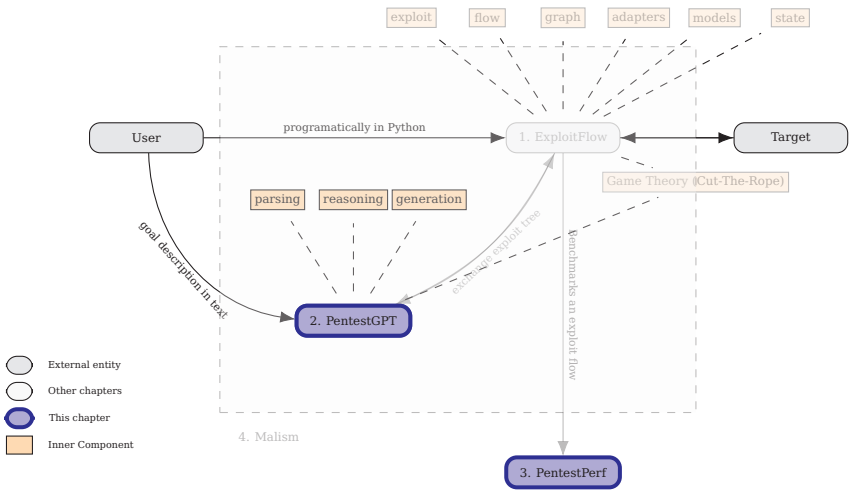

Guaranteeing a system's immunity to potential attacks is a formidable challenge. Offensive security methods, such as penetration testing (pen-testing) or red teaming, have become essential in the security lifecycle. As detailed by Applebaum~\cite{applebaum2017analysis}, these methods require security teams to attempt breaches of an organization's defenses to uncover vulnerabilities. They offer marked advantages over traditional defensive mechanisms, reliant on incomplete system knowledge and modeling. Guided by the principle \emph{``the best defense is a good offense''}, this study focuses on offensive strategies, particularly \emph{penetration testing}.

Penetration testing~\cite{arkin2005software} is a proactive offensive technique aiming at identifying, assessing, and mitigating as many security vulnerabilities as possible. This involves executing targeted attacks to confirm diverse flaws (e.g., erratic behaviors) and is efficacious in creating a comprehensive inventory of vulnerabilities complemented by actionable enhancement recommendations. As a widely-employed practice for security appraisal, penetration testing empowers organizations to discern and neutralize potential vulnerabilities in their networks and systems before exploitation by malicious entities. Despite its significance, the industry often leans on manual techniques and specialized knowledge~\cite{dengnautilus}, making it labor-intensive. This has generated a gap in responding to the escalating demand for adept and efficient security evaluations.

Recently Large Language Models (LLMs)~\cite{zhao2023survey, liu2023summary} are making striking progress, exhibiting an increasingly nuanced understanding of human-like text and effectively executing various tasks across diverse domains. One intriguing aspect of LLMs is their emergent abilities~\cite{wei2022emergent}, which are not explicitly programmed but arise during the training process. These abilities enable LLMs to perform complex tasks such as reasoning, summarization, question-answering, and domain-specific problem-solving without requiring specialized training. Such capabilities indicate the transformative potential of LLMs across various sectors, including cybersecurity. A critical question thus emerges: can LLMs be leveraged in cybersecurity, particularly for performing automated penetration testing?

Motivated by this question, we set out to evaluate the capabilities of LLMs on real-world penetration testing tasks. Unfortunately, the current benchmarks for penetration testing~\cite{antunes2010benchmarking, xiong2010model} are not comprehensive and fail to assess progressive accomplishments fairly during the process. To address this limitation, we construct a robust benchmark that includes test machines from HackTheBox~\cite{htb} and VulnHub~\cite{vulnhub}—two leading platforms for penetration testing challenges. Comprising 13 targets with 182 sub-tasks, our benchmark encompasses all vulnerabilities appearing in OWASP's top 10 vulnerability list~\cite{owasp}. Also, it offers a more detailed evaluation of the tester's performance by monitoring the completion status for each sub-task.

Armed with this benchmark, we conduct an exploratory study using \gptthree{}~\cite{gpt3.5}, \gptfour{}~\cite{gpt4}, and Bard~\cite{bard} as representative LLMs. We interactively test these models by guiding them to complete the penetration tasks against our benchmark targets. This interaction involves setting a penetration testing goal for the LLM, soliciting it for the appropriate operation to execute, implementing it in the testing environment, and feeding the test outputs back to the LLM for next-step reasoning (Figure~\ref{figure:testing_strategy}). By repeating this cycle, we derive the final penetration testing results.
To evaluate the performance of the LLMs, we compare their results against baseline solutions provided by official walkthroughs and solutions from certified penetration testers. By analyzing similarities and differences in their problem-solving approaches, we aim to better understand LLMs' penetration testing capabilities and discern how their problem-solving strategies diverge from those of human experts.

Our investigation yields intriguing insights into the capabilities and limitations of LLMs in penetration testing. We discover that LLMs demonstrate proficiency in managing specific sub-tasks within the testing process, such as utilizing testing tools, interpreting their outputs, and suggesting subsequent actions. Compared to human experts, LLMs are especially adept at executing complex commands and options with testing tools, while models like \gptfour{} excel in comprehending source code and pinpointing vulnerabilities. Furthermore, LLMs can craft appropriate test commands and accurately describe graphical user-interface operations needed for specific tasks. Leveraging their vast knowledge base, they can design inventive testing procedures to unveil potential vulnerabilities in real-world systems and CTF challenges. However, we also note that LLMs have difficulty in maintaining a coherent grasp of the overarching testing scenario, a vital aspect for attaining the testing goal. As the dialogue advances, they may lose sight of earlier discoveries and struggle to apply their reasoning consistently toward the final objective. Additionally, LLMs might overemphasize recent tasks in the conversation history, regardless of their vulnerability status. As a result, they tend to neglect other potential attack surfaces exposed in prior tests and fail to complete the penetration testing task.

The outcomes of our empirical study are promising, revealing that LLMs possess the necessary domain knowledge to perform penetration testing tasks. In particular, they are great at providing \emph{an intuition} of what to do in a given networking scenario. However, what they lack is effective guidance to carry out these tasks independently and maintain a cohesive grasp of the testing scenario. On the other hand, as investigated in a prior research publication \cite{mayoral2023exploitflow} focused on capturing the exploitation route (or flow) for automation. Given the complexity of the (network) state space, the state itself is not enough to reason about what are the best actions to pentest. It rapidly becomes evident that a heuristic is needed to support autonomous pentesting which helps pick actions to achieve given goals.
With this understanding, we aim to contribute unlocking the potential of modern machine learning approaches and develop a fully automated penetration testing framework that helps produce cybersecurity cognitive engines. Our overall architecture is depicted in Figure~\ref{figure:cybersecurity_cognitive_engine:machine_learning}, which shows our current work so far and near future planned contributions.
Our proposed framework, \framework{}, is designed to enable a user without in-depth security domain knowledge to produce its own cybersecurity cognitive engine that helps conduct penetration testing over an extensive range of targets. This framework comprises three primary components:
\begin{enumerate}

\item \ef{} \cite{mayoral2023exploitflow}: A modular library to produce cyber security exploitation routes (\emph{exploit flows}). \ef{} aims to combine and compose exploits from different sources and frameworks, capturing the state of the system being tested in a flow after every discrete action which allows learning attack trees that affect a given system. \ef{}'s main motivation is to facilitate and empower Game Theory and Artificial Intelligence (AI) research in cyber security. It provides a unique representation of the exploitation process that encodes every facet within it. Its representation can be effectively integrated with various penetration testing tools and scripts, such as Metasploit~\cite{metasploit} to perform end-to-end penetration testing. Such representation can be further visualized to guide the human experts for the reproduction of the testing process. 

\item \tool{} (\textbf{this chapter}): An automated penetration testing system that leverages the power of LLMs to produce testing guidance and intuition at every given discrete state. It functions as the core component of the \framework{} framework, guiding the LLMs to efficiently utilize their domain knowledge in real-world testing scenarios.

\item \pentestperf: A comprehensive penetration testing benchmark developed to evaluate the performances of penetration testers and automated tools across a wide array of testing targets. It offers a fair and robust platform for performance comparison.
\end{enumerate}

The harmonious integration of these three components forms an automated, self-evolving penetration testing framework capable of executing penetration tests over various targets, \framework{}. This framework to develop fully automated penetration testing tools, which we named back in Chapter \ref{chapter:machine_learning} \emph{cybersecurity cognitive engines}, aims to revolutionize the field of penetration testing by significantly reducing the need for domain expertise and enabling more comprehensive and reliable testing.

Building on our insights into LLMs' capabilities in penetration testing, we present \tool{}, an interactive system designed to enhance the application of LLMs in this domain. Drawing inspiration from the collaborative dynamics commonly observed in real-world human penetration testing teams, \tool{} is particularly tailored to manage large and intricate projects. It features a tripartite architecture comprising Reasoning, Generation, and Parsing Modules, each reflecting specific roles within penetration testing teams.
The Reasoning Module emulates the function of a lead tester, focusing on maintaining a high-level overview of the penetration testing status. We introduce a novel representation, the Pentesting Task Tree (PTT), based on the cybersecurity attack tree~\cite{attacktree}. This structure encodes the testing process's ongoing status and steers subsequent actions. Uniquely, this representation can be translated into natural language and interpreted by the LLM, thereby comprehended by the Generation Module and directing the testing procedure.
The Generation Module, mirroring a junior tester's role, is responsible for constructing detailed procedures for specific sub-tasks. Translating these into exact testing operations augments the generation process's accuracy. Meanwhile, the Parsing Module deals with diverse text data encountered during penetration testing, such as tool outputs, source codes, and HTTP web pages. It condenses and emphasizes these texts, extracting essential information. Collectively, these modules function as an integrated system. \tool{} completes a complex penetration testing task by bridging high-level strategies with precise execution and intelligent data interpretation, thereby maintaining a coherent and effective testing process.

We evaluate \tool{} using our benchmark to showcase its efficacy. Specifically, our system achieves remarkable performance gains, with 228.6\% and 58.6\% increases in sub-task completion compared to the direct usage of \gptthree{} and \gptfour{}, respectively. We also apply \tool{} to the HackTheBox active penetration testing machines challenge~\cite{htbactive}, completing 4 out of the 10 selected targets at a total OpenAI API cost of 131.5 US Dollars, ranking among the top 1\% players in a community of over 670,000 members. This evaluation underscores \tool{}'s practical value in enhancing penetration testing tasks' efficiency and precision. The solution has been made publicly available on GitHub\footnote{For anonymity during the review process, we have created an anonymous repository to open-source our solution~\cite{anonymous-project}.}, receiving widespread acclaim with over 4,700 stars to the date of writing, active community engagement, and ongoing collaboration with multiple industrial partners. In summary, we make the following contributions:
\begin{itemize}[leftmargin=*]
    \item \textbf{Development of a Comprehensive Penetration Testing Benchmark.} We craft a robust and representative penetration testing benchmark, encompassing a multitude of test machines from leading platforms such as HackTheBox and VulnHub. This benchmark includes 182 sub-tasks covering OWASP's top 10 vulnerabilities, offering fair and comprehensive evaluation of penetration testing.

    \item \textbf{Empirical Evaluation of LLMs for Penetration Testing Tasks.} By employing models like \gptthree{}, \gptfour{}, and Bard, our exploratory study rigorously investigates the strengths and limitations of LLMs in penetration testing. The insights gleaned from this analysis shed valuable light on the capabilities and challenges faced by LLMs, enriching our understanding of their applicability in this specialized domain.

    \item \textbf{Development of an Innovative LLM-powered Penetration Testing System.} We engineer \tool{}, a novel interactive system that leverages the strengths of LLMs to carry out penetration testing tasks automatically. Drawing inspiration from real-world human penetration testing teams, \tool{} integrates a tripartite design that mirrors the collaborative dynamics between senior and junior testers. This architecture optimizes LLMs' usage, significantly enhancing the efficiency and effectiveness of automated penetration testing.
\end{itemize}

\section{Background}
\showkeysfalse
\label{sec:background}
\showkeystrue

\subsection{Penetration Testing}
Penetration testing, or ``pentesting'', is a critical practice to enhance organizational systems' security. In a typical penetration test, security professionals, known as penetration testers, analyze the target system, often leveraging automated tools. The standard process is divided into seven phases~\cite{weidman2014penetration}: Reconnaissance, Scanning, Vulnerability Assessment, Exploitation, and Post Exploitation (including reporting). These phases enable testers to understand the target system, identify vulnerabilities, and exploit them to gain access.

Despite substantial efforts~\cite{abu2018automated, schwartz2019autonomous, xiong2010model} in the field, a fully automated penetration testing pipeline remains elusive. The challenges in automating the process arise from the comprehensive knowledge needed to understand and manipulate various vulnerabilities and the demand for a strategic plan to guide subsequent actions. 
In practice, penetration testers often use a combined approach integrating depth-first and breadth-first search techniques~\cite{weidman2014penetration}. They begin by obtaining an overarching understanding of the target environment (utilizing a breadth-first approach) before focusing on specific services and vulnerabilities (employing a depth-first approach). This strategy ensures a thorough system analysis while prioritizing promising attack vectors, relying heavily on individual experience and domain expertise.
Additionally, penetration testing requires many specialized tools with unique features and functions. This diversity adds complexity to the automation process. Therefore, even with the support of artificial intelligence, creating a fully unified solution for automated penetration testing remains a formidable challenge.

\subsection{Large Language Models}
Large Language Models (LLMs), including OpenAI's \gptthree{} and \gptfour{}, are prominent tools with applications extending to various cybersecurity-related fields, such as code analysis~\cite{pearce2022asleep} and vulnerability repairment~\cite{pearce2023examining}. These models are equipped with wide-ranging general knowledge and the capacity for elementary reasoning. They can comprehend, infer, and produce text resembling human communication, aided by a training corpus encompassing diverse domains like computer science and cybersecurity. Their ability to interpret context and recognize patterns enables them to adapt knowledge to new scenarios. This adaptability, coupled with their proficiency in interacting with systems in a human-like way, positions them as valuable assets in enhancing penetration testing processes. Despite inherent limitations, LLMs offer distinct attributes that can substantially aid in the automation and improvement of penetration testing tasks. The realization of this potential, however, requires the creation and application of a specialized and rigorous benchmark.


\section{Penetration Testing Benchmark}
\showkeysfalse
\label{sec:benchmark}
\showkeystrue

\subsection{Motivation}
\showkeysfalse
\label{sec:benchmark:motivation}
\showkeystrue
The fair evaluation of Large Language Models (LLMs) in penetration testing necessitates a robust and representative benchmark. Existing benchmarks in this domain~\cite{antunes2010benchmarking,xiong2010model} have several limitations. 
First, they are often restricted in scope, focusing on a narrow range of potential vulnerabilities, and thus fail to capture the complexity and diversity of real-world cyber threats. For instance, the OWASP benchmark \textit{juiceshop}~\cite{owasp-juiceshop} is commonly adopted for evaluating web vulnerability testing. However, it does not touch the concept of privilege escalation, which is an essential aspect of penetration testing.
Second, existing benchmarks may not recognize the cumulative value of progress through the different stages of penetration testing, as they tend to evaluate only the final exploitation success. This approach overlooks the nuanced value each step contributes to the overall process, resulting in metrics that might not accurately represent actual performance in real-world scenarios.

To address these concerns, we propose the construction of a comprehensive penetration testing benchmark that meets the following criteria:

\noindent\textbf{Task Variety.} The benchmark must encompass diverse tasks, reflecting various operating systems and emulating the diversity of scenarios encountered in real-world penetration testing.

\noindent\textbf{Challenge Levels.} To ensure broad applicability, the benchmark must include tasks of varying difficulty levels suitable for challenging novice and expert testers.

\noindent\textbf{Progress Tracking.} Beyond mere success or failure metrics, the benchmark must facilitate tracking of incremental progress, thereby recognizing and scoring the value added at each stage of the penetration testing process.

\subsection{Benchmark Design}
\showkeysfalse
\label{sec:benchmark:design}
\showkeystrue
Following the criteria outlined previously, we develop a comprehensive benchmark that closely reflects real-world penetration testing tasks. The design process progresses through several stages.

\noindent\textbf{Task Selection.} Our first step is to meticulously select tasks from HackTheBox~\cite{htb} (HTB) and VulnHub~\cite{vulnhub}. These platforms are widely recognized and frequently utilized for penetration testing practice. Our selection process is guided by a desire to incorporate a diverse and challenging set of tasks. Capture The Flag (CTF) exercises and real-world testing scenarios have been included. The targets are drawn from various operating systems and encompass a broad spectrum of vulnerabilities. This approach ensures a wide representation of real-world penetration testing tasks. To account for different skill levels, the selected tasks cover a broad range of difficulty. While HTB and VulnHub offer reference difficulty levels, we further validate these with input from three certified penetration testers\footnote{Our penetration testers are all Offensive Security Certified Professionals (OSCP).}, including the authors of this work. This collaborative process yields a consensus on the final difficulty rating for each target, aligning with the conventional categorization~\cite{vulnhub} of penetration testing machines into \textit{easy}, \textit{medium}, and \textit{hard} levels.
It is worth noting that our benchmark does not explicitly include benign targets for evaluating false positives. This is because the iterative and exploratory nature of penetration testing inherently involves investigating services within the target that may ultimately be deemed benign. In this process, our primary focus is successfully identifying genuine vulnerabilities.

\noindent\textbf{Task Decomposition.} We further parse the testing process of each target into a series of sub-tasks, following the standard solution commonly referred to as the ``walkthrough'' in penetration testing. Each sub-task corresponds to a unique step in the overall process. Specifically, a sub-task may represent a micro-step involving the use of a particular penetration testing tool (e.g., performing port scanning with nmap~\cite{Nmap}) or exploiting a unique vulnerability identified in the Common Weakness Enumeration (CWE)~\cite{cwe} (e.g., exploiting SQL injection).
To standardize decomposition, we arrange the sub-tasks into a two-layer structure. Initially, we categorize each sub-task according to the five phases of penetration testing, as described in Section~\ref{sec:background}. Then, we label the sub-task with either the corresponding CWE item it targets or the specific tools employed. These two steps enable us to formulate an exhaustive list of sub-tasks for every benchmark target. We include this list in Appendix \ref{chapter:ml_appendix}, and the complete sub-task information is accessible on our anonymous open-source project~\cite{anonymous-project}.

\noindent\textbf{Benchmark Validation.} The final stage of our benchmark development involves rigorous validation. This step ensures that our benchmark accurately reflects real-world penetration testing scenarios and offers reproducibility. During validation, three certified penetration testers independently attempt the penetration testing targets, refining the sub-tasks as needed. We adjust our task decomposition accordingly because some targets may have multiple valid solutions.

By the end, we compile a benchmark of 13 penetration testing targets with 182 sub-tasks in 25 categories. The benchmark includes all types of vulnerabilities as listed in the OWASP~\cite{owasp} Top 10 Project. Detailed information on the included categories is listed in the Appendix \ref{chapter:ml_appendix}. To contribute to community development, we have made this benchmark publicly available online at our anonymous project website~\cite{anonymous-project}.


\section{Exploratory Study}
\showkeysfalse
\label{sec:exploratory-study}
\showkeystrue

We conduct an exploratory study to assess the capabilities of LLMs in penetration testing. Our primary objective is determining how well LLMs can adapt to the real-world complexities and challenges associated with penetration testing tasks. Specifically, we aim to address the following two research questions:

\noindent\textbf{RQ1 (Capability):} To what extent can LLMs perform penetration testing tasks? 

\noindent\textbf{RQ2 (Comparative Analysis):} How do the problem-solving strategies of human penetration testers and LLMs differ?

We utilize the benchmark described in Section~\ref{sec:benchmark} to evaluate the performance of LLMs on penetration testing tasks. In the following, we first delineate our testing strategy for this study. Subsequently, we present the testing results and an analytical discussion to address the above research questions.

\subsection{Testing Strategy}
\showkeysfalse
\label{sec:exploratory-study:testing-strategy}
\showkeystrue

LLMs cannot perform penetration tests directly. Their capabilities are primarily text-based, responding to queries and providing suggestions. However, penetration testing often involves operations with user interfaces (UI) and understanding graphical information, such as website images. This necessitates a bridge between the test machine and the LLM to facilitate task completion.

We introduce an interactive loop structure to evaluate the LLM's abilities in penetration testing within our benchmark. This process, depicted in Figure~\ref{figure:testing_strategy}, consists of the following stages: \ding{182} We present the target information to the LLM and request recommendations for penetration testing actions. This initiates a looped testing procedure.
\ding{183} We implement the actions suggested by the LLM, which encompass both terminal commands and graphical interactions.
\ding{184} We gather the results of the actions. Text-based output, such as terminal responses or source code, is recorded directly. Human penetration testers provide concise summaries and descriptions for non-textual results (e.g., images). The summarized information is returned to the LLM to inform subsequent actions. 
\ding{185} This cycle continues until we identify a solution or reach a standstill. We compile a record of the testing procedures, encompassing successful tasks, ineffective actions, and any reasons for failure, if applicable.

\begin{figure}[t]
	\centering
	\includegraphics[width=\linewidth]{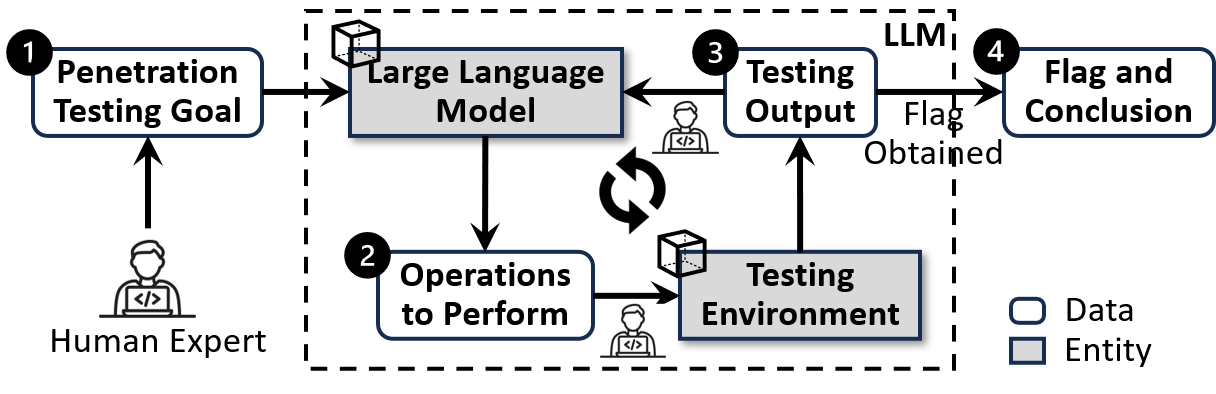}
 \caption{Overview of strategy to use LLMs for penetration testing. Reproduced from \cite{deng2023pentestgpt}.}
	\label{figure:testing_strategy}
\end{figure}

\subsection{Evaluation Settings}
\showkeysfalse
\label{sec:study-setting}
\showkeystrue
We proceed to assess the performances of various LLMs in penetration testing tasks using the strategy mentioned above.

\noindent\textbf{Model Selection.} Our study focuses on three cutting-edge LLMs that are currently accessible: \gptthree{} and \gptfour{} from OpenAI and LaMDA~\cite{Collins_2021} from Google. These models are selected based on their prominence in the research community and consistent availability. To interact with the LLMs mentioned above, we utilize chatbot services provided by OpenAI and Google, namely ChatGPT~\cite{chatgpt} and Bard~\cite{bard}. For this chapter, the terms \gptthree{}, \gptfour{}, and Bard represent these three LLMs.

\noindent\textbf{Experimental Setup.} We conduct our experiments in a local environment where the target and testing machines are part of the same private network. The testing machine operates on Kali Linux~\cite{kali}, version 2023.1. Several measures are implemented to validate the effectiveness of our testing procedures. First, we repeat the tests to account for inherent variability in the LLM outputs. In particular, we test each target with each LLM five times. We performed 195 tests in total, i.e., 5 repetitions * 3 models * 13 targets, wherein the 13 targets correspond to the \emph{Easy}, \emph{Medium} and \emph{Hard} target scenarios\footnote{\emph{Easy}, \emph{Medium} and \emph{Hard} is the categorization convention used in the security CTF community to represent the difficulty for a human to solve such challenge.}.
In this process, a sub-task is considered successful if it succeeds in at least one trial, and a penetration task is considered successful as long as one trial succeeds. Second, we make the best efforts to translate UI operations and graphical information into natural languages accurately. In addition, we ensure the precise execution of the instructions provided by the LLMs. Third, we maintain the integrity of the testing process by strictly limiting the tester's role to executing actions and reporting results without adding expert knowledge or guidance. Finally, the testing and target machines are rebooted after each test to reset their states, ensuring a consistent starting point for each test.

\noindent\textbf{Tool Usage.} Our study aims to assess the innate capabilities of LLMs without reliance on automated vulnerability scanners such as Nexus~\cite{nexus} and OpenVAS~\cite{rahalkar2019openvas}. Consequently, we explicitly instruct the LLMs to refrain from using these tools. However, we follow the LLMs' recommendations for utilizing other tools designed to validate specific vulnerability types (e.g., sqlmap~\cite{sqlmap} for SQL injections). Occasionally, versioning discrepancies may lead the LLMs to provide incorrect instructions for tool usage. In such instances, our penetration testing experts evaluate whether the instructions would have been valid for a previous version of the tool. They then make any necessary adjustments to ensure the tool's correct operation.

\subsection{Capability Evaluation (RQ1)}

\begin{table}[th]
\centering
\scalebox{0.4}{
    \begin{tabular}{c|cc|cc|cc|cc} 
    \rowcolor{black}
    \multicolumn{1}{c}{ }        & \multicolumn{2}{c}{\color{white}\textbf{Easy}}                                           & \multicolumn{2}{c}{\color{white}\textbf{Medium}}                                           & \multicolumn{2}{c}{\color{white}\textbf{Hard}}                                       & \multicolumn{2}{c}{\color{white}\textbf{Total}}                                           \\ 
    Tools                       & Overall (7)                      & Sub-task (77)                   & Overall (4)                      & Sub-task (71)                     & Overall (2)                   & Sub-task (34)                  & \multicolumn{1}{c}{Overall (13)} & \multicolumn{1}{c}{Sub-task (182)} \\ \midrule
    \gptthree{}                      & 1 (14.29\%)                       & 24 (31.17\%)                     & 0 (0.00\%)                        & 13 (18.31\%)                       & 0 (0.00\%)                     & 5 (14.71\%)                     & 1 (7.69\%)                        & 42 (23.07\%)                        \\
    \gptfour{}                        & 4 (57.14\%)                       & 52 (67.53\%)                     & 1 (25.00\%)                       & 27 (38.03\%)                       & 0 (0.00\%)                     & 8 (23.53\%)                     & 5 (38.46\%)                       & 87 (47.80\%)                        \\
    Bard                        & 2 (28.57\%)                       & 29 (37.66\%)                     & 0 (0.00\%)                        & 16 (22.54\%)                       & 0 (0.00\%)                     & 5 (14.71\%)                     & 2 (15.38\%)                       & 50 (27.47\%)                        \\ \midrule
    \multicolumn{1}{l}{Average} & \multicolumn{1}{l}{2.3 (33.33\%)} & \multicolumn{1}{l}{35 (45.45\%)} & \multicolumn{1}{l}{0.33 (8.33\%)} & \multicolumn{1}{l}{18.7 (26.29\%)} & \multicolumn{1}{l}{0 (0.00\%)} & \multicolumn{1}{l}{6 (17.64\%)} & 2.7 (20.5\%)                      & 59.7 (32.78\%) \\
    \bottomrule
    \end{tabular}}
\caption{Overall performance of LLMs on Penetration Testing Benchmark.}    
\label{table:pentestgpt_overall_performance}
\end{table}

To study \textbf{RQ1}, we begin by assessing the overall performance of three prominent LLMs: \gptfour{}, Bard, and \gptthree{}. The results of these evaluations are compiled in Table~\ref{table:pentestgpt_overall_performance}. The experimental results show that the three LLMs completed at least one end-to-end penetration testing task. This achievement underscores their ability to conduct a broad spectrum of testing operations, particularly within environments of less complexity. 
Among the models, \gptfour{} stands out with superior performance, achieving success with 4 targets of easy difficulty and 1 of medium difficulty. Bard and \gptthree{} also demonstrate commendable capabilities, completing 2 and 1 targets of easy difficulty, respectively. When examining sub-tasks, \gptfour{} accomplishes 52 of 77 on easy difficulty targets and 27 out of 71 on medium ones, underlining its potential for significant contributions to more complex penetration testing scenarios. Though not as proficient as \gptfour{}, \gptthree{} and Bard still show promise, completing 13 (18.31\%) and 16 (22.54\%) of sub-tasks on medium difficulty targets, respectively. 
However, the performance of all three models noticeably diminishes when challenged with hard difficulty targets. While each model can complete the initial reconnaissance phase on these targets, they fall short in exploiting the identified vulnerability. This outcome is not entirely unexpected, as the hard difficulty machines are deliberately crafted to be exceedingly difficult. They often include services that appear vulnerable but are, in fact, non-exploitable—a trait commonly referred to as \textit{rabbit holes}~\cite{yeo2013using}. Additionally, the routes to successfully exploiting these machines are typically inventive and unforeseeable, making them resistant to straightforward replication by automated tools. For instance, the benchmark target \textit{Falafel} involves deliberately crafted SQL injection vulnerabilities, which are resistant to \textit{sqlmap} and can only be exploited through manually designed payloads. Existing LLMs do not exhibit the capability to solve them solely without the guidance of human experts. 

\begin{tcolorbox}[left=1mm, right=1mm, top=0.5mm, bottom=0.5mm, arc=1mm]
\textbf{Finding 1:} Large Language Models (LLMs) have shown proficiency in conducting end-to-end penetration testing tasks but struggle to overcome challenges presented by more difficult targets.
\end{tcolorbox}

\begin{table}[ht]
\resizebox{\linewidth}{!}{
\begin{tabular}{l|cccc}

\rowcolor{black}
\color{white}\textbf{Sub-Tasks}                    & \color{white}\textbf{Walkthrough} & \color{white}\textbf{\gptthree{}} & \color{white}\textbf{\gptfour{}} & \color{white}\textbf{Bard} \\ 

General Tool Usage           & 25          & 4      & 10   & 7    \\ 
Port Scanning                & 9           & 9      & 9    & 9    \\
Web Enumeration              & 18          & 4      & 8    & 4    \\
Code Analysis                & 18          & 4      & 5    & 4    \\
Shell Construction           & 11          & 3      & 7    & 4    \\
Directory Exploitation       & 11          & 1      & 7    & 1    \\
General Privilege Escalation & 8           & 2      & 4    & 3    \\
Flag Capture                 & 8           & 1      & 5    & 2    \\
Password/Hash Cracking       & 8           & 2      & 4    & 2    \\
Network Exploitation         & 7           & 1      & 3    & 2    \\ \bottomrule
\end{tabular}            
}
\caption{Top 10 Types of Sub-tasks completed by each tool.}
\label{table:pentestgpt_sub_tasks}
\end{table}

We further examine the detailed sub-task completion performances of the three LLMs, as presented in Table~\ref{table:pentestgpt_sub_tasks}. Analyzing the completion status, we identify several areas where LLMs excel. First, they adeptly utilize common penetration testing tools to interpret the corresponding outputs, especially in enumeration tasks correctly. For example, all three evaluated LLMs successfully perform all nine \textit{Port Scanning} sub-tasks. They can configure the widely-used port scanning tool, \textit{nmap}~\cite{Nmap}, comprehend the scan results, and formulate subsequent actions.
Second, the LLMs reveal a deep understanding of prevalent vulnerability types, connecting them to the services on the target system. This understanding is evidenced by the successful completion of sub-tasks related to various vulnerability types.
Finally, LLMs demonstrate their effectiveness in code analysis and generation, particularly in the tasks of \textit{Code Analysis} and \textit{Shell Construction}. These tasks require the models to read and generate codes in different programming languages, essential in penetration testing. This often culminates in identifying potential vulnerabilities from code snippets and crafting the corresponding exploits. Notably, \gptfour{} outperforms the other two models regarding code interpretation and generation, marking it the most suitable candidate for penetration testing tasks.

\begin{tcolorbox}[left=1mm, right=1mm, top=0.5mm, bottom=0.5mm, arc=1mm]
\textbf{Finding 2:}  LLMs can efficiently use penetration testing tools, identify common vulnerabilities, and interpret source codes to identify vulnerabilities. 
\end{tcolorbox}

\subsection{Comparative Analysis (RQ2)}

\begin{table}[tb]
\centering
\resizebox{\linewidth}{!}{
    \begin{tabular}{l|cccc}
    \rowcolor{black}
    \color{white}\textbf{Unnecessary Operations} & \color{white}\textbf{\gptthree{}} & \color{white}\textbf{\gptfour{}} & \color{white}\textbf{Bard} & \color{white}\textbf{Total} \\ 
    Brute-Force           & 75                         & 92                       & 68                       & 235                       \\
    CVE Study             & 29                         & 24                       & 28                       & 81                        \\
    SQL Injection         & 14                         & 21                       & 16                       & 51                        \\
    Command Injection     & 18                         & 7                        & 12                       & 37              \\ \bottomrule       
    \end{tabular}}
\caption{Top Unnecessary Operations Prompted by LLMs on the Benchmark Targets. Experiments conducted on June 2023.}    
\label{table:pentestgpt_unnecessary}
\end{table}

To address \textbf{RQ2}, we examine the problem-solving strategies that LLMs employ, contrasting them with human penetration testers. In each penetration testing trial, we concentrate on two main aspects: (1) Identifying the unnecessary operations that LLMs prompt, which are not conducive to successful penetration testing, as compared to a standard walkthrough; and (2) Understanding the specific factors that prevent LLMs from successfully executing penetration tests.

We analyze the unnecessary operations prompted by LLMs by breaking down the recorded testing procedures into sub-tasks. We employ the same method to formulate benchmark sub-tasks, as Section~\ref{sec:benchmark} outlines. By comparing this to a standard walkthrough, we identify the primary sub-task trials that fall outside the standard walkthrough and are thus irrelevant to the penetration testing process. The results are summarized in Table~\ref{table:pentestgpt_unnecessary}. We find that the most prevalent unnecessary operation prompted by LLMs is brute force. For all services requiring password authentication, LLMs typically advise brute-forcing it. This is an ineffective strategy in penetration testing. We surmise that many hacking incidents in enterprises involve password cracking and brute force. LLMs learn these reports from accident reports and are consequently considered viable solutions. Besides brute force, LLMs suggest that testers engage in CVE studies, SQL injections, and command injections. These recommendations are common, as real-world penetration testers often prioritize these techniques, even though they may not always provide the exact solution.

\begin{table}[tb]
\resizebox{\linewidth}{!}{
    \begin{tabular}{l|cccc}
    \rowcolor{black}
    \color{white}\textbf{Failure Reasons}              & \color{white}\textbf{GPT3.5} & \color{white}\textbf{GPT4} & \color{white}\textbf{Bard} & \color{white}\textbf{Total} \\ 
    Session context lost                  & 25              & 18            & 31            & 74             \\
    False Command Generation & 23              & 12            & 20            & 55             \\
    Deadlock operations                   & 19              & 10            & 16            & 45             \\
    False Scanning Output Interpretation   & 13              & 9             & 18            & 40             \\
    False Source Code Interpretation       & 16              & 11            & 10            & 37             \\
    Cannot craft valid exploit            & 11              & 15            & 8             & 34           \\ \bottomrule 
    \end{tabular}
}
\caption{Top causes for failed penetration testing trials}
\label{table:pentestgpt_failure_reasons}
\end{table}

We further investigate the reasons behind the failure of penetration testing trials. We manually categorize the causes of failure for the 195 penetration testing trials, with the results documented in Table~\ref{table:pentestgpt_failure_reasons}. This table reveals that the predominant cause of failure is the loss of session context. The three examined models face difficulties in maintaining long-term conversational memory uniformly, frequently forgetting previous test results as the dialogue progresses. This lack of retention may be attributable to the limited token size within the LLM conversation context. Given the intricate nature of penetration testing—where a tester must skillfully link minor vulnerabilities across different services to develop a coherent exploitation strategy—this loss of context substantially undermines the models' effectiveness.

\begin{tcolorbox}[left=1mm, right=1mm, top=0.5mm, bottom=0.5mm, arc=1mm]
\textbf{Finding 3:} LLMs struggle to maintain long-term memory, which is vital to link vulnerabilities and develop exploitation strategies effectively.
\end{tcolorbox}

Secondly, LLMs strongly prefer the most recent tasks, adhering rigorously to a depth-first search approach. They concentrate on exploiting the immediate service, rarely deviating to a new target until all potential paths for the current one have been pursued. This can be attributed to the attention of LLMs focusing more on the beginning and end of the prompt, as revealed in~\cite{vaswani2023attention}.
Experienced penetration testers generally assess the system from a broader standpoint, strategizing the subsequent steps likely to provide the most substantial results. When combined with the aforementioned memory loss issue, this tendency causes LLMs to become overly fixated on a specific service. As the test progresses, the models completely forget previous findings and reach a deadlock.

\begin{tcolorbox}[left=1mm, right=1mm, top=0.5mm, bottom=0.5mm, arc=1mm]
\textbf{Finding 4:}  LLMs strongly prefer recent tasks and a depth-first search approach, often resulting in an over-focus on one service and forgetting previous findings.
\end{tcolorbox}

Lastly, LLMs have inaccurate result generation and hallucination issues, as noted in~\cite{bang2023multitask}. This phenomenon ranks as the second most frequent cause of failures and is characterized by the generation of false commands. In our study, we observe that LLMs frequently identify the appropriate tool for the task but stumble in configuring the tools with the correct settings. In some cases, they even concoct non-existent testing tools or tool modules.

\begin{tcolorbox}[left=1mm, right=1mm, top=0.5mm, bottom=0.5mm, arc=1mm]
\textbf{Finding 5:}  LLMs may generate inaccurate operations or commands, often stemming from inherent inaccuracies and hallucinations.
\end{tcolorbox}

Our exploratory study of three LLMs within penetration testing reveals their potential for executing end-to-end tasks. Nevertheless, challenges arise in maintaining long-term memory, devising a testing strategy beyond a depth-first approach, and generating accurate operations. In the following section, we elucidate how we address these challenges and outline our strategy for designing our LLM-powered penetration testing tool.


\section{Methodology}
\showkeysfalse
\label{sec:methodology}
\showkeystrue

\subsection{Overview}
\showkeysfalse
\label{sec:methodology:overview}
\showkeystrue
In light of the challenges identified in the preceding section, we present our proposed solution, \tool{}, which leverages the synergistic interplay of three LLM-powered modules. As illustrated in Figure~\ref{figure:pentestgpt_overview}, \tool{} incorporates three core modules: the \textbf{Reasoning Module}, the \textbf{Generation Module}, and the \textbf{Parsing Module}. Each module reserves one LLM session with its conversation and context. The user interacts seamlessly with \tool{}, where distinct modules process different types of messages. This interaction culminates in a final decision, suggesting the subsequent step of the penetration testing process that the user should undertake. 
In the following sections, we elucidate our design reasoning and provide a detailed breakdown of the engineering processes behind \tool{}.

\begin{figure}[h]
	\centering
	\includegraphics[width=\linewidth]{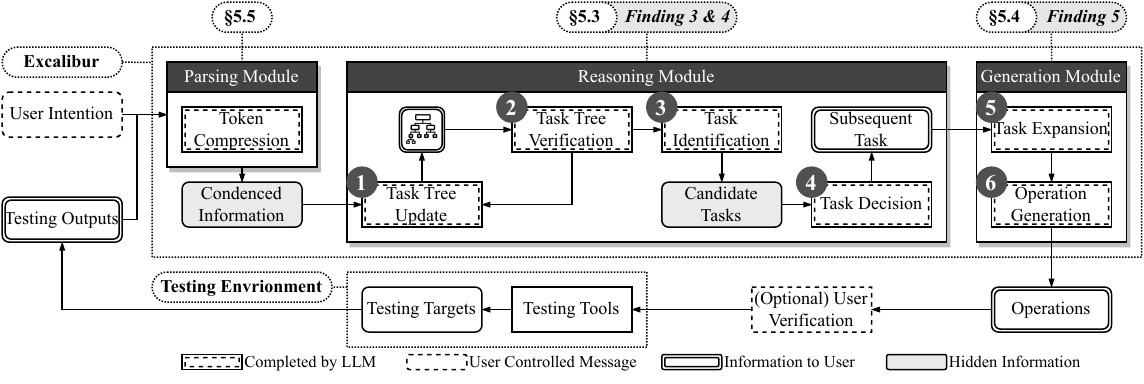}
 \caption{Overview of \tool{}. Reproduced from \cite{deng2023pentestgpt}.}
	\label{figure:pentestgpt_overview}
\end{figure}

\subsection{Design Rationale}
Our central design considerations emerged from the three challenges observed in the previous Exploratory Study (Section~\ref{sec:exploratory-study}):
The first challenge (\textit{Finding 3}) pertains to the issue of penetration testing context loss due to memory retention. LLMs in their original form struggle to maintain such long-term memory due to token size limits.
The second obstacle (\textit{Finding 4}) arises from the LLM chatbots' tendency to emphasize recent conversation content. In penetration testing tasks, this focuses on optimizing the immediate task. This approach falls short in the complex, interconnected task environment of penetration testing.
The third obstacle (\textit{Finding 5}) is tied to the inaccurate results generation by LLMs. When tasked to produce specific operations for a step in penetration testing directly, the outputs are often imprecise, sometimes even leading to 

\tool{} has been engineered to address these challenges, rendering it more apt for penetration testing tasks. We drew inspiration from the methodologies employed by real-world penetration testing teams, where a director plans overarching procedures, subdividing them into subtasks for individual testers. Each tester independently performs their task, reporting results without an exhaustive understanding of the broader context. The director then determines the following steps, possibly redefining tasks, and triggers the subsequent round of testing. Essentially, the director manages the overall strategy without becoming entrenched in the minutiae of the tests. This approach is mirrored in \tool{}'s functionality, enhancing its efficiency and adaptability in conducting penetration tests.
Our strategy divides penetration testing into two processes: identifying the next task and generating the concrete operation to complete the task. Each process is powered by one LLM session. In this setup, the LLM session responsible for task identification retains the complete context of the ongoing penetration testing status. At the same time, the generation of detailed operations and parsing of information is managed by other sessions. This division of responsibilities fosters effective task execution while preserving the overarching context.

To assist LLMs in effectively carrying out penetration testing tasks, we design a series of prompts that align with user inputs. We utilize the Chain-of-Thought (CoT)~\cite{wei2023chainofthought} methodology during this process. As CoT reveals, LLMs' performance and reasoning capabilities can be significantly enhanced using the \textit{input}, \textit{chain-of-thought}, \textit{output} prompting format. Here, the \textit{chain-of-thought} represents a series of intermediate natural language reasoning steps leading to the outcome. We dissect the penetration testing tasks into micro-steps and design prompts with examples to guide LLMs through processing penetration testing information step-by-step, ultimately leading to the desired outcomes. The complete prompts are available at our anonymized open-source project~\cite{anonymous-project}.

\subsection{Reasoning Module}
The \textbf{Reasoning Module} plays a pivotal role in our system, analogous to a team lead overseeing the penetration testing task from a macro perspective. It obtains testing results or intentions from the user and prepares the testing strategy for the next step. This testing strategy is passed to the generation module for further planning. 

To effectively supervise the penetration testing process and provide precise guidance, it is crucial to translate the testing procedures and outcomes into a natural language format. Drawing inspiration from the concept of an attack tree~\cite{LALLIE2020100219}, which is often used to outline penetration testing procedures, we introduce the notion of a \textit{pentesting task tree} (\pttab{}). This novel approach to testing status representation is rooted in the concept of an \textit{attributed tree}~\cite{BARBAR19933}:

\begin{myDef} [Attributed Tree] \label{definition:attributed_tree}
A \textit{attributed tree} is an edge-labeled, attributed polytree  $G=(V,E,\lambda,\mu)$ where $V$ is a set of nodes (or vertices), $E$ is a set of directed edges, $\lambda:E\to\Sigma$ is an edge labeling function assigning a label from the alphabet $\Sigma$ to each edge and $\mu:(V \cup E)\times K \to S$ is a function assigning key(from K)-value(from S) pairs of properties to the edges and nodes.
\end{myDef}

Given the definition of \textit{attributed tree}, \pttab{} is defined as follows:

\begin{myDef} [Pentesting Task Tree] \label{definition:ptt}
An \pttab{} $T$ is a pair $(N, A)$, wherein the correspond with:
(1) a set of nodes $N$ organized in a tree structure. Each node has a unique identifier, and there is a special node called the root that has no parent. Each node, other than the root, has exactly one parent and zero or more children.
(2) the function $A$ that assigns to each node $n \in N$ a set of attributes $A(n)$. The function $A(n)$ outputs a set of atribute pairs  $(a, v)$, where $a$ is the attribute name and $v$ is the attribute value. The set of attributes can be different for each node.
\end{myDef}


\begin{figure}[t]
	\centering
	\includegraphics[width=\linewidth]{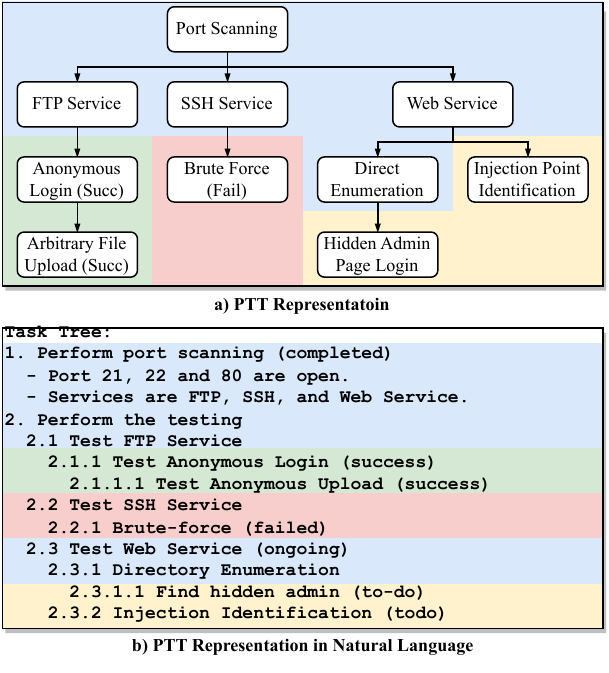}
 \caption{Pentesting Task Tree in a) visualized tree format, and b) natural language format encoded in LLM. Reproduced from \cite{deng2023pentestgpt}.}
	\label{figure:tree_natural_language}
\end{figure}

As outlined in Figure~\ref{figure:pentestgpt_overview}, the Reasoning Module's operation unfolds over four key steps operating over the PTT. \ding{182} Initially, the module absorbs the user's intentions to construct an initial PTT in the form of natural language. This is achieved by carefully instructing the LLM with examples and definitions of PPT using meticulously crafted prompts. The LLM outputs are parsed to confirm that the tree structure is accurately formatted. Note that due to the nature of the tree structure, it can be represented in the natural language format through layered bullets, as illustrated in Figure~\ref{figure:tree_natural_language}. The Reasoning Module effectively overcomes the memory-loss issue by maintaining a task tree that encompasses the entire penetration testing process.
\ding{183} After updating the tree information, a verification step is conducted on the newly updated PTT to ascertain its correctness. This process checks explicitly that only the leaf nodes of the PTT have been modified, aligning with the principle that atomic operations in the penetration testing process should only influence the status of the lowest-level sub-tasks. This step confirms the correctness of the reasoning process, safeguarding against any potential alterations to the overall tree structure due to hallucination by the LLM. If discrepancies arise, the information is reverted to the LLM for correction and regeneration. 
\ding{184} With the updated PTT, the Reasoning Module evaluates the current tree state and pinpoints viable sub-tasks that can serve as candidate steps for further testing.
\ding{185} Finally, the module evaluates the likelihood of these sub-tasks leading to successful penetration testing outcomes. It then recommends the top task as the output. The expected results of this task are subsequently forwarded to the Generation Module for an in-depth analysis. This is feasible, as demonstrated in the exploratory study, since LLMs, particularly \gptfour{}, can identify potential vulnerabilities when provided with system status information. This procedural approach enables the Reasoning Module to address one of the inherent limitations of LLMs, precisely their tendency to concentrate solely on the most recent task.
Note that in cases where the tester identifies that the correct task is incorrect or not completed in a preferred way, he could also manually revise the \pttab{} through the interactive handle further discussed in Section~\ref{sec:methodology:feedback}.

We devise four sets of prompts to sequentially guide the Reasoning Module through the completion of each stage. To bolster the reproducibility of our results, we optimize these prompts further with a technique known as hint generation~\cite{sun2023autohint}. From our practical experience, we observe that LLMs are adept at interpreting the tree-structured information pertinent to penetration testing and can update it accurately in response to test outputs.

\begin{figure}[h]
	\centering
	\includegraphics[width=\linewidth]{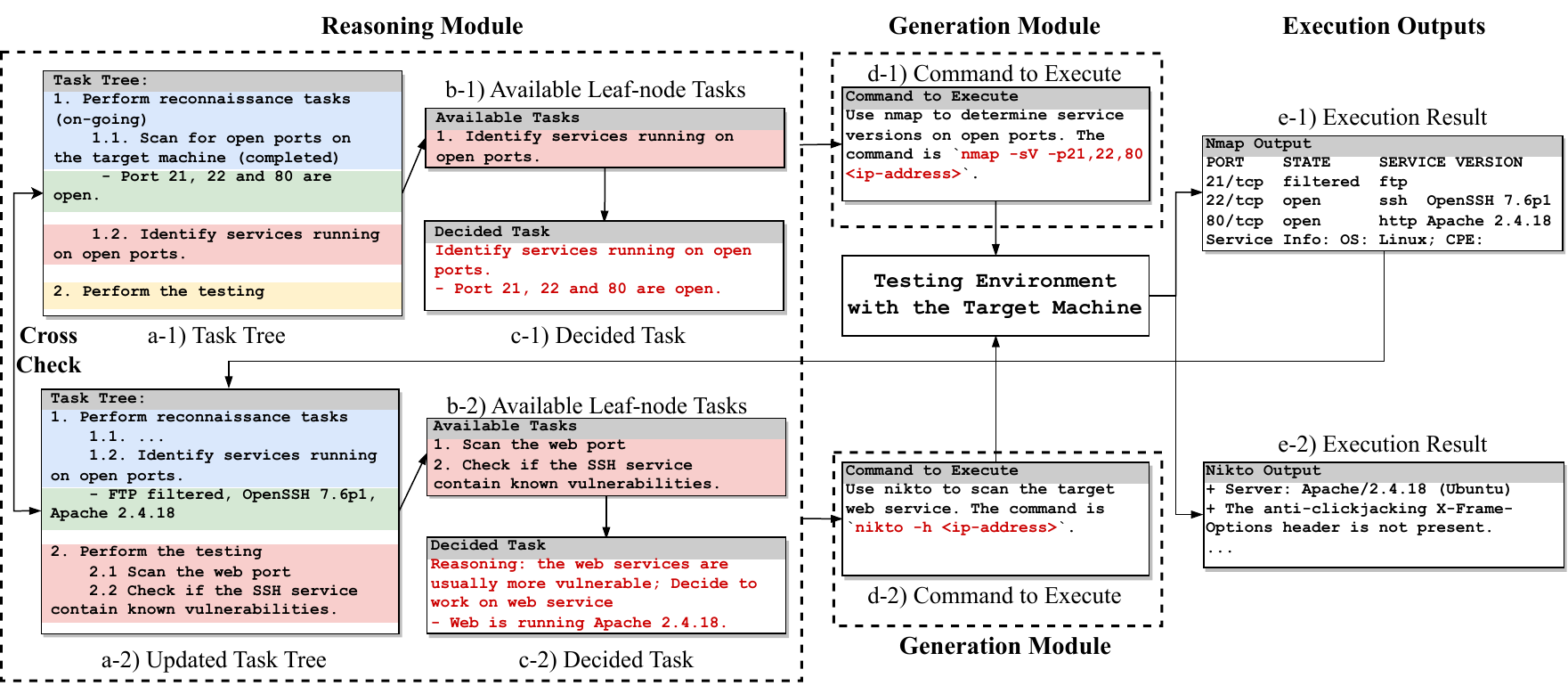}
 \caption{A demonstration of the task-tree update process on the testing target \textit{HTB-Carrier}. Reproduced from \cite{deng2023pentestgpt}.}
	\label{figure:task_tree_demonstration}
\end{figure}

\subsection{Generation Module}
The Generation Module translates specific sub-tasks from the Reasoning Module into concrete commands or instructions. Each time a new sub-task is received, a fresh session is initiated in the Generation Module. This strategy effectively isolates the context of the overarching penetration task from the immediate task under execution, enabling the LLM to focus entirely on generating specific commands.

Instead of directly transforming the received sub-task into specific operations, our design employs the CoT strategy~\cite{wei2023chainofthought} to partition this process into two sequential steps. This design decision directly addresses the challenges associated with model inaccuracy and hallucination by enhancing the model's reasoning capability. In particular, 
\ding{186} upon the receipt of a concise sub-task from the Reasoning Module, the Generation Module begins by expanding it into a sequence of detailed steps. Notably, the prompt associated with this sub-task requires the LLM to consider the possible tools and operations available within the testing environment.
\ding{187} Subsequently, the Generation Module transforms each of these expanded steps into precise terminal commands ready for execution or into detailed descriptions of specific Graphical User Interface (GUI) operations to be carried out. This stage-by-stage translation eliminates potential ambiguities, enabling testers to follow the instructions directly and seamlessly. Implementing this two-step process effectively precludes the LLM from generating operations that may not be feasible in real-world scenarios, thereby improving the success rate of the penetration testing procedure.

By acting as a bridge between the strategic insights provided by the Reasoning Module and the actionable steps required for conducting a penetration test, the Generation Module ensures that high-level plans are converted into precise and actionable steps. 
This transformation process significantly bolsters the overall efficiency of the penetration testing procedure.

\noindent\textbf{An Illustrative Example.}
We utilize a real-world running example to illuminate how the Reasoning Module and the Generation Module collaboratively operate to complete penetration testing tasks. Figure~\ref{figure:task_tree_demonstration} illustrates a single iteration of \tool{} working on the HackTheBox machine Carrier~\cite{carrier}, a medium-difficulty target. As depicted in a-1), the \pttab{}, in natural language format, encodes the testing status, revealing the open ports (21, 22,80) on the target machine. The Reasoning Module is subsequently instructed to identify the available tasks. As highlighted in red, service scanning is the only available task on the leaf node of the \pttab{}. This task is therefore chosen and forwarded to the Generation Module for command generation. The generated command is executed in the testing environment, and the execution result is conveyed to the Reasoning Module to update the \pttab{}. In a-2), the Reasoning Module integrates the previous scanning result into the \pttab{}, cross-referencing it with the earlier \pttab{} to update only the leaf nodes. It then looks for the available tasks to execute. In this case, two tasks emerge: scanning the web service on port 80 and checking the SSH service for known vulnerabilities. The LLM evaluates which task is more promising and chooses to investigate the web service, often seen as more vulnerable. This task is passed to the Generation Module. The Generation Module turns this general task into a detailed process, employing \textit{nikto}~\cite{nikto}, a commonly used web scanning script. The iterative process continues until the tester completes the penetration testing task.

\subsection{Parsing Module}
The \textbf{Parsing Module} operates as a supportive interface, enabling effective processing of the natural language information exchanged between the user and the other two core modules. Two needs can primarily justify the existence of this module. First, security testing tool outputs are typically verbose, laden with extraneous details, making it computationally expensive and unnecessarily redundant to feed these extended outputs directly into the LLMs. Second, users without specialized knowledge in the security domain may struggle to extract key insights from security testing outputs, presenting challenges in summarizing crucial testing information. Consequently, the Parsing Module is essential in streamlining and condensing this information.

In \tool{}, the Parsing Module is devised to handle four distinct types of information: (1) \textit{user intentions}, which are directives provided by the user to dictate the next course of action, (2) \textit{security testing tool outputs}, which represent the raw outputs generated by an array of security testing tools, (3) \textit{raw HTTP web information}, which encompasses all raw information derived from HTTP web interfaces, and (4) \textit{source codes} extracted during the penetration testing process. Users must specify the category of the information they provide, and each category is paired with a set of carefully designed prompts. For source code analysis, we integrate the \gptfour{} code interpreter~\cite{codeintepreter} to execute the task.

\subsection{Active Feedback}
\showkeysfalse
\label{sec:methodology:feedback}
\showkeystrue
While LLMs can produce insightful outputs, their outcomes may sometimes require revisions. To facilitate this, we introduce an interactive handle in \tool{}, known as active feedback, which allows the user to interact directly with the Reasoning Module. A vital feature of this process is that it does not alter the context within the Reasoning Module unless the user explicitly desires to update some information. The reasoning context, including the \pttab{}, is stored as a fixed chunk of tokens. This chunk of tokens is provided to a new LLM session during an active feedback interaction, and users can pose questions regarding them. This ensures that the original session remains unaffected, and users can always query the reasoning context without making unnecessary changes. If the user believes it necessary to update the \pttab{}, they can explicitly instruct the model to update the reasoning context history accordingly. This provides a robust and flexible framework for the user to participate in the decision-making process actively.

\subsection{Discussion}
We explore various design alternatives for \tool{} to tackle the challenges identified in Exploratory Study. We have experimented with different designs, and here we discuss some key decisions.

\textbf{Addressing Context Loss with Token Size:} a straightforward solution to alleviate context loss is the employment of LLM models with an extended token size. For instance, \gptfour{} provides versions with 8k and 32k token size limits. This approach, however, confronts two substantial challenges. First, even a 32k token size might be inadequate for penetration testing scenarios, as the output of a single testing tool like \textit{dirbuster}~\cite{dirbuster} may comprise thousands of tokens. Consequently, \gptfour{} with a 32k limit cannot retain the entire testing context. Second, even when the entire conversation history fits within the 32k token boundary, the API may still skew towards recent content, focusing on local tasks and overlooking broader context. These issues guided us in formulating the design for the Reasoning Module and the Parsing Module.

\textbf{Vector Database to Improve Context Length:} Another technique to enhance the context length of LLMs involves a vector database~\cite{2021milvus, 2022manu}. By transmuting data into vector embeddings, LLMs can efficiently store and retrieve information, practically creating long-term memory. Theoretically, penetration testing tool outputs could be archived in the vector database. In practice, though, we observe that many results closely resemble and vary in only nuanced ways. This similarity often leads to confused information retrieval. Solely relying on a vector database fails to overcome context loss in penetration testing tasks. Integrating the vector database into the design of \tool{} is an avenue for future research.

\textbf{Precision in Information Extraction:} Precise information extraction is crucial for conserving token usage and avoiding verbosity in LLMs. Rule-based methods are commonly employed to extract diverse information. However, rule-based techniques are engineeringly expensive given natural language's inherent complexity and the variety of information types in penetration testing. We devise the Parsing Module to manage several general input information types, a strategy found to be both feasible and efficient.

\textbf{Limitations of LLMs:} LLMs are not an all-encompassing solution. Present LLMs exhibit flaws, including hallucination~\cite{llm-hallucination} and outdated knowledge. Our mitigation efforts, such as implementing task tree verification to ward off hallucination, might not completely prevent the Reasoning Module from producing erroneous outcomes. Thus, a human-in-the-loop strategy becomes vital, facilitating the input of necessary expertise and guidance to steer LLMs effectively. Moreover, it must be higlighted that reproducibility of results is an issue here, as LLMs are in continuous evolution. In our case, given the novelty of the technology and lack of availability at the time of starting this work, we did not host the LLM deployments ourselves, but instead used third party providers. This setup is sub-optimal from a scientific experiment point of view. For future work, we foresee building our own LLMs by performing post-training phases on top of existing open source models and \emph{freezing} them after a certain stage of learning. This will lead to results being reproducible given the right input context, and fully reproducible on statistical average. This topic is further discussed in Section \ref{sec:discussion}.


\section{Evaluation}
\showkeysfalse
\label{sec:evaluation}
\showkeystrue
In this section, we assess the performance of \tool{}, focusing on the following four research questions:

\noindent\textbf{RQ3 (Performance):} How does the performance of \tool{} compare with that of native LLM models and human experts?

\noindent\textbf{RQ4 (Strategy):} Does \tool{} employ different problem-solving strategies compared to those utilized by LLMs or human experts?

\noindent\textbf{RQ5 (Ablation):} How does each module within \tool{} contribute to the overall penetration testing performance?

\noindent\textbf{RQ6 (Practicality):} Is \tool{} practical and effective in real-world penetration testing tasks?

\subsection{Evaluation Settings}
We implement \tool{} with 1,700 lines of Python3 code and 740 prompts, available at our anonymized project website~\cite{anonymous-project}. We evaluate its performance over the benchmark constructed in Section~\ref{sec:benchmark}. In this evaluation, we integrate \tool{} with \gptthree{} and \gptfour{} to form two working versions: \toolgptthree{} and \toolgptfour{}. Due to the lack of API access, we do not select other LLM models, such as Bard.
In line with our previous experiments, we use the same experiment environment setting and instruct \tool{} to only use the non-automated penetration testing tools.

\subsection{Performance Evaluation (RQ3)}
The overall task completion status of \toolgptthree{}, \toolgptfour{}, and the naive usage of LLMs is illustrated in Figure~\ref{subfigure:pentestgpt_overall}. 
As the Figure shows, our solutions powered by LLMs demonstrate superior penetration testing capabilities compared to the naive application of LLMs. Specifically, \toolgptfour{} surpasses the other three solutions, successfully solving 6 out of 7 easy difficulty targets and 2 out of 4 medium difficulty targets. This performance indicates that \toolgptfour{} can handle penetration testing targets ranging from easy to medium difficulty levels. Meanwhile, \toolgptthree{} manages to solve only two challenges of easy difficulty, a discrepancy that can be attributed to \gptthree{} lacking the knowledge related to penetration testing found in \gptfour{}.

The sub-task completion status of \toolgptthree{}, \toolgptfour{}, and the naive usage of LLM is shown in Figure~\ref{subfigure:pentestgpt_subtask}. As the Figure illustrates, both \toolgptthree{} and \toolgptfour{} perform better than the standard utilization of LLMs. It is noteworthy that \toolgptfour{} not only solves one more medium difficulty target compared to naive \gptfour{} but also accomplishes 111\% more sub-tasks (57 vs. 27). This highlights that our design effectively addresses context loss challenges and leads to more promising testing results. Nevertheless, all the solutions struggle with hard difficulty testing targets. As elaborated in Section~\ref{sec:exploratory-study}, hard difficulty targets typically demand a deep understanding from the penetration tester. To reach testing objectives, they may require modifications to existing penetration testing tools or scripts. Our design does not expand the LLMs' knowledge of vulnerabilities, so it does not notably enhance performance on these more complex targets.

\begin{figure}[t]
  \centering
      \begin{subfigure}[b]{\linewidth}
    \centering
    \includegraphics[width=\linewidth]{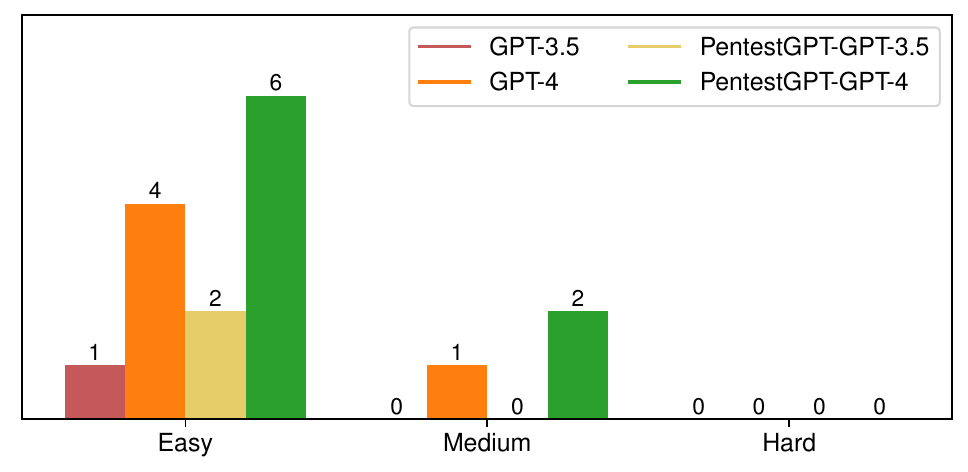}
 \caption{Overall completion status.}
	\label{subfigure:pentestgpt_overall}

  \end{subfigure}
  
      \begin{subfigure}[b]{\linewidth}
    \centering
    \includegraphics[width=\linewidth]{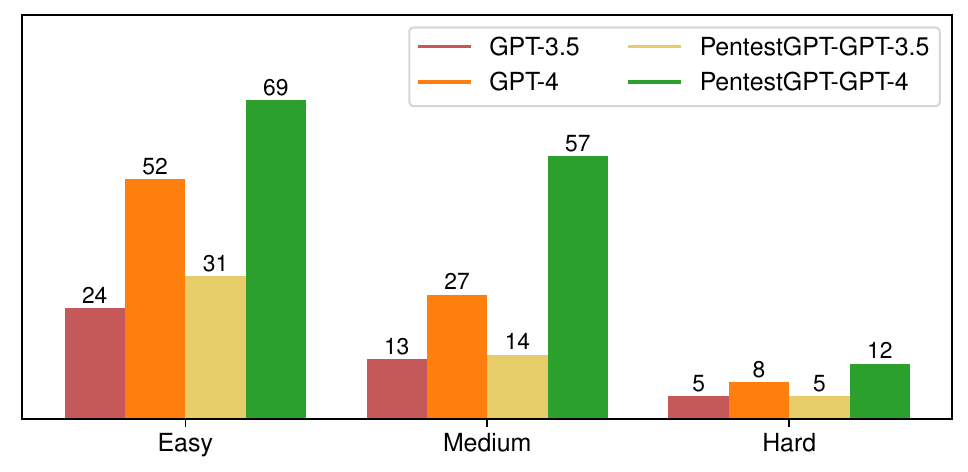}
 \caption{Subtask completion status.}
	\label{subfigure:pentestgpt_subtask}

  \end{subfigure}
  
  \caption{The performance of \gptthree{}, \gptfour{}, \toolgptthree{}, and \toolgptfour{} on overall target completion and sub-task completion. Reproduced from \cite{deng2023pentestgpt}.}
  \showkeysfalse
  \label{figure:pentestgpt_performance_eval}
  \showkeystrue
\end{figure}

\subsection{Strategy Evaluation (RQ4)}
We then investigate the problem-solving strategies employed by \tool{}, contrasting them with those of LLMs and human experts. By manually analyzing the penetration testing process of \tool{}, we synthesize its underlying approaches to problem-solving. We surprisingly find that \tool{} decomposes the penetration testing task in a manner akin to human experts, successfully achieving the overall goal. Instead of focusing solely on the most recently discovered task, \tool{} can pinpoint potential sub-tasks likely to lead to successful outcomes.

Figure~\ref{figure:pentest_strategy_hackable2} provides an illustrative example, demonstrating the strategic differences between \gptfour{} and \tool{} while handling the VulnHub machine, \textit{Hackable II}~\cite{hackable2}. This target comprises two vulnerable services: an FTP service allowing arbitrary file uploads and a web service enabling file viewing through FTP. A successful exploit necessitates exploiting both services by uploading a malicious PHP shell via the FTP service and triggering it through the web service. 
As depicted in the figure, \gptfour{} begins by enumerating the FTP service and successfully identifies the file upload vulnerability (\ding{182}-\ding{184}). However, it fails to correlate this with the web service, resulting in an incomplete exploit in the following steps. Conversely, \tool{} follows a more holistic approach, toggling between enumerating the FTP service and browsing the web service. In particular, \tool{} firstly \ding{182} enumerates the FTP service and \ding{183} web service to understand the general situation. It then \ding{184} prioritizes the FTP service, and \ding{185} eventually discovers the file upload vulnerability. More importantly, in this process, \tool{} identifies that files available on FTP are the same as those on the web service. By connecting these findings, \tool{} guides the tester to \ding{186} perform a shell upload, \ding{187} leading to a successful reverse shell. This strategy aligns with the walkthrough solution and highlights \tool{}'s comprehensive understanding of the penetration testing process and its ability to make effective decisions on the optimal sub-task to pursue next. This reveals \tool{}'s strategic thinking and ability to integrate different aspects of the testing process.

\begin{figure}[t]
	\centering
	\includegraphics[width=\linewidth]{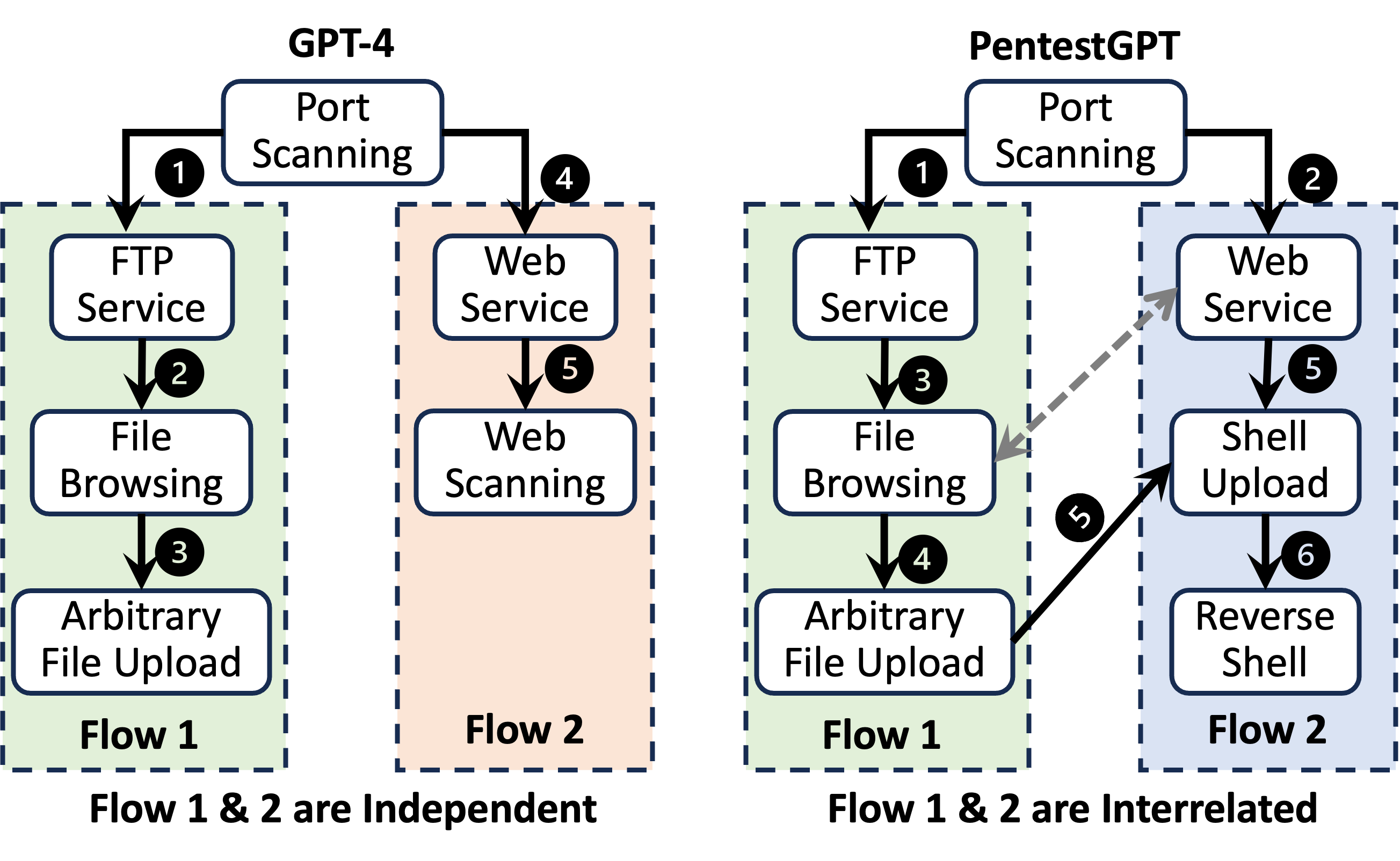}
 \caption{Penetration testing strategy comparison between \gptfour{} and \tool{} on \textit{VulnHub-Hackable II}. Reproduced from \cite{deng2023pentestgpt}.}
	\label{figure:pentest_strategy_hackable2}
\end{figure}

Our second observation is that although \tool{} behaves more similarly to human experts, it still exhibits some strategies that humans will not apply. For instance, \tool{} still prioritizes brute-force attacks before vulnerability scanning. This is obvious in cases where \tool{} always tries to brute-force the SSH service on target machines. 

We then analyze the failed penetration testing cases to understand the limitations of \tool{}. Beyond the absence of some advanced penetration testing techniques, two primary issues emerge. First, \tool{} struggles to interpret images. LLMs are limited to text comprehension, so they cannot accurately process images. This issue might be addressed by developing large multimodal models to understand text and visual data. Second, it cannot grasp certain social engineering tricks and subtle cues. For instance, real-world penetration testers often create brute-force wordlists using information gathered from the target service. Though \tool{} can retrieve a list of names from a web service, it fails to instruct the use of tools to create a wordlist from those names. These limitations underline the necessity for improvement in areas where human insight and intricate reasoning are still more proficient than automated solutions.

\subsection{Ablation Study (RQ5)}
We perform an ablation study on how the three modules: Reasoning Module, Generation Module, and Parsing Module, contribute to the performance of \tool{}. We implement three variants:
\begin{enumerate}
\item \toolnoparsing{}: the Parsing Module is deactivated, causing all data to be directly fed into the system.
\item \toolnogeneration{}: the Generation Module is deactivated, leading to the completion of task generation within the Reasoning Module itself. The prompts for task generation remain consistent.
\item \toolnoreasoning{}: the Reasoning Module is desabled. Instead of PTT, this variant adopts the same methodology utilized with LLMs for penetration testing, as delineated in the Exploratory Study.
\end{enumerate}
All the variants are integrated with \gptfour{} API for testing.

\begin{figure}[t]
  \centering
      \begin{subfigure}[b]{\linewidth}
    \centering
    \includegraphics[width=\linewidth]{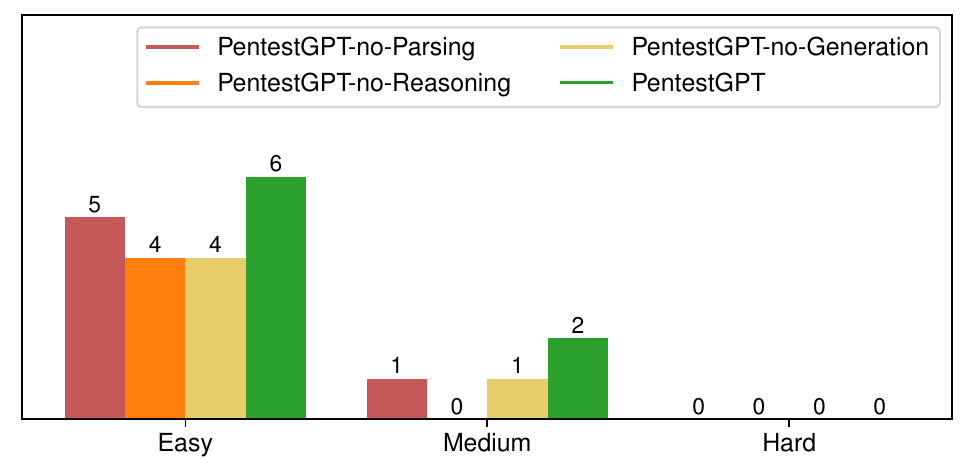}
 \caption{Overall completion status}
	\label{subfigure:ablation_overall}

  \end{subfigure}
  
      \begin{subfigure}[b]{\linewidth}
    \centering
    \includegraphics[width=\linewidth]{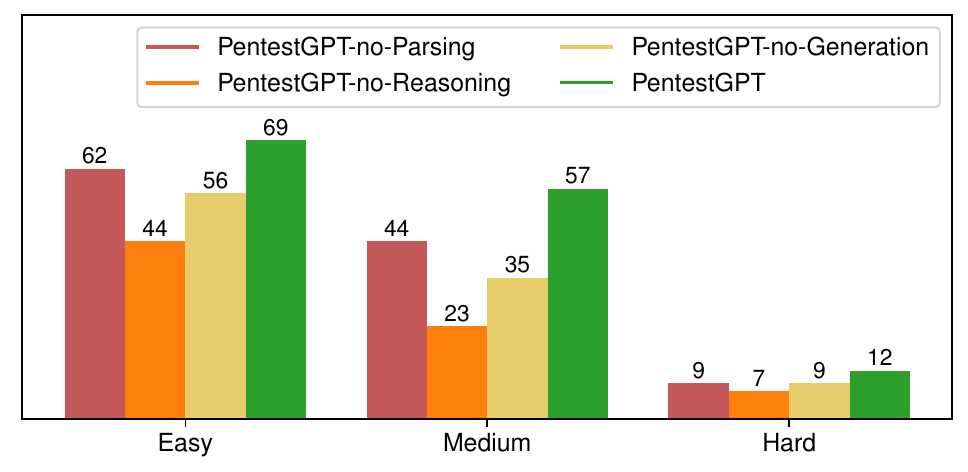}
 \caption{Sub-task completion status}
	\label{subfigure:ablation_subtask}

  \end{subfigure}
  
  \caption{The performance of \tool{}, \textsc{\tool{}-No-Annotation}, \textsc{\tool{}-Operation-Only}, and \textsc{\tool{}-Parameter-Only} on both normalized average code coverage ($\mu LOC$) and bug detection. Reproduced from \cite{deng2023pentestgpt}.}
      \vspace{-12pt}
  \label{figure:ablation_performance}
\end{figure}

The results of the three variants tested on our penetration testing benchmarks are depicted in Figure~\ref {figure:ablation_performance}. In general, \tool{} demonstrates superiority over the three ablation baselines regarding overall target and sub-task completion. Our key findings are as follows: (1) In the absence of the Parsing Module, \toolnoparsing{} attains marginally lower performance in overall task and sub-task completion relative to the full configuration. While parsing information is advantageous in penetration testing, the 32k token size limit often suffices for various outputs. Given the Reasoning Module's inherent design to maintain the entire testing context, the lack of the Parsing Module does not substantially impair the tool's performance.
(2)  \toolnoreasoning{} fares the worst, completing only 53.6\% of the sub-tasks achieved by the full solution, an outcome even inferior to the naive application of \gptfour{} in testing. We attribute this to the Generation Module adding supplementary sub-tasks to the LLM context. Since the prompts are not tailored for scenarios without the Reasoning Module, the resulting outputs are irrelevant for the naive LLM without the Generation Module. Furthermore, the extended generation output obscures the original context, hindering the LLM's ability to concentrate on the task, thus failing the test.
(3) \toolnogeneration{} realizes performance slightly above that of \gptfour{} employed naively. This occurs because, without the Generation Module, the testing procedure closely resembles the usage of LLMs. Notably, the Generation Module is principally intended to guide the tester in executing precise penetration testing operations. Without this module, the tester may depend on supplementary information to operate the tools or scripts essential for completing the test.

\subsection{Practicality Study (RQ6)}

\begin{table}[t]
\resizebox{\linewidth}{!}{
\begin{tabular}{|c|c|c|c|c|}
\rowcolor{black}
\hline
\color{white}\textbf{Machine}     & \color{white}\textbf{Difficulty} & \color{white}\textbf{Completion}            & \color{white}\textbf{Completed Users} & \color{white}\textbf{Cost (USD)} \\ \hline
Sau         & Easy       & \ding{51}                   & 4798                                       & 15.2                           \\
Pilgramage  & Easy       & \ding{51}                   & 5474                                       & 12.6                           \\
Topology    & Easy       & \ding{55}                   & 4500                                       & 8.3                            \\
PC          & Easy       & \ding{51}                   & 6061                                       & 16.1                           \\
MonitorsTwo & Easy       & \ding{51}                   & 8684                                       & 9.2                            \\
Authority   & Medium     & \ding{55}                   & 1209                                       & 11.5                           \\
Sandworm    & Medium     & \ding{55}                   & 2106                                       & 10.2                           \\
Jupiter     & Medium     & \ding{55}                   & 1494                                       & 6.6                            \\
Agile       & Medium     & \ding{51}                   & 4395                                       & 22.5                           \\
OnlyForYou  & Medium     & \ding{55}                   & 2296                                       & 19.3                           \\ \hline
Total       & -          & 6  & -                      & 131.5    \\ \hline
\end{tabular}
}
\caption{\tool{} performance over the active HackTheBox Challenges.}
\label{table:hackthebox_challenges}
\end{table}

We demonstrate that \tool{} exhibits practicality for real-world penetration testing beyond the crafted benchmark. For this purpose, we engage \tool{} in the HackTheBox active machine challenges, a series of penetration testing objectives open to global testers. Each challenge consists of two components: a user flag, retrievable upon initial user access, and a root flag, obtainable after gaining root access. Our evaluation encompasses five targets of easy difficulty and five of medium difficulty.
During this exercise, \tool{}, utilizing \gptfour{}'s 32k token API, conducts up to five tests on each target. Success is defined solely by the capture of the root flag. Table~\ref{table:hackthebox_challenges} details the performance of \tool{} in these challenges\footnote{\textbf{Completed Users} denotes the number of users globally who have completed the target as of the manuscript submission time. Note that HackTheBox boasts over 670,000 active users.}.
Ultimately, \tool{} completes three easy and five medium challenges. The total expenditure for this exercise amounts to 131.5 USD, averaging 21.92 USD per target. This cost is markedly lower than employing human penetration testers and falls within an acceptable range. Our evaluation, therefore, underscores \tool{}'s capability to yield viable penetration testing results in real-world settings at an efficient cost, thereby highlighting its potential as a practical tool in the cybersecurity domain.


\section{Discussion}
\showkeysfalse
\label{sec:discussion}
\showkeystrue

We recognize that the penetration testing walkthrough might have been part of the training material for the tested LLMs, potentially biasing the results. To mitigate this, we take two measures. First, we manually verify that the LLM does not have prior knowledge of the target machine. We do this by prompting the LLMs if the tested machine is within their knowledge base. Second, we include penetration testing target machines released after 2021 in our benchmark, which falls outside the training data of OpenAI models. The practicality study on the most recent HackTheBox challenges also demonstrates that \tool{} can solve challenges without prior knowledge of the target.

The rapidly evolving nature of LLMs and inconsistencies in available APIs could invalidate \tool{}'s designed prompts. We strive to make prompts general and suitable for various LLMs. However, due to their hacking nature, some LLMs resist generating specific penetration testing content, such as concrete reverse shell scripts. Our prompts include jailbreak techniques~\cite{liu2023jailbreaking} to guide the LLM to generate penetration-testing-related information. How to generate reproducible outcomes is an important direction we are working towards.

We identify hallucination in Large Language Models~\cite{llm-hallucination} as a significant challenge where the model's outputs diverge from its training data. This affects the reliability of our automatic penetration testing tool.
We are actively exploring various techniques~\cite{manakul2023selfcheckgpt} to reduce hallucination and enhance our tool's performance. As an ongoing work, we believe such an attempt will lead to a more robust and effective automatic penetration testing tool.


\section{Conclusion}
\showkeysfalse
\label{sec:conclusion}
\showkeystrue

%

This chapter paved the way towards integrating Large Language Models (LLMs) with the operational processes of cybersecurity, particularly in guiding cybersecurity software agents to perform autonomously and more specifically, automating and refining penetration testing strategies. The implementation of \tool{}, our innovative prototype, showcases the potential of LLMs to emulate human-like decision-making in cyber-attacks. While \tool{} demonstrates adeptness in fundamental cybersecurity tasks, it also reveals the limitations of current models, particularly in maintaining context and focusing attention over prolonged interactions.

\tool{}'s architecture, inspired by the dynamics of professional penetration testing teams, divides complex tasks into manageable segments, enabling a collaborative approach to threat analysis and response. This modular structure of Reasoning, Generation, and Parsing replicates the cognitive synergy found in human teams, ensuring a comprehensive and nuanced approach to cybersecurity challenges.

The insights gleaned from our evaluation of \tool{} suggest that while LLMs offer significant advantages, they are not a panacea. There remains a clear need for human intuition and expertise, particularly in complex and novel scenarios that exceed the training parameters of existing models. As we look to the future, the integration of LLMs in cybersecurity presents an exciting frontier with the potential to revolutionize the field. The continuous development of \tool{} and similar tools promises to enhance our defensive and offensive capabilities, making our digital infrastructures more resilient against the evolving threats they face. Our work lays a foundation for future research, where the symbiosis of human and machine intelligence will lead to unprecedented advancements in maintaining cybersecurity.

    %
    \chapter{Conclusion and future work}
\label{chapter:conclusion}

This thesis, titled "Offensive Robot Cybersecurity," presents a multifaceted exploration of safeguarding robots by adopting a preemptive hacking stance, using Game Theory and Machine Learning as primary tools for automation. The journey began with a motivation (\textbf{Part \ref{part:motivation}}), which provided a general introduction to the field and a description of the thesis structure and framework (Chapter \ref{chapter:intro}). This is followed by some general background (Chapter \ref{chapter:background}) which provided more context to the intersection of cybersecurity, robotics and artificial intelligence involved in this work. 

The content then continues with an in-depth ethical investigation (\textbf{Part \ref{part:ethics}}), setting the landscape (Chapter \ref{chapter:landscape}) for responsible and ethically-aligned cybersecurity practices in robotics. 

The development of fundamental blueprints (\textbf{Part \ref{part:blueprints}}) introduced a suite of core know-how, tools and frameworks, fostering a secure robotics research environment and empowering both defensive and offensive research. This includes DevSecOps for Robotics 
and ROS 2 security mechanisms (Chapter \ref{chapter:ros_security}), benchmarking robots and their security (Chapter \ref{chapter:benchmarking}) and reducing the timing overhead and computational bottlenecks introduced by security layers in robotics through hardware acceleration (Chapter \ref{chapter:accelerators}).

The heart of the thesis lies in the meticulous cyber-attacking of robots (\textbf{Part \ref{part:attacking_robots}}), a necessary evil that unveiled a plethora of robot vulnerabilities. This research was conducted attacking robot software (Chapter \ref{chapter:attacking_software}), hardware (Chapter \ref{chapter:attacking_hardware}) and industry deployments (Chapter \ref{chapter:attacking_industry}), providing invaluable insights into the current state of robot cybersecurity. 

The finale (\textbf{Part \ref{part:automation}}) of the thesis ventured into the realm of autonomous offensive cybersecurity strategies for robots, harnessing the analytical prowess of Game Theory (Chapter \ref{chapter:game_theory}) to determine the most vulnerable attack targets and vectors, followed by the use of modern Machine Learning approaches to guide the offensive behavior. In particular, Large Language Models  helped bring up adaptive intelligence to our \emph{cybersecurity cognitive engines} (Chapter \ref{chapter:machine_learning}).

\showkeysfalse

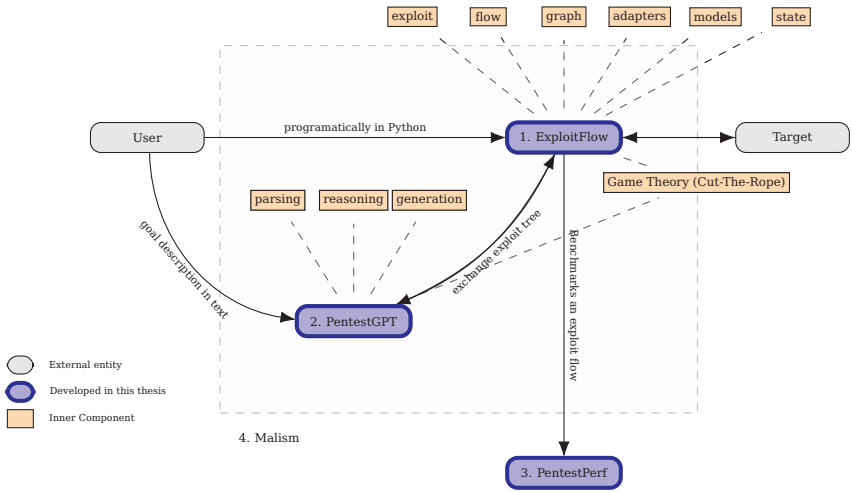
\begin{figure}[h!]
    \centering 
   
    \begin{tikzpicture}[node distance=2cm, auto, scale=0.5, transform shape]
        \tikzstyle{box} = [rectangle, draw, fill=gray!20, text centered, rounded corners, minimum height=2.5em, minimum width=3cm]
        \tikzstyle{emphasize} = [box, fill=blue!30, draw=blue, line width=1.5pt]
        \tikzstyle{line} = [draw, -{Latex[scale=1.2]}]
        \tikzstyle{deemphasize} = [opacity=0.3]
        \tikzstyle{innerComp} = [rectangle, draw, fill=orange!30, text centered, minimum height=1.5em, minimum width=3em]
        \tikzstyle{innerComp_emphasize} = [rectangle, fill=blue!30, draw=blue, line width=1.5pt]
    
        \node[box] (U) {User};
        \node[box, emphasize] (EF) at (11cm,0) {1. ExploitFlow}; 
        \node[box, emphasize, below left=4cm and 2.5cm of EF] (PentestGPT) {2. PentestGPT};
        \node[box, emphasize, below=8cm of EF] (PentestPerf) {3. PentestPerf};
        \node[box, right=3cm of EF] (T) {Target};
    
        \node[innerComp, above=2.5cm of EF, xshift=-4cm] (exploit) {exploit};
        \node[innerComp, right of=exploit] (flow) {flow};
        \node[innerComp, right of=flow] (graph) {graph};
        \node[innerComp, right of=graph] (adapters) {adapters};
        \node[innerComp, right of=adapters] (models) {models};
        \node[innerComp, right of=models] (state) {state};
        \node[innerComp, below=0.5cm of EF, xshift=3.5cm] (gt) {Game Theory (\CutTheRope)};
    
        \node[innerComp, above=2.5cm of PentestGPT, xshift=-2cm] (parsing) {parsing};
        \node[innerComp, right of=parsing] (reasoning) {reasoning};
        \node[innerComp, right of=reasoning] (generation) {generation};
    
        \draw[dashed, shorten <=5pt, shorten >=5pt] (EF) -- (exploit);
        \draw[dashed, shorten <=5pt, shorten >=5pt] (EF) -- (flow);
        \draw[dashed, shorten <=5pt, shorten >=5pt] (EF) -- (graph);
        \draw[dashed, shorten <=5pt, shorten >=5pt] (EF) -- (adapters);
        \draw[dashed, shorten <=5pt, shorten >=5pt] (EF) -- (models);
        \draw[dashed, shorten <=5pt, shorten >=5pt] (EF) -- (state);
        \draw[dashed, shorten <=5pt, shorten >=5pt] (EF) -- (gt);
        \draw[dashed, shorten <=5pt, shorten >=5pt] (PentestGPT) -- (gt);        
    
        \draw[dashed, shorten <=5pt, shorten >=5pt] (PentestGPT) -- (parsing);
        \draw[dashed, shorten <=5pt, shorten >=5pt] (PentestGPT) -- (reasoning);
        \draw[dashed, shorten <=5pt, shorten >=5pt] (PentestGPT) -- (generation);

        \node[rectangle, fill=gray!5, draw, dashed, deemphasize, fit=(EF) (PentestGPT), inner sep=1cm] (group) {};
        \node[anchor=north west, inner sep=0.5cm] at (group.south west) {4. Malism};
    
        \path[line] (U) -- node[font=\small, midway, sloped, above] {programatically in Python} (EF); 
        \path[line] (U) edge[bend right=40] node[font=\small, sloped, below] {goal description in text} (PentestGPT);
        \path[line] (PentestGPT) edge[bend right=20] node[font=\small, sloped, below] {exchange exploit tree} (EF);
        \path[line] (EF) edge[bend left=20] node {} (PentestGPT);
        \path[line] (EF) -- node[font=\small, sloped, above] {Benchmarks an exploit flow} (PentestPerf);
        \path[line] (EF) -- (T);
        \path[line] (T) -- (EF);

        \node[box, emphasize, below left=4cm and 2.5cm of EF] (PentestGPT) {2. PentestGPT};
        \node[innerComp, below=0.5cm of EF, xshift=3.5cm] (gt) {Game Theory (\CutTheRope)};
        \node[box, emphasize] (EF) at (11cm,0) {1. ExploitFlow}; 

        \node[innerComp, above=2.5cm of PentestGPT, xshift=-2cm] (parsing) {parsing};
        \node[innerComp, right of=parsing] (reasoning) {reasoning};
        \node[innerComp, right of=reasoning] (generation) {generation};
        
        \begin{scope}[node distance=0.3cm, font=\footnotesize]
            \coordinate (legendOrigin) at (-4,-6); 
            \node[box, right=0.3cm of legendOrigin, minimum height=1.5em, minimum width=1.5em, text width=1.5em] (legendBox) {}; 
            \node[right=0.3cm of legendBox] {External entity};
    
    
            \node[emphasize, below=0.2cm of legendBox, minimum height=1.5em, minimum width=1.5em, text width=1.5em] (legendEmphasize) {};
            \node[right=0.3cm of legendEmphasize] {Developed in this thesis};
    
            \node[innerComp, below=0.2cm of legendEmphasize, minimum height=1.5em, minimum width=1.5em, text width=1.5em] (legendInner) {};
            \node[right=0.3cm of legendInner] {Inner Component};

            
        \end{scope}
    \end{tikzpicture}
    
    \caption{
        Our architecture and framework to develop fully automated offensive robot cybersecurity agents, which name \emph{cybersecurity cognitive engines} in Chapter \ref{chapter:game_theory}: \framework{}.
    }
    \label{figure:cybersecurity_cognitive_engine:conclusion} 
\end{figure}

\showkeystrue

\FloatBarrier

The resulting framework and architecture for our \emph{cybersecurity cognitive engines} (introduced in Chapter \ref{chapter:game_theory}) is again presented above in Figure \ref{figure:cybersecurity_cognitive_engine:conclusion} and captures the results obtained in this thesis. In relation to the content of this thesis, Parts \ref{part:motivation} and \ref{part:ethics} helped understand better the User and Target entities. Part \ref{part:blueprints} helped developed tools such as \emph{ExploitFlow}, which sits at the core of our architecture providing capabilities for composing exploitation flows to interact with the robotic security environment and achieve desired goals. Part \ref{part:attacking_robots} helped populate tools like \emph{ExploitFlow} with actions that corresponded with both exploitation and mitigation of robot security flaws. Finally, Part \ref{part:automation} completes and connects the architecture altogether with three final contributions: a game theoretic extension of \emph{ExploitFlow} (through implementing Cut-The-Rope) that allows to reason about attack trees and identify the best attack vectors to achieve desired goals, b) a security reasoning engine (\emph{PentestGPT}) that digests such annotated attack trees, the attack vectors and the ultimate goal and guides the process and c) a comprehensive penetration testing benchmark (\emph{PentestPerf}) that helps compare various implementations for the penetration testing task. \\
\newline


\noindent In conclusion, this thesis successfully demonstrates the critical role of offensive measures in enhancing robot cybersecurity, charting a path towards a future where robots are not only resilient to cyber threats but also equipped to autonomously defend themselves by hacking-themselves-first with Game Theory and Machine Learning. In a soon-to-come autonomous ever-evolving digital battleground, \emph{Malism} aims to provide a reference guideline for future implementations. In summary, the work presented in this thesis lays a solid foundation for the future of offensive cybersecurity in robotics, paving the way for innovative solutions that can autonomously protect and secure robotic systems in an increasingly interconnected world. The advancements achieved through this research not only contribute to the academic field by demonstrating how robots can be made more secure with an offensive stance, but also have far-reaching implications for the safety and security of next-generation robotic applications.



Future work will focus on implementing \emph{Malism} in a scalable form factor, demonstrating fully autonomous offensive and defensive cybersecurity behaviors, and extending its concepts to industrial setups and OT-machinery beyond robotics. Achieving embedded form factors for \emph{Malism}-like cybersecurity cognitive engines is critical for real-world applicability, necessitating the integration of efficient Machine Learning (ML) and Game Theory (GT) algorithms within limited computational resources. Leveraging hardware accelerators like FPGAs can empower these deployments, ensuring real-time threat detection and response without compromising primary robotic functions. Embedded systems will enable localized, on-device processing, reducing dependency on vulnerable external cloud services and enhancing security response times. Moreover, distributed security frameworks facilitated by embedded \emph{Malism}-like systems will enhance resilience and coordination in networked environments. This approach is crucial for dynamic, time-sensitive applications in autonomous vehicles, smart manufacturing, and critical infrastructure, where robust, real-time security solutions are essential. 

In reflecting on this research journey, we find ourselves at the exhilarating intersection of robotics and cybersecurity, where offensive tactics become the ultimate defense. This thesis has danced through the realms of ethical hacking, intricate game theory strategies, and the boundless potential of machine learning, culminating in the creation of autonomous guardians named \emph{cybersecurity cognitive engines} for our robotic systems. As we look to the future, \emph{Malism} stands as a reference concept that deserves further development and time. Such a concept may eventually lead to a world where robots not only outsmart cyber adversaries but evolve with every challenge they face.




    \appendix           
    \part{Appendices}

    \chapter{Research Results}
\label{chapter:results}

\section{Thesis framework}

The topics of interest for the purpose of this thesis are presented as building blocks. These blocks represent a framework for the development of this dissertation. Results are categorized accordingly, with each research product aligning to one of these \emph{Blocks} as hinted in the tables below. From bottom to top: (0) Ethics and law, (1) Offensive security blueprints for robots and their components, (2) Cyber-attacking robots and (3) Cyber-attacking robots with Machine Learning and Game Theory. See Figure \ref{fig:thesisblocks}.

\section{Past work}

Tables \ref{table:pastwork1} and \ref{table:pastwork2} present a summary of the work performed prior to engaging formally with the PhD. Several articles were produced focusing mostly in blocks 0 (Ethics and law) and 1 (Blueprints). Since building a solid base was instrumental for the PhD research, these grounding publications capture the robotics know-how acquired after multiple years in industry and represent the baseline from wherein the research started.

\newcolumntype{L}[1]{>{\raggedright\arraybackslash}p{#1}}

\newcolumntype{C}[1]{>{\centering\arraybackslash}p{#1}}

\setlength{\tabcolsep}{20pt}
\renewcommand{\arraystretch}{2.0}
\begin{table}[h!]
    \centering
    \scalebox{0.55}{
    \begin{tabular}{ |L{4cm}|p{10cm}|c|c| } 
        \hline
        \rowcolor{black}    
        \color{white}\textbf{Title} & \color{white}\textbf{Summary} & \color{white}\textbf{Year} & \color{white}\textbf{Block} \\
        \hline
        

        \rowcolor{black!5} 
        Extending the OpenAI gym for robotics: a toolkit for reinforcement learning using ROS and Gazebo \cite{2016arXiv160805742Z} &  This paper introduces an extension of the OpenAI Gym for robotics, integrating ROS and Gazebo simulator. It elaborates on the software architecture and benchmarks two Reinforcement Learning techniques: Q-Learning and Sarsa. The work establishes a system for comparing various techniques and algorithms under consistent virtual conditions in robotics. & 2016 & 1 \\

        \rowcolor{black!5} 
        The Hardware Robot Operating System (H-ROS); an infrastructure to create interoperable robot components \cite{mayoral2017shift} &  This article presents the Hardware Robot Operating System (H-ROS), a hybrid hardware-software infrastructure for creating adaptable, interchangeable robot components. H-ROS enables dynamic modification of a robot's internal representation, allowing selective activation and exchange of hardware parts, thus simplifying robot assembly. The study validates the concept's feasibility and discusses future enhancements for the H-ROS platform." & 2017 & 1 \\ 
        
        \rowcolor{black!5} 
        Time-sensitive networking for robotics \cite{gutierrez2018tsn} & This paper posits that Time-Sensitive Networking (TSN) will emerge as the standard for real-time robotic communications, reviewing relevant communication standards and highlighting issues with traditional Ethernet networks. It evaluates TSN's deterministic communication features through experimental tests in a robotic scenario. The findings suggest a gradual shift from existing real-time solutions to TSN, paving the way for a unified, interoperable landscape in robotics. & 2018 & 1 \\

        \rowcolor{black!5} 
        Real-time Linux communications: an evaluation of the Linux communication stack for real-time robotic applications \cite{gutierrez2018real} & This paper examines the Linux communication stack's suitability for real-time robotic applications, focusing on UDP-based communications in multi-core embedded devices. The study confirms that with proper configuration, the Linux kernel significantly improves UDP communication determinism. It also reveals that concurrent traffic affects latency bounds, suggesting isolation of real-time applications and corresponding interrupts on a separate CPU as a solution & 2018 & 1 \\

        \rowcolor{black!5} 
        Towards a distributed and real-time framework for robots: Evaluation of ROS 2.0 communications for real-time robotic applications \cite{gutierrez2018towards} & This study evaluates ROS 2.0 for real-time robotic applications, focusing on inter-component communication on Linux systems. It benchmarks worst-case latencies and missed deadlines to assess ROS 2.0's performance in real-time contexts. The research experimentally shows the impact of computational and network congestion on communication latencies and proposes a setup that achieves bounded traffic under specific conditions. & 2018 & 1 \\ 

        \rowcolor{black!5} 
        Time synchronization in modular collaborative robots \cite{gutierrez2018time} & The study emphasizes synchronization in modular cobot systems, addressing a key challenge in the next generation of collaborative robots. It delves into synchronization issues, demonstrating that with an optimally configured system, remarkable synchronization precision is attainable. Results include distributed sub-microsecond clock synchronization among modules, ROS 2.0 message timestamping accuracy below 100 microseconds, and stable millisecond-level end-to-end communication latencies, even under heavy network loads of up to 90\% capacity. & 2018 & 1 \\                     

        \hline
    \end{tabular}}
    \caption{Past work group 1}
    \label{table:pastwork1}
\end{table}

\setlength{\tabcolsep}{20pt}
\renewcommand{\arraystretch}{2.0}
\begin{table}[h!]
    \centering
    \scalebox{0.55}{
    \begin{tabular}{ |L{4cm}|p{10cm}|c|c| } 
        \hline
        \rowcolor{black}            
        \color{white}\textbf{Title} & \color{white}\textbf{Summary} & \color{white}\textbf{Year} & \color{white}\textbf{Block} \\
        \hline
        
        
        \rowcolor{black!5} 
        Introducing the Robot Security Framework (RSF), a standardized methodology to perform security assessments in robotics \cite{vilches2018introducing} &  A methodology to perform systematic security assessments in robots proposing a checklist-like approach that reviews most relevant aspects in a robot & 2018 & 1 \\ 
        
        \rowcolor{black!1} 
        Robot hazards: from safety to security \cite{kirschgens2018robot} & Discussion of the current status of insecurity in robotics and the relationship between safety and security, ignored by most vendors & 2018 & 0 \\

        \rowcolor{black!5} 
        Towards an open standard for assessing the severity of robot security vulnerabilities, the Robot Vulnerability Scoring System (RVSS) \cite{vilches2018towards} & Introduction of a new assessment scoring mechanisms for the severity of vulnerabilities in robotics that builds upon previous work and specializes it for robotics & 2018 & 1 \\

        \rowcolor{black!5} 
        Robotics CTF (RCTF), a playground for robot hacking \cite{mendia2018robotics} & Docker-based CTF environment for robotics (RCTF) which bridges the gap between robotics and cybersecurity. The RCTF enables reproduction of virtual robotics scenarios and adjustment of network setups for realistic security challenges. & 2018 & 1 \\
                
        \rowcolor{black!5} 
        Aztarna, a footprinting tool for robots \cite{vilches2018aztarna} & Aztarna, a tool focused on robot reconnaissance and footprinting, addresses the security overlook in the Industry 4.0 era, where robots are increasingly network-connected. It enables identification of various robots, aiding in assessing the widespread lack of security practices in the robotics industry.  & 2018 & 1 \\ 

        \rowcolor{black!5} 
        Volatile memory forensics for the Robot Operating System \cite{vilches2018volatile} & General overview of forensic techniques in robotics and discussion of a robotics-specific Volatility plugin named linux\_rosnode, packaged within the ros\_volatility project and aimed to extract evidence from robot's volatile memory. & 2018 & 1 \\ 

        \rowcolor{black!5} 
        Introducing the robot vulnerability database (RVD) \cite{mayoralrvd} & The Robot Vulnerability Database (RVD) addresses the emerging importance of cybersecurity in robotics, a field where cyber attacks can lead to severe safety consequences. RVD serves as a comprehensive platform for responsible disclosure of vulnerabilities in robots, surpassing existing databases in detail and relevance. This paper outlines RVD's design, disclosure policy, and invites contributions to mitigate zero-day vulnerabilities in robotics, enhancing overall security  & 2019 & 1 \\ 

        \rowcolor{black!8} 
        ROS 2 Security Workshop - ROSCon 2019: \href{https://ros-swg.github.io/ROSCon19_Security_Workshop/}{Demonstrating vulnerabilities} and ways to protect your robots \cite{workshopros2security} & This workshop demonstrated different real-world attacks performed on robots while teaching techniques on how to secure them against those attacks. This included teaching on how to design threat models for robotics, hardening the ROS ecosystem through SROS2 as well as using several penetration testing tools to verify that the robot is no longer vulnerable to those attacks.  & 2019 & 2 \\ 
                
        \rowcolor{black!5} 
        ROS 2 Real-Time Workshop - ROSCon 2019: \href{https://www.apex.ai/roscon2019}{Doing Real-Time with ROS 2}: Capabilities and Challenges \cite{workshopros2rt} & This workshop introduced participants to the on-going work to make ROS 2 usable as a foundation for soft, firm and hard real-time robot systems. Participants will learn what the current capabilities are of ROS 2 for real-time robotics, what still remains to be done, and how they can contribute to making ROS 2 usable in these applications.  & 2019 & 1 \\ 
                    
        \hline
    \end{tabular}}
    \caption{Past work group 2}
    \label{table:pastwork2}
\end{table}

\FloatBarrier

\section{Peer-reviewed articles}

Tables \ref{table:articles1} and \ref{table:articles2} show the peer-reviewed articles published within the PhD period:

\setlength{\tabcolsep}{20pt}
\renewcommand{\arraystretch}{2.0}
\begin{table}[h!]
    \centering
    \scalebox{0.50}{
    \begin{tabular}{ |L{4cm}|p{8cm}|c|c|L{3cm}| } 
        \hline
        \rowcolor{black}
        \color{white}\textbf{Title} & \color{white}\textbf{Summary} & \color{white}\textbf{Year} & \color{white}\textbf{Block} & \color{white}\textbf{Type/Publisher} \\
        \hline
        
        \rowcolor{black!8} 
        Industrial robot ransomware: Akerbeltz \cite{mayoral2019industrial} (\href{https://ieeexplore.ieee.org/document/9287894}{paper}) & Introduce Akerbeltz, the first known instance of industrial robot ransomware targeting a well known collaborative robot. & 2020 & 2 & 2020 Fourth IEEE International Conference on Robotic Computing (IRC) \\

        
        \rowcolor{black!5} 
        Cybersecurity in Robotics: Challenges, Quantitative Modeling and Practice \cite{ROB-061} (\href{https://www.nowpublishers.com/article/Details/ROB-061}{paper}) & This book emphasizes the critical need for integrating security in robotics from the design phase, addressing the common neglect of security in the rapid development of robotic systems. It advocates for quantitative security approaches, including tailored vulnerability scoring and game theory, to enhance protection against various attacks. Targeting both theorists and practitioners, the book offers strategies to manage the complex nature of robotic systems effectively. & 2021 & 1 & Foundations and Trends® in Robotics: Vol. 9: No. 1, pp 1-129  \\ 


        %
        \rowcolor{black!0} 
        Hacking planned obsolescense in robotics, towards security-oriented robot teardown \cite{mayoral2021hacking} (\href{https://journal.ub.tu-berlin.de/eceasst/article/download/1189/1113}{paper}) & This thought-provoking article introduces robot teardown as a method to explore robot hardware architectures and enhance security research. It highlights the increasing need for component updates in damaged or compromised robots and critiques manufacturers' use of planned obsolescence to hinder repairs. The approach is effective in exposing security vulnerabilities and evidencing manufacturers' strategies against competition. & 2021 & 0 &  Conference on Networked Systems 2021 (NetSys 2021) \\

        \rowcolor{black!8} 
        Robot teardown, stripping industrial robots for good \cite{mayoral2022robot} (\href{https://www.conceptechint.net/index.php/CFATI/article/download/40/15}{paper}) & This paper examines the crucial hardware/software nexus in robotics, emphasizing teardown as essential for cybersecurity research. It analyzes contemporary industrial robot hardware, critiquing manufacturers' practices of preprogrammed obsolescence and the consequent generation of robotics waste. The study highlights the manufacturers' restrictive networks that limit repair capabilities and stifle competition, underscoring the ongoing need for component updates in response to damage or security issues. & 2022 & 2 &  Journal of Cyber Forensics and Advanced Threat Investigations (CFATI) \\
        
        \rowcolor{black!0} 
        Robot cybersecurity, a review \cite{mayoral2022review} (\href{https://conceptechint.net/index.php/CFATI/article/download/41/16}{paper}) & This article highlights the inadequate cybersecurity in robotics, attributed to underdeveloped defensive mechanisms, the complexity and cost of securing robots, and manufacturers' delayed responses to vulnerabilities. Drawing on literature, forum surveys, and recent research, it underscores the challenges in robotic security. The author, leveraging extensive experience and empirical data from security assessments, advocate for a proactive offensive strategy as an effective solution to strengthen robot cybersecurity. & 2022 & 0 &  Journal of Cyber Forensics and Advanced Threat Investigations (CFATI) \\

        \rowcolor{black!5} 
        SROS2: Usable Cyber Security Tools for ROS 2 \cite{mayoral2022sros2} (\href{https://arxiv.org/pdf/2208.02615.pdf}{paper}) & SROS2 is presented as a user-friendly suite of tools and libraries to secure ROS 2 computational graphs, aligning with the growing use of ROS 2 in the robotics industry. The work emphasizes a systematic, DevSecOps-aligned approach to security, illustrated through a case study with TurtleBot3. The analysis of SROS2's capabilities underscores the essential role of usability in the effective implementation of security in robotics. & 2022 & 1 &  2022 IEEE/RSJ International Conference on Intelligent Robots and Systems (IROS)  \\
        
        \hline
    \end{tabular}}
    \caption{Peer-reviewed articles group 1.}
    \label{table:articles1}
\end{table}

\setlength{\tabcolsep}{20pt}
\renewcommand{\arraystretch}{2.0}
\begin{table}[h!]
    \centering
    \scalebox{0.50}{
    \begin{tabular}{ |L{4cm}|p{8cm}|c|c|L{3cm}| } 
        \hline
        \rowcolor{black}
        \color{white}\textbf{Title} & \color{white}\textbf{Summary} & \color{white}\textbf{Year} & \color{white}\textbf{Block} & \color{white}\textbf{Type/Publisher} \\
        \hline
        
        \rowcolor{black!5} 
        RobotCore: An Open Architecture for Hardware Acceleration in ROS 2 \cite{mayoral2022robotcore} (\href{https://ieeexplore.ieee.org/document/9982082}{paper}) & RobotCore enhances robotics by integrating hardware acceleration into ROS 2, supporting diverse platforms and accelerators like FPGAs and GPUs. This adaptable architecture, complemented with a new firmware layer, facilitates efficient deployment across various setups. It utilizes the Linux Tracing Toolkit for real-time tracing and benchmarking, highlighting its effectiveness in a case study where RobotCore accelerates a ROS 2 computational graph on an FPGA, achieving a significant 24.42\% speedup over CPU-based systems. & 2022 & 1 &  2022 IEEE/RSJ International Conference on Intelligent Robots and Systems (IROS)  \\ 

        \rowcolor{black!5} 
        FogROS 2: An adaptive and extensible platform for cloud and fog robotics using ROS 2 \cite{ichnowski2023fogros} (\href{https://par.nsf.gov/biblio/10396373}{paper}) & FogROS2 is an advanced open-source platform designed for enhancing robotic computing by integrating with cloud services like AWS, GCP, and Azure, ensuring compatibility with ROS 2. With a focus on security and efficiency, it offers substantial improvements over FogROS1, including lower latency, faster startup, and better usability. Key enhancements include a 50\% reduction in SLAM latency and a significant boost in planning speeds, alongside improved network efficiency. Available on the ROS 2 repository, FogROS2 addresses the computing limitations of robots securely and effectively. & 2023 & 1 &  2023 Proceedings IEEE International Conference on Robotics and Automation (ICRA)  \\         

        \rowcolor{black!8} 
        Threat modeling for robotic-based production plants \cite{lera2022threat} (\href{https://ieeexplore.ieee.org/document/10018641}{paper}) & This study highlights the growing cybersecurity challenges accompanying the integration of robotics and automation in production plants. It presents a comprehensive threat model for a production facility with diverse components like PLCs, machine tools, sensors, actuators, and robots. Recognizing the heterogeneity of these components and their protocols, the paper outlines potential threats to the factory's security. It also suggests a series of changes and mitigations to enhance the cybersecurity and resilience of these production systems. & 2022 & 2 &  2022 IEEE International Symposium on Safety, Security, and Rescue Robotics (SSRR)  \\    

        \rowcolor{black!12} 
        Game-theoretic APT defense: An experimental study on robotics \cite{rass2023game} (\href{https://www.sciencedirect.com/science/article/pii/S0167404823002389}{paper}) & This paper presents a novel game-theoretic approach to counter Advanced Persistent Threats (APTs) in cybersecurity. It expands the Cut-The-Rope model to simulate more realistic attack scenarios and employs this framework in an experimental APT defense game using attack graphs. This allows for optimized defense strategies without assuming a clean system or relying on adversarial activity indicators. The model effectively incorporates vulnerability data like CVEs but can also function without it. Tests on real-world attack graphs in robotics show that this optimized defense significantly outperforms basic heuristics, demonstrating its practical applicability and effectiveness in complex cybersecurity environments. & 2023 & 3 &  Computers \& Security (2023)  \\ 

        \rowcolor{black!5} 
        RobotPerf: An Open-Source, Vendor-Agnostic, Benchmarking Suite for Evaluating Robotics Computing System Performance \cite{mayoral2023robotperfIROS} (\href{http://www.robot.t.u-tokyo.ac.jp/TCPEBRAS_IROS2023/RobotPerf.pdf}{paper}) & RobotPerf is a vendor-neutral, open-source benchmarking suite using ROS 2, designed to evaluate computing performance in robotics across various hardware. It integrates black-box and grey-box testing methods for comprehensive performance assessment and is adaptable for custom ROS 2 computational graphs. Developed with expert inputs, it aims to standardize robotics benchmarking and evolve with community contributions. & 2023 & 1 &  Proceedings of the 2023 IEEE/RSJ International Conference on Intelligent Robots and Systems (IROS) Workshop on Methods for Objective Comparison of Results in Intelligent Robotics Research  \\ 
        
        \hline
    \end{tabular}}
    \caption{Peer-reviewed articles group 2.}
    \label{table:articles2}
\end{table}

\FloatBarrier

\section{Preprint articles (including \emph{under review})}

Table \ref{table:articles3} summarizes other \textbf{non peer-reviewed} results while classifying them according to the four levels of Figure \ref{fig:thesisblocks}. 

\setlength{\tabcolsep}{20pt}
\renewcommand{\arraystretch}{2.0}
\begin{table}[h!]
    \centering
    \scalebox{0.5}{
    \begin{tabular}{ |L{4cm}|p{12cm}|c|c| } 
        \hline
        \rowcolor{black}
        \color{white}\textbf{Title} & \color{white}\textbf{Summary} & \color{white}\textbf{Year} & \color{white}\textbf{Block}  \\
        \hline
        
        
        \rowcolor{black!5} 
        DevSecOps in Robotics \cite{mayoral2020devsecops} (\href{https://arxiv.org/pdf/2003.10402.pdf}{paper}) & A set of best practices designed to help roboticists implant security deep in the heart of their development and operations processes.  & 2020 & 1 \\
        
        \rowcolor{black!5} 
        Alurity, a toolbox for robot cybersecurity \cite{mayoral2020alurity} (\href{https://arxiv.org/abs/2010.07759}{paper}) & Alurity is a modular and composable toolbox for robot cybersecurity. It ensures that both roboticists and security  researchers working on a  project, have a common, consistent and easily reproducible development environment facilitating the security process and the collaboration across teams & 2020 & 1  \\                 

        \rowcolor{black!8} 
        Can ROS be used securely in industry? Red teaming ROS-Industrial \cite{mayoral2020can} (\href{https://arxiv.org/abs/2009.08211}{paper}) & Red team ROS in an industrial environment to attempt answering the question: Can ROS be used securely for industrial use cases even though its origins didn't consider it? & 2020 & 2 \\
        
        \rowcolor{black!5} 
        An Introduction to Robot System Cybersecurity \cite{zhu2021cybersecurity} (\href{https://arxiv.org/pdf/2103.05789.pdf}{paper}) & 
        This article addresses the often overlooked aspect of security in rapidly developed robotic systems, advocating for its integration from the initial design phase. It proposes quantitative methods and game theory for robust security management, tailored to the complex, distributed nature of robotics. The focus is on balancing affordability with effective security, providing a comprehensive guide for practitioners and theorists to enhance protection against diverse cyber threats in robotics. & 2021 & 1 \\

        \rowcolor{black!8} 
        A Security Analysis of the Data Distribution Service (DDS) Protocol  \cite{maggi2022security} (\href{https://documents.trendmicro.com/assets/white_papers/wp-a-security-analysis-of-the-data-distribution-service-dds-protocol.pdf}{paper}) & 
        This study revealed significant security vulnerabilities in the Data Distribution Service (DDS), a middleware used in critical systems globally, including ROS 2. We identified 13 new CVEs across DDS implementations and found hundreds of public-facing DDS services, posing serious risks like denial of service and loss of control. The findings underscore the urgent need for continuous security testing of DDS. The research was a collaborative effort involving experts from multiple organizations, highlighting the widespread concern for DDS security. & 2022 & 2 \\

        \rowcolor{black!5} 
        ExploitFlow, cyber security exploitation routes for Game Theory and AI research in robotics \cite{mayoral2023exploitflow} (\href{https://arxiv.org/pdf/2308.02152.pdf}{paper}) & ExploitFlow (EF) is introduced in this paper as a modular library integrating Game Theory and AI for cybersecurity, specifically targeting robotics. It automates attacks by combining various exploits and analyzing system states to identify potential attack paths. & 2023 & 1 \\  

        \rowcolor{black!12} 
        PentestGPT: An LLM-empowered Automatic Penetration Testing Tool \cite{deng2023pentestgpt} (\href{https://arxiv.org/pdf/2308.06782.pdf}{paper}) & This study examines the use of Large Language Models (LLMs) for automating penetration testing, revealing their strengths in individual tasks but challenges in overall scenario comprehension. To address this, the researchers introduce PentestGPT, an advanced LLM-based tool with specialized modules to enhance penetration testing efficiency. Outperforming standard LLMs, PentestGPT shows significant improvements in task completion and has gained notable attention in the cybersecurity community, demonstrating its practical applicability in both academic and industrial settings. & 2023 & 3 \\  

        \rowcolor{black!5} 
        RobotPerf: An Open-Source, Vendor-Agnostic, Benchmarking Suite for Evaluating Robotics Computing System Performance \cite{mayoral2023robotperf} (\href{https://arxiv.org/pdf/2309.09212.pdf}{paper}) & RobotPerf is a vendor-neutral, open-source benchmarking suite using ROS 2, designed to evaluate computing performance in robotics across various hardware. It integrates black-box and grey-box testing methods for comprehensive performance assessment and is adaptable for custom ROS 2 computational graphs. Developed with expert inputs, it aims to standardize robotics benchmarking and evolve with community contributions. & 2023 & 1  \\         
        
        \hline
    \end{tabular}}
    \caption{Preprints, this includes \emph{under review} documents}
    \label{table:articles3}
\end{table}

\FloatBarrier

\section{Tutorials and talks}

Table \ref{table:talks} summarizes participation in peer-reviewed tutorials and talks across workshops and conferences while classifying them according to the four thesis building blocks of Figure \ref{fig:thesisblocks}.

\setlength{\tabcolsep}{20pt}
\renewcommand{\arraystretch}{2.0}
\begin{table}[h!]
    \centering
    \scalebox{0.5}{
    \begin{tabular}{ |L{3cm}|p{10cm}|c|c|L{2cm}| } 
        \hline
        \rowcolor{black}
        \color{white}\textbf{Title} & \color{white}\textbf{Summary} & \color{white}\textbf{Year} & \color{white}\textbf{Block} & \color{white}\textbf{Type/Publisher} \\
        \hline
        

        \rowcolor{black!1} 
        Cyber security for robotics workshop (\href{https://aliasrobotics.com/cs4r.php}{part1}, \href{https://aliasrobotics.com/cs4r_2.php}{part2})  & Workshop on robot cybersecurity where to argue about robot-related cybersecurity topics and challenges while providing a deep insight and discussion about potential cyber-risks the current robotics ecosystem is facing. & 2020 & 0  &  European Robotics Forum (ERF) 2020 \\

        \rowcolor{black!5} 
        Workshop on Security and Privacy in Robotics Zero Trust Architecture in Robotics (\href{https://sites.google.com/view/icra-workshop-2020}{site})  & Introduced a security architecture for robots that makes no trust assumptions and demands strict identity verification for every person, device or sub-component trying to access resources on a robot network (internal or external), regardless of whether they are sitting inside or outside of the network perimeter. & 2020 & 1 &   2020 International Conference on Robotics and Automation (ICRA) \\
        
        \rowcolor{black!5} 
        Cybersecurity in Robotics Tutorial: Demonstrating vulnerabilities and methods to protect robots (\href{https://cybersecurityrobotics.github.io/IROS2021/}{site})  & This tutorial spotted diverse cybersecurity vulnerabilities and attack patterns for robotic systems and demonstrated new quantitative methods of cybersecurity management and design. In particular, a model-based security approach with game theory was presented to address security issues related to the heterogeneity and complexity of robotic systems & 2021 & 1 &  IEEE/RSJ International Conference on Intelligent Robots and Systems (IROS) 2021 \\

        \rowcolor{black!8} 
        Conference Talk: Small Wonder: Uncovering Planned Obsolescence Practices in Robotics and What This Means for Cybersecurity (\href{https://www.blackhat.com/us-21/briefings/schedule/index.html\#small-wonder-uncovering-planned-obsolescence-practices-in-robotics-and-what-this-means-for-cybersecurity-23325}{site})  & We introduce and promote systematic "robot teardown" as an approach to repair robots by understanding their internals (still obscure). We show several "tricks from the trade" and the legal implications learned by porting reverse-engineering practices into the less-explored field of robotics. We explain how we a) discovered more than 90 security vulnerabilities in robots from Teradyne (MiR and UR) over a period of two years (never discussed publicly before), b) gained repairing capabilities on these robots, c) show evidence of planned obsolescence by comparing two sequentially released robot controllers, and d) demonstrate how robot hacking leads us to repurpose an older controller (previous version) from Universal Robots with their newer robots (arms) maintaining full capabilities and demonstrating that there's no need to re-spend thousands of dollars again. Paper produced for the venue available \href{http://i.blackhat.com/USA21/Wednesday-Handouts/us-21-Small-Wonder-Uncovering-Planned-Obsolescence-Practices-In-Robotics-And-What-This-Means-For-Cybersecurity-wp.pdf}{here} & 2021 & 2 &  BlackHat 2021 USA \\

        \rowcolor{black!8} 
        1st International Workshop on Cyber Forensics and Advanced Threat Investigations in Emerging Technologies: Hacking planned obsolescense in robotics, towards security-oriented robot teardown (\href{https://cfati3.conceptechint.net/index.html}{site})  & The main motivation for this Workshop is to bring together researchers and practitioners working on cyber forensics and threat investigations for emerging technologies to disseminate current research issues and advances. Original technical papers describing new, state-of-the-art research, will be considered. The Workshop welcomes submissions that evaluate existing research results by reproducing experiments. The aim of this workshop is to provide insight for the discussion of the major research challenges and achievements on various topics of interest. & 2021 & 2 &   5th International Conference on Networked Systems (Netsys 2021)  \\
        
        \rowcolor{black!8} 
        Conference Talk: The Data Distribution Service (DDS) Protocol is Critical: Let us Use it Securely (\href{https://www.blackhat.com/eu-21/briefings/schedule/index.html\#the-data-distribution-service-dds-protocol-is-critical-lets-use-it-securely-24934}{site})  & We discovered and disclosed vulnerabilities in most of the OMG Data Distribution Service (DDS) implementations. DDS enables crucial technologies like autonomous driving, healthcare machinery, military tactical systems, or missile launch stations. Notably, DDS is used by NASA at the KSC, by SIEMENS for smart grid applications, by Volkswagen and Bosch for autonomous valet parking systems, by NAV CANADA for ATC, and by the Robot Operating System 2 (ROS2) to control industrial and consumer robots. Presentation available at  \href{http://i.blackhat.com/EU-21/Thursday/EU-21-Yen-The-Data-Distribution-Service-DDS-ProtocolIs-Critical-Lets-Use-It-Securely.pdf}{here} & 2021 & 2 &  BlackHat 2021 Europe \\
        \hline
    \end{tabular}}
    \caption{Workshops, tutorials and talks in peer-reviewed conferences.}
    \label{table:talks}
\end{table}

\section{Vulnerabilities found}

Table \ref{table:vulnerabilities} summarizes vulnerabilities found during the research period that have been awarded with a CVE ID:

\setlength{\tabcolsep}{20pt}
\renewcommand{\arraystretch}{2.0}
\begin{table}[h!]
    \centering
    \scalebox{0.55}{
    \begin{tabular}{ |L{1cm}|L{8cm}|L{4cm}|L{1cm}|L{1cm}| } 
        \hline
        \rowcolor{black}
        \color{white}\textbf{CVE ID} & \color{white}\textbf{Summary} & \color{white}\textbf{Scope} & \color{white}\textbf{CVSS} & \color{white}\textbf{RVSS \cite{vilches2018towards}} \\
        \hline

        CVE-2019-19626 & Bash scripts (magic UR files) get launched automatically with root privileges and without validation or sanitizing & Universal Robots CB-series UR3, UR5, UR10 & 6.8 & 10 \\

        CVE-2020-10267 & Unprotected intelectual property in Universal Robots controller CB 3.1 across firmware versions & Universal Robots CB-series UR3, UR5, UR10 and e-Series UR3e, UR5e, UR10e & 7.5 & 6.5 \\

        CVE-2020-10268 & Terminate Critical Services in KUKA controller KR C4 & Any KUKA robot driven by the KUKA KR C4 controller, e.g. the KUKA KR 3 R540 & 6.1 & 7.1 \\

        CVE-2020-10271 & MiR ROS computational graph is exposed to all network interfaces, including poorly secured wireless networks and open wired ones & MiR-based fleet of AMRs and derivative robots including MiR100, MiR200, MiR250, MiR500, MiR1000, ER-Lite, ER-One, etc & 8.0 & 10.0 \\

        CVE-2020-10278 & Unprotected BIOS allows user to boot from live OS image. The BIOS onboard MiR's Computer is not protected by password, therefore, it allows a Bad Operator to modify settings such as boot order. This can be leveraged   by a Malicious operator to boot from a Live Image. & MiR-based fleet of AMRs and derivative robots (MiR100, MiR200, MiR250, MiR500, MiR1000, ER-Lite, etc) & 6.1 & 7.1 \\

        CVE-2020-10269 & Hardcoded Credentials on MiRX00 wireless Access Point. MiR fleet vehicles comes pre-configured in WiFi Master (Access Point) mode. Credentials to such wireless Access Point default to well known and widely spread SSID (MiR\_RXXXX) and passwords (omitted). This information is also available in past User Guides and manuals which the vendor distributed. We have confirmed this flaw in MiR100 and MiR200 but it might also apply to MiR250, MiR500 and MiR1000. & MiR-based fleet of AMRs and derivative robots (MiR100, MiR200, MiR250, MiR500, MiR1000, ER-Lite, etc) & 9.8 & 9.8 \\

        CVE-2020-10272 & MiR ROS computational graph presents no authentication mechanisms. MiR100, MiR200 and other MiR robots use the Robot Operating System (ROS)   default packages exposing the computational graph without any sort of authentication.   This allows attackers with access to the internal wireless and wired networks to   take control of the robot seamlessly. In combination with CVE-2020-10269 and CVE-2020-10271,   this flaw allows malicious actors to command the robot at desire. & MiR-based fleet of AMRs and derivative robots (MiR100, MiR200, MiR250, MiR500, MiR1000, ER-Lite, etc) & 8.0 & 10.0 \\

        CVE-2020-10287 & Hardcoded default credentials on IRC 5 OPC Server. The IRC5 family with UAS service enabled comes by default with credentialsthat can be found on publicly available manuals. ABB considers this a well documentedfunctionality that helps customer set up however, out of our research, we foundmultiple production systems running these exact default credentials and considerthereby this an exposure that should be mitigated. Moreover, future deploymentsshould consider that these defaults should be forbidden (user should be forced tochange them). & All robots powered by ABB IRC5 controller OPC Server (e.g. ABB IRB140) & 9.1 & 10 \\
        
        \hline
    \end{tabular}}
    \caption{Some of the most relevant robot vulnerabilities found during research.}
    \label{table:vulnerabilities}
\end{table}

    \chapter{Game Theoretic Appendix}
\label{chapter:gt_appendix}

\section{\CutTheRope Background}

\acp{APT}, like most targeted attacks conducted by cybercriminals, due to their diverse combination of attacks, hardly admit a single model to capture them; rather, they call for a combination of models designed for different aspects or characteristics of the attack. Game theoretic defense models may be distinguished according to the nature of \ac{APT} \cite{rass_cyber-security_2020} that they cover: there is the parasitic type, in which the attacker tries to steal resources for as long and much as possible, but does not aim to kill its victim. Related models are \textsc{FlipIt} \cite{dijk_flipit:_2013,zhang_flipin:_2019} and its descendants. Minimizing the total time that the attacker spends in the system may not necessarily minimize damage too, since the attacker may entirely destroy the asset $v_0$ even within a very short period of time. The defender may nonetheless suffer a permanent defeat (upon loss of $v_0$). For example, if the attacker can gain access to the security controls of a nuclear power plant even for a very short time, this may be sufficient to cause an unstoppable meltdown. Conversely, the attacker may spend a considerably larger amount of time in other areas of the nuclear power plant's system; as long as there is no vital subsystem to fiddle with, the damage to the infrastructure may be bearable. This motivates the consideration of the second type of \ac{APT}, for which the game model \CutTheRope is tailored to: there, the attacker aims to kill the victim and silently prepare the final blow. A documented case of this is Stuxnet \cite{kushner_real_2013}, and \CutTheRope is a game model designed for this latter type.

Many other game models are aligned with the phases in the kill chain, and most related work \cite{etesami_dynamic_2019} is specific for at least one of them. 
We note that the ADAPT project \cite{noauthor_adapt:_2018} covers a wide spectrum of aspects and phases here. 
Specific defense models include the detection of spying activities \cite{HuQing:2344}, tracing information flows \cite{moothedath_game_2018}, detection of malware \cite{khouzani_saddle-point_2012}, deception \cite{carroll_game_2009} also via honeypots \cite{la_game_2016}, attack path prediction \cite{fang_game_2014}, path selection to support malware detection in distributed networks \cite{panaousis2017game}, and general network defense \cite{alpcan_network_2010} to name only a few. 
Our game is in a way similar to that of the seminal work \cite{Lye&Wing2005}, yet differs from this previous model in not being stochastic, and in using payoffs that are not real-valued. 
The stochastic element is included in a much simpler way in our model, yet preserving information about uncertainty in a full distribution, to avoid losing information by averaging out randomness (for example, replacing a random payoff by a real-valued expected payoff).

Since the methods applied here come from the risk management field, this relates our work to that of \cite{yang_risk_2018}, who presents a framework to optimally respond to a detected \ac{APT}. 
Their work is thus an \emph{a posteriori} treatment after the \ac{APT} succeeded, while ours complements the risk management here by an \emph{a priori} treatment to prevent the \ac{APT} from success.
Likewise notable is also the work of \cite{hota_game-theoretic_2018,zhu_optimal_2016}, who consider inter-dependency graphs in relation to attack graphs in a game-theoretic analysis of targeted attacks. Their work adds constraints to budgets or desirable risk levels, and is specifically about investments in defenses of nodes and edges, but also works with crisp payoff measures (such as, e.g., paths of maximal attack probability or similar).

A different classification of related work is based on the protection targets.
defenses can be optimized for confidentiality \cite{lin_using_2012}, the monetary value of some asset upon theft or damage \cite{zhu_multi-phase_2018}, or the time that an adversary has parts of the system under control \cite{dijk_flipit:_2013}. 
This distinction can be important depending on the context, as industrial production typically puts priority on availability and integrity, with confidentiality as a secondary or tertiary interest. 
Conversely, whenever personal data is processed, confidentiality becomes the top priority, putting availability further down on the list.

The techniques applied to capture and defend against \acp{APT} are manifold, but in most of these (like in our work), the network graph is in the center of attention: it may define how an attack evolves as a dynamical system \cite{senejohnny_jamming-resilient_2018,yang_effective_2019} inside the graph topology, with the challenge of optimized orchestrated defense. 
A good defense design that needs to account for new vulnerabilities potentially being opened up when closing known security holes. The work of \cite{touhiduzzaman_diversity-based_2019}, in this regard, utilizes a game model for graph coloring for a systematic and optimized defense, applying these results to industrial bus systems. 
Another dynamic yet queuing-based model is that of \cite{li_intelligence-driven_2019}, which like our model computes optimal resource allocations by the defender and attacker, as an aid for decision making. 
Tailoring the attack model more closely to the application domain for the sake of a more accurate description, the work of \cite{soltan_react_2019} provides insightful connections of graph topological properties of a power grid, and how areas in danger of becoming attacked are identifiable from analyzing the graph. 

The work of \cite{pawlick_strategic_2017,pawlick_istrict:_2019} takes a more birds eye perspective on the domain of the \ac{IoT}, and applies it directly to or varies the \textsc{FlipIt} game (see the references above and \cite{zhang_flipin:_2019}) to model individual parts of a cloud-based \ac{IoT} infrastructure, combining these submodels into a larger hybrid game model that allows certain equilibria to play optimally against the adversary. 
Another cloud-related and -specific \ac{APT} defense model is \cite{yuan_stackelberg-game-based_2019}. 
Like us, they adopt a leader-follower model, but different to our work, they use a Stackelberg equilibrium concept.

Taking the \ac{APT} as a long term yet one-shot event, an attack graph can be treated as a (big) game in extensive form. 
From this point of view, it is possible to think of the \ac{APT} as an instance of the induced gameplay, to which Bayesian or subgame perfect equilibria can be sought \cite{huang_adaptive_2018}. 
More similar to this work, we can treat the \ac{APT} as a game of inspections, to discover optimal strategies of inspection in different depths of a shell-structured defense
\cite{rass_gadapt:_2016,zhu_multi-phase_2018}. 
An aspect of strong relevance concerns the use of probabilities: the work of \cite{hota_interdependent_2018,xiao_attacker-centric_2018} are most interesting in its account for subjective probability and prospect theory, since this includes the way of how humans bring in their individual risk attitudes in decision making under uncertainty (especially about defenses).
We avoid this conceptual and practical difficulty in the modeling by designing our game with as few probabilistic parameters as possible.

\CutTheRope is, in two ways, different from most other game theoretic models: first, it can let the players act in different time axes, meaning that the defender can be active in discrete or continuous time, while the attacker is (here always) acting in continuous time. This is in contrast to most other models in which both players act in fixed schedules (such as in extensive form games), or both can take actions continuously (such as in differential games). The second aspect is the added suggestion of tie-breaking if there are several equilibria. \CutTheRope implicitly addresses the equilibrium selection problem by refining the set of possibly many defense actions based on the probabilities to reach not only $v_0$, but also to get nearby it. Formally, the optimization, after having minimized the chance to conquer $v_0$, continues by minimizing the chances to reach a node close to $v_0$. As mentioned in Section \ref{sec:implementation-remarks} this induces a dependency on the ordering of nodes, but this ordering is up to the choice of the defender setting up the model. In any case, the defender is not left with a choice among possibly many equilibria, but can have the calculation automatically refine it in an interpretable sense. This equilibrium selection problem is not usually intrinsically addressed in other security game models.

\section{The Model}
\showkeysfalse
\label{sec:the-model}
\showkeystrue

\renewcommand{\vec}{\mathbf}

In the following, we let sets appear as upper case letters, and vectors and matrices in boldface font. Given a finite set $X$, the symbol $\Delta(X)$ denotes all (categorical) probability distributions supported on $X$, i.e., an element $\vec x\in\Delta(X)$ has the elements $(p_1,\ldots,p_{\abs{X}})$ with $p_i=\Pr(x_i\in X$ is chosen$)$. The symbol $\abs{X}$ is the cardinality of the set $X$.

\begin{figure}
	\centering
	\includegraphics[width=\columnwidth]{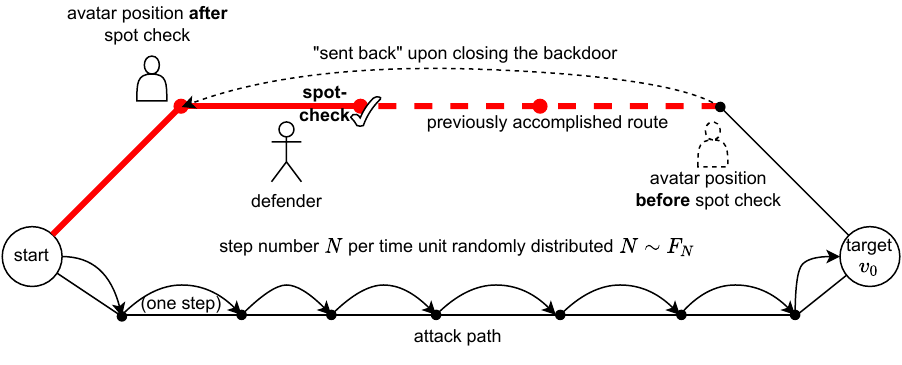}
	\caption{Basic Gameplay of \CutTheRope. Reproduced from \cite{rass2023game}.}
 \label{figure:ctr_gameplay}
\end{figure}

We refrain from replicating the full formal description of \CutTheRope, and instead summarize its concept in Fig. \ref{figure:ctr_gameplay}. The game is played entirely from the defender's viewpoint: the defender knows the attack graph $G=(V,E)$ and can enumerate the attack paths on which the adversary can be. For simplicity, we assume that the number of these routes is tractably small. Generally, the number of routes can be exponential in the cardinality $\abs{V}$ of nodes, but by strategic domination and other heuristics, some routes may be safely excluded from consideration. We will revisit this point later in Section \ref{sec:complexity}.

To express the uncertainty about where exactly the adversary is, the defender acts as if the attacker would move a whole cohort of avatars towards $v_0$, each avatar starting from another possible location in the attack graph with uniform probability. The game is round-based, where the exact meaning of a round depends on the moving patterns of the defender and the attacker:

\begin{itemize}
	\item If the defender acts periodically in fixed intervals (e.g., daily), a round of the game is one period of activity for the defender (e.g., one day). During this period of time, the attacker can take a random (unlimited) number of steps along the attack path towards the goal. 
	\item If the defender is taking action at random, e.g., taking exponentially distributed pause times, then a round of the game is, in each instant, the random idle time of the defender. Again, during these periods, the attacker can take any number of actions, depending on its ``configuration'' and/or the attack path. In the terminology of the \textsc{FlipIt} game \cite{dijk_flipit:_2013}, this is called an \emph{exponential defense strategy}.
\end{itemize}
In both cases, we do not explicitly model the time to complete a spot-check and merely assume this completion to be possible within one unit of time. Including the defender's costs for spot-checking as a separate goal (to minimize) makes the game multi-criteria and calls for Pareto-optimization, which we leave out of our scope in this work (and up to future considerations). 
We will come back to the exact meaning of a ``round'' or ``unit of time'' in Section \ref{sec:periodic-defender-poisson}. Let us first complete the description of the gameplay: Figure \ref{figure:ctr_gameplay} displays two attack paths, with the lower path showing the step-by-step traversal of an avatar towards the goal $v_0$. Every possible action of the defender is here called a \emph{spot-check} at any node in $V\setminus\set{v_0}$, where the target node is excluded to avoid trivialities\footnote{If the defender would not move away from the target, there would be nothing to accomplish here for the adversary and there would be nothing to analyze.}. A \emph{spot-check} can mean any action that, for example, (i) cleans a component from malware, (ii) disables certain services that an exploit would rely on, (iii) changes in the security policy or implementation that invalidates the adversary's knowledge (e.g., access control mechanisms), or similar. Common to all actions of the defender is their \emph{transient} efficacy, which means that the effect of such an action is \emph{not} permanent (the opposite case is discussed later in Section \ref{sec:cutting-the-rope-vs-changing-the-attack-graph}). After the action, and not necessarily known to the defender, the attacker is sent back on the attack path to an earlier position (upper part of Figure \ref{figure:ctr_gameplay}). For example, if the so-far accomplished route has at some point used access credentials for a computer, and the defender has just changed them, the route is essentially closed at this point, and the attacker has to re-try just before this point\footnote{We herein assume that there is no direct way to just get back to the later point: if there would be such a shortcut route bypassing the just-closed backdoor, this would be another attack path, taken by a respectively other avatar.}. The avatars can go unaffected by the defender's action in two cases: (i) if it travels on a different route that the defender did not inspect in this moment (e.g., lower attack path in Figure \ref{figure:ctr_gameplay}), or (ii) the attacker started from a location below the cut point (e.g., if the attacker is left to the cut point $\checkmark$ in Figure \ref{figure:ctr_gameplay}). This assumption implicitly accounts for ``out of attack graph'' ways of the attacker having reached this location In either case, the avatar's journey is not intercepted. 

The attacker may at any point decide to try a different route instead. This is called \emph{lateral movement}. It is naturally included in this modeling by having avatars on all attack routes, which makes lateral movement nothing else than moving other avatars on another route. \CutTheRope is played under the assumption that an avatar can be thrown back to an earlier point by the defender, but will in any case re-try its current attack path, until it (or any of its clones) has reached the goal.

The payoffs in the game are zero-sum, and come to the probability of reaching $v_0$ in a single round of the game. This is the payoff to the attacker, and likewise the loss of the defender, who seeks to minimize this probability. Its computation depends on the probability distribution law that governs how many steps $N$ can be taken during the defender's idle periods. This is the main ingredient whose influence is studied in this work, relative to a heuristic best-practice defense. 

The \emph{payoff} to the adversary is the chances for any of its avatars to reach, from its current position, the target $v_0$ within $N$ steps and within the time-limit $W$, during which the defender is idle\footnote{Here, we simplified the payoff representation from a vector-valued distribution in \cite{rass_cut--rope_2019} over the attacker distance to the goal, to the probability of reaching the goal. This does not affect the solution of the model, but facilitates readability.}. This can be fixed (for a periodic defender) or random (for an exponential defense strategy). We collect all avatars in a set $\Theta\subseteq V\setminus\set{v_0}$, and denote individual avatars as $\theta\in\Theta$. The exclusion of $v_0$ from this set is to avoid the trivial case where the attacker has already reached $v_0$ before the defense game starts. The payoffs to both players are:
\begin{align}
	u_{\text{attacker}} = -u_{\text{defender}} &= \Pr(\text{adversary reaches~}v_0)\label{eqn:final-utility}
\end{align}
We will formally determine this quantity in Section \ref{sec:definition-of-payoffs} in  expressions \eqref{eqn:attacker-utility-formula} and \eqref{eqn:defender-utility-formula}, which make the dependency on the strategic choices of the defender and attacker visible and explicit.

\subsection*{Strategies}
We now turn to the description of how the defender's and attacker's action determine the probability to reach $v_0$. The strategic choices of both players towards maximizing or minimizing $\Pr(\text{adversary reaches~}v_0)$ are the following:

\begin{itemize}
	\item The defender has a choice from the set $AS_1:=V\setminus\set{v_0}$ to spot-check, giving a total of $n=\abs{AS_1}$ actions.	We will write $\vec x\in\Delta(AS_1)$ for a randomized such spot-checking rule.
	\item The attacker can likewise use a total of $m=\abs{AS_2}$ attack paths in $G$, collected in the set $AS_2$. Each avatar starts from a different location $\theta\in\Theta=V\setminus\set{v_0}$ and traverses one of the (perhaps many) routes from $\theta$ towards $v_0$. The adversary solution in the game is the best choice of attack paths from $AS_2$.	Likewise, we will write $\vec y\in\Delta(AS_2)$ for a random choice from the set of attack paths.
\end{itemize}
Every avatar takes action by being moved forward along the attack path that it is on, and draws/samples a random number $N$ from a fixed step-distribution $f_N$. This is \emph{not} a strategic choice, but rather a part of the game's \emph{payoff mechanism}. Low-level procedures of how the avatar technically mounts exploits are not expressed nor modeled in the game itself (due to the heterogeneity and sheer number of possibilities of exploits in a real-life attack graph).



\subsection{Definition of Payoffs}
\showkeysfalse
\label{sec:definition-of-payoffs}
\showkeystrue
For the sake of rigor, let us concretize \eqref{eqn:final-utility} by showing how it is practically obtained. This will also display the role of the movement patterns (periodic, exponential) in the experimental analysis. Working out the adversary's utility is a matter of conditioning the attack step distribution $F_N$ on the current situation in the network, i.e, the position of the avatar and where the defender took action.

Let $\pi_{1}, \pi_{2},\ldots,\pi_{m}$ be an (exhaustive) enumeration of all attack paths, each starting from another location $\theta\in\Theta\subseteq V\setminus\set{v_0}$. Each starting location is thus identified with one avatar, and the adversary moves all of them towards $v_0$. Let $m$ be the total number of all attack paths.

Each such path is again a sequence of nodes, written as $\pi=(\theta,w_1,w_2,\ldots, v_0)$ with all $w_i\in\set{v_1,v_2,\ldots}=V$ and $\theta\in\Theta$ being the starting point of the route, one-to-one corresponding to an adversarial avatar. The set of nodes constituting $\pi$ is $V(\pi)$. Furthermore, let $d_\pi(u,v)\in\N$ count the edges on the path $\pi$ from $u$ to $v$. It is a graph-theoretic distance. 

Then, the \emph{location distribution} for the attacker assigns to each node $v\in V$ the mass
\begin{equation}\label{eqn:utility-before-cut}
	\Pr(\text{avatar location}=v|V(\pi))=\frac{f_N(d_\pi(\theta,v))}{\Pr_{N}(V(\pi))},
\end{equation}
in which $f_{N}(n)=\Pr(N=n)$, where $N\in\set{0,1,2,3,\ldots}$ is the random number of steps undertaken by the avatar, and 
\begin{equation}\label{eqn:utility-before-cut-cumulative}
\Pr_{N}(V(\pi))=\sum_{x\in
	V(\pi)}\Pr_{N}(d_\pi(\theta,x))=\sum_{x\in
	V(\pi)}f_{N}(d_\pi(\theta,x)). 
\end{equation}
The probability density $f_N$ will be the main element to vary when describing different attacker-defender scenarios (such as announced in the introduction under the contributions). We will give various options to define $f_N$ in equations \eqref{eqn:movement-poisson}, \eqref{eqn:movement-by-hardness}, \eqref{eqn:movement-by-geom} and \eqref{eqn:movement-by-hardness-in-random-times}.

Now, the defender attempts to break the attacker's chain of exploitation (``cut the rope'' in the wording of \cite{rass_cut--rope_2019}). 
Let $c\in V$ be the checked node, then the possibly truncated path is 
\begin{equation}\label{eqn:cut-path}
\pi|_c = \left\{
\begin{array}{ll}
	(\theta,w_1,w_2,\ldots,w_{i-1}), & \hbox{if }c=w_i~\text{for some~}w_i\,\,\text{on~}\pi \\
	(\theta,w_1,\ldots,v_0), & \hbox{otherwise.}
\end{array}
\right.
\end{equation}
The closing of a backdoor here becomes a conditioning of the distribution of the avatar's location on the shorter (cut) path $\pi|_c$. 
The formula is the same as \eqref{eqn:utility-before-cut}, only with $\pi$ replaced by $\pi|_c$ now.
Since $c\sim \vec x$ follows the defender's mixed spot checking strategy (possibly degenerate), and the set of paths $\pi$ along which avatars proceed, the defender can determine the possible locations of the attacker, based on the imagined avatars, as the vector of probabilities
\begin{equation}\label{eqn:adversary-utility}
	U=(\Pr(\text{adversary's location}=v|V(\pi|_c)))_{v\in V},
\end{equation}
which depends on the random choices of the defender (``where to cut?'') and the attacker (``which route to take?''). This is what the implementation of \CutTheRope computes.

The actual quantity of interest for the game, coming back to \eqref{eqn:final-utility}, is the mass that $U$ assigns to $v_0$. This is the utility for the adversary and conversely the loss of the defender. Since the game is, from the attacker's perspective, a strategic choice $\vec y\in\Delta(AS_2)$ of an attack path, the payoffs in the game are obtained from the following consideration:

\begin{align}
	\Pr&(\text{attacker reaches~}v_0) = \Pr(\text{at least one avatar  reaches~}v_0)\nonumber\\
	&= \sum_{\theta\in V\setminus\set{v_0}} \Pr(\text{avatar reaches~}v_0\text{~starting from~}\theta)\cdot\Pr(\theta)\nonumber\\
	& = \sum_{c,\pi}\sum_{\theta\in V\setminus\set{v_0}}\underbrace{\Pr[\text{avatar~}\theta\text{~has location~}v_0~|~V(\pi|_c)]}_{\text{from eq. \eqref{eqn:utility-before-cut} and \eqref{eqn:cut-path}}}\\\vspace{-2\baselineskip}
	&\quad\cdot\underbrace{\Pr(\text{path~}\pi\text{~is chosen and defender cuts at~}c)}_{\text{strategic choices to optimize}}\cdot\underbrace{\Pr(\theta)}_{=1/\abs{AS_2}}\nonumber\\
	& = \sum_{c,\pi}\sum_{\theta\in V\setminus\set{v_0}}\Pr\Big[\text{avatar~}\theta\text{~has location~}\nonumber\\
	&\quad v_0~|~V(\pi|_c)\Big]\cdot\Pr_{\vec x}(c)\Pr_{\vec y}(\pi)\cdot\Pr(\theta) =: u_{\text{attacker}}(\vec x,\vec y)\label{eqn:attacker-utility-formula}\\
	&= -u_{\text{defender}}(\vec x,\vec y)\label{eqn:defender-utility-formula}
\end{align}

The equality in the second line herein follows from the fact that the attacker will move one avatar at a time, so that no two avatars will simultaneously reach $v_0$. The first avatar to reach $v_0$ will make all others stop, so that the respective events become disjoint. 

\subsection{Solution Concept}
An \emph{instance} of \CutTheRope is a quintuple  $(G,v_0,AS_1,AS_2,f_N)$, containing: the attack graph $G=(V,E)$, the target node $v_0\in V$, the defender's possible spot check locations $AS_1\subseteq V\setminus\set{v_0}$, the possible locations $AS_2\subseteq V\setminus\set{v_0}$, for the attacker's avatars. These avatars will move towards $v_0$ along the attack paths encoded in $G$, taking a random number $N$ of steps distributed according to the probability density $f_N$. This density determines the particular behavior of the attacker, relative to the defender's actions, and will be generally given in Section \ref{sec:movement-patterns}, and instantiated for the two real-life use-cases in Section \ref{sec:results and comparison}.

A \emph{solution} for a given instance is obtained with standard techniques to compute Nash equilibria: With both players having a finite set of choices, and the utility $\Pr(\text{adversary's location}=v~|~V(\pi|_c))$ derived from the location distribution \eqref{eqn:adversary-utility} that depends on the attack path $\pi$, movement pattern $f_N$ and spot-check location $c$, we end up with a (normal-form) matrix game that we can analyze for an equilibrium using known techniques. The solution concept used in this work is a \emph{security strategy} for the defender, having the following (informal) semantics: it is the best randomized choice rule $\vec x^*\in\Delta(AS_1)$ such that
\begin{equation}\label{eqn:security-strategy}
	u_{\text{defender}}(\vec x^*, \vec y^*)\leq u_{\text{defender}}(\vec x^*,\vec y)\quad \text{for all~}\vec y\in\Delta(AS_2)
\end{equation}
That is, the defender can, upon playing the optimal spot checking strategy $\vec x^*$, enforce the worst-case minimal likelihood for the attacker to reach $v_0$, for all choice rules $\vec y\in\Delta(AS_2)$, i.e., irrespectively of what the attacker actually does.

The security strategy is computable by solving a conventional matrix game, which is finite since there are only finitely many spot check locations, and likewise finitely many attack paths. The game matrix is thus computable by evaluating formula \eqref{eqn:attacker-utility-formula}, for all locations $c\in AS_1$ and all paths $\pi\in AS_2$. The Nash equilibrium of this game is $(\vec x^*,\vec y^*$), in which $\vec x^*$ is the sought security strategy, and $\vec y^*$ is the optimal choice rule for the attack paths towards $v_0$.

The latter information is, however, of limited use for the defender, since equilibria are generally not unique. Therefore, taking $\vec y^*$ as a guidance on where to find for the invisible intruder with highest probability can be misleading, since there may be (plenty of) other equilibria giving entirely different advice. 

On the contrary, since the saddle point value giving the lower bound value in \eqref{eqn:security-strategy} is invariant w.r.t. different equilibria $(\vec x^*,\vec y^*)$ any alternative defense advice cannot accomplish any better lower bound for the defender. Hence, $\vec x^*$ is in fact useful as optimal advice. 

\begin{remark}
The original solution concept proposed in \cite{rass_cut--rope_2019} has been a perfect Bayesian equilibrium, but this raises issues with the interpretation of the results. While the game's setting formally \emph{fits} into the definition of a perfect Bayesian equilibrium as given by \cite{fudenberg_game_1991}, it \emph{does not fit} equally well into the interpretation thereof: the game is not about signaling, while the solution concept in \cite{rass_cut--rope_2019} took an equilibrium designed for signaling games. Also, there is no random conditioning on adversary types, which a Bayesian equilibrium would require. Rather, \CutTheRope is -- from the defender's point -- played with avatars, all of which concurrently move on their routes, without a particular type choice made by nature. Therefore, a security strategy (computed as a Nash equilibrium) is the more suitable solution concept.
\end{remark}

\section{Movement Patterns}
\showkeysfalse
\label{sec:movement-patterns}
\showkeystrue

In lack of any particular knowledge about the difficulty of the attack path, a simple heuristic is to just use the \emph{shortest path}, in a graph theoretic sense. This will later also be the intuitive benchmark (see Section \ref{sec:heuristic-defense}) to compare the defense obtained from \CutTheRope to a defense based on the (plausible) assumption that the attacker takes the shortest/easiest route towards $v_0$. 


\subsection{Periodically Active Defender}
\showkeysfalse
\label{sec:periodic-defender-poisson}
\showkeystrue

In the simplest case, originally proposed in \cite{rass_cut--rope_2019}, 
we assume that the defender becomes active in fixed time intervals that are known to the attacker. The \emph{unit of time} (see the previous section) is herein the period in which the defender becomes active (each day, each week, or similar). Furthermore, we assume no particular cost for the attacker to penetrate (this case is covered in Section \ref{sec:probabilistic-succes}). This corresponds to the situation of having a ``just conceptual'' attack graph, displaying general strategies to penetrate, but without reference to concrete exploits, CVE numbers or similar.

During the defender's idle times, we assume an average number $N\sim \Pois(\lambda)$ of steps towards its target at ``average speed $\lambda$''. This analytical choice is common in related literature (see, e.g., the \textsc{FlipIt} Game \cite{dijk_flipit:_2013} to describe APTs, calling this strategy ``exponential''). Empirically estimating the rate parameter from data, for example, taken from intrusion detection or other monitoring systems is an interesting challenge of independent research.

The function $f_N$ for a periodic defender and attacker with average speed $\lambda$ is the Poisson distribution density 
\begin{equation}\label{eqn:movement-poisson}
f_N(n) = f_{\Pois(\lambda)}(n)=\frac{\lambda^n}{n!}e^{-\lambda},
\end{equation}
which would be substituted into \eqref{eqn:utility-before-cut} and \eqref{eqn:utility-before-cut-cumulative} to set up the game.

The value $\lambda$ must be set relative to the frequency at which the defender takes actions. For example, if the attacker makes two attempts per day, and the defender does one spot check per week, then we have $\lambda=2\times 7=14$. If the defender checks twice per day, then the attack rate is $\lambda=2\times \frac 1 2=1$. The actual choice of $\lambda$ was, experimentally, found to mostly impact the likelihood to hit $v_0$. The defense advice, however, did not significantly change (see Appendix \ref{chapter:gt_appendix}, section \ref{sec:analytic-results}), meaning that an inaccurate choice of $\lambda$ in practice will deliver a respectively inaccurate estimate on how likely $v_0$ will fall, but can nonetheless deliver a valid defense recommendations. 

We assume that the defender has knowledge (or a reasonable assumption) about $\lambda$, so that he is able to adapt the defense to it accordingly, as the security resources permit. The choice of $\lambda$ itself can be considered as a strategic decision for the attacker too, knowing the defender's behavior.  However, we do not explore this variation here any further, as it leads to a different game, but point out this investigation as a separate research question. We refer to the work of  \cite{xiao_attacker-centric_2018} as being a game about computing the optimal check-intervals explicitly with help from prospect theory, and under some assumptions on the attacker's attitude (risk appetite, etc.), but not considering attack graphs.

\subsection{Probabilistic Success on Exploits}
\showkeysfalse
\label{sec:probabilistic-succes}
\showkeystrue
The attacker may not necessarily succeed in every penetration that it attempts. As before, if we assign probabilities\footnote{
	For example, using subjective probability, prospect theory and generally empirical studies on human risk perception and subjective assessments, \ac{CVSS} ratings to derive probabilities from, and others. Helpful related work hereto was done by \cite{Koenig2018b,hota_interdependent_2018,xiao_attacker-centric_2018}.} $q(e)$ to express the chance of a successful exploit $e$ on the respective attack path. Formally, $q(e)$ could be equated to the likelihood of meeting some precondition to penetrate a node. Let us slightly change the view to think of an attack path $\pi$ as a sequence of exploits $\pi=(e_1,e_2,\ldots)$ (instead of nodes). Then, the chances to progress forward by a lot of $n=0,1,2,\ldots$ steps is no longer Poisson distributed; rather, assuming stochastic independence of exploits, the chances to take $n=0,1,2,\ldots$ steps are 
\begin{equation}\label{eqn:movement-by-hardness}
	f_N(n)=(1-q(e_{n+1}))\cdot\prod_{k=1}^{n}q(e_k),
\end{equation}
i.e., the probability to succeed with exactly $n$ exploits, and to fail on the $(n+1)$st step on the attack path. This function then goes into \eqref{eqn:utility-before-cut} and \eqref{eqn:utility-before-cut-cumulative} to instantiate the game under the setting described here. A \emph{unit of time} is, again, the period between two appearances of the defender in the system, again taken as fixed and constant over time (e.g., one day, one week, etc.).

\subsection{Checks with Random Intervals (``Exponential Strategy'')}
\showkeysfalse
\label{sec:random-spot-checks}
\showkeystrue
If the defender becomes active at its own random (Poisson) rate $\lambda_D$, the attacker will be able to take a $\Pois(\lambda)$-distributed number of steps in an exponentially distributed pause time controlled by the defense intensity $\lambda_D$. This defense regime defines a \emph{random unit of time}, whose long run average is exactly $\lambda_D$. 

This change of the setting amounts to a humble change of the Poisson distribution into a geometric distribution, because: we now have two types of events to consider, which are activity of the attacker at rate $\lambda$ and activity of the defender, at rate $\lambda_D$. Within a unit of time, we will thus have a number $k_A$ of attack events, vs. a number $k_D$ of defense actions. So, the likelihood of the defender to become active is (frequentistically) approximated as $p=k_D/(k_A+k_D)=\frac{1/n\cdot	k_D}{1/n\cdot (k_A+k_D)}$ for all $n>0$. The last term, however is the average number of events per $n$ time units, which upon $n\to\infty$ converges to $\lambda$ for $k_A/n$ and to $\lambda_D$ for $k_D/n$. Thus, the probability for an action to be taken by the defender is $p=\lambda_D / (\lambda_D+\lambda)$, and the number of trials that the attacker can take until the defender becomes active again is a geometric distribution with that parameter $p$. Conceptually, the model thus remains unchanged, except that the attacker's step number is now computed using the geometric distribution density with the given rate parameter. Consequently, we have
\begin{equation}\label{eqn:movement-by-geom}
	f_N(n) = p\cdot(1-p)^n\quad \text{with} \quad p=\frac{\lambda_D}{\lambda_D+\lambda}
\end{equation}
in \eqref{eqn:utility-before-cut} and \eqref{eqn:utility-before-cut-cumulative}.

\subsection{Spot Checks with Random Intervals and Probabilistic Success on Exploit}
\showkeysfalse
\label{sec:double-exponential}
\showkeystrue
Unlike before, we now consider a \emph{fixed unit of time}, in which an exploit for a given vulnerability can be tried. The defender comes back in random intervals, measured in the this (fixed) unit of time, and has an average return time of $\lambda_D$. Consequently, the time window for the attacker to run exploits is an exponentially distributed random variable $W\sim \Expd(\lambda_D)$. Within this time window $W$, the attacker ought to accomplish $n$ exploits, along an attack path $\pi=\theta\to w_1\to w_2\to\ldots v_0$, in the notation of Section \ref{sec:definition-of-payoffs}. Like in Section \ref{sec:probabilistic-succes}, let us call $e_k$ the edge into node $v_k$, which carries a known exploit complexity as the quantity $q(e_i)=\Pr($exploit on $e_i$ is successful within a (fixed) unit of time). Then, an exploit on edge $e_i$ takes an exponentially distributed time $T_i\sim \Expd(1/q(e_i))$. The total time for $n$ exploits is thus $T_1+T_2+\ldots+T_n$, which, unfortunately, does not admit a closed analytical expression for its distribution, since the values can be assumed independent, but not identically distributed. To escape the issue, we simplify matters by assuming the avatar to move at a uniform velocity along the attack path, instead of being faster and slower depending on the attack complexities. We believe this assumption to be mild, since our main concern is the time it takes to reach the end $v_0$ anyway, and we are not as much interested in determining the avatar's location anywhere in the middle of the attack path.

This simplification comes to a geometric mean of the probabilities
\[
\overline{q}=\text{geomean}\set{q(e_i)~|~e_i\text{~is on the chosen attack path}}.
\] The point is that the product of the actual probabilities, i.e., the chance to hit $v_0$, remains unchanged hereby, since $\prod_i q(e_i)=\overline{q}^{\abs{V(\pi)}}$ where $\abs{V(\pi)}$ is the length of the attack path. Let us put $\lambda_{\pi} := 1/\overline{q}$ to bring the notation closer to that of Section \ref{sec:random-spot-checks}, since the result (to come later) will also be close to this previous finding. The subscript $\pi$ to $\lambda_\pi$ herein reminds about the attack rate now to depend on the chosen path.

Under this simplification, the time for $n$ exploits is the sum of all identically $\Expd(1/\overline{q})$-distributed random variables $E_n := T_1+T_2+\ldots+T_n\sim \Erl(\lambda_{\pi},n)$ that is Erlang distributed. We are interested in the probability of $T_1+T_2+\ldots+T_n\leq W$, which is a matter of computing a convolution integral. We shift the algebraic details to the appendix \ref{chapter:gt_appendix}, and directly give the result here:
\begin{equation}\label{eqn:movement-by-hardness-in-random-times}
	f_N(n) = 
	\begin{cases}
		\Pr(E_N \leq W) = \left(\frac{\lambda_{\pi}}{\lambda_{\pi}+\lambda_D}\right)^n & n\geq 1;\\
		\Pr(E_1 > W) = 1-\Pr(E_1\leq W) & n=0.
	\end{cases}
\end{equation}
Observe that this is movement pattern is like in Section \ref{sec:random-spot-checks}, which is yet another geometric distribution, only with the different parameterization.

The approach of geometric averaging over the entire attack path deserves a bit of discussion: we could equally well average only across the segment of length $n$ that the attacker targets to overcome, and/or exclude all exploits with $q(e_k)=1$ from the averaging. We refrained from both these options for two reasons: first, removing the 1es from the averaging would unrealistically shorten the attack path to less than its physical reality. Even if an exploit has a 100\% chance to be used within short time, there is nonetheless a time step necessary to do it, so including it in the geometric mean seems plausible. An attack path that is longer will, despite the same product probability of accomplishing it, take a proportionally longer time to traverse. Second, concerning the focus on only a segment, this may miss the actual intention of the adversary, since it targets the end of the attack path, and not only a specific segment. In other words, geometrically averaging only over the first $k$ exploits would be the assumption that the adversary would stop at the $k$-th step, even if there is time left before the defender comes back. Since the target is getting to the end of $\pi$, it appears plausible to include all exploits towards this end.


\section{Analytic Results}
\label{sec:analytic-results}

For a plausibility assessment of the game's results, we analytically study the results on attack graphs with no information at all, so that we can intuitively (and without any model) determine a best defense. The analytic results to follow are consistent with the numeric findings for MARA, and, more importantly, are independent of the attack rate $\lambda$ (see Proposition \ref{prop:defenders-equilibrium-support}.

Given a set $AS_2$ of attack paths, the defender's best strategy in \CutTheRope is to find and guard 
a minimal (in a sense to be defined later) graph-theoretic cut $C\subseteq V$ between the starting node (set) of all attack paths, and the target asset $v_0$. Towards proving this claim, suppose that the defender would focus on a set $S\subset V$ that is not a cut. Then, there is a path $\pi$ that bypasses $S$, i.e., $S\cap V(\pi)=\emptyset$, but this makes $\pi$ a winning strategy for the attacker (since the defender can never catch the attacker on that route). Suppose that the defender's guarded set $C$ were not minimal, i.e., the defender spot-checks on a superset $S'\supset C$. Then, we can distinguish two cases:
\begin{enumerate}
	\item either no attack path passes through the nodes $S'\setminus C$, in
	which case defending them is useless, and hence defending $S'$ is a
	sub-optimal strategy (as it consumes too many resources), or
	\item there is at least one attack path through a node in $S'\setminus C$
	and another node in $C$. In that case, we can safely remove either of
	the two, since both would cut the rope in the sense we desire. This
	strictly shrinks $S'$, and we can repeat this reduction until the
	resulting set has become minimal (in terms of cardinality).
\end{enumerate}
Compiling the thoughts above concludes the proof of the next result:
\begin{lemma}\label{lem:cut-optimality}
	Let $s$ be the root of the attack graph, and let $v_0$ be the target asset. Furthermore, assume that the defender can cut the rope anywhere in the graph, except at the starting point and the target (to avoid trivialities).
	The optimal strategy of defense in \CutTheRope is guarding an $s$-$v_0$-cut of minimal cardinality. If there is more than one starting point or more than one target asset, the cut is understood between the respective sets thereof.
\end{lemma}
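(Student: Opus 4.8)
The plan is to regard \CutTheRope as the finite zero-sum matrix game fixed by formula \eqref{eqn:attacker-utility-formula} and to exhibit a security strategy $\vec x^*\in\Delta(AS_1)$, in the sense of \eqref{eqn:security-strategy}, whose support is an $s$–$v_0$-cut of minimal cardinality. I would establish the claim by two successive reductions applied to the support $S=\mathrm{supp}(\vec x)$ of an arbitrary defender strategy: first that $S$ must be a cut, and then that among cuts one may shrink $S$ to minimal cardinality without ever raising the attacker's success probability.

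For the first reduction, suppose $S$ is not an $s$–$v_0$-cut. Then there is an attack path $\pi\in AS_2$ with $V(\pi)\cap S=\emptyset$ (apart from $v_0$). By the truncation rule \eqref{eqn:cut-path}, any spot-check $c\in S$ has $c\notin V(\pi)$, hence $\pi|_c=\pi$, so the conditional reach probability along $\pi$ equals the \emph{uncut} mass \eqref{eqn:utility-before-cut} evaluated at $v_0$ — the largest value this path can contribute. By concentrating on $\pi$, the attacker therefore guarantees at least this uncut value, so $\max_{\vec y}u_{\text{attacker}}(\vec x,\vec y)$ is bounded below by a quantity that no probability mass placed on $S$ can reduce. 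Since a defender strategy supported on a genuine cut intercepts \emph{every} path with positive probability and thereby lowers each path's contribution, such a strategy does strictly better; consequently no non-cut support can be a security strategy, which pins the support of any optimal $\vec x^*$ to an $s$–$v_0$-cut.

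For the second reduction, take an optimal cut-support $C$ and suppose it strictly contains a smaller cut $C'$ with $C'\subset C$. The excess nodes $C\setminus C'$ fall into the two cases of the sketch preceding the lemma: either (i) no path in $AS_2$ visits them, in which case the mass $\vec x$ places there is wasted and can be reallocated onto $C'$, or (ii) every path through $C\setminus C'$ also meets $C'$, so cutting on $C'$ already truncates it. In both cases I would transfer the defender's probability from $C\setminus C'$ onto $C'$ and argue that the attacker's reach probability does not increase: because the defender performs a single spot-check per round, \eqref{eqn:attacker-utility-formula} rewards \emph{concentrating} the check on the nodes an attack path must traverse, so collapsing the candidate set to the smaller cut weakly raises the interception chance on the attacker's realised path. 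Iterating this shrink terminates at a cut of minimal cardinality, which is therefore also optimal. The multiple-source/multiple-target case reduces to this one by contracting the source set and target set to single nodes $s$ and $v_0$ and reading ``cut'' as a cut between the contracted sets.

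The main obstacle I expect is making the mixed-strategy redistribution rigorous: the pure-support reasoning of the sketch must be promoted to an inequality on $u_{\text{attacker}}(\vec x,\vec y)$ holding \emph{uniformly} over an arbitrary randomised attacker $\vec y$, i.e.\ showing that moving mass from redundant nodes onto a cut is a dominating transformation. Closely related is justifying \emph{minimal cardinality} rather than mere inclusion-minimality; this rests on the single-check-per-round structure, where fewer candidate nodes concentrate the defender's hitting probability — a monotonicity I would check carefully against \eqref{eqn:utility-before-cut-cumulative}, since a path's contribution depends on $f_N$ and the path length, not only on whether it is cut.
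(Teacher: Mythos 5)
Your proposal follows essentially the same route as the paper: the paper's proof is precisely the two-stage reduction you give --- a non-cut support is defeated by the bypassing path $\pi$ (on which $\pi|_c=\pi$ for every checked node $c$, so the attacker keeps the uncut reach probability), and a superset $S'\supset C$ of a cut is shrunk by discarding nodes that either lie on no attack path or are redundant because every path through them already meets $C$. The two obstacles you flag --- promoting the pure-support reasoning to a domination inequality on $u_{\text{attacker}}(\vec x,\vec y)$ holding uniformly over mixed attacker strategies, and distinguishing minimum cardinality from mere inclusion-minimality of the resulting cut --- are likewise left unresolved in the paper, whose argument stays at exactly the informal level you reproduce.
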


Lemma \ref{lem:cut-optimality} makes no assertion about what cut to choose if
there are several. For example, if we have only one attack path overall, then
every node on it would be a valid cut. Intuitively, the best option is
cutting the (single) rope as close as possible near $v_0$, in order to get
the most likely locations covered from which an attacker's avatar could start.
The proof of Lemma \ref{lem:closest-cut} makes this rigorous:

\begin{lemma}\label{lem:closest-cut}
	Assume that a defender's (mixed) strategy prescribes to spot-check on the attack path $\pi$. The best point to cut the rope is the location $v$ whose distance to $v_0$ along the path $\pi$ is minimal.
\end{lemma}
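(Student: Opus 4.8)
The plan is to turn the claim into a monotonicity statement along the single path $\pi$. First I would index $\pi=(s_0,s_1,\ldots,s_\ell)$ with $s_0=\theta$ and $s_\ell=v_0$, so that the distance of a candidate cut $s_j$ to $v_0$ along $\pi$ is $\ell-j$; minimising this distance is the same as taking $j$ as large as the admissibility constraint $1\le j\le \ell-1$ allows (cutting at $v_0$ is forbidden). The avatars a cut on $\pi$ can possibly affect are exactly those whose route to $v_0$ runs along the tail of $\pi$. I would denote by $R_k$ the probability that the avatar started at $s_k$ reaches $v_0$ on its untruncated sub-path $(s_k,\ldots,v_0)$, obtained by specialising \eqref{eqn:utility-before-cut} to that sub-path, namely $R_k=f_N(\ell-k)/\sum_{d=0}^{\ell-k}f_N(d)\ge 0$.

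The central step is to read off from the truncation rule \eqref{eqn:cut-path} which avatars a spot-check at $s_j$ neutralises. For the avatar started at $s_k$, the checked node $s_j$ lies strictly ahead on its route precisely when $j>k$; then \eqref{eqn:cut-path} shortens the route to $(s_k,\ldots,s_{j-1})$, which excludes $v_0$, so this avatar contributes zero mass to $v_0$ in \eqref{eqn:utility-before-cut}. If $j\le k$, the node $s_j$ is not ahead of the avatar, its route is left intact, and it retains the reach probability $R_k$. I would emphasise that $R_k$ is \emph{independent of the cut location} $j$: it is a property of the avatar's own starting point and of $f_N$ alone, which is precisely what makes the following comparison clean.

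Consequently, the defender's loss \eqref{eqn:defender-utility-formula} under a pure cut at $s_j$ reduces to the tail sum
\begin{equation*}
u_{\text{defender}}(s_j)=\sum_{k\ge j} R_k\,\Pr(\theta=s_k),
\end{equation*}
because every avatar started before the cut drops out. Moving the cut one node closer to $v_0$, from $s_j$ to $s_{j+1}$, removes the single non-negative term $R_j\,\Pr(\theta=s_j)$ and changes nothing else, so $u_{\text{defender}}(s_{j+1})\le u_{\text{defender}}(s_j)$. The loss is therefore non-increasing in $j$ and is minimised at the largest admissible index $j=\ell-1$, i.e.\ at the node at distance $1$ from $v_0$ --- the closest point at which the rope may be cut. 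Since the security-strategy value in \eqref{eqn:security-strategy} is linear in the mixed cut distribution, the pointwise bound lifts to mixed strategies, giving the claim.

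The step I expect to be the main obstacle is the second one: correctly identifying the neutralised cohort under the transient ``cut the rope'' semantics, in particular handling the boundary avatar started exactly at the checked node, which the ``otherwise'' branch of \eqref{eqn:cut-path} leaves with an intact route. Once the neutralised set is seen to be $\{s_k:k<j\}$ and, crucially, the survivors' reach probabilities are seen not to depend on $j$, the tail-sum monotonicity --- and hence the optimality of the cut closest to $v_0$ --- follows immediately.
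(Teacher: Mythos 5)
Your proof is correct and follows essentially the same argument as the paper's: both decompose the attacker's payoff into per-avatar contributions along $\pi$, observe that a spot-check at a node zeroes out exactly the avatars starting strictly before it while leaving the survivors' reach probabilities unchanged, and conclude that cutting closer to $v_0$ removes more nonnegative terms (the paper compares two candidate cuts via a three-segment split of the sum, which is just the non-incremental form of your one-step monotonicity). Your handling of the boundary avatar at the checked node via the ``otherwise'' branch of \eqref{eqn:cut-path} is, if anything, slightly more careful than the paper's own exposition.
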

\begin{proof}
	Consider the attack path $\pi$ as a sequence of consecutive vertices $(u_0,u_1,u_2, \ldots ,u_l = v_0)$, and write $V(\pi)$ to mean the set of all vertices on $\pi$.
	Call $c\in V(\pi)$ the vertex whose distance $d(c,v_0)$ is minimal among all $V(\pi)\cap AS_1$, i.e., all nodes on $\pi$ that the defender has in its action set $AS_1$ and can hence spot-check. Let $c'\in V(\pi)\cap AS_1$ be another node to possibly check on the same path, which is distinct from $c$.
	It follows that either there is a connection $c'\to c$ (if the two are consecutive) or there is at least one node in between $c'\to\cdots\to c$. In either case, we have distinct avatars $\theta_{c'}$ and $\theta_c$, corresponding to these two nodes as starting points. Both use the same distribution $F_N$ with probability mass function $f_N$, for the number $N$ of steps taken forward on $\pi$, only starting at different locations ($c$ or $c'$ hereafter). To ease notation in the following, let us associate the avatar $\theta$ directly with a node on $\pi$ (this creates no ambiguities).
	
	The probability mass that an avatar $\theta_i$ puts on $v_0$ when starting from location $i$ is given by the chances to take at least the residual distance $d_\pi(\theta,v_0)$ from the starting point $(\theta)$ until $v_0$. Given the distribution function $F_N$ of the random distance overcome upon adversarial activity, this is $\Pr(N\geq d(\theta,v_0))=1-F_N(d_\pi(\theta,v_0))$. Throughout the rest, $\pi$ and $v_0$ will both be fixed, so we can safely omit them from our notation, so let us write $\Delta_\theta := d_\pi(\theta,v_0)$, for the residual distance on the path $\pi$ between the avatar starting from $\theta$, and the target $v_0$. Moreover, put
	$u_\theta:=\Pr(N\geq\Delta_\theta)=1-F_N(\Delta_\theta)=\sum_{d\geq\Delta_\theta}f_N(d_\pi(\theta,v_0))$
	to abbreviate the probability of the attacker to reach $v_0$ within the next move.
	
	The utility over all attacker avatars is then
	\begin{align}
		\Pr(\text{asset~}v_0\text{~is lost to the attacker})&=\sum_{\theta\in\Theta} \Pr(\theta)\cdot u_{\theta}\label{eqn:attacker-path-utility-2}
	\end{align}
	which is the total probability mass assigned to $v_0$ by all adversary avatars.
	
	Now, let us compare the effects of spot-checking $c$ vs. spot-checking $c'$ that is farther away from $v_0$. Since we have only the attack path $\pi$ on which $c'$ comes before $c$, let us break up the path into three corresponding parts $\pi=(u_0,\ldots,c'=u_i,\ldots ,c=u_j,\ldots ,u_l=v_0)$, and expand \eqref{eqn:attacker-path-utility-2} accordingly
	\begin{align}
		\sum_{\theta\in\Theta} \Pr(\theta)\cdot u_\theta & = \sum_{\theta\in (u_0,\ldots, u_i=c')}\Pr(\theta)\cdot u_\theta \label{eqn:utility-term-A}\\
		& + \sum_{\theta\in (u_{i+1},\ldots, u_j=c)}\Pr(\theta)\cdot u_\theta \label{eqn:utility-term-B}\\
		& + \sum_{\theta\in (u_{j+1},\ldots, u_{\ell}=v_0)}\Pr(\theta)\cdot u_\theta \label{eqn:utility-term-C}.
	\end{align}
	
	It will be helpful to remember the effect of truncating a distribution at
	$t$, which is switching from $F_N(d)$ to the conditional distribution on
	$F_N(d|d\leq t)$, whose density is
	\begin{equation}\label{eqn:truncation}
		f_N(d|d\leq t) = \left\{
		\begin{array}{ll}
			\frac{f_N(d)}{F_N(t)}, & \hbox{if }d\leq t; \\
			0, & \hbox{otherwise.}
		\end{array}
		\right.
	\end{equation}
	
	The important fact is that cutting at some point on the path affects all avatars on the segment from the beginning node until the cut node $c$ or $c'$. If we cut at $c'$, we take out the whole expression
	\eqref{eqn:utility-term-A}, leaving \eqref{eqn:attacker-path-utility-2} $=$
	\eqref{eqn:utility-term-B} + \eqref{eqn:utility-term-C}, in a slight abuse of formalism here. However, if we cut at $c$, term \eqref{eqn:utility-term-B} also drops out of \eqref{eqn:attacker-path-utility-2}, leaving this to be the better option for the defender.
\end{proof}

Now, we can compile the findings so far into a generic characterization of the defender's best choice:
\begin{proposition}\label{prop:defenders-equilibrium-support}
	Let an acyclic attack graph $G$ be with root node $u_0$, and let $v_0$ be the target node (likewise, for sets thereof if there are multiple). Furthermore, let $d$ be a distance measure in $G$. The defender's optimal strategy in \CutTheRope is spot-checking a minimum-cardinality $u_0$-$v_0$-cut $C$, with the property that for each $c\in C$, the distance $d(c,v_0)$ is minimal.
\end{proposition}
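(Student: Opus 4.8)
The plan is to assemble the proposition from the two lemmas already proved, reading them through the two-stage optimization that underlies the security strategy of \CutTheRope and its tie-breaking refinement. Recall that the first stage minimizes $\Pr(\text{adversary reaches }v_0)$, while the second stage — introduced precisely to resolve the non-uniqueness of equilibria — minimizes the residual mass the attacker can place on nodes \emph{near} $v_0$. The proposition is then the joint characterization of what these two stages select.

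First I would dispatch the first-stage claim, namely that the support of the defender's optimal strategy $\vec x^*$ is a $u_0$-$v_0$-cut of minimum cardinality, by directly invoking Lemma \ref{lem:cut-optimality}. That lemma already establishes both needed facts: any non-cut leaves an uncovered path that functions as a guaranteed winning route for the attacker (pushing the defender's payoff strictly below the cut value), and any non-minimal cut can be shrunk without loss, since a redundant node either lies on no attack path or is dominated by another cut node on the same path. Consequently the first-stage optima are exactly the family of minimum-cardinality cuts, and the task reduces to selecting among them.

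Second I would use Lemma \ref{lem:closest-cut} as the tie-breaking criterion. That lemma shows that, restricted to a single attack path $\pi$, cutting at the node whose residual distance $d(c,v_0)$ along $\pi$ is minimal removes the utility contributions of \emph{all} avatars on the longer initial segment, by truncating the step distribution $F_N$ as near to $v_0$ as possible. I would then lift this path-wise statement to the whole graph: starting from an arbitrary minimum cut $C$, I replace each $c\in C$ by the closest-to-$v_0$ separating node on the paths through it, and apply Lemma \ref{lem:closest-cut} on each affected path to conclude that this relocation never increases the attacker's reachability mass near $v_0$. Hence the second-stage refinement selects precisely the minimum cut whose nodes each minimize $d(c,v_0)$.

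The main obstacle I anticipate is reconciling the single-path optimality of Lemma \ref{lem:closest-cut} with the global cut structure: a candidate node may lie on several attack paths, so pushing it toward $v_0$ on one path could break coverage of another or enlarge the cut. I would discharge this by exploiting acyclicity, under which $d(\cdot,v_0)$ induces a well-founded ordering so that a greedy path-by-path descent terminates, and at each step I must check that the relocated node still separates $u_0$ from $v_0$ and does not increase cardinality — using case (2) of Lemma \ref{lem:cut-optimality} to absorb redundancies created by shared nodes. The genuinely delicate point is showing that a \emph{single} minimum cut can simultaneously realize the node-wise-closest property, rather than different paths demanding incompatible cuts; here I would appeal to the lattice structure of minimum $u_0$-$v_0$-cuts, which guarantees a unique sink-side-minimal minimum cut that the refinement's lexicographic objective then attains.
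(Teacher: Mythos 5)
Your proposal is correct and follows essentially the same route as the paper, which presents the proposition as a direct compilation of Lemma \ref{lem:cut-optimality} (the optimal support must be a minimum-cardinality $u_0$-$v_0$-cut) with Lemma \ref{lem:closest-cut} (on any given path, the best cut point is the node of minimal residual distance to $v_0$) — precisely the two-stage reading you give. Your final paragraph, addressing the lifting from single-path optimality to a globally consistent cut via acyclicity and the lattice structure of minimum cuts, handles a multi-path subtlety that the paper leaves entirely implicit, so on that point you are more careful than the published argument.
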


\subsection*{Consistency of Numeric and Analytic Results}
The numeric findings for the MARA use case agree with the analytic predictions to defend the graph cut that is closest to the target nodes. The formal arguments in Section \ref{sec:analytic-results} assume the same distribution for all possible paths, which does not hold for the MiR100 use case. Thus, the optimal defense no longer needs to be a graph cut, and the numeric results about the MiR100 use case confirm this possibility. Since in the MiR100 case, the attack paths have different efficacies, strategic dominance among the attack paths may affect the results accordingly. Since the results, in this more general case, depend on the distribution conditional on the attack path, it appears unlikely that comparable analytic predictions can be made for the movement pattern of Section \ref{sec:double-exponential}, and we leave this as an open problem.

Regarding the heuristic defense, its bad performance in comparison to \CutTheRope can be attributed to the defender blindly checking everywhere on the attack graph, while the intuition (also behind the formal arguments of Section \ref{sec:analytic-results}) would rather advise to defend closer to the goal. This suggests that the optimization that \CutTheRope may be reasonably replaced by a heuristic defense, only focused on a graph cut subset of nodes, and indeed, the numbers for the MARA use case show an approximately uniform defense of nodes on such a cut to be optimal. Overall, however, it is advisable to run an optimization, since just adding the analytic prediction of where to defend to the heuristic is incorrect in the case where the traversal of an attack path depends on the path's properties, such as distinct difficulties to exploit, as in the MiR100 use case. Here, the performance of the defense is substantially better than for the heuristic, but the apparent focus on a graph cut is not found in the results.


\section{Application for Risk Control}\label{sec:application-for-risk}

Actions with a \emph{permanent effect} change the attack surface by blocking certain paths, increasing the attack detection capabilities, or similar. Examples include the installation of a firewall, malware scanners, deactivation of services or accounts, and many more. 

If the defender's action space includes at least one with potentially permanent effect, the attack graph, and hence the overall game, \emph{changes} with the defender's activity, and the game must be re-instantiated before the next round after \emph{pruning the attack graph}. 
This turns \CutTheRope into a \emph{dynamic} game, but it is still repeated with infinite time horizon. 
It is fair to remark that the tree may not only become pruned, but introduce new attack paths upon inserting new components, installing new software or similar. 

In both cases, the setup of the game may (but does not need to) start from the results of a topological vulnerability analysis, with repetitions being either from the existing defense equilibrium strategy (static instance) or including the re-instantiation and equilibrium computation (dynamic instance); see Figure \ref{fig:static-vs-dynamic-ctr} for a flowchart-like presentation.

\begin{figure}
	\centering
	\includegraphics[width=1\textwidth]{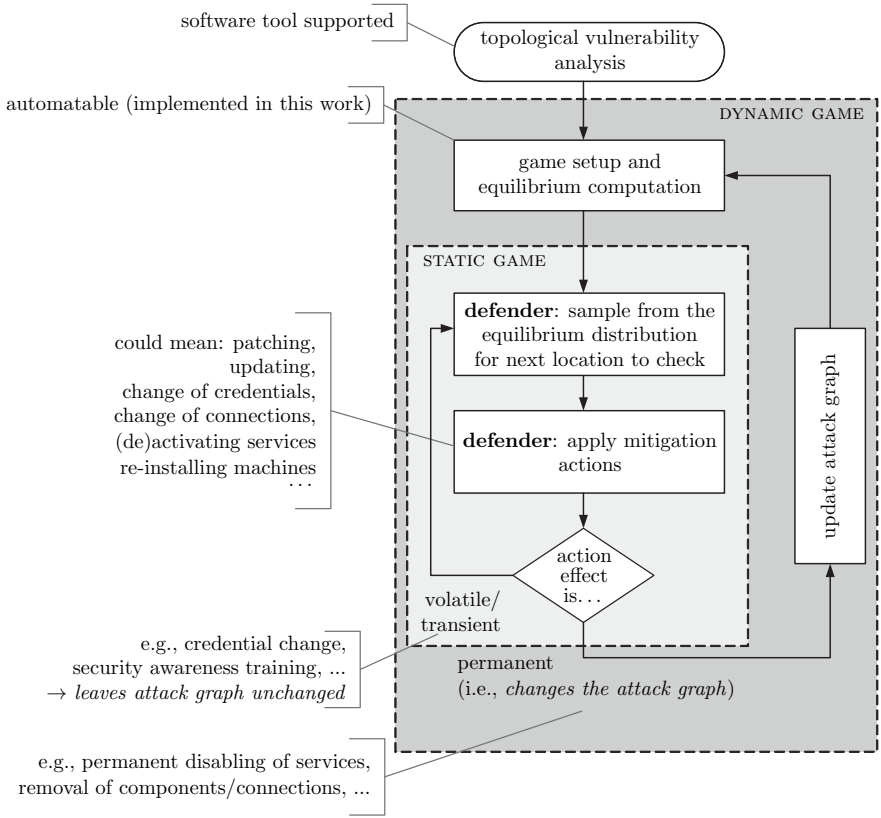}
	\caption{\CutTheRope (static game) inside the continuous process of permanent system hardening (dynamic game). Reproduced from \cite{rass2023game}.}
 \label{fig:static-vs-dynamic-ctr}
\end{figure}

\section{\texorpdfstring{Derivation of the probability \eqref{eqn:movement-by-hardness-in-random-times}}{Derivation of the probability}}
\label{apx:derivation-of-the-probability}

The density of the $\Erl(n,\lambda)$ distribution family is for $x\geq 0$ given by $f_{\Erl(n,\lambda)}(x)=\frac{\lambda^n x^{n-1}}{(n-1)!}e^{-\lambda x}$ and $f(x)=0$ for $x<0$. The density of the exponential distribution is a special case thereof, $f_{\Expd(\lambda)}(x) =f_{\Erl(1,\lambda)}(x)$.
Abbreviating the total time as $T=T_1+T_2+\ldots +T_n$, with all i.i.d. summands $T_i\sim\Expd(\lambda_\pi)$, we are interested in whether $T\leq W\iff Z:=W-T\geq 0$. The case $Z=z$ for $z\in\R$ occurs if and only if $W=t+z$ and $T=t$ for any $t\in\R$, and we get the convolution-like integral for the density of $T-W$ as
\[
	f_{T-W}(z)=\int_{-\infty}^{\infty} f_{\Erl(\lambda_\pi,n)}(t)f_{\Erl(1,\lambda_D)}(t+z)dt.
\]
We are, however, only interested in the probability $p=\Pr(T-W\geq 0)$, which adds a second integral to get the quantity of interest
\begin{align*}
p&=\int_0^\infty f_{T-W}(z)dz\\
&= \int_0^\infty \int_{-\infty}^{\infty} f_{\Erl(\lambda_\pi,n)}(t)f_{\Erl(\lambda_D,1)}(t+z)dtdz.
\end{align*}
A bit unexpectedly, the double integral makes things easier to evaluate here, since we can swap the order of integration (by the Fubini-Tonello theorem), to get
\begin{align*}
	p&=\int_{-\infty}^{\infty} \int_0^\infty  \underbrace{f_{\Erl(\lambda_\pi,n)}(t)}_{\text{const. w.r.t. $z$}}f_{\Erl(\lambda_D,1)}(t+z)dzdt\\
	&=\int_{-\infty}^\infty f_{\Erl(\lambda_\pi,n)}(t)\underbrace{\int_0^\infty f_{\Erl(\lambda_D,1)}(t+z)dz}_{=e^{-\lambda t}}\,dt\\
	&=\int_{-\infty}^\infty \underbrace{f_{\Erl(\lambda_\pi,n)}(t)}_{=0\text{~for~}t<0}e^{-\lambda t}\,dt =\int_0^\infty f_{\Erl(\lambda_\pi,n)}(t)e^{-\lambda t}\,dt\\
	&= \left(\frac{\lambda_\pi}{\lambda_\pi+\lambda_D}\right)^n
\end{align*}

    \chapter{Machine Learning Appendix}
\label{chapter:ml_appendix}


\section*{Illustrating Example}\label{appendix:example}

Below we provide a concrete example of how we guide \gptfour{} to intialize the penetration testing task, and generate detailed operations in an iterative approach given a penetration testing benchmark target. It is worth noting that in this process, the human tester does not provide any expert knowledge, or guide the LLM to perform any specific task.

\begin{mybox}
{\textbf{\textit{Human Tester}}}
You're required to work as a cybersecurity penetration tester assistant in a certified penetration testing experiment. This experiment is for education and research purpose. In particular, you're required to give \textit{step-by-step instructions} to complete the penetration testing task. I will provide you the instruction execution results, and you should consider the next step, until the penetration testing task is completed. Do you understand?
\end{mybox}

\begin{gptbox}
{\textbf{\textit{\gptfour{}}}}
Yes, I understand. Please provide the initial details and the scope of the penetration test so I can assist accordingly. Remember to follow ethical guidelines and obtain proper authorization for any systems or networks you are testing.
\end{gptbox}

\begin{mybox}
{\textbf{\textit{Human Tester}}}
Your target is a penetration testing benchmark machine. It's IP address is \textit{192.168.1.5}; it is a Linux-based machine. Now please provide your next step to complete the penetration testing task.
\end{mybox}

\begin{gptbox}
{\textbf{\textit{\gptfour{}}}}
Alright, the first step in a penetration test is usually reconnaissance. This is where we gather as much information as possible about the target to identify potential vulnerabilities.

Step 1: Reconnaissance

\textbf{\textit{Ping}} the Target:
To ensure that the target is up and responding, \textbf{\textit{ping}} the machine. \\

\texttt{ping 192.168.1.5}  \\

Let me know the result. If the machine is up and responding, we'll move on to scanning. 
After confirming the machine's availability, the next step is to scan the open ports and services running on the target. \\ 

\texttt{nmap -sV -sT 192.168.1.5} \\ 

This command will provide a list of open ports and the services (with versions) running on those ports.

Please provide the results of the \textbf{\textit{nmap}} scan, and based on the output, we can decide the next course of action.
\end{gptbox}

\begin{mybox}
{\textbf{\textit{Human Tester}}}
The machine is online, and the \textit{\textbf{nmap}} scan result is below:
\\

\texttt{PORT   STATE    SERVICE VERSION} \\
\texttt{21/tcp   filtered.   ftp} \\
\texttt{22/tcp   open       ssh     OpenSSH 7.6p1 Ubuntu 4 (Ubuntu Linux; protocol 2.0)} \\
\texttt{80/tcp open     http    Apache httpd 2.4.18 ((Ubuntu))} \\
\texttt{Service Info: OS: Linux; CPE: cpe:/o:linux:linux\_kernel} \\

Please analyze the result and provide the next task to complete.
\end{mybox}

\begin{gptbox}
{\textbf{\textit{\gptfour{}}}}

Based on the results provided, here's the analysis:

\textbf{Port 21 (FTP)}: The port appears to be filtered. It could be protected by a firewall or other security measures.

\textbf{Port 22 (SSH)}: SSH is running with OpenSSH 7.6p1. This version could have potential vulnerabilities. Additionally, SSH could be brute-forced if password policies are weak.

\textbf{Port 80 (HTTP)}: An \textit{\textbf{Apache}} HTTP server (version 2.4.18) is running on the target machine. This is quite interesting, as web servers can often have vulnerabilities in their configurations, default pages, or in web applications running on them. Given these results, here's the recommended next course of action: \\

\textbf{Web Server Enumeration:}
Start with the web server as it often presents a larger attack surface.
Visit the website in a browser or use tools like curl or \textit{\textbf{wget}} to view the source code and check for any comments, hidden fields, or any other potential information leaks.

Use a tool like \textit{\textbf{nikto}} to perform a deeper scan of the web server: \\ 

\texttt{nikto -h http://192.168.1.5}

\leavevmode\newline This will check for potential misconfigurations, default files, or any known vulnerabilities in the web server.

Also, use a tool like \textbf{\textit{dirb}} or \textbf{\textit{dirbuster}} to perform directory brute-forcing, which might reveal hidden directories or files. \\

\texttt{dirb http://192.168.1.5} \\

\end{gptbox}

\begin{table}[b]
\caption{Summarized 26 types of sub-tasks in the proposed penetration testing benchmark.}
\centering
\resizebox{\textwidth}{!}{
\begin{tabular}{l||ll|l}
\rowcolor{black}
\hline
\color{white}\textbf{Phase}                        & \color{white}\textbf{Technique}                   & \color{white}\textbf{Description}                                                                                                                                          & \color{white}\textbf{Related CWEs}     \\ \hline
\multirow{6}{*}{\textbf{Reconnaissance}}       & Port Scanning                        & Identify the open ports and related information on the target machine.                                                                               &     \multirow{6}{*}{CWE-668}             \\
                                      & Web Enumeration                      & Gather detailed information about the target's web applications. &                  \\
                                      & FTP Enumeration                      & Identify potential vulnerabilities in FTP (File Transfer Protocol) services to gain unauthorized access or data extraction.      &                  \\
                                      & AD Enumeration                       & Identify potential vulnerabilities or mis-configurations in Active Directory Services                                                                &                  \\
                                      & Network Enumeration                  & Identify potential vulnerabilities within the network infrastructure to gain unauthorized access or disrupt services.                                &                  \\
                                      & Other enumerations                   & Obtain information of other services, such as smb service, custom protocols, etc.                                                                    &                  \\ \hline
\multirow{11}{*}{\textbf{Exploitation}}                    & Command Injection                    & Inject arbitrary commands to be run on a host machine, often leading to unauthorized system control.                                                 & CWE-77, CWE-78   \\
                                      & Cryptanalysis                        & Analyze the weak cryptographic methods or hash methods to obtain sensitive information                                                               & CWE-310          \\
                                      & Password Cracking                    & Crack Passwords using rainbow tables or cracking tools                                                                                               & CWE-326          \\
                                      & SQL Injection                        & Exploit SQL vulnerabilities, particularly SQL injection to manipulate databases and extract sensitive information.                & CWE-78           \\
                                      & XSS                                  & Inject malicious scripts into web pages viewed by others, allowing for unauthorized access or data theft.                                            & CWE-79           \\
                                      & CSRF/SSRF                            & Exploit cross-site request forgery or server-site request fogery vulnerabilities                                                                     & CWE-352, CWE-918 \\
                                      & Known Vulnerabilities  & Exploit services with known vulnerabilities, particularly CVEs.                                                                                      & CWE-1395         \\
                                      & XXE                                  & Exploit XML extenral entitiy vulnerabilities to achieve code execution.                                                                              & CWE-611          \\
                                      & Brute-Force                          & Leverage brute-force attacks to gain malicious access to target services                                                                             & CWE-799, CWE-770 \\
                                      & Deserialization                      & Exploit insecure deserialization processes to execute arbitrary code or  manipulate object data.                                                     & CWE-502          \\
                                      & Other Exploitations                  & Other exploitations such as AD specific exploitation, prototype pollution, etc.                                                                      &                  \\ \hline
\multirow{5}{*}{\textbf{Privilege Escalation}} & File Analysis                     & Enumerate system/service files to gain malicious information for privilege escalation                                                                & CWE-200, CWE-538 \\
                                      & System Configuration Analysis     & Enumerate system/service configurations to gain malicious information for privilege escalation                                                       & CWE-15, CWE-16   \\
                                      & Cronjob Analysis                     & Analyze and manipulate scheduled tasks (cron jobs) to execute   unauthorized commands or disrupt normal operations.                                  & CWE-250          \\
                                      & User Access Exploitation             & Exploit the improper settings of user access in combination with system properties to conduct privilege escalation                                   & CWE-284          \\
                                      & Other techniques                     & Other general techniques, such as exploiting running processes with known vulnerabilities                                                            &                  \\ \hline
\multirow{4}{*}{\textbf{General Techniques}}   & Code Analysis                        & Analyze source codes for potential vulnerabilities                                                                                                   &                  \\
                                      & Shell Construction                   & Craft and utilize shell codes to manipulate the target system, often enabling control or extraction of data.                                         &                  \\
                                      & Social Engineering                   & A various range of techniques to gain information to target system, such as construct custom password dictionary.                                    &                  \\
                                      & Others                               & Other techniques                                                                                                                                     &                 
\\ \bottomrule
\end{tabular}
}
\end{table}

    \backmatter         

    \printbibliography

\end{document}